\newcommand{\diff}{\mathrm{d}}
\newcommand{\R}{\mathbb{R}}
\newcommand{\Z}{\mathbb{Z}}
\newcommand{\N}{\mathbb{N}}
\newcommand{\mH}{\mathcal{H}}
\newcommand{\abs}[1]{\left|#1\right|}
\newcommand{\set}[1]{{\left\{ #1 \right\}}}
\newcommand{\ReLU}{\sigma}
\newcommand{\void}{\text{\O}}
\newcommand{\subopt}{\texttt{SubOpt}}
\newcommand{\Lt}{\cL^{\text{trunc}}}
\newcommand{\lt}{{\ell^{\text{trunc}}}}
\newcommand{\shat}{{\hat{\sb}}}
\newcommand{\indic}[1]{\mathbf{1}\left\{#1\right\}} 
\newcommand{\dmono}{\Phi_{\nb,\npb,\vb,\wb}}
\def\vb{{\mathbf v}}
\def\sb{{\mathbf s}}
\def \Rt{{\cR_{\star}^{\text{trunc}}}}
\def\lhat{{\hat{\ell}}}
\def\lstar{{\ell^{\star}}}
\def \wb{{\mathbf w}}
\def \nb{\mathbf{n}}
\def \npb{\mathbf{n'}}
\def \sstar{\sb^{\star}}
\def \epslow{\epsilon_{\text{low}}} 
\def \trelu{\text{ReLU}} 
\def \tTV{\text{TV}} 
\def \tKL{\text{KL}} 
\newcommand{\clip}[1]{\text{clip} \left(#1\right)}
\title{Unveil Conditional Diffusion Models with Classifier-free Guidance: A Sharp Statistical Theory}
\author{
Hengyu Fu\thanks{Peking University. Email: \texttt{2100010881@stu.pku.edu.cn}}
\and
Zhuoran Yang\thanks{Yale University. Email: \texttt{zhuoran.yang@yale.edu}}
\and
Mengdi Wang\thanks{Princeton University. Email: \texttt{\{mengdiw,minshuochen\}@princeton.edu}}
\and
Minshuo Chen\footnotemark[3]
}
\begin{document}

\maketitle

\begin{abstract}
Conditional diffusion models serve as the foundation of modern image synthesis and find extensive application in fields like computational biology and reinforcement learning. In these applications, conditional diffusion models incorporate various conditional information, such as prompt input, to guide the sample generation towards desired properties. Despite the empirical success, theory of conditional diffusion models is largely missing. This paper bridges this gap by presenting a sharp statistical theory of distribution estimation using conditional diffusion models. Our analysis yields a sample complexity bound that adapts to the smoothness of the data distribution and matches the minimax lower bound. The key to our theoretical development lies in an approximation result for the conditional score function, which relies on a novel diffused Taylor approximation technique. Moreover, we demonstrate the utility of our statistical theory in elucidating the performance of conditional diffusion models across diverse applications, including model-based transition kernel estimation in reinforcement learning, solving inverse problems, and reward conditioned sample generation.
\end{abstract}

\doparttoc 
\faketableofcontents 
\part{} 

\section{Introduction}

Diffusion models constitute a class of generative models 
achieving state-of-the-art performance in generating realistic data in computer vision and audio applications \citep{song2019generative, dathathri2019plug, ho2020denoising, song2020score, kong2020diffwave, chen2020wavegrad, mittal2021symbolic, huang2022prodiff, jeong2021diff, ulhaq2022efficient, avrahami2022blended, kim2022diffusionclip, bansal2023universal}. 
The success of diffusion models are further extended in other domains, such as sequential data modeling \citep{alcaraz2022diffusion, tashiro2021csdi, tevet2022human, tian2023fast}, reinforcement learning \citep{pearce2023imitating, chi2023diffusion, hansen2023idql, reuss2023goal}, and life science \citep{cao2022high, chung2022mr, chung2022score, gungor2023adaptive, jing2022torsional, anand2022protein, lee2022proteinsgm, luo2022antigen, mei2022metal, waibel2022diffusion, ingraham2022illuminating, huang20223dlinker, schneuing2022structure, wu2022diffusion, gruver2023protein, weiss2023guided, xu2022geodiff, song2021solving}.

Diffusion models are widely appraised for their high-fidelity sample generation, yet the most fascinating feature is that they allow flexible input ``guidance'' to control the generation process --- an essential property that enables diffusion models for versatile real-world usage. For example, in image synthesis, diffusion models can generate images consistent with input prompts. In reinforcement learning, diffusion models can generate state-action trajectories of high rewards or satisfying safety constraints. To emphasize the dependence on guidance, diffusion models with guidance are termed Conditional Diffusion Models (CDMs).

In the continuous-time limit, CDMs couple two stochastic processes for sample generation. In the forward process, data points are corrupted by adding white noise with increasing variances. Then in the backward process, which can be seen as a time-reversal of the forward process, CDMs produce new samples by sequentially removing noise in the input. The backward process is accomplished by a so-called ``conditional score network'', which approximates the conditional score function -- gradient of the log conditional density function $\nabla \log p_t(\xb | \yb)$. Here $\xb$ is the sample, $\yb$ is the guidance, and $p_t$ is a diffused conditional density (see Section~\ref{sec:pre} for a precise definition). In this regard, the training of a CDM concentrates on obtaining a proper conditional score network.

Due to the introduction of the guidance $\yb$, the training of the conditional score network is different from standard score estimation methods in unconditional diffusion models. Classifier guidance is arguably the first method for training a conditional score network \citep{dhariwal2021diffusion}, which applies with discrete guidance $\yb$, such as class labels of images. Classifier guidance relies on training an external classifier for obtaining the conditional score function $\nabla \log p_t(\xb | \yb)$. The classifier is trained using noise-corrupted data produced by the forward process of CDMs. Consequently, the training can be difficult especially when a significant amount of noise is added to the clean data (corresponding to the later stage of the forward process). To mitigate the issue, classifier-free guidance is proposed to remove the external classifier and allow both discrete and continuous guidance \citep{ho2022classifier}. The idea is to introduce a mask signal to randomly ignore the guidance and unify the learning of conditional and unconditional score networks (a detailed description is deferred to Section~\ref{sec:pre}). Ever since its proposal, classifier-free guidance has become the benchmark method for different applications \citep{Meng2023CVPR, kornblith2023classifierfree}.

Despite the empirical success of CDMs trained with classifier-free guidance, theoretical underpinnings are largely lacking. In particular, the following fundamental questions about CDMs are curiously open:
\begin{center}
\it How do CDMs estimate the conditional score function with classifier-free guidance?\\
What are the corresponding statistical rates for conditional distribution estimation?
\end{center}
Recently, there is a growing body of works studying diffusion models and they provide valuable insights into diffusion models' ability to estimate data distributions \citep{oko2023diffusion, chen2023score, lee2022convergencea, lee2022convergenceb, chen2022sampling, benton2023linear, de2021diffusion, de2022convergence, wibisono2024optimal}. However, most of the study focuses on the unconditional diffusion models. It is noteworthy that \cite{yuan2023reward} consider the reward-directed CDMs and provide reward sub-optimality guarantees. Yet the corresponding analysis is tailored to scalar reward guidance in a semi-parametric setting, and the analysis does not cover the classifier-free guidance method.

In this paper, we answer the posted questions above by establishing the first set of theories of CDMs trained with classifier-free guidance. Specifically, we adopt a nonparametric statistics point of view: We assume H\"{o}lder regularity in the ground-truth conditional distribution and provide a sharp sample complexity bound of conditional distribution estimation. Our results are built upon a novel conditional score approximation theory, which develops a diffused Taylor approximation technique. Moreover, our statistical theory leads to theoretical insights into CDMs in diverse tasks, such as transition kernel estimation in model-based RL, solving inverse problems, and reward-conditioned sample generation. We summarize our contributions in the following.
\vspace{0.05in}
\begin{itemize}[leftmargin=0.15in, nosep]
\item We establish the first universal approximation theory of conditional score functions using neural networks in Theorem~\ref{thm::score approx}. To achieve a desired approximation error in the $L_2$ sense, we show that the network size scales adaptive to the smoothness of the data distribution. This result only requires the initial conditional data distribution to be H\"{o}lder continuous, indicating that the score function inherits the regularity of the data. Further, we establish an improved approximation result under an additional bounded H\"{o}lder norm assumption in Theorem~\ref{thm::score approx exp}. Built upon such approximation theories, we present optimal distribution estimation theory in later sections.
\vspace{0.05in}
\item We study using conditional diffusion models for distribution estimation, and provide sample complexity bounds in Theorem~\ref{thm::TVbound}. To facilitate the analysis, we establish a conditional score estimation result in Theorem~\ref{thm::Generalization}, when using the widely adopted classifier-free guidance method (see an introduction in Section~\ref{sec:pre}). The analysis in Theorem~\ref{thm::Generalization} is built upon a bias-variance trade-off in nonparametric statistics and further connects to Theorem~\ref{thm::TVbound} via Girsanov's theorem from stochastic processes. Our statistical rate in Theorem~\ref{thm::TVbound} matches its minimax lower bound (Proposition~\ref{prop::lower bound TV}). We also present statistical guarantees for the first time of applying conditional diffusion models to model-based reinforcement learning (Proposition~\ref{prop:transition_estimation}).

\vspace{0.05in}
\item We additionally establish theoretical foundations of conditional diffusion models for solving inverse problems and reward conditioned sample generation, demonstrating the utility of our established statistical theories. Specifically, we present sub-optimality bounds when generating high-reward samples in an offline setting (Proposition~\ref{thm::subopt}). We also provide error bounds for estimating the posterior mean given a measurement in linear inverse problems (Proposition~\ref{prop::inverse problem}). These results theoretically explain the performance of conditional diffusion models.
\end{itemize}

\subsection{Related Work}
This work contributes to the theory of diffusion models and develops the first set of theories of conditional diffusion models trained with classifier-free guidance. Existing results on diffusion models can be roughly categorized into two categories: 1) sampling theory assuming good score estimation; 2) approximation and statistical theories on score estimation and further distribution estimation. The two aspects are inner connected as we discuss as follows.

\paragraph{Sampling theory of diffusion models} Several recent sampling theories of diffusion models prove that the distribution generated by the backward process is close to the data distribution, as long as the score function is accurately estimated. The central contribution is a relationship between $\epsilon_{\rm dis}$ and $\epsilon_{\rm score}$, where $\epsilon_{\rm dis}$ is the distribution estimation error and $\epsilon_{\rm score}$ is the score estimation error. Specifically, \cite{de2021diffusion, albergo2023stochastic} establish upper bounds of $\epsilon_{\rm dis}$ using $\epsilon_{\rm score}$ for diffusion Schr\"{o}dinger bridges. The error $\epsilon_{\rm dis}$ is measured in the total variation distance and $\epsilon_{\rm score}$ is measured in the $L_\infty$ norm. More concrete bounds of $\epsilon_{\rm dis}$ are provided in \cite{block2020generative, lee2022convergencea, chen2022sampling, lee2022convergenceb, yingxi2022convergence}. These works specialize $\epsilon_{\rm score}$ to the $L_2$ error of the estimated score function, and $\epsilon_{\rm dis}$ to the total variation distance between the generated distribution and the data distribution. \cite{lee2022convergencea} require the data distribution satisfying a log-Sobolev inequality. Concurrent works \cite{chen2022sampling} and \cite{lee2022convergenceb} relax the log-Sobolev assumption on the data distribution to only having bounded moments.

It is worth mentioning that \cite{lee2022convergenceb} allow $\epsilon_{\rm score}$ to be time-dependent. Recently, \cite{chen2023restoration, chen2023probability, benton2023linear} largely enrich the study of sampling theory using diffusion models. Specifically, novel analyses based on Taylor expansions of the discretized backward process \citep{li2023diffusion} or localization method \citep{benton2023linear} are developed, which improve the upper bound on $\epsilon_{\rm dis}$. Furthermore, \cite{chen2023restoration} extend to DDIM sampling scheme, and \cite{chen2023probability} consider the probabilistic ODE backward sampling.

Besides Euclidean data, \cite{de2022convergence} made the first attempt to analyze diffusion models for learning low-dimensional manifold data. Assuming $\epsilon_{\rm score}$ is small under the $L_\infty$ norm (extension to the $L^2$ norm is also provided), \cite{de2022convergence} bound $\epsilon_{\rm dis}$ of diffusion models in terms of the Wasserstein distance. The obtained bound has an exponential dependence on the diameter of the data manifold. Moreover, \cite{montanari2023posterior} consider using diffusion processes to sample from noisy observations of symmetric spiked models and \cite{el2023sampling} study polynomial-time algorithms for sampling from Gibbs distributions based on diffusion processes. The construction of diffusion processes in \cite{montanari2023posterior, el2023sampling} leverages the idea of stochastic localization \citep{eldan2013thin, montanari2023sampling, chen2022localization, el2022information}.

\paragraph{Score approximation and estimation theory} The score approximation and estimation theory aim to prove the sample complexity bounds of score estimation, which complements the sampling theory. An early work \citep{block2020generative} provides a score estimation guarantee when the error is measured in the $L_2$ norm. Yet the bound depends on some unknown Rademacher complexity of the score network class. More recently, \cite{oko2023diffusion} and \cite{chen2023score} both establish score estimation theories from the nonparametric statistics point of view. \cite{oko2023diffusion} mainly focus on the Euclidean data, while \cite{chen2023score} study low-dimensional subspace data. \citet{wibisono2024optimal} leverage the empirical Bayes theory to study score estimation using kernel methods.

The statistical estimation theory in \cite{oko2023diffusion} and \cite{chen2023score} is established by a bias-variance trade-off analysis. Bounding the bias term relies on an approximation theory of the score function, which implies how to choose a proper score network class. \cite{oko2023diffusion} show the approximation theory by constructing a series of ``diffused basis'' functions. \cite{chen2023score} adopt a different approach and resort to local Taylor approximations. Both works leverage the smoothness of the score function and the approximation error depends on the data dimension. \cite{mei2023deep}, on the other hand, investigate score approximation theory in high-dimensional graphical models, where score approximation tends to be efficient in high dimensions, that is, the sample complexity may not increase with $d$. 

On the algorithmic side, we are aware of \cite{shah2023learning} studying score estimation in Gaussian mixture models. They provide convergence analysis of using gradient descent to minimize the score estimation loss. The algorithmic behavior can be characterized in two phases, where in the large-noise phase, gradient descent is approximated by power iteration, and in the small-noise phase, gradient descent is akin to the EM algorithm.

\paragraph{Distribution estimation theory} Distribution estimation theory of diffusion models is explored in \cite{song2020sliced} and \cite{liu2022let} from an asymptotic statistics point of view. These results do not provide an explicit sample complexity bound. Given the aforementioned sampling theory and score estimation theory, an end-to-end analysis of diffusion models for distribution estimation is established in \cite{oko2023diffusion} and \cite{chen2023score}. In Euclidean space, \cite{oko2023diffusion} show that diffusion models are minimax optimal in estimating distributions with Besov density functions. \cite{chen2023score} unveil the adaptivity of diffusion models to linear subspace data. Recently, \cite{yuan2023reward} study the distribution estimation of conditional diffusion models with scalar reward guidance.

\subsection*{Paper Organization} The rest of the paper is organized as follows: Section~\ref{sec:pre} reviews the score-based diffusion model along with its implementation in classifier-free guidance, and introduces basics on H\"older functions and ReLU neural networks. Section \ref{sec:cdm_approx} establishes the first approximation theory of conditional score functions using neural networks. Section \ref{sec:estimation} presents a distribution estimation theory built upon the score approximation theory in the previous section. We also study an application for transition kernel estimation in model-based reinforcement learning. Section \ref{sec::applications} presents extended applications for reward-directed sample generation and inverse problems.

\subsection*{Notation} 
We use bold normal font letters to denote vectors, e.g., $\xb \in \RR^d, \yb \in \RR^{d_y}$. $\norm{\xb}$ denotes the Euclidean norm of $\xb$. $\norm{\xb}_1 = \sum_{i=1}^d |x_i|$ denotes the $\ell_1$-norm of $\xb$, and $\norm{\xb}_{\infty} = \max_{i\in[d]} |x_i|$ denotes the $\ell_{\infty}$-norm of $\xb$. In describing the forward process of diffusion models, $\phi_t$ denotes the Gaussian transition kernel dependent on $t$. 
\section{Preliminaries}\label{sec:pre}
We provide a brief introduction to conditional diffusion models (CDMs) with classifier-free guidance, H\"{o}lder functions, and score neural networks.

\paragraph{Diffusion process} 
Denote the initial conditional distribution as $P(X_0 = \xb|\yb)$ for $\xb \in \R^d$ given $\yb \in \R^{d_y}$. We consider adding noise progressively on $X_0$ only, which is described by a forward Ornstein–Uhlenbeck (OU) process,
\begin{align}\label{eq:forward}
\diff X_t = -\frac{1}{2} X_t \diff t + \diff W_t \quad \text{with} \quad X_0 \sim P(\cdot|\yb),
\end{align}
where $W_t$ is a Wiener process. In the infinite-time limit, $X_\infty$ follows a standard Gaussian distribution. At any finite time $t$, we denote $P_t(\cdot | \yb)$ as the marginal conditional distribution.

The forward process will terminate at a sufficiently large time $T$. To generate new samples, we reverse the time of \eqref{eq:forward} to obtain
\begin{align}\label{eq:backward}
\diff X_t^{\leftarrow} = \left[ \frac{1}{2} X_t^{\leftarrow} + \nabla \log p_{T-t}(X_t^{\leftarrow} | \yb)\right] \diff t + \diff \overline{W}_t \quad \text{with} \quad X_0^{\leftarrow} \sim P_T(\cdot|\yb),
\end{align}
where $\overline{W}_t$ is a time-reversed Wiener process and we use the arrow on $X$ to emphasize the backward process. 
The term $\nabla \log p_{T-t}(X_t^{\leftarrow} | \yb)$ is the conditional score function. Unfortunately, it is unknown and needs to be estimated using conditional score networks. 
We denote by $\shat(\xb,\yb,t)$ as such an estimator of the conditional score $\nabla \log p_t(\xb|\yb)$. Then the sample generation is described by the following backward SDE,
\begin{align}\label{eq:backward approx}
\diff \tilde{X}_t^{\leftarrow} = \left[ \frac{1}{2} \tilde{X}_t^{\leftarrow} + \shat(\tilde{X}_t,\yb,T-t)\right] \diff t + \diff \overline{W}_t \quad \text{with} \quad \tilde{X}_0^{\leftarrow} \sim {\sf N}(0, I).
\end{align}
The marginal distribution of $\tilde{X}_t^{\leftarrow}$ (conditioned on $\yb$) is written as $\tilde{P}
_{T-t}(\cdot |\yb)$.

\paragraph{Classifier-free guidance}
Classifier-free guidance, proposed in \cite{ho2022classifier}, is a widely adopted method for training $\shat(\xb,\yb,t)$. 
In specific, we learn both the conditional and unconditional score functions simultaneously, whose estimators are $\sb_1(\xb,\yb,t)$ and $\sb_2(\xb,t)$, respectively. To unify the notations, let $\tau \in \{\void, {\rm id}\}$ be a mask signal, where $\void$ means that we ignore the guidance $\yb$ and ${\rm id}$ means that we keep the guidance. According to the value of $\tau$, we consider the following two cases:
\begin{align}
\tau = {\rm id}: & \quad \int_{t_0}^T \frac{1}{T-t_0}\EE_{(\xb_0, \yb)} \left[\EE_{\xb' \sim {\sf N}(\alpha_t\xb_0, \sigma_t^2 I)} \left[\norm{\sbb_1(\xb', \yb, t) - \nabla_{\xb'} \log \phi_t(\xb' | \xb_0)}_2^2\right]\right] \diff t, \nonumber \\
\tau = \void: & \quad \int_{t_0}^T\frac{1}{T-t_0} \EE_{\xb_0} \left[\EE_{\xb' \sim {\sf N}(\alpha_t\xb_0, \sigma_t^2 I)} \left[\norm{\sbb_2(\xb', t) - \nabla_{\xb'} \log \phi_t(\xb' | \xb_0)}_2^2\right]\right] \diff t. \nonumber
\end{align}
Here $\phi_t$ is the Gaussian transition kernel of the forward process~\eqref{eq:forward}, i.e., $\nabla \log \phi_t(\xb' | \xb_0) = - (\xb' - \alpha_t \xb_0) / \sigma_t^2$ with $\alpha_t = e^{-t/2}$ and $\sigma_t^2 = 1 - e^{-t}$. We also note that $t_0$ is an early-stopping time to prevent the blow-up of score functions, which is commonly adopted in practice \citep{song2020improved, nichol2021improved}. 
As can be seen, when $\tau = \void$, the objective function reduces to that of score estimation in unconditional diffusion models.

Moreover, we unify these two cases by writing a tri-variate score function $ \sbb(\xb', \cdot, t)$ where the second argument is either $\void$ or $\yb$. 
We define the score estimator $\sb$ and its function class $\cF$ as
\begin{align*}
    \sb(\xb,\yb,t)=\begin{cases}
        \sb_1(\xb,\yb,t) & \text{if}~\yb \in \RR^{d_y}\\
        \sb_2(\xb,t) & \text{if}~\yb =\void
        \end{cases}
     ~~~\text{and}~~~\cF=\cF_1 \times \cF_2,
\end{align*}
where we recall that $\sb_1 \in \cF_1$ and $\sb_2 \in \cF_2$ are the conditional and unconditional score estimators, respectively. The function classes $\cF_1$ and $\cF_2$ are two ReLU neural networks (see \eqref{equ::ReLU archi}) with hyperparameters $(M_t, W, \kappa, L, K)$.
Then we have a unified objective for  classifier-free score estimation:
\begin{align}\label{eq:classifierfree_guidance}
\hat{\sbb} \in \argmin_{\sbb \in \cF} \int_{t_0}^T\frac{1}{T-t_0} \EE_{(\xb_0, \yb)} \left[\EE_{\tau, \xb' \sim {\sf N}(\alpha_t\xb_0, \sigma_t^2 I)} \left[\norm{\sbb(\xb', \tau \yb, t) - \nabla_{\xb'} \log \phi_t(\xb' | \xb_0)}_2^2\right]\right] \diff t,
\end{align}
where the inner expectation is taken with respect to $\tau \sim \textrm{Unif}\{ \void, {\rm id}\}$. We stick to the uniform prior on $\tau$ for simplicity, i.e., $\mathbb{P}(\tau = \void) = \PP(\tau = {\rm id}) = 0.5$. An extension to general mask rates causes no real difficulty.

In practice, \eqref{eq:classifierfree_guidance} is implemented using collected i.i.d. data points $\{(\xb_i, \yb_i)\}_{i=1}^n$, which essentially replaces the expectation over $(\xb_0, \yb)$ by its empirical counterpart. We denote a loss function
\begin{align}\label{equ::single empirical loss classifier}
\ell(\xb,\yb;\sb)=
\int_{t_0}^{T}\frac{1}{T-t_0}\EE_{\tau, \xb' \sim {\sf N}(\alpha_t\xb, \sigma_t^2 I)} \left[\norm{\sbb(\xb', \tau \yb, t) - \nabla_{\xb'} \log \phi_t(\xb' | \xb)}_2^2\right].
\end{align}
Note that we have assumed sufficient sampling on $\xb'$ and the mask signal $\tau$ in \eqref{equ::single empirical loss classifier}. Then classifier-free guidance is to minimize the following empirical risk
\begin{align}\label{eq:cdm_empirical_risk}
\argmin_{\sb \in \cF}~ \hat{\cL}(\sb) = \frac{1}{n} \sum_{i=1}^n \ell(\xb_i, \yb_i; \sb),
\end{align}
where we recall $n$ is the sample size. For future usage, we denote $\cL(\sbb)$ as the population risk function.

\paragraph{H\"{o}lder functions} H\"{o}lder functions are widely studied in nonparametric statistics \citep{gyorfi2006distribution, tsybakov2008introduction, wasserman2006all}. In the paper, we will focus on estimating distributions with a density in a H\"{o}lder ball.
\begin{definition}[H\"older norm]
Let $\beta = s+\gamma > 0$ be a degree of smoothness, where $s = \left\lfloor \beta \right \rfloor$ is an integer and $\gamma \in  \left[0,1\right)$.
For a function $f:\R^d \to \R$, its H\"older norm is defined as
\begin{align*}
   &\norm{f}_{\mH^\beta(\R^d)} :=\max_{\sbb: \norm{\sbb}_1 < s} \sup_{\xb}|\partial^{\sbb} f(\xb)|  +  \max_{\sbb: \norm{\sbb}_1 = s} \sup_{\xb\neq \zb} \frac{ \abs{ \partial^{\sbb} f(\xb) - \partial^{\sbb} f(\zb)} }{\norm{\xb - \zb}_{\infty}^{ \gamma } },
\end{align*}
where $\sbb$ is a multi-index. We say a function $f$ is $\beta$-H\"{o}lder, if and only if $\norm{f}_{\cH^{\beta}(\R^d)} < \infty$.
\end{definition}
We define a H\"{o}lder ball of radius $B > 0$ for some constant $B$ as
\begin{align*}
    \mH^{\beta}(\R^d, B) = \set{f: \R^d \rightarrow \R \middle\vert \norm{f}_{\mH^\beta(\R^d)} < B}.
\end{align*}
In the sequel, we will occasionally omit the domain $\R^d$, if it is clear from the context.

\paragraph{ReLU network architecture} We use neural networks to parameterize score functions. We consider the following class of  ReLU neural networks, denoted by  $\cF$:
\begin{align}
    \cF(M_t, W, \kappa, L, K):=\bigg \{& 
 \sbb(\xb,\yb,t)=(A_{L}\ReLU(\cdot) + \bbb_{L}) \circ \cdots
     \circ(A_{1}[\xb^\top,\yb^\top, t]^{\top} + \bbb_{1}): \nonumber\\
     &A_{i}\in \R^{d_i\times d_{i+1}}
     , ~\bbb_{i}\in \R^{d_{i+1}}, ~\max d_i \le  W, ~\sup_{\xb,\yb}\norm{\sb(\xb,\yb, t)}_{\infty} \le M_t, \nonumber\\
     &\max_{i}\|A_{i}\|_\infty \lor \|\bbb_{i}\|_\infty \leq \kappa, ~\sum_{i=1}^L (\|A_{i}\|_0+\|\bbb_{i}\|_0) 
     \le K
     \bigg \}. \label{equ::ReLU archi}
\end{align}
Here $\ReLU(\cdot)$ is the ReLU activation, $\norm{\cdot}_\infty$ is the maximal magnitude of entries and $\norm{\cdot}_0$ is the number of nonzero entries. 
The complexity of this network class is controlled by the number of layers, the number of neurons of each layer,  the magnitude of the network parameters, the number of nonzero parameters, and the magnitude of the neural network output.
We note that the output range $M_t$ is allowed to be dependent on the input $t$, and if we do not require a bounded output range, we will omit the parameter $M_t$.

\section{Conditional Score Approximation}\label{sec:cdm_approx}
The first step towards our statistical theory is to choose a proper score neural network for conditional score estimation. We establish an approximation theory of conditional score functions, where the rate of approximation is adaptive to the smoothness of the initial data distribution.

\subsection{Conditional Score Approximation}\label{sec::approx}

We impose the following light tail condition on the initial conditional data distribution $P(\cdot|\yb)$.
\begin{assumption}\label{assump:sub}
The conditional distribution has a density $p(\xb | \yb) \in \mH^{\beta}(\R^d\times[0,1]^{d_y}, B)$ for a H\"{o}lder index $\beta > 0$ and constant $B > 0$. Moreover, there exist positive constants $C_1,C_2$ such that for all $\yb\in[0,1]^{d_y}$, the density function $p(\xb | \yb) \leq C_1\exp(-C_2\norm{\xb}_2^2/2)$.
\end{assumption}
Assumption~\ref{assump:sub} encodes generic distributions with H\"{o}lder continuous densities. We consider bounded guidance $\yb \in [0, 1]^{d_y}$ for technical convenience only; the analysis can also be extended to the case where $\yb$ is unbounded with a light tail (see Appendix~\ref{sec::extend score approx} for details). The H\"{o}lder regularity is similar to the Besov regularity assumed in \cite{oko2023diffusion}, yet our light tail condition generalizes their bounded support condition.

On the other hand, Assumption~\ref{assump:sub} only concerns the regularity of the original data distribution. More importantly, it does not impose conditions on the induced conditional score function. This is substantially weaker than the Lipschitz score condition assumed in \cite{chen2022sampling, lee2022convergencea, lee2022convergenceb, chen2023score, yuan2023reward}.

The following theorem presents the approximation theory for using ReLU neural networks to approximate the conditional score.
\begin{theorem}\label{thm::score approx}
Suppose Assumption \ref{assump:sub} holds. For sufficiently large $N$ and constants $C_{\sigma}, C_{\alpha}>0$, by taking the early-stopping time $t_0=N^{-C_\sigma}$ and the terminal time $T=C_{\alpha}\log N$, there exists $\sb \in \cF(M_t, W, \kappa, L, K) $ such that for any $\yb \in [0,1]^{d_y}$ and $t \in [t_0,T]$, it holds that
\begin{align*}
    \int_{\R^{d}} \norm{\sb(\xb,\yb,t)-\nabla\log p_{t}(\xb|\yb)}^2_2 \cdot p_{t}(\xb|\yb) \diff \xb =\cO \left( \frac{B^2}{\sigma_t^4}\cdot  N^{-\frac{\beta}{d+d_y}}\cdot (\log N)^{d+\beta/2+1}\right).
\end{align*}
The hyperparameters in the ReLU neural network class $\cF$ satisfy
\begin{align*}
    & \hspace{0.4in} M_t = {\cO}\left(\sqrt{\log N}/\sigma^2_t\right),~
     W = {\cO}\left(N\log^7 N\right),\\
     & \kappa =\exp \left({\cO}(\log^4 N)\right),~
    L = {\cO}(\log^4 N),~
     K= {\cO}\left(N\log^9 N\right).
\end{align*}
where $\cO$ hides all other polynomial factors depending on $d, d_y, \beta, C_1$, $C_2,  C_\alpha$ and $C_\sigma$.
\end{theorem}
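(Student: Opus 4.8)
The plan is to build an explicit approximant of the conditional score $\nabla \log p_t(\xb|\yb)$ by approximating the numerator and denominator of the closed-form expression for the score separately, then composing with a division network. Recall that $p_t(\xb|\yb) = \int \phi_t(\xb|\xb_0) p(\xb_0|\yb)\,\diff\xb_0$, and hence $\nabla \log p_t(\xb|\yb) = \frac{\nabla p_t(\xb|\yb)}{p_t(\xb|\yb)} = \frac{1}{\sigma_t^2}\left(\frac{\int \alpha_t \xb_0\, \phi_t(\xb|\xb_0) p(\xb_0|\yb)\,\diff\xb_0}{\int \phi_t(\xb|\xb_0)p(\xb_0|\yb)\,\diff\xb_0} - \xb\right)$. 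So it suffices to approximate the two ``diffused'' integrals $h_0(\xb,\yb,t) \defeq \int \phi_t(\xb|\xb_0) p(\xb_0|\yb)\,\diff\xb_0$ and $h_1(\xb,\yb,t)\defeq \int \xb_0\,\phi_t(\xb|\xb_0)p(\xb_0|\yb)\,\diff\xb_0$ on the relevant range, lower-bound $h_0$ away from zero on the effective support (using the light-tail Assumption~\ref{assump:sub} and Gaussian tail bounds to truncate $\xb$ to a ball of radius $O(\sqrt{\log N})$), and invoke a standard ReLU construction of the reciprocal map on a bounded-away-from-zero interval. The $1/\sigma_t^4$ blow-up in the stated error is exactly the price of this division step combined with the $1/\sigma_t^2$ prefactor squared.

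The core of the argument — and the paper's advertised novelty — is the \emph{diffused Taylor approximation} used to approximate $h_0$ and $h_1$. First I would discretize the integral over $\xb_0$: partition a box $[-R,R]^d$ (with $R = O(\sqrt{\log N})$) into roughly $N^{d/(d+d_y)}$ cells and the cube $[0,1]^{d_y}$ into roughly $N^{d_y/(d+d_y)}$ cells, giving $O(N)$ cells total, and on each cell replace $p(\xb_0|\yb)$ by its degree-$s$ Taylor polynomial in $(\xb_0,\yb)$ about the cell center; the H\"older bound in Assumption~\ref{assump:sub} controls the remainder by $O(B \cdot (\text{cell width})^\beta)$, which is $O(B N^{-\beta/(d+d_y)})$ after optimizing the cell width. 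Crucially, integrating a monomial in $\xb_0$ against the Gaussian kernel $\phi_t(\xb|\xb_0)$ yields a closed-form expression that is polynomial in $\xb$ times a Gaussian in $\xb$ — a ``diffused monomial'' $\dmono$-type function — and I would show each such building block is approximable by a small ReLU subnetwork (Gaussians and polynomials on bounded domains are standard, and the $t$-dependence enters only through $\alpha_t, \sigma_t$ which are themselves smooth and approximable). Assembling the $O(N)$ local pieces with a partition-of-unity / selection gadget (as in the ReLU approximation literature) gives networks for $h_0$ and $h_1$ of width $O(N \,\mathrm{polylog} N)$, depth $O(\log^4 N)$, and the stated sparsity $K = O(N\log^9 N)$; the weight magnitude $\kappa = \exp(O(\log^4 N))$ comes from the Gaussian and polynomial sub-blocks.

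The remaining steps are: (i) a truncation argument — outside the ball $\norm{\xb}_2 \lesssim \sqrt{\log N}$ the measure $p_t(\xb|\yb)\diff\xb$ is negligible (polynomially small in $N$), so the $L_2(p_t)$ error there is absorbed, and by choosing the constant in $R$ appropriately one kills it with a factor of $(\log N)$; and inside the ball, (ii) propagate the sup-norm errors on $h_0, h_1$ and the reciprocal through the product and subtraction to get an $L_2(p_t)$ bound on $\sb - \nabla\log p_t$, picking up the $1/\sigma_t^4$ and the extra $(\log N)^{d+\beta/2+1}$ polylog from the ambient-dimension volume factor and from the number of monomials of degree $\le s$. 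I also need to verify the output bound $\norm{\sb(\xb,\yb,t)}_\infty \le M_t = O(\sqrt{\log N}/\sigma_t^2)$, which follows by clipping the network output to the a priori bound on $\nabla\log p_t$ on the effective support (itself $O(\sqrt{\log N}/\sigma_t^2)$ by the same Gaussian-tail reasoning). The main obstacle I anticipate is making the diffused Taylor step fully rigorous uniformly in $t \in [t_0, T]$: near $t_0 = N^{-C_\sigma}$ the kernel $\phi_t$ is nearly a point mass (so $\sigma_t \to 0$, and the diffused monomials become sharply peaked), and one must track how the cell-width choice interacts with $\sigma_t$ so that the approximation, the truncation radius, and the weight/width budgets all remain as claimed — this is where the careful bookkeeping (and presumably the $C_\sigma$, $C_\alpha$ constants) is concentrated.
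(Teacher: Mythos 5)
Your overall architecture matches the paper's: write $\nabla\log p_t=\nabla p_t/p_t$, truncate $\xb$ to a cube of radius $O(\sqrt{\log N})$, approximate the numerator and denominator by cell-wise Taylor expansions of $p(\cdot|\yb)$ pushed through the Gaussian kernel, implement division and clipping at $M_t$ with standard ReLU gadgets. However, there is a genuine gap at the division step. You assert that $h_0=p_t(\xb|\yb)$ can be ``lower-bounded away from zero on the effective support'' using only the light-tail part of Assumption~\ref{assump:sub}, and then invoke a reciprocal network on a bounded-away-from-zero interval. Assumption~\ref{assump:sub} gives no lower bound on the density: $p(\cdot|\yb)$ may vanish on open subsets of the truncation cube, and then the only available lower bound on $p_t$ inside the cube is of order $\sigma_t^{-d}\exp(-(\|\xb\|^2+1)/\sigma_t^2)$ (Lemma~\ref{lemma::density bound}), which at $t_0=N^{-C_\sigma}$ is $\exp(-\mathrm{poly}(N))$-small. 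A reciprocal network valid down to that level would need weight magnitudes far exceeding $\kappa=\exp(O(\log^4 N))$, so your construction as stated does not fit the claimed hyperparameters. This is exactly the difficulty the paper resolves with its second truncation: clip the denominator at a threshold $\epsilon_{\rm low}\asymp N^{-\beta/(d+d_y)}\mathrm{polylog}(N)$ (Lemma~\ref{lemma::truncation p}), bound the contribution of the low-density region separately, and note that on the high-density region the pointwise error scales like $N^{-\beta/(d+d_y)}/(\sigma_t^2\,p_t)$, so the weighted $L_2$ error is $N^{-2\beta/(d+d_y)}/\epsilon_{\rm low}$; balancing against the low-density term is what produces the theorem's rate $N^{-\beta/(d+d_y)}$ rather than the $N^{-2\beta/(d+d_y)}$ your scheme would implicitly promise. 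If you could genuinely lower-bound $h_0$ by a constant you would be proving Theorem~\ref{thm::score approx exp} under the stronger Assumption~\ref{assump::expdensity}, not Theorem~\ref{thm::score approx}.

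A secondary issue: your ``diffused monomial'' step claims that integrating a cell-local Taylor monomial against $\phi_t$ gives a polynomial-in-$\xb$ times a Gaussian. That closed form holds only for integrals over all of $\R^d$; your integrals are restricted to cells, so they involve incomplete Gaussian (erf-type) integrals, which are not directly covered by your ``standard'' building blocks. The paper avoids this by also Taylor-expanding the Gaussian kernel itself with $p=O(\log N)$ terms (valid because $|x_i-\alpha_t z_i|/\sigma_t\lesssim\sqrt{\log N}$ on the clipped domain), so every cell-local integral becomes an explicitly computable polynomial expression in $\xb$, $\alpha_t$, $\sigma_t$ and the clipped integration limits, which is then implementable within the stated network budget. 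You would need either this device or an explicit ReLU approximation of the incomplete Gaussian integrals, with the accompanying error and weight-magnitude bookkeeping.
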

The proof is provided in Appendix~\ref{sec::proof score approx}. We note that the approximation theory also applies to unconditional score approximation, where we just need to set $d_y = 0$. 

\paragraph{Rate of approximation} Theorem~\ref{thm::score approx} establishes the rate of approximation to the conditional score function at time $t$. For a fixed network size $N$, the approximation error scales as $N^{-\frac{\beta}{d + d_y}}$, indicating a faster approximation when the initial data distribution has a higher order of smoothness. Meanwhile, we also observe that the approximation error increases as time $t$ decreases, which is due to the fact that the score blows up when $t$ approaches zero \citep{song2020improved, vahdat2021score}.

\paragraph{Relation to \cite{chen2023score} and \cite{yuan2023reward}} Both works establish approximation guarantees for Lipschitz continuous score functions. However, such Lipschitzness is not needed in our analysis. Instead, our approximation rate is adaptive to the H\"{o}lder smoothness of the initial conditional data distribution. This adaptivity is due to our novel constructive approximation of the score function. In particular, we write the score function as $\nabla \log p_t =\nabla p_t/p_t$ and propose \textbf{diffused local polynomials} to approximate $p_t$ and $\nabla p_t$ separately. Here $p_t$ and $\nabla p_t$ inherit the smoothness of the initial conditional distribution, without requiring any smoothness of the score function. See Section \ref{sec:proof_details} for more details.

In addition, the time $t$ counts as an additional dimension of the input of the score function and slows down the approximation in \cite{chen2023score, yuan2023reward}. Yet Theorem~\ref{thm::score approx} still takes $t$ as an input, the approximation is not affected by the augmented input dimension. The reason behind this is that the time $t$ enters the score function through the ratio $\alpha_t$ and the variance $\sigma_t$ of the Gaussian noise added to the clean data distribution. Both $\alpha_t$ and $\sigma_t$ are super smooth (infinitely differentiable) and therefore, very easy to approximate using neural networks. 

Theorem~\ref{thm::score approx} is the first approximation theory of conditional score functions with generic H\"{o}lder smooth data distributions. In the following analysis, we present a faster approximation result under a slightly stronger assumption, which further leads to a sharp distribution estimation guarantee in Section~\ref{sec:estimation}.
\begin{assumption}\label{assump::expdensity}
Let $C$ and $C_2$ be two positive constants and function $f \in \mH^{\beta}(\R^d\times [0,1]^{d_y}, B)$ for a constant radius $B$. We assume $f(\xb, \yb) \geq C$ for all $(\xb, \yb)$ and the conditional density function $p(\xb | \yb) = \exp(-C_2\norm{\xb}^2_2/2) \cdot f(\xb, \yb)$.
\end{assumption}
For a better interpretation, we can always write the conditional density function $p(\xb | \yb)$ in Assumption~\ref{assump:sub} as $p(\xb | \yb) = \exp(-C_2 \norm{\xb}_2^2 / 2) \cdot f(\xb, \yb)$. Clearly, $f(\xb, \yb)$ is H\"{o}lder continuous. In this regard, Assumption~\ref{assump::expdensity} only strengthens Assumption~\ref{assump:sub} by imposing lower and upper bounds on $f(\xb, \yb)$. The lower bound on $f(\xb, \yb)$ is often required for effective density estimation \citep{tsybakov2008introduction, wasserman2006all}. The upper bound enables the approximation of $f(\xb, \yb)$ in an extended region (see Section~\ref{sec:proof_details}). A fast approximation rate is presented in the following theorem.
\begin{theorem}\label{thm::score approx exp}
    Suppose Assumption \ref{assump::expdensity} holds. For sufficiently large $N$ and constants $C_{\sigma}, C_{\alpha}>0$, by taking early-stopping time $t_0=N^{-C_\sigma}$ and terminal time $T=C_{\alpha}\log N$, there exists $\sb \in \cF(M_t, W, \kappa, L, K) $ such that for all $\yb \in [0,1]^{d_y}$ and $t \in [t_0,T]$, it holds that
\begin{align*}
    \int_{\R^{d}} \norm{\sb(\xb,\yb,t)-\nabla\log p_{t}(\xb|\yb)}^2_2 \cdot p_{t}(\xb|\yb) ~\diff \xb =\cO \left( \frac{B^2}{\sigma_t^2}\cdot N^{-\frac{2\beta}{d+d_y}} \cdot (\log N)^{\beta+1}\right).
\end{align*}
The hyperparameters in the ReLU neural network class $\cF$ satisfy
\begin{align*}
    & \hspace{0.4in} M_t = \cO\left(\sqrt{\log N}/\sigma_t\right),~
     W = {\cO}\left(N\log^7 N\right),\\
     & \kappa =\exp \left({\cO}(\log^4 N)\right),~
    L = {\cO}(\log^4 N),~
     K= {\cO}\left(N\log^9 N\right).
\end{align*}
\end{theorem}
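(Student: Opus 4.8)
The plan is to follow the same constructive pipeline as in Theorem~\ref{thm::score approx}, decomposing $\nabla \log p_t(\xb|\yb) = \nabla p_t(\xb|\yb) / p_t(\xb|\yb)$ and building separate neural network approximations of the numerator $\nabla p_t$ and denominator $p_t$, but now exploiting the multiplicative structure $p(\xb|\yb) = \exp(-C_2\|\xb\|_2^2/2) f(\xb,\yb)$ with $f$ bounded above \emph{and} below. First I would write the diffused density explicitly via the forward Gaussian kernel, $p_t(\xb|\yb) = \int \phi_t(\xb|\xb_0) p(\xb_0|\yb)\,\diff\xb_0$, and observe that since $\phi_t(\xb|\xb_0)$ is Gaussian and $p(\xb_0|\yb)$ carries the factor $\exp(-C_2\|\xb_0\|^2/2)$, the Gaussian parts combine: one can factor out a clean Gaussian $\exp(-c_t\|\xb\|^2/2)$ times a diffused version of $f$, i.e. $p_t(\xb|\yb) = g_t(\xb)\cdot (\text{something involving a Gaussian-smoothed } f)$. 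The key gain over Assumption~\ref{assump:sub} is that the lower bound $f \geq C$ (i) prevents the denominator from vanishing, so the division step incurs only a $1/\sigma_t$ rather than $1/\sigma_t^2$ blow-up, and (ii) together with the upper bound lets us approximate $f$ — and hence $p_t$ — accurately on an \emph{enlarged} domain whose radius grows like $\sqrt{\log N}$, so that the tail mass outside this domain is polynomially negligible in $N$.

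Next I would invoke the diffused local polynomial construction from Section~\ref{sec:proof_details}: partition the relevant $\xb$-domain (of radius $O(\sqrt{\log N})$) and the $\yb$-cube $[0,1]^{d_y}$ into $\approx N$ cells, on each cell use the degree-$s = \lfloor\beta\rfloor$ Taylor polynomial of $f$ (whose remainder is $O(\text{cell-width}^\beta) = O(N^{-\beta/(d+d_y)})$ by the H\"older assumption), and diffuse these local polynomials through the Gaussian kernel. Because $\alpha_t$ and $\sigma_t$ are $C^\infty$ functions of $t$, the time input is absorbed at negligible approximation cost, exactly as in Theorem~\ref{thm::score approx}. Each diffused monomial is then realized to accuracy $N^{-\beta/(d+d_y)}$ by a ReLU sub-network using the standard approximation of polynomials, reciprocals, and products (Yarotsky-type constructions), which fixes $W = O(N\log^7 N)$, $L = O(\log^4 N)$, $K = O(N\log^9 N)$, $\kappa = \exp(O(\log^4 N))$. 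Assembling numerator and denominator networks and forming $\widehat{\nabla p_t}/\widehat{p_t}$ via a clipped reciprocal sub-network (clipping at the known lower bound of $p_t$, which follows from $f \geq C$) gives a single network $\sb \in \cF$ with output bound $M_t = O(\sqrt{\log N}/\sigma_t)$.

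Then I would carry out the error bookkeeping. The $L_2(p_t)$ error splits into: the interior region, where $\|\sb - \nabla\log p_t\|^2$ is controlled by $\frac{1}{\sigma_t^2}\big(\|\widehat{\nabla p_t} - \nabla p_t\|^2 + \|\nabla\log p_t\|^2 |\widehat{p_t} - p_t|^2 / p_t^2\big)$ using $p_t$ bounded below — each squared approximation error is $O(N^{-2\beta/(d+d_y)}\,\mathrm{polylog})$, and the prefactor $1/\sigma_t^2$ combined with the $\sigma_t$ from $M_t$... actually the cleaner accounting gives the stated $B^2/\sigma_t^2$; and the tail region outside radius $O(\sqrt{\log N})$, whose $p_t$-mass is $O(N^{-\text{large}})$ by the Gaussian upper bound, so it contributes only lower-order terms once multiplied by $M_t^2 + \sup\|\nabla\log p_t\|^2 = O(\mathrm{poly}(\log N)/\sigma_t^2)$. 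Collecting the polylogarithmic factors from the number of cells ($\log^d N$-type factors from the covering) and from the reciprocal/product sub-networks yields the claimed $(\log N)^{\beta+1}$ factor and the rate $N^{-2\beta/(d+d_y)}$, which is the square of the Theorem~\ref{thm::score approx} rate — the improvement coming precisely from not dividing by a second power of $\sigma_t$ and from being able to Taylor-expand $f$ directly rather than $p_t$ with its awkward Gaussian tail. The main obstacle I anticipate is the denominator control: making rigorous the uniform-in-$t$ lower bound on $p_t(\xb|\yb)$ over the enlarged domain (it degrades as $t \to t_0$ and as $\|\xb\|$ grows), and ensuring the clipped reciprocal network's error is correctly propagated with the right $\sigma_t$-dependence rather than accidentally reintroducing a $1/\sigma_t^2$; this is where the quantitative form of Assumption~\ref{assump::expdensity} (both bounds on $f$) must be used most carefully.
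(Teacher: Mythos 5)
Your high-level plan matches the paper's: exploit the factorization $p(\xb|\yb)=\exp(-C_2\norm{\xb}_2^2/2)f(\xb,\yb)$, let the Gaussian factors combine under the forward kernel, reuse the diffused local polynomial machinery, and trace the improvement to the two-sided bounds on $f$. But there is a genuine gap in your assembly step. You propose to form $\widehat{\nabla p_t}/\widehat{p_t}$ with a reciprocal sub-network clipped at ``the known lower bound of $p_t$, which follows from $f\ge C$.'' No constant lower bound on $p_t$ exists: on the truncation cube of radius $C_x\sqrt{\log N}$ (and $C_x$ must scale like $\sqrt{\beta}$ so that the discarded tail mass is $N^{-2\beta/(d+d_y)}$), the density $p_t(\xb|\yb)$ decays like $\exp(-C_2\norm{\xb}_2^2/(2(\alpha_t^2+C_2\sigma_t^2)))$ and is as small as $N^{-c}$ for a constant $c$ of order $\beta$. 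The diffused local polynomial approximations of $p_t$ and $\nabla p_t$ are accurate only in the \emph{absolute} sense (roughly $BN^{-\beta/(d+d_y)}$ up to polylog factors), so after the division the pointwise score error scales like $N^{-\beta/(d+d_y)}/p_t$, and the $p_t$-weighted $L_2$ error picks up $\int_{\cD}p_t^{-1}$, which is polynomially large in $N$. This is exactly the mechanism that caps Theorem~\ref{thm::score approx} at the slow rate even after optimizing the threshold $\epsilon_{\rm low}$, so your construction as written would not deliver $N^{-2\beta/(d+d_y)}$; your closing paragraph flags this danger but does not supply the fix.

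The missing idea is to cancel the Gaussian analytically \emph{before} any network construction. As in \eqref{eq:fast_pt} and Lemma~\ref{lemma::score decompose exp}, one has $\nabla\log p_t(\xb|\yb)=\frac{-C_2\xb}{\alpha_t^2+C_2\sigma_t^2}+\frac{\nabla h(\xb,\yb,t)}{h(\xb,\yb,t)}$, where $h$ is a Gaussian smoothing of $f$ alone and hence obeys constant two-sided bounds $C\le h\le B$ together with $\norm{(\hat{\sigma}_t/\hat{\alpha}_t)\nabla h}_{\infty}\lesssim B$, uniformly in $\xb$, $\yb$, $t$ (Lemma~\ref{lemma::bound h and nabla h}). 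The linear term is implemented (essentially exactly) by the network; only $\nabla h/h$ is approximated, the reciprocal is taken of a quantity bounded below by a fixed constant --- no clipping threshold, no dependence of the denominator's lower bound on $\xb$, $t$, or $N$ --- and the sole $t$-dependent amplification is the factor $\hat{\alpha}_t/\hat{\sigma}_t\lesssim 1/\sigma_t$, which is where the $B^2/\sigma_t^2$ prefactor and the squared rate actually come from (Proposition~\ref{prop::logh approx bounded exp}). Without this reduction, or some other device making the numerator/denominator errors \emph{relative} to the local density, the interior-region bookkeeping in your third paragraph does not go through.
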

The proof is provided in Appendix \ref{sec::all proof score sub exp}. We discuss several interpretations.

\paragraph{Improved rate of approximation} The approximation rate here is $N^{-\frac{2\beta}{d + d_y}}$, which is substantially faster than Theorem~\ref{thm::score approx}. Further, we also have an improved dependence on $\sigma_t$ and $\log N$. These improvements are made possible by an intricate approximation of $f(\xb, \yb)$ in a shell region. See details in Section~\ref{sec:proof_details}.

\paragraph{Relation to \cite{oko2023diffusion}} A similar approximation rate is proved in \cite{oko2023diffusion} for Besov data distributions on a bounded domain, where a special boundary condition is needed to validate their approximation theory. Despite that Theorem~\ref{thm::score approx exp} allows conditional score approximation, the major difference in Theorem~\ref{thm::score approx exp} is that it only requires mild boundedness conditions on the conditional density function.

\paragraph{Extensions of Theorems \ref{thm::score approx} and \ref{thm::score approx exp}} We remark that our theory can also apply to the case in which $\yb\in\R^{d_y}$ instead of $\yb\in[0,1]^{d_y}$. Moreover, our theory naturally applies to the unconditioned score approximation (approximate $\nabla \log p_t(\xb)$) when we remove the conditional dependence of $\yb$. We refer the readers to Appendix \ref{sec::extend score approx} for more details about the extensions of our approximation theory. These extensions further enable wide applications of our theory in reinforcement learning and inverse problems.

\subsection{Proof Overview and Unraveling the Fast Rate}\label{sec:proof_details}
Here we introduce a unified analytical framework for proving Theorems~\ref{thm::score approx} and \ref{thm::score approx exp}. The key steps consist of a proper truncation of the data density function and domain, and a novel diffused Taylor polynomial approximation. More importantly, we discuss in detail how Assumption~\ref{assump::expdensity} leads to a fast approximation rate.

\paragraph{Unified Analytical Framework for Theorems~\ref{thm::score approx} and \ref{thm::score approx exp}}
To begin with, we rewrite the score function as
\begin{align*}
\nabla \log p_t(\xb | \yb) = \frac{\nabla p_t(\xb | \yb)}{p_t(\xb | \yb)},
\end{align*}
where we develop approximations to the numerator and denominator separately. Yet the construction of the approximations to the numerator and denominator is almost identical. In the following, we focus on the approximation of $p_t(\xb | \yb)$. We also demonstrate the idea in the left panel of Figure~\ref{fig:slow_fast_rate}.

\vspace{5pt} 
\noindent $\bullet$ \underline{\it Approximate numerator and denominator}. Following the forward process \eqref{eq:forward} of conditional diffusion models, we have
\begin{align}\label{equ:: sketch p_t integration}
p_t(\xb|\yb)&=\int_{\RR^{d}}p(\zb|\yb)\frac{1}{\sigma_t^{d}(2\pi)^{d/2}} \exp\left(-\frac{\norm{\alpha_t\zb-\xb}^2}{2\sigma_t^2}\right)\diff \zb.
\end{align}
Recall that the initial conditional density function $p(\zb | \yb)$ is H\"{o}lder continuous. To approximate $p_t(\xb | \yb)$, a na\"{i}ve idea is to use a Taylor polynomial $h^{\rm density}_{\rm Taylor}(\zb, \yb)$ to approximate $p(\zb | \yb)$. This leads to an approximator in the form of
\begin{align*}
\int_{\RR^{d}} h^{\rm density}_{\rm Taylor}(\zb, \yb) \frac{1}{\sigma_t^{d}(2\pi)^{d/2}} \exp\left(-\frac{\norm{\alpha_t\zb-\xb}^2}{2\sigma_t^2}\right)\diff \zb.
\end{align*}
Examining the display above, we encounter two caveats:
\begin{enumerate}[leftmargin = 0.2in]
\item Since the data domain is unbounded,  it can be difficult to uniformly approximate the  conditional density $p(\zb | \yb)$ using $h^{\rm density}_{\rm Taylor}$;
\item Although the Taylor polynomial $h^{\rm density}_{\rm Taylor}(\zb, \yb)$ can be implemented using a neural network, the integration over $\zb$ is prohibitively difficult to handle. Moreover, the exponential function and the time $t$ dependence make the approximation more obscure.
\end{enumerate}
To address the first challenge, we devise a proper truncation on the data domain. Specifically, for any time $t$, we truncate the data domain by an $\ell_\infty$-ball of radius $R$, that is, we denote $\cD_1 = \set{\zb:\norm{\zb}_{\infty}\le R}$ and only ensure $h^{\rm density}_{\rm Taylor}$ approximates $p(\zb | \yb)$ on $\cD_1$ for any $\yb$. Such a domain truncation is reasonable when the conditional density function has a light tail. In other words, the truncation error is well controllable when the radius $R$ is sufficiently large (see details in Lemma~\ref{lemma::truncation x}).

For the second challenge, we propose {\bf diffused local polynomials} suitable for approximation of $p_t(\xb | \yb)$. Let $h_{\rm Taylor}^{\rm kernel}(\zb, \xb, t)$ be a Taylor polynomial for approximating the exponential transition kernel in \eqref{equ:: sketch p_t integration}. Then we define
\begin{align*}
\textsf{Diffused-local-poly}(\xb, \yb, t) = \int_{\RR^{d}} \frac{1}{\sigma_t^{d}(2\pi)^{d/2}} h^{\rm density}_{\rm Taylor}(\zb, \yb) h^{\rm kernel}_{\rm Taylor}(\zb, \xb, t) \diff \zb.
\end{align*}
We note that diffused local polynomials resemble the same formulation of $p_t(\xb | \yb)$, while they enjoy a critical advantage: As the product $h^{\rm density}_{\rm Taylor}(\zb, \yb) h^{\rm kernel}_{\rm Taylor}(\zb, \xb, t)$ is again a polynomial, whose integration is explicitly computable and consequently allows a direct neural network implementation. We remark that the time $t$ enters the diffused local polynomials only through the two quantities $\sigma_t$ and $\alpha_t$ in the Gaussian kernel, which are both super smooth and very easy to approximate. We acknowledge that diffused local polynomials are inspired by the analysis in \cite{oko2023diffusion}. A similar approximation scheme utilizing diffused local polynomials can be applied to $\nabla p_t(\xb | \yb)$ in the numerator.

\vspace{5pt}

\noindent $\bullet$ \underline{\it Use a fraction to approximate the score function}. We approximate the score function by the fraction $\nabla \log p_t(\xb | \yb) = \nabla p_t(\xb | \yb) / p_t(\xb | \yb)$, however, there is an additional caveat: $p_t(\xb | \yb)$ can be arbitrarily small so that the reciprocal $1/p_t(\xb | \yb)$ can explode to infinity. The reason behind this exploding issue is that the initial data distribution fails to have good coverage uniformly. 
That is, the density of the initial data distribution can be small (or even zero)  in some areas.
As a result, estimating the density in these regions is fundamentally difficult \citep{tsybakov2008introduction}.

Here we introduce a threshold $\epsilon_{\rm low}$ to alleviate the exploding reciprocal issue. The idea is to replace the denominator in \eqref{equ:: sketch p_t integration} by $\max\{p_t(\xb | \yb), \epsilon_{\rm low}\}$. We choose a proper $\epsilon_{\rm low}$ balancing two criteria: 1) $\epsilon_{\rm low}$ should not be too small so that $1/\epsilon_{\rm low}$ is controlled; 2) $\epsilon_{\rm low}$ should not be too large to deviate heavily from the original score function. 
As we will show in Lemma~\ref{lemma::truncation p}, the choice of $\epsilon_{\rm low}$ depends on the tail behavior of the conditional distribution $p_t(\xb | \yb)$. 
We remark that truncating $p_t(\xb | \yb)$ at $\epsilon_{\rm low}$ inevitably compromises the approximation efficiency, which leaves room for improvement in Theorem~\ref{thm::score approx exp}.

To this end, it remains to implement the previous constructions by a neural network, where we leverage the universal approximation ability of ReLU networks.

\paragraph{Unraveling the fast rate}
We further discuss how Assumption~\ref{assump::expdensity} enables a fast approximation rate. Under Assumption~\ref{assump::expdensity}, substituting $p(\zb | \yb) = f(\zb, \yb) \exp(-C_2\norm{\zb}_2^2 / 2)$ into \eqref{equ:: sketch p_t integration}, by some algebraic manipulation, we have
\begin{align}
p_t(\xb|\yb)
        &=\frac{1}{(\alpha_t^2+C_2\sigma_t^2)^{d/2}}\exp\left(\frac{-C_2\norm{\xb}^2_2}{2(\alpha_t^2+C_2\sigma_t^2)}\right)\nonumber\\
        &\hspace{2cm}\cdot \underbrace{\int f(\zb,\yb) \frac{(\alpha_t^2+C_2\sigma_t^2)^{d/2}}{(2\pi)^{d/2}\sigma_t^d}\exp\left(-\frac{\norm{\zb-\alpha_t\xb/(\alpha_t^2+C_2\sigma_t^2)}^2}{2\sigma_{t}^2/(\alpha_t^2+C_2\sigma_t^2)}\right)\diff \zb}_{h(\xb,\yb,t)}. \label{eq:fast_pt}
\end{align}
We observe that $f(\zb, \yb)$ has two benign properties: 1) it is lower bounded away from zero, suggesting homogeneous spatial coverage of the data distribution; 2) it has H\"{o}lder regularity with a bounded H\"{o}lder norm. Denoting the integral in \eqref{eq:fast_pt} as $h(\xb, \yb, t)$, we immediately deduce that $h(\xb, \yb, t)$ is bounded away from $0$. Equation~\eqref{eq:fast_pt} also suggests that
\begin{align*}
\nabla \log p_t(\xb | \yb) = \frac{-C_2\xb}{(\alpha_t^2+C_2\sigma_t^2)} + \frac{\nabla_{\xb} h(\xb, \yb, t)}{h(\xb, \yb, t)}.
\end{align*}
Thus, it suffices to approximate $\frac{\nabla_{\xb} h(\xb, \yb, t)}{h(\xb, \yb, t)}$ using the analytical framework introduced in the previous paragraphs. Notably, we do not need to truncate $h(\xb, \yb, t)$ to prevent the exploding of $1/ h(\xb, \yb, t)$, which saves the truncation error and leads to fast approximation. We provide a side-by-side comparison between the approximation schemes in Theorems~\ref{thm::score approx} and \ref{thm::score approx exp} in Figure~\ref{fig:slow_fast_rate}.
\begin{figure}[!htb]
    \centering
    \includegraphics[width = 0.975\textwidth]{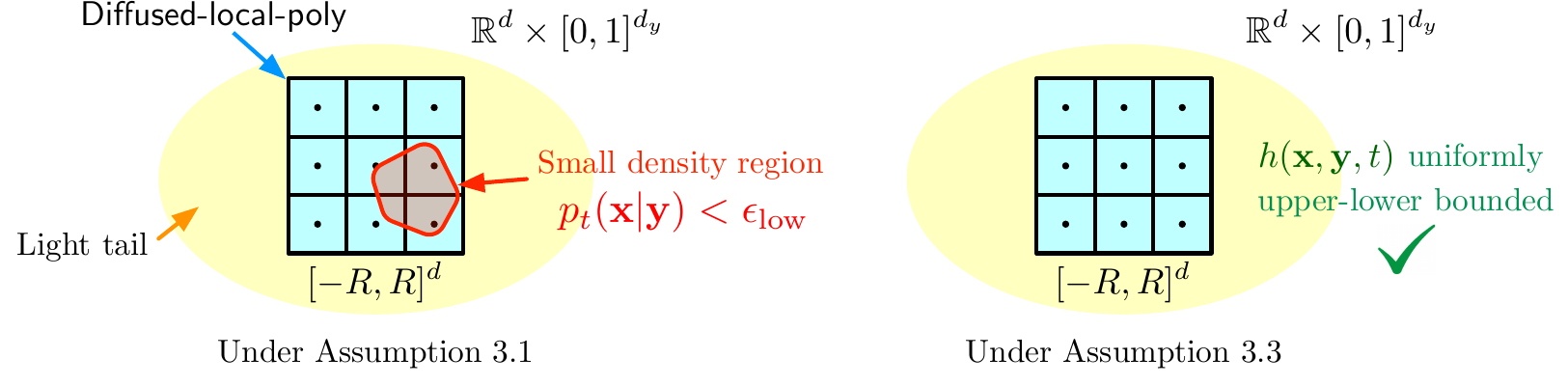}
    \caption{Comparison of approximation schemes in Theorems~\ref{thm::score approx} and \ref{thm::score approx exp}. On the left panel, we use diffused local polynomials to approximate the numerator and denominator on a truncated cube. However, the existence of of small density region necessitates a truncation at $\epsilon_{\rm low}$, which compromises the approximation efficiency. In contrast, under Assumption~\ref{assump::expdensity}, we eliminate small density regions within the cube, which leads to a fast approximation.}
    \label{fig:slow_fast_rate}
\end{figure}

\section{From Score Approximation to Distribution Estimation: Statistical Results}\label{sec:estimation}
Section~\ref{sec:cdm_approx} provides theoretical results of approximating conditional score functions using ReLU neural networks. 
In this section, we apply these theoretical results to statistical estimation problems and develop a few sample complexity results for methods that involve conditional score estimation. 
In particular, we first study the problem of estimating a conditional score function via the classifier-free guidance method introduced in Section \ref{sec:pre} and quantify the sample complexity of learning the conditional score from data with ReLU neural networks. 
We further apply this result to establish the sample complexity of learning a conditional distribution via the conditional diffusion model. 
Furthermore, we conclude this section with an application of our statistical theory to the problem of estimating the transition probability in model-based reinforcement learning.

\subsection{Conditional Score Estimation}\label{sec::esti}
Recall that classifier-free guidance method estimates the conditional score function by minimizing the empirical risk $\hat{\cL}$ defined in \eqref{eq:cdm_empirical_risk}. Given a score network $\cF$, we denote the corresponding empirical risk minimizer as
\begin{align*}
\hat{\sb} \in \argmin_{\sb \in \cF}~ \hat{\cL}(\sb).
\end{align*}

We measure the quality of the estimator $\hat{\sb}$ by its mean-squared deviation to the ground-truth conditional score function:
\begin{align*}
\cR(\shat)&=\int_{t_0}^T\frac{1}{T-t_0} \EE_{(\xb_t,\yb)}\norm{\shat(\xb_t,\yb,t)-\nabla \log p_t(\xb_t|\yb)}_2^2 \diff t.
\end{align*}
Here the expectation is taken over the joint distribution of $\xb_t$ and $\yb$. The following theorem presents upper bounds on $\cR(\hat{\sb})$ when the score network $\cF$ is chosen based on  Theorem~\ref{thm::score approx}.
\begin{theorem}\label{thm::Generalization}
Suppose Assumption \ref{assump:sub} holds and we choose the score network $\cF(M_t, W, \kappa, L, K)$ as in Theorem \ref{thm::score approx}. By taking the network size parameter $N=n^{\frac{d+d_y}{d+d_y+\beta}}$, the early-stopping time $t_0 < 1$ and the terminal time $T = \cO(\log n)$, it holds that
\begin{align*}
\EE_{\set{\xb_i,\yb_i}_{i=1}^{n}}\left[ \cR(\shat)\right] =\cO\left(\frac{1}{ t_0} \cdot n^{-\frac{\beta}{d+d_y+\beta}}(\log n)^{\max(17,d+\beta/2+1)} \right).
\end{align*}
Moroever,  when  Assumption \ref{assump::expdensity} holds, taking $N=n^{\frac{d+d_y}{d+d_y+2\beta}}$, we have  
\begin{align*}
\EE_{\set{\xb_i,\yb_i}_{i=1}^{n}}\left[ \cR(\shat)\right]=\cO\left(\log \frac{1}{t_0} \cdot n^{-\frac{2\beta}{d+d_y+2\beta}}(\log n)^{\max(17,\beta) }\right).
\end{align*}
\end{theorem}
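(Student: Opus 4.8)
The plan is to carry out a standard bias–variance decomposition for the nonparametric least-squares estimator $\shat$, where the ``bias'' is controlled by the score approximation theorems of Section~\ref{sec:cdm_approx} and the ``variance'' is controlled by a covering-number/uniform-concentration argument over the ReLU class $\cF(M_t,W,\kappa,L,K)$. A preliminary step is to relate the object we actually want to bound, namely $\cR(\shat)$, to the excess risk $\cL(\shat)-\cL(\sb^\star)$, where $\sb^\star$ is the conditional score and $\cL$ is the population version of \eqref{eq:cdm_empirical_risk}. As is classical for the denoising score-matching objective, for every $\sb$ one has the identity $\cL(\sb)-\cL(\sstar) = \frac{1}{T-t_0}\int_{t_0}^T \EE_{(\xb_t,\yb)}\norm{\sb(\xb_t,\yb,t)-\nabla\log p_t(\xb_t|\yb)}_2^2\,\diff t = \cR(\sb)$, after handling the mask signal $\tau$ (the $\tau=\void$ part contributes a score-matching term for the unconditional marginal, whose error is dominated by the conditional one since $d_y=0$ is a special case of Theorem~\ref{thm::score approx}). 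So it suffices to bound $\EE[\cL(\shat)-\cL(\sstar)]$.

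Next I would truncate the problem to a compact domain. Because the loss $\ell(\xb,\yb;\sb)$ involves $\nabla_{\xb'}\log\phi_t(\xb'|\xb) = -(\xb'-\alpha_t\xb)/\sigma_t^2$, which is a sub-Gaussian-tailed random variable under $\xb'\sim\mathsf{N}(\alpha_t\xb,\sigma_t^2 I)$, and the network output is bounded by $M_t=\cO(\sqrt{\log N}/\sigma_t^2)$, a standard truncation at radius $\cO(\sqrt{\log n})$ introduces only a polylogarithmic/polynomially-negligible error; this is where the $\frac{1}{t_0}$ (resp. $\log\frac{1}{t_0}$) prefactor enters, since the per-time loss scales like $1/\sigma_t^4$ (resp.\ $1/\sigma_t^2$) and $\int_{t_0}^T \sigma_t^{-4}\diff t = \cO(1/t_0)$ while $\int_{t_0}^T \sigma_t^{-2}\diff t = \cO(\log(1/t_0))$. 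On the truncated domain the loss is bounded and the usual decomposition
\begin{align*}
\EE[\cL(\shat)-\cL(\sstar)] \lesssim \underbrace{\inf_{\sb\in\cF}\big(\cL(\sb)-\cL(\sstar)\big)}_{\text{approximation error}} + \underbrace{\EE\Big[\sup_{\sb\in\cF}\big|(\cL-\emL)(\sb)-(\cL-\emL)(\sstar)\big|\Big]}_{\text{stochastic error}}
\end{align*}
holds. The approximation term is exactly $\inf_{\sb\in\cF}\cR(\sb)$, bounded by Theorem~\ref{thm::score approx} (resp.\ Theorem~\ref{thm::score approx exp}): integrating the pointwise bound $\cO(B^2\sigma_t^{-4}N^{-\beta/(d+d_y)}(\log N)^{d+\beta/2+1})$ over $t\in[t_0,T]$ gives $\cO(\frac{1}{t_0}N^{-\beta/(d+d_y)}\mathrm{polylog})$ (resp.\ $\cO(\log\frac{1}{t_0}N^{-2\beta/(d+d_y)}\mathrm{polylog})$). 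The stochastic term is handled by a chaining/Bernstein bound: the log-covering number of $\cF$ at scale $\delta$ is $\tilde\cO(K L^2\log(\kappa W/\delta)) = \tilde\cO(N\,\mathrm{polylog}(n)/\delta)$ from the stated hyperparameters, so a one-step localization (using that the loss class has variance bounded by a constant times its mean, a consequence of the quadratic form) yields stochastic error $\tilde\cO(N\,\mathrm{polylog}(n)/n)$.

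Finally I would balance the two error terms. Setting $N^{-\beta/(d+d_y)} \asymp N/n$, i.e.\ $N \asymp n^{(d+d_y)/(d+d_y+\beta)}$, equates approximation and stochastic errors and produces the rate $n^{-\beta/(d+d_y+\beta)}$; under Assumption~\ref{assump::expdensity}, balancing $N^{-2\beta/(d+d_y)}\asymp N/n$ gives $N\asymp n^{(d+d_y)/(d+d_y+2\beta)}$ and the rate $n^{-2\beta/(d+d_y+2\beta)}$. Tracking the $\log N = \cO(\log n)$ powers through both terms — $(\log n)^{d+\beta/2+1}$ from approximation versus the $(\log n)^{\cO(1)}$ (the paper's exponent $17$) from covering numbers, truncation radii, and the $L,W,K$ factors — yields the stated $(\log n)^{\max(17,\,d+\beta/2+1)}$ and $(\log n)^{\max(17,\beta)}$ factors. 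I expect the main obstacle to be the stochastic/localization step done carefully enough that no spurious factor of $1/t_0$ or $1/\sigma_{t_0}$ creeps into the variance term (only into the approximation term), since the score magnitude and hence the effective loss range degrade as $t_0\to 0$; the fix is the time-dependent output bound $M_t$ together with the observation that the Bernstein variance proxy of $\ell$ is comparable to $\cR$ itself rather than to the worst-case loss range, so the localization closes with only polylogarithmic overhead.
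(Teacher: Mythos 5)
Your proposal is correct and follows essentially the same route as the paper: reduce $\cR(\shat)$ to the excess risk of the classifier-free objective (with the $\tau=\void$ part handled by the unconditional counterparts of the approximation theorems), truncate $\xb$ to a ball of radius $\cO(\sqrt{\log n})$, split into an approximation term bounded by Theorems~\ref{thm::score approx}/\ref{thm::score approx exp} integrated over $t$ and a stochastic term bounded via the covering number of $\cF$ with a Bernstein-type localization, and then balance to pick $N$. The only minor discrepancy is your hope that no $1/t_0$ enters the variance term: in the paper's argument the stochastic bound is $\frac{M}{n}\log\cN$ with the loss-range bound $M=\cO(1/t_0)$ (resp.\ $\cO(\log\frac{1}{t_0})$), so the factor does appear there as well, but since the approximation term carries the same factor this does not change the stated rates.
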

The proof is provided in Appendix~\ref{sec::proof generation} and utilizes a sophisticated bias-variance trade-off with proper truncation. Several discussions are in turn.
\paragraph{Sample complexity bounds} Theorem~\ref{thm::Generalization} establishes  sample complexity results for conditional score estimation. 
We focus on the result under Assumption~\ref{assump::expdensity}. To obtain an $\epsilon$-error $L_2$ score estimator, the sample size scales in the order of $\tilde{\cO}(\epsilon^{-{(d + d_y + 2\beta)}/{(2\beta)}})$, where $\tilde{\cO}$ omits a polynomial in $\log (1/t_0)$. This sample complexity is reminiscent of the nonparametric regression rate for $\beta$-H\"{o}lder functions defined on the joint space of $(\xb, \yb)$. Yet we emphasize that the target conditional score function $\nabla \log p_t(\xb | \yb)$ here does not necessarily possess H\"{o}lder regularities, although the initial data distribution does. This indicates that the regularity of the initial data distribution dictates the complexity of score estimation.

\paragraph{Impact of early-stopping} Our risk bounds involve the early-stopping time $t_0$. As $t_0$ decreases, the estimation error grows, which implies the difficulty of potential score function blowup. Under Assumption~\ref{assump::expdensity}, however, the error bound only logarithmically depends on $t_0$, allowing flexible choice on the early-stopping. In the following section, we will optimally choose $t_0$ under both assumptions for distribution estimation.

\subsection{Distribution Estimation}
Given the trained conditional score network $\hat{\sb}(\xb,\yb,t)$ in the previous section, we study its distribution estimation power. To ease the presentation, we consider utilizing the continuous-time backward process \eqref{eq:backward approx} for distribution estimation. In practice, a proper discretization is applied to generate samples, whose deviation to the continuous-time backward process can be controlled by the step size of the discretization (see for example \cite[Theorem 2]{chen2022sampling}).

For a given guidance $\yb$, we denote the early-stopped generated data distribution as $\hat{P}_{t_0}(\cdot|\yb)$ using the estimated score $\hat{\sb}$. We bound the divergence between $\hat{P}_{t_0}(\cdot | \yb)$ to the ground-truth conditional data distribution $P(\cdot | \yb)$ in the following theorem.
\begin{theorem}\label{thm::TVbound}
Suppose Assumption \ref{assump:sub} holds. Assume in addition that there exists a constant $C$ such that $\text{KL}(P(\cdot|\yb)~|~{\sf N}(\boldsymbol{0}, I)) \leq C < \infty$ for all $\yb$. Taking the early-stopping time $t_0=n^{-\frac{\beta}{4(d+d_y+\beta)}}$ and the terminal time $T=\frac{2\beta}{d+d_y+2\beta}\log n$, it holds that
\begin{align*}
\EE_{\set{\xb_i,\yb_i}_{i=1}^{n}}\left[  \EE_{\yb} \left[\text{TV}\left(\hat{P}_{ t_0}(\cdot|\yb),P(\cdot|\yb)\right)\right]\right] = \cO\left(  n^{-\frac{\beta}{4(d+d_y+\beta)}}(\log n)^{\max(9,d/2+\beta/4+1)} \right).
\end{align*}
On the other hand, assume only Assumption \ref{assump::expdensity}. Taking $t_0=n^{-\frac{4\beta}{d+d_y+2\beta}-1}$, it holds that
\begin{align*}
\EE_{\set{\xb_i,\yb_i}_{i=1}^{n}}\left[  \EE_{\yb} \left[\text{TV}\left(\hat{P}_{t_0}(\cdot|\yb),P(\cdot|\yb)\right)\right]\right] = \cO\left( n^{-\frac{\beta}{d+d_y+2\beta}}(\log n)^{\max(19/2,(\beta+2)/2)} \right).
\end{align*}
\end{theorem}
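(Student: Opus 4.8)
The plan is to convert the conditional score estimation bound of Theorem~\ref{thm::Generalization} into a total variation bound on the generated conditional distribution via Girsanov's theorem, following the standard pipeline for diffusion models but carried out conditionally on the guidance $\yb$. First I would fix $\yb$ and compare three processes on $[t_0, T]$: the exact time-reversal \eqref{eq:backward} initialized at $P_T(\cdot|\yb)$, the approximate backward SDE \eqref{eq:backward approx} with the learned score $\hat{\sb}(\cdot,\yb,\cdot)$ but initialized at the true $P_T(\cdot|\yb)$, and the practically implemented process initialized at ${\sf N}(0,I)$. A chain-rule/triangle decomposition in TV then produces two terms: (i) an \emph{initialization error}, $\tTV(P_T(\cdot|\yb), {\sf N}(0,I))$, controlled by the exponential ergodicity of the OU semigroup together with the assumed bound $\tKL(P(\cdot|\yb)\,\|\,{\sf N}(0,I)) \le C$ (yielding a bound like $\sqrt{C}\,e^{-T}$ by a data-processing / Pinsker argument); and (ii) a \emph{score-approximation error} along the backward trajectory, which Girsanov's theorem bounds by (a constant times) $\bigl(\int_{t_0}^T \EE_{\xb_t \sim P_t(\cdot|\yb)}\norm{\hat{\sb}(\xb_t,\yb,t) - \nabla\log p_t(\xb_t|\yb)}_2^2\,\diff t\bigr)^{1/2}$. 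Finally, there is the \emph{early-stopping error} $\tTV(P_{t_0}(\cdot|\yb), P(\cdot|\yb))$ incurred because we only run the backward process down to time $t_0$; under Assumption~\ref{assump:sub} this scales like a power of $t_0$ (using the light-tail/H\"older regularity to bound the movement of the forward process over $[0,t_0]$), while under Assumption~\ref{assump::expdensity} the Gaussian-times-H\"older structure lets one take $t_0$ polynomially small so this term is negligible.

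The next step is to take expectations over $\yb$ and over the training sample $\{(\xb_i,\yb_i)\}_{i=1}^n$. For the score-approximation term, Jensen's inequality moves the expectation inside the square root, turning $\EE_{\yb}\EE_{\text{data}}\bigl(\int \EE_{\xb_t|\yb}\norm{\cdot}^2\bigr)^{1/2}$ into $\bigl(\EE_{\text{data}}\int \EE_{(\xb_t,\yb)}\norm{\cdot}^2 \diff t\bigr)^{1/2}$, and the integrand is precisely (up to the $1/(T-t_0)$ normalization and the factor $T-t_0 = \cO(\log n)$) the quantity $\cR(\hat{\sb})$ bounded in Theorem~\ref{thm::Generalization}. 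Substituting the second (Assumption~\ref{assump::expdensity}) bound, $\EE[\cR(\hat{\sb})] = \cO\bigl(\log(1/t_0)\cdot n^{-2\beta/(d+d_y+2\beta)}(\log n)^{\max(17,\beta)}\bigr)$, and taking the square root gives a leading term of order $n^{-\beta/(d+d_y+2\beta)}$ times polylog factors, where the $(T-t_0)$ factor contributes one power of $\log n$ and $\log(1/t_0)$ contributes another (hence the exponent bookkeeping leading to $(\log n)^{\max(19/2,(\beta+2)/2)}$). The choice $t_0 = n^{-4\beta/(d+d_y+2\beta) - 1}$ is then dictated by two competing requirements: it must be small enough that the early-stopping error $\tTV(P_{t_0}(\cdot|\yb),P(\cdot|\yb))$ — which under Assumption~\ref{assump::expdensity} one can bound by a power of $t_0$, say $\sqrt{t_0}$ times constants — is dominated by $n^{-\beta/(d+d_y+2\beta)}$, and large enough that the $1/t_0$ or $\log(1/t_0)$ dependence in the score bound does not blow up; the polynomial-in-$n$ choice above satisfies both since only $\log(1/t_0) = \cO(\log n)$ enters.

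The first (Assumption~\ref{assump:sub}) bound is handled identically but with the slower rate: here the score risk carries a $1/t_0$ factor, so balancing the $t_0^{-1/2}$ penalty in the score term against the $\sqrt{t_0}$-type early-stopping penalty forces the symmetric choice $t_0 = n^{-\beta/(4(d+d_y+\beta))}$, and the resulting TV rate is the geometric-mean-type exponent $\beta/(4(d+d_y+\beta))$, with the terminal time $T = \frac{2\beta}{d+d_y+2\beta}\log n$ chosen so that the initialization error $e^{-T}$ is at most this rate. I expect the main obstacle to be the careful quantitative control of the early-stopping error $\tTV(P_{t_0}(\cdot|\yb), P(\cdot|\yb))$ uniformly in $\yb$ using only the H\"older-plus-light-tail (resp.\ Gaussian-times-H\"older) structure of Assumption~\ref{assump:sub} (resp.\ \ref{assump::expdensity}) — one needs to show the forward perturbation over $[0,t_0]$ costs only a controlled power of $t_0$ in TV without any a priori Lipschitz score assumption — together with the bookkeeping needed to make the Girsanov bound rigorous (a standard but delicate Novikov-type integrability check, likely circumvented via truncation of the score network output $M_t = \cO(\sqrt{\log N}/\sigma_t)$ from the network class, as in \cite{chen2022sampling}). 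The averaging over $\yb$ is conceptually harmless since all constants in the approximation/estimation theorems are uniform over $\yb \in [0,1]^{d_y}$.
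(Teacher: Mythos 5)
Your proposal is correct and follows essentially the same route as the paper's proof: a three-term TV decomposition (early-stopping error, initialization/mixing error controlled by the bounded-KL assumption and the exponential convergence of the forward OU process, and a Girsanov-controlled score-error term with the Novikov issue sidestepped via the bounded network output as in \cite{chen2022sampling}), followed by Jensen's inequality and substitution of the score-estimation bound of Theorem~\ref{thm::Generalization} with the stated choices of $t_0$ and $T$. The only (cosmetic) difference is your choice of intermediate process --- the learned-score SDE initialized at $P_T(\cdot|\yb)$ rather than the paper's true-score SDE initialized at ${\sf N}(0,I)$ --- which if anything makes the Girsanov step cleaner, since the reference marginals in the drift-error integral are then exactly $P_{T-t}(\cdot|\yb)$, the quantity controlled by Theorem~\ref{thm::Generalization}.
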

The proof is provided in Appendix~\ref{sec::TV proof} and utilizes Girsanov's theorem to bridge the score estimation error to the distribution estimation error. We provide some interpretations of the results.

\paragraph{Bounded KL condition} Theorem~\ref{thm::TVbound} is the first conditional distribution estimation guarantee of diffusion models. We remark that under the weaker Assumption~\ref{assump:sub}, we need the additional bounded KL divergence condition on the initial distribution. The reason behind this is that the bounded KL divergence condition ensures the exponential mixing of the forward process \citep{chen2022sampling}. However, when Assumption~\ref{assump::expdensity} holds, such a bounded KL divergence condition is automatically verified and hence is lifted.

\paragraph{Minimax optimality} Theorem~\ref{thm::TVbound} also applies to unconditional distribution estimation by removing $\yb$ and setting $d_y = 0$. The obtained distribution estimation error rate is $n^{-\frac{\beta}{d + 2\beta}}$. We show that this matches the minimax optimal rate for estimating H\"{o}lder distributions.
\begin{proposition}\label{prop::lower bound TV}
Fix a constant $C_2>0$ and a H\"{o}lder index $\beta >0$. Consider estimating a distribution $P(\xb)$ with a density function belonging to the space $$\cP=\set{p(\xb)=f(\xb)\exp(-C_2\norm{\xb}_2^2):f(\xb)\in \mH^{\beta}(\R^d, B), f(\xb) \ge C>0}.$$ Given $n$ i.i.d. data $\{\xb_i\}_{i=1}^n$, we have
\begin{align*}
\inf\limits_{\hat{\mu}}~ \sup\limits_{p\in \cP}~ \EE_{\set{\xb_i}_{i=1}^n} \left[\text{TV}\left(\hat{\mu},P \right)\right]\gtrsim n^{-\frac{\beta}{d+2\beta}},
\end{align*}
where the infimum is taken over all possible estimators $\hat{\mu}$ based on the data.
\end{proposition}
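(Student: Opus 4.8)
The plan is to prove the minimax lower bound via the standard information-theoretic recipe of constructing a finite packing set of densities inside $\cP$ that are close in KL divergence but well-separated in total variation, and then invoking Fano's inequality (or Assouad's lemma). Concretely, I would start from a fixed smooth bump function $\psi:\R^d\to\R$ supported on the unit cube with $\|\psi\|_{\mH^\beta}\le 1$ and $\int\psi=0$. I then partition a fixed compact region (say the cube $[0,1]^d$) into $m^d$ subcubes of side length $h=1/m$, and on each subcube place a rescaled bump $h^\beta\psi((\xb-\xb_j)/h)$. For a sign/selection vector $\omega\in\{0,1\}^{m^d}$, define the perturbed density $p_\omega(\xb) = \big(f_0(\xb) + c\sum_{j}\omega_j h^\beta\psi((\xb-\xb_j)/h)\big)\exp(-C_2\|\xb\|_2^2)$, where $f_0$ is a fixed constant (or smooth) baseline chosen large enough that the lower bound $f_\omega\ge C$ and the H\"older bound $\|f_\omega\|_{\mH^\beta}\le B$ both hold for a suitable small constant $c$. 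Note the exponential weight $\exp(-C_2\|\xb\|_2^2)$ is bounded above and below by positive constants on the fixed region $[0,1]^d$, so it only affects constants in the estimates; away from $[0,1]^d$ the density is just $f_0\exp(-C_2\|\xb\|^2)$ and is common to all $p_\omega$. One must check $p_\omega$ integrates to one — either normalize (the normalization constant is $1+O(\text{small})$ uniformly and can be absorbed, or handled by using $\int\psi=0$ so the perturbation is already mean-zero) — and that $p_\omega\in\cP$.

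The two quantitative estimates are then routine. First, for the separation: if $\omega,\omega'$ differ in the $j$-th coordinate, then $\tTV(p_\omega,p_{\omega'})$ picks up a contribution $\asymp \int |h^\beta\psi((\xb-\xb_j)/h)|\exp(-C_2\|\xb\|^2)\diff\xb \asymp h^\beta\cdot h^d = h^{\beta+d}$ from that subcube; summing over a constant fraction of disagreeing coordinates (guaranteed by the Varshamov–Gilbert bound, which yields a subset of $\{0,1\}^{m^d}$ of size $2^{cm^d}$ with pairwise Hamming distance $\ge m^d/8$) gives $\tTV(p_\omega,p_{\omega'})\gtrsim m^d\cdot h^{\beta+d} = h^\beta = m^{-\beta}$. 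Second, for the KL/$\chi^2$ bound: since $f_0$ is bounded below, $\tKL(p_\omega^{\otimes n}\,\|\,p_{\omega'}^{\otimes n}) = n\,\tKL(p_\omega\|p_{\omega'}) \lesssim n\int \frac{(p_\omega-p_{\omega'})^2}{p_{\omega'}}\lesssim n\cdot m^d\cdot (h^\beta)^2\cdot h^d = n\, h^{2\beta} = n\,m^{-2\beta}$. Applying Fano's inequality over the Varshamov–Gilbert packing, the testing error is bounded away from zero provided $n\,m^{-2\beta}\lesssim m^d$, i.e. $m\asymp n^{1/(d+2\beta)}$; with this choice the separation $m^{-\beta}\asymp n^{-\beta/(d+2\beta)}$, which is exactly the claimed rate. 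I would then convert the testing lower bound to the estimation lower bound in expectation via the standard reduction (an estimator achieving small expected TV would yield a test with small error).

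The main obstacle — really the only place needing care — is verifying membership in the constrained class $\cP$ uniformly over $\omega$: one must choose the baseline $f_0$ and the perturbation amplitude constant $c$ so that simultaneously (i) $f_\omega(\xb)=f_0+c\sum_j\omega_j h^\beta\psi((\xb-\xb_j)/h)\ge C>0$ everywhere (needs $c\|\psi\|_\infty h^\beta$ small relative to $f_0-C$, which holds for all large $m$), (ii) $\|f_\omega\|_{\mH^\beta(\R^d)}\le B$ (the rescaling $h^\beta\psi(\cdot/h)$ is precisely the critical scaling that keeps the $\beta$-H\"older seminorm of each bump of order $\|\psi\|_{\mH^\beta}$, and since the bumps have disjoint supports the seminorm of the sum is controlled; one needs $f_0$ plus $c\|\psi\|_{\mH^\beta}\le B$), and (iii) normalization $\int p_\omega=1$ (cleanest to take $\int\psi=0$, so each bump integrates to zero and no renormalization is needed, at the cost of making $\psi$ sign-changing — which is harmless). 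A secondary subtlety is that the definition of $\cP$ writes $p(\xb)=f(\xb)\exp(-C_2\|\xb\|_2^2)$ (not $\|\xb\|_2^2/2$), but this is immaterial: it only rescales the Gaussian envelope and the region $[0,1]^d$ estimates are unaffected. Everything else is the textbook Fano argument (cf.\ Tsybakov), so the proof is short once these inclusions are checked.
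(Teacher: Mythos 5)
Your proposal is correct in substance but takes a genuinely different route from the paper. You run the classical hypercube-plus-Fano argument directly (Tsybakov style): a Varshamov--Gilbert packing of locally perturbed densities, a TV separation of order $h^{\beta}$, a KL bound of order $n h^{2\beta}$, and the choice $h \asymp n^{-1/(d+2\beta)}$. The paper instead proves a lower bound on the metric entropy of $\cP$ in $L^1$ restricted to a fixed ball (Lemma \ref{lemma:entropy number}, also built from rescaled bumps and Varshamov--Gilbert) and then invokes Theorem 4 of \cite{yang1999information}, which converts the entropy estimate into the minimax rate. Your argument is more self-contained, since it needs no appeal to the Yang--Barron machinery nor verification of its conditions, at the cost of carrying out the KL/Fano computations yourself; the paper's route is shorter once the entropy bound is in hand. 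The constructions also handle normalization differently: the paper uses nonnegative bumps inside the unit ball and restores $\int p_{\gamma}=1$ by adding a compensating bump supported away from that ball, with amplitude and sign fixed via the intermediate value theorem, whereas you use sign-changing bumps with $\int \psi = 0$.

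One imprecision to fix: zero Lebesgue integral of $\psi$ does \emph{not} make the density perturbation mean-zero, because the perturbation is $c\,h^{\beta}\psi((\xb-\xb_j)/h)\exp(-C_2\norm{\xb}_2^2)$ and the Gaussian factor varies over the support of each bump; each bump contributes mass of order $h^{\beta+d+1}$ rather than exactly $0$. Your fallback of renormalizing does work: the normalizing constant is $1+\cO(h^{\beta+1})$ uniformly over $\omega$, which perturbs the TV separation and the KL bound only at lower order, and membership of the renormalized $f_\omega$ in $\cP$ survives provided the constant baseline $f_0$ and the amplitude $c$ are chosen with slack. Note that choosing such a baseline implicitly requires the compatibility condition $\int C\exp(-C_2\norm{\xb}_2^2)\,\diff\xb < 1 < \int B\exp(-C_2\norm{\xb}_2^2)\,\diff\xb$, which the paper states explicitly in Lemma \ref{lemma:entropy number}; you should state it as well (it is needed for $\cP$ to contain any probability density at all). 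With that repair, and the standard reduction from estimation in expectation to testing, your proof goes through and recovers the rate $n^{-\beta/(d+2\beta)}$.
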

The proof is provided in Appendix \ref{sec::proof of prop::lower bound TV}. We note that $\cP$ coincides with the condition in Assumption~\ref{assump::expdensity} by removing the conditional variable $\yb$. Proposition~\ref{prop::lower bound TV} implies that diffusion models are efficient distribution estimators. Our results corroborate the discovery in \cite{oko2023diffusion}, while substantially enlarging the distributions that can be optimally learned.

\subsection{Application to Transition Probability Estimation}
 In model-based reinforcement learning, estimating the transition kernel of the underlying dynamical system plays a vital role \citep{chen2023actions, chen2023efficient}. 
In the following, we study the sample complexity of using conditional diffusion models to estimate the transition kernel.  
We assume the dataset consists of  $n$ i.i.d. tuples of $\{(\sbb'_{i}, \sbb_i, \ab_i)\}_{i=1}^n$, where $\sbb$ and $\sbb'$ are the current and next state, respectively, and $\ab$ is the action. 
We denote the state space as $\cS$ and the action space as $\cA$. The state-action pair is sampled from some unknown visitation measure, 
and the next state $\sbb'$ is sampled according to a transition distribution $P(\sbb' | \sbb, \ab)$. Our goal is to estimate $P$ via the usage of conditional diffusion models. In practice, reinforcement learning and diffusion models have demonstrated promising synergies \citep{ajay2022conditional}. We study transition probability estimation to initiate the theoretical underpinnings of these successes.

To unify the notation, we denote $\xb = \sb'$ and $\yb = (\sb, \ab)$. We assume $\cS=\RR^{d_s}$ and $\cA=\RR^{d_a}$ for $d_s$ and $d_a$ being the dimension of the state and the action space, respectively. Therefore, we have $\xb \in \RR^{d_s + d_a}$ and $\yb \in \RR^{d_s}$. 
Note that $\yb$ is unbounded to be consistent over the state space. We state the following analogy of Assumption~\ref{assump::expdensity}.
\begin{assumption}\label{assump::transition kernel}
Let $C$, $C_2$ and $C_y$ be three positive constants and function $f \in \mH^{\beta}(\R^{d_s}\times \R^{d_s+d_a}, B)$ for a constant radius $B$. We assume $f(\xb, \yb) \geq C$ for all $(\xb, \yb)$ and the transition density function $p(\xb | \yb) = \exp(-C_2\norm{\xb}^2/2)f(\xb, \yb)$. 
Moreover, we assume that the visitation measure of $\yb$ has a sub-Gaussian tail, i.e., the marginal density satisfies $p(\yb)\le \exp(-C_y\norm{\yb}^2/2)$.
\end{assumption}
Compared to Assumption~\ref{assump::expdensity}, we extend to unbounded condition $\yb$ by imposing the light tail condition on $\yb$. Assumption~\ref{assump::transition kernel} also encompasses bounded $(\xb, \yb)$ as a special case.

Our conditional diffusion model will be trained using the classifier-free guidance method on the data $\{(\sb'_i, \sb_i, \ab_i)\}_{i=1}^n$. When evaluating the performance of the conditional diffusion model, we choose a state-action pair $\yb^\star = (\sb^\star, \ab^\star)$ and measure how well the transition probability $P(\cdot | \yb^\star)$ is estimated. 
Indeed, the performance heavily relies on how well data distribution covers the desired state-action pair $\yb^\star$. 
In the existing literature, this aspect is referred to as distribution shift, quantifying the knowledge transfer rate from the training data to $\yb^\star$ \citep{yuan2023reward}. We define the following class-restricted distribution shift coefficient
\begin{align}\label{equ::distribution shift}
\cT(\yb^\star)=\sup_{\sb \in \cF} \sqrt{\frac{\int_{t_0}^{T}\EE_{\xb \sim P(\xb | \yb^\star)}\EE_{\xb'\sim {\sf N}(\alpha_t \xb, \sigma_t^2 I)}[\norm{\sb(\xb',\yb^\star,t)-\nabla \log p_t(\xb'|\yb^\star)}^2] \diff t}{\int_{t_0}^{T}\EE_{(\xb, \yb)}\EE_{\xb' \sim {\sf N}(\alpha_t \xb, \sigma_t^2 I)}\left[\norm{\sb(\xb',\yb,t)-\nabla \log p_t(\xb'|\yb)}^2\right] \diff t}}.
\end{align}
Distribution coefficient $\cT(\yb^*)$ is related to the widely used concentrability coefficient -- $L_\infty$ density ratio -- in offline reinforcement learning \citep{munos2008finite, liu2018breaking, chen2019information, fan2020theoretical}. Since we use the score network $\cF$ as a smoothing factor, i.e., the network class $\cF$ may not be sensitive to certain differences between the queried $\yb^\star$ and the training data, $\cT(\yb^\star)$ is always smaller than the concentrability coefficient.

We denote the learned transition distribution as $\hat{P}(\cdot | \yb^\star)$ via a trained conditional diffusion model. Compared to Theorem~\ref{thm::Generalization}, we drop the early-stopping time $t_0$ for simplicity. The following proposition provides its performance guarantee.
\begin{proposition}\label{prop:transition_estimation}
Suppose Assumption~\ref{assump::transition kernel} holds. For any fixed $\yb^\star = (\sbb^{\star}, \ab^{\star})$, taking the early-stopping time $t_0=n^{-\frac{4\beta}{2d_s+d_a+2\beta}-1}$ and the terminal time $T=\frac{2\beta}{2d_s+d_a+2\beta}\log n$, the transition probability is estimated with
\begin{align*}
\EE_{\set{\sb_i^{\prime},\sb_i,\ab_i}_{i=1}^{n}}\left[\text{TV}\left(\hat{P}(\cdot | \yb^\star), P(\cdot|\yb^\star) \right) \right] &= \cT(\yb^\star)\cO\left(n^{-\frac{\beta}{2d_s+d_a+2\beta}} (\log 
  n)^{\max(19/2,(\beta+2)/2)} \right).
\end{align*}
\end{proposition}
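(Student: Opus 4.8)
\textbf{Proof proposal for Proposition~\ref{prop:transition_estimation}.}

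The plan is to reduce the transition-kernel estimation problem to the conditional distribution estimation result in Theorem~\ref{thm::TVbound} (the version under Assumption~\ref{assump::expdensity}), with the only essential new ingredient being the class-restricted distribution shift coefficient $\cT(\yb^\star)$ that transfers an averaged (over $\yb$) guarantee to a pointwise guarantee at the queried pair $\yb^\star$. First I would check that Assumption~\ref{assump::transition kernel} puts us in the regime of Theorem~\ref{thm::score approx exp} and Theorem~\ref{thm::Generalization}: setting $\xb = \sb'$, $\yb = (\sb,\ab)$, the input dimensions are $d = d_s$ and $d_y = d_s + d_a$, so $d + d_y = 2d_s + d_a$; the exponential-times-H\"older structure of $p(\xb|\yb)$ is exactly Assumption~\ref{assump::expdensity}, and the only extra piece is the sub-Gaussian tail on the marginal $p(\yb)$, which is needed solely to make the expectations over $\yb$ in the risk $\cR(\shat)$ and in the denominator of \eqref{equ::distribution shift} well-behaved (this is what lets us invoke the extension of the approximation/estimation theory to unbounded $\yb$ described in Appendix~\ref{sec::extend score approx}).

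Next I would run the score-estimation bound. Applying Theorem~\ref{thm::Generalization} under Assumption~\ref{assump::expdensity} with the dimension substitution gives
\begin{align*}
\EE_{\set{\sb'_i,\sb_i,\ab_i}_{i=1}^n}\bigl[\cR(\shat)\bigr] = \cO\!\left(\log\tfrac{1}{t_0}\cdot n^{-\frac{2\beta}{2d_s+d_a+2\beta}}(\log n)^{\max(17,\beta)}\right),
\end{align*}
where $\cR(\shat)$ is the $\yb$-averaged mean-squared score deviation (with network size $N = n^{\frac{2d_s+d_a}{2d_s+d_a+2\beta}}$ and $T$ as stated). By the definition of $\cT(\yb^\star)$ in \eqref{equ::distribution shift}, the pointwise score error at $\yb^\star$ satisfies
\begin{align*}
\int_{t_0}^{T}\EE_{\xb\sim P(\cdot|\yb^\star)}\EE_{\xb'\sim{\sf N}(\alpha_t\xb,\sigma_t^2 I)}\bigl[\norm{\shat(\xb',\yb^\star,t)-\nabla\log p_t(\xb'|\yb^\star)}^2\bigr]\diff t \le \cT(\yb^\star)^2 \cdot (T-t_0)\,\cR(\shat),
\end{align*}
so up to the $T-t_0 = \cO(\log n)$ factor the pointwise score risk is $\cT(\yb^\star)^2$ times the averaged one. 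Then I would feed this pointwise score-estimation bound into the Girsanov argument that underlies Theorem~\ref{thm::TVbound}: conditioning everything on $\yb = \yb^\star$, the backward SDE \eqref{eq:backward approx} started from ${\sf N}(0,I)$ generates $\hat P_{t_0}(\cdot|\yb^\star)$, and Girsanov's theorem plus the triangle inequality (initialization error at time $T$ controlled by the exponential mixing of the OU process, which holds because $\mathrm{KL}(P(\cdot|\yb^\star)\|{\sf N}(0,I))<\infty$ is automatic under Assumption~\ref{assump::expdensity}, plus early-stopping error at $t_0$) yields
\begin{align*}
\mathrm{TV}\bigl(\hat P_{t_0}(\cdot|\yb^\star),P(\cdot|\yb^\star)\bigr)^2 \lesssim \text{(score error at }\yb^\star) + \text{(mixing error)} + \text{(early-stop error)}.
\end{align*}
Choosing $t_0 = n^{-\frac{4\beta}{2d_s+d_a+2\beta}-1}$ and $T = \frac{2\beta}{2d_s+d_a+2\beta}\log n$ exactly as in the Assumption~\ref{assump::expdensity} case of Theorem~\ref{thm::TVbound} balances these terms; taking square roots (Jensen) and carrying the $\cT(\yb^\star)$ factor through gives the claimed rate $\cT(\yb^\star)\cdot\cO(n^{-\frac{\beta}{2d_s+d_a+2\beta}}(\log n)^{\max(19/2,(\beta+2)/2)})$.

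The main obstacle, and the step deserving the most care, is the transfer from the $\yb$-averaged score guarantee to the pointwise guarantee at $\yb^\star$ \emph{without} the usual $L_\infty$ concentrability blow-up. This is where the class-restricted nature of $\cT(\yb^\star)$ matters: the supremum in \eqref{equ::distribution shift} is over $\sb\in\cF$, so one must verify that the empirical risk minimizer $\shat$ lies in $\cF$ (immediate) and that the ratio is genuinely finite — i.e., that the denominator does not degenerate — which uses the sub-Gaussian tail on $p(\yb)$ together with the fact that $\cF$ has bounded output $M_t$. A secondary technical point is handling the unbounded guidance $\yb$ throughout: the approximation theorems were stated for $\yb\in[0,1]^{d_y}$, so one must cite the unbounded-$\yb$ extension (Appendix~\ref{sec::extend score approx}) and check that the sub-Gaussian tail controls the truncation error on $\yb$ exactly as the $\exp(-C_2\norm{\xb}^2/2)$ factor controls it on $\xb$. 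Everything else is bookkeeping: matching exponents in $n$, absorbing $\log\tfrac1{t_0} = \cO(\log n)$ and $T-t_0=\cO(\log n)$ into the polylog factor, and applying Jensen to pass from the squared TV bound to the claimed expectation bound.
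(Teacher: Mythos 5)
Your proposal is correct and follows essentially the same route as the paper: the paper packages your "unbounded-$\yb$ score estimation via sub-Gaussian truncation" step as a separate lemma (a counterpart of Theorem~\ref{thm::Generalization} proved exactly by adding a truncation on $\yb$ to the bias–variance argument), then repeats the Girsanov/TV decomposition of Theorem~\ref{thm::TVbound} conditioned on $\yb^\star$ and inserts $\cT(\yb^\star)$ to pass from the $\yb$-averaged risk $\cR(\shat)$ to the pointwise score error, with the same choices of $t_0$ and $T$ and the same bookkeeping of the $\log(1/t_0)$ and $T$ factors into the polylog. The only cosmetic difference is that you state the intermediate bound as a squared-TV inequality rather than the paper's direct TV triangle inequality, which changes nothing.
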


The proof is provided in Appendix \ref{sec::proof transition estimation}. Proposition~\ref{prop:transition_estimation} shares the same rate of convergence with the fast rate in Theorem~\ref{thm::Generalization}. The convergence is adaptive to the smoothness of the transition probability. We remark that the dimension dependence may be improved in Theorem~\ref{prop:transition_estimation}, considering that practical state-action spaces, especially involving image-based states, often exhibit low-dimensional structures; see abundant examples in OpenAI Gym environments \citep{openaigym}. Nonetheless, exploitation of the data low-dimensional structures is beyond the scope of the paper.
\section{Further Applications: Reward-Directed Generation and Inverse Problem}\label{sec::applications}
We further present two applications of the conditional diffusion models and establish statistical guarantees leveraging the theory in previous sections. In particular, we study reward-directed sample generation and inverse problems. These applications demonstrate the versatility of our theory and provide new theoretical foundations of diffusion models in practice.

\subsection{Reward-Directed Conditional Generation}
In many use cases of diffusion models, we anticipate generating new samples of high quality. For example, in text-to-image synthesis, the generated image should align with the verbal description \citep{yuan2023reward}. In reinforcement learning, the state-action trajectory should achieve high reward \citep{janner2022planning}. In addition, in protein generation and drug discovery, the simulated protein or drug structure should satisfy biochemical properties \citep{watson2023novo}. In these applications, we associate an abstract scalar reward function $r$ to gauge each sample $\xb$. Consequently, conditional diffusion models are viewed as optimizing the reward function $r$ by generating new solutions. To facilitate the generation, conditional diffusion models take the reward as guidance. We formulate the aforementioned applications as the following offline reward maximization problem.

Suppose we are given an offline data set $\cD=\set{\xb_i,y_i}_{i=1}^{n}$, where scalar $y_i$ is a noisy measurement of the reward, i.e.,
\begin{align*}
y_i=r(\xb_i)+\xi_i \quad \text{with} ~\xi_i~\text{being an independent measurement noise}.
\end{align*}
We train a conditional diffusion model using the data set $\cD$. Afterward, we generate new samples under the guidance of $y^\star = a$ for some constant $a$. Here the value $a$ is the target reward value and we expect the generated samples to be faithful to the target reward value. We denote the generated distribution as $\hat{P}(\cdot | y^\star = a)$ and define a sub-optimality gap as
\begin{align}\label{equ::subopt}
\texttt{SubOpt} (\hat{P}, a)=a - \EE_{\xb \sim \hat{P}(\cdot | a)}\left[ r(\xb)\right].
\end{align}
Here sub-optimality gap is one-sided, as a negative $\subopt$ implies high reward samples beyond the target value. $\subopt$ also matches the definition of the off-policy sub-optimality gap in offline bandit problems. Before we state our main results, we impose the following assumptions.
\begin{assumption}\label{assump:reward lipschitz}
The ground-truth conditional distribution $P(\cdot|y^{\star}=a)$ satisfies Assumption \ref{assump::expdensity}, and the reward function $r(\xb)$ is bounded, i.e., there exists a constant $L$ such that $\abs{r(\xb)}\le L$ for any $\xb \in \RR^{d}$.
\end{assumption}
The bounded reward is a commonly adopted mild assumption in bandits and reinforcement learning \citep{bubeck2011pure,slivkins2019introduction}. The following theorem derives guarantees for the sub-optimality gap.
\begin{proposition}\label{thm::subopt}
Suppose Assumption~\ref{assump:reward lipschitz} holds. Taking the early-stopping time $t_0 =n^{-\frac{4\beta}{d+1+2\beta}-1} $ and the terminal time $T = \frac{2\beta}{d+1+2\beta}\log n$, conditional diffusion models yield new samples satisfying
\begin{align*}
\EE_{\set{\xb_i,y_i}_{i=1}^n} \left[\subopt (\hat{P},a)\right]&=\cT(a) \cdot \cO\left(L n^{-\frac{\beta}{d+1+2\beta}} (\log n)^{\max(19/2,(\beta+2)/2)}\right),
\end{align*}
where $\cT(a)$ is the distribution shift coefficient defined in \eqref{equ::distribution shift}.
\end{proposition}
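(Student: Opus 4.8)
The plan is to split the sub-optimality gap at the \emph{true} posterior $P(\cdot\,|\,y^\star=a)$ and then feed in the conditional-distribution-estimation machinery of Section~\ref{sec:estimation}. Write
\begin{align*}
\subopt(\hat P, a) = {}& \underbrace{\big(a-\EE_{\xb\sim P(\cdot\,|\,a)}[r(\xb)]\big)}_{\text{(I)}}\\
&{}+\underbrace{\big(\EE_{\xb\sim P(\cdot\,|\,a)}[r(\xb)]-\EE_{\xb\sim\hat P(\cdot\,|\,a)}[r(\xb)]\big)}_{\text{(II)}}.
\end{align*}
Term (I) is intrinsic to the data-generating process --- it equals $\EE[\xi\,|\,y=a]$, the posterior-averaged measurement noise, and contains no estimation error; under the working setup (the noisy label being a consistent, unbiased reading of the reward) it does not contribute, so the statistical content is carried by (II). Because $\abs{r}\le L$, one has $\text{(II)}\le 2L\cdot\tTV\big(P(\cdot\,|\,a),\hat P(\cdot\,|\,a)\big)$, and the proposition reduces to bounding the \emph{conditional} total variation at the single fixed guidance value $y^\star=a$.

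To control that conditional TV I would replay the proof of Theorem~\ref{thm::TVbound} localized at $a$ --- essentially the computation behind Proposition~\ref{prop:transition_estimation} specialized to scalar guidance, $d_y=1$. Girsanov's theorem bounds $\tTV\big(P(\cdot\,|\,a),\hat P_{t_0}(\cdot\,|\,a)\big)$ by the sum of: (a) a forward-process mixing error of order $\exp(-T)$, which is finite and small because Assumption~\ref{assump::expdensity}, inherited by $P(\cdot\,|\,y^\star=a)$ via Assumption~\ref{assump:reward lipschitz}, automatically forces $\tKL\big(P(\cdot\,|\,a)\,\big\|\,{\sf N}(\mathbf{0},I)\big)<\infty$; (b) the square root of the \emph{localized} score error
\begin{align*}
\int_{t_0}^{T}\EE_{\xb\sim P(\cdot\,|\,a)}\,\EE_{\xb'\sim{\sf N}(\alpha_t\xb,\sigma_t^2 I)}\big\|\hat{\sbb}(\xb',a,t)-\nabla\log p_t(\xb'\,|\,a)\big\|_2^2\,\diff t;
\end{align*}
and (c) an early-stopping discrepancy governed by $t_0$. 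By the definition of the class-restricted distribution-shift coefficient in \eqref{equ::distribution shift}, the error in (b) is at most $\cT(a)^2$ times the data-averaged score risk $(T-t_0)\cR(\hat{\sbb})$, which is exactly what Theorem~\ref{thm::Generalization} controls; its fast rate applies since Assumption~\ref{assump::expdensity} holds for the joint $(\xb,y)$ (supplemented, as in Assumption~\ref{assump::transition kernel} and Appendix~\ref{sec::extend score approx}, by a sub-Gaussian tail on the $y$-marginal so as to allow unbounded scalar guidance), giving $\EE_{\cD}[\cR(\hat{\sbb})]=\cO\big(\log(1/t_0)\cdot n^{-2\beta/(d+1+2\beta)}(\log n)^{\max(17,\beta)}\big)$.

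It remains to balance the horizons exactly as in Theorem~\ref{thm::TVbound}. Taking $T=\tfrac{2\beta}{d+1+2\beta}\log n$ makes the mixing error $\exp(-T)=n^{-2\beta/(d+1+2\beta)}$ no larger than $\sqrt{(T-t_0)\,\EE_{\cD}[\cR(\hat{\sbb})]}=\tilde{\cO}\big(n^{-\beta/(d+1+2\beta)}\big)$, and taking $t_0=n^{-\frac{4\beta}{d+1+2\beta}-1}$ keeps $\log(1/t_0)=\cO(\log n)$ while pushing (c) below the same order. Assembling (a)--(c), $\EE_{\cD}\,\tTV\big(P(\cdot\,|\,a),\hat P(\cdot\,|\,a)\big)=\cT(a)\cdot\cO\big(n^{-\beta/(d+1+2\beta)}(\log n)^{\max(19/2,(\beta+2)/2)}\big)$, and multiplying by $2L$ from the reduction of (II) yields the claimed bound on $\EE_{\cD}[\subopt(\hat P,a)]$.

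The step I expect to be most delicate is the localization in (b): passing from the data-averaged score risk of Theorem~\ref{thm::Generalization} to the $a$-specific risk that feeds Girsanov, through the supremum in \eqref{equ::distribution shift}. One must verify that the empirical minimizer $\hat{\sbb}$ lies in the network class over which the supremum is taken, that the shift is measured along the correct diffused marginals ($\xb\sim P(\cdot\,|\,a)$ pushed through the forward kernel), and that the unbounded scalar guidance is handled by the light-tail extension of Theorems~\ref{thm::score approx exp} and \ref{thm::Generalization}. A secondary subtlety is term (I): a fully explicit treatment with genuine measurement noise would require bounding $\EE[\xi\,|\,y=a]$ in terms of the noise level and the pushforward density of $r$, which I would either absorb into an assumption or show is of lower order than the statistical rate.
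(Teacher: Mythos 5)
Your proposal takes essentially the same route as the paper's own proof: bound $\subopt$ by $L$ times the conditional total variation at $y^\star=a$, replay the Girsanov-based TV bound from the proof of Theorem~\ref{thm::TVbound} localized at $a$ (early-stopping term, $\exp(-T)$ mixing term, localized score error), convert the $a$-specific score risk into the averaged risk $\cR(\shat)$ through the distribution-shift coefficient $\cT(a)$, invoke the fast rate of Theorem~\ref{thm::Generalization}, and balance $t_0$ and $T$ exactly as stated. The only difference is cosmetic and in your favor: you isolate the intrinsic term $a-\EE_{\xb\sim P(\cdot|a)}[r(\xb)]$ and flag that it must vanish or be assumed away, whereas the paper's proof silently identifies $\subopt$ with $\EE_{\xb\sim\hat P(\cdot|a)}[r(\xb)]-\EE_{\xb\sim P(\cdot|a)}[r(\xb)]$.
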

The proof is provided in Appendix \ref{sec::subopt proof}, which utilizes the distribution estimation guarantee in Theorem~\ref{thm::TVbound} as an intermediate result to bound $\subopt$. We observe that   $\subopt$ is subject to a distribution shift. The reason is that the offline data is collected under some unknown sampling distribution, which is different from the target conditional distribution $P(\cdot | a)$. Therefore, the training data coverage interplays with the performance.

Proposition~\ref{thm::subopt} is closely related to the result in \cite{yuan2023reward}, yet the setup is different. \cite{yuan2023reward} consider the semi-parametric setting, where a large amount of unlabeled data is available. Thus, their analysis requires an estimation of the reward function and pseudo-labeling, with an additional assumption on the smoothness of the reward function. Our result circumvents the estimation of the reward function via classifier-free guidance. The obtained performance guarantee is adaptive to the regularity of the conditional distribution.

\subsection{Inverse Problems}
Diffusion models have shown remarkable performance in various types of inverse problems, spanning computer vision  \citep{chung2022diffusion,chung2022improving,song2023solving}, computational biology \citep{yi2023graph, wu2024protein}, and reinforcement learning \citep{ajay2022conditional}. 

We concentrate on a simple prototypical form of inverse problems: Retrieving an unknown $\xb$ from a linear measurement $\yb$, where $\xb$ and $\yb$ are related by
\begin{align}\label{equ::inverse problem}
\yb=\bH \xb + \epsilon, \quad \text{with} \quad \bH \in \R^{m\times d}.
\end{align}
Here $\xb \in \RR^d$ and $\yb \in \RR^m$ with $m <d$, representing common real-world scenarios such as $\yb$ being a low-dimensional sketching observation of $\xb$. Gaussian noise $\epsilon\sim {\sf N}(0, \sigma^2 I_m)$ is independent of $\xb$ for a positive variance $\sigma^2$. In general, solving for $\xb$ based on a measurement $\yb$ is underdetermined with infinitely many solutions. Hence, we primarily investigate whether it is possible to estimate the conditional distribution $P(\cdot | \yb)$ induced by a sampling distribution on $\xb$.

Suppose we are given a dataset $\cD=\set{\xb_i,\yb_i}_{i=1}^{n}$, where $\set{\xb_i}_{i=1}^{n}$ is sampled from an underlying distribution $P(\xb)$ and $\set{\yb_i}_{i=1}^{n}$ is obtained via \eqref{equ::inverse problem} with independent noise. We use classifier-free guidance to train a conditional diffusion model capable of generating samples $\xb \sim \hat{P}(\cdot | \yb^\star)$, where $\yb^*$ is a given measurement. Clearly, $\hat{P}(\cdot | \yb^*)$ is the estimated conditional distribution on $\xb$. We impose the following regularity assumption on the underlying distribution $P(\xb)$.
\begin{assumption}\label{assump:underlying_px}
The sampling distribution $P(\xb)$ has a density function $p(\xb)$. Moreover, there exist two positive constants $C$ and $C_2$, and a function $f \in \cH^{\beta}(\RR^d, B)$ for a H\"{o}lder index $\beta$ and a constant radius $B > 0$. The density function satisfies $p(\xb) = f(\xb) \exp(-C_2 \norm{\xb}^2 / 2)$ and $p(\xb) \geq C$ for all $\xb$.
\end{assumption}

Assumption~\ref{assump:underlying_px} is the same as Assumption~\ref{assump::expdensity} without the dependence on $\yb$. Indeed, $\yb$ is highly correlated to $\xb$ through the linear relation. The next result asserts the recovery of $\xb$ given a measurement $\yb^\star$.
\begin{proposition}\label{prop::inverse problem}
Suppose Assumption~\ref{assump:underlying_px} holds. We further assume $\log \sigma =\cO(\log n)$ and $\sigma^2 \le \lambda_i \lesssim \sigma^{4}$ for any $i \in [m]$, where  $\set{\lambda_i}_{i\in[m]}$ is the set of eigenvalues of $\bH\bH^{\top}$. Given an arbitrary measurement $\yb^\star$, taking $t_0=n^{-\frac{4\beta}{d+2\beta}-1}$ and $T=\frac{2\beta}{d+2\beta}\log n$, we have
\begin{align*}
\EE_{\set{\xb_i,\yb_i}_{i=1}^n} \left[ \tTV\left(\hat{P}(\cdot|\yb^\star),P(\cdot|\yb^\star)  \right)\right]=\cT(\yb^\star) \cdot \cO\left(n^{-\frac{\beta}{d+2\beta}} (\log n)^{\max(19/2,(\beta+2)/2)}\right).
\end{align*}
Moreover, the posterior mean of $\xb$ given $\yb^\star$ is estimated with
\begin{align*}
\EE_{\set{\xb_i,\yb_i}_{i=1}^n} \left[\left\|\EE_{P(\cdot|\yb^\star)}\left[\xb\right]-\EE_{\hat{P}(\cdot|\yb^\star)}\left[\xb\right]\right\|\right]=\cT(\yb^\star) \cdot \cO\left(n^{-\frac{\beta}{d+2\beta}} (\log n)^{\max(11,(\beta+5)/2)}\right).
\end{align*}
\end{proposition}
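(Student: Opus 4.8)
The plan is to mirror the proof of Proposition~\ref{prop:transition_estimation}: the only genuinely new work is to show that the conditional law $P(\cdot\mid\yb)$ induced by the linear model \eqref{equ::inverse problem} inherits the ``exponential-times-H\"older'' structure required by the score-approximation theory, and --- crucially --- that the guidance variable $\yb$ enters the conditional score only through maps that are essentially free to approximate (linear in $\yb$, and smooth in $t$ through $\alpha_t,\sigma_t$), so that the effective dimension of the conditioning variable is $0$; this is why the measurement dimension $m$ does not appear in the rate. Granting this, the statistical conclusion follows from the bias--variance analysis of Theorem~\ref{thm::Generalization}, the Girsanov bridge of Theorem~\ref{thm::TVbound}, and the class-restricted distribution-shift coefficient $\cT(\yb^\star)$, exactly as in Proposition~\ref{prop:transition_estimation}.

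\textbf{Step 1 (structure of the posterior).} Since $\epsilon\sim{\sf N}(0,\sigma^2 I_m)$ is independent of $\xb$, Bayes' rule gives for every fixed $\yb$
\[
p(\xb\mid\yb)\ \propto\ p(\xb)\exp\!\left(-\frac{\|\yb-\bH\xb\|^2}{2\sigma^2}\right)\ =\ f(\xb)\exp\!\left(-\frac{1}{2}\xb^\top A\xb+\frac{\yb^\top\bH\xb}{\sigma^2}+c(\yb)\right),
\]
where $A=C_2 I+\sigma^{-2}\bH^\top\bH\succeq C_2 I$ and $c(\yb)$ is a normalization. Pushing this density through the forward OU process and completing the square in the Gaussian integral, exactly as in the derivation of \eqref{eq:fast_pt}, one finds
\[
\nabla_{\xb}\log p_t(\xb\mid\yb)\ =\ \ell_t(\xb,\yb)\ +\ \big(\partial_{\xb}\mu_t\big)^{\!\top}\nabla\log\tilde h\big(\mu_t(\xb,\yb),t\big),
\]
where $\ell_t$ is affine in $\xb$ and linear in $\yb$ with coefficients that are smooth functions of $A,\alpha_t,\sigma_t$; $\mu_t(\xb,\yb)$ is an affine map of $(\xb,\yb)$ into $\R^d$; and $\tilde h(\cdot,t)$ is a Gaussian smoothing of $f$, so that $0<C'\le\tilde h(\cdot,t)\le B'$ and $\tilde h(\cdot,t)\in\cH^\beta(\R^d)$ with H\"older norm $\cO(B)$ by Assumption~\ref{assump:underlying_px}. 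The spectral hypothesis $\sigma^2\le\lambda_i\lesssim\sigma^4$ makes $A$ well conditioned relative to the noise, so the coefficients of $\ell_t$ and the map $\mu_t$ have operator norms bounded by polynomials in the problem parameters (times $\cO(\sigma_t^{-2})$ in $t$); together with $\log\sigma=\cO(\log n)$ this keeps every truncation radius and the network output bound at the $\tilde{\cO}(1)$ scale used in Theorem~\ref{thm::score approx exp}.

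\textbf{Step 2 (light tail, score network, TV bound).} Assumption~\ref{assump:underlying_px} makes $p(\xb)\le B\exp(-C_2\|\xb\|^2/2)$ sub-Gaussian, hence $\yb=\bH\xb+\epsilon$ is sub-Gaussian and its marginal satisfies $p(\yb)\le\exp(-C_y\|\yb\|^2/2)$ for some $C_y=C_y(\bH,\sigma,C_2)$ --- the analogue of the tail condition in Assumption~\ref{assump::transition kernel}. For the score network, $\ell_t(\xb,\yb)$ is implemented up to the super-smooth scalars $\alpha_t,\sigma_t$ (approximated as in Theorem~\ref{thm::score approx}), while $\nabla\log\tilde h$ is approximated by the diffused-local-polynomial construction of Section~\ref{sec:proof_details} applied to $\tilde h$, which \emph{sees only its $d$-dimensional argument}; composing with the affine map $\mu_t$ is free. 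Hence the $\yb$-dependence costs nothing, and we obtain a network of the size in Theorem~\ref{thm::score approx exp} with $d_y=0$ and $L^2$-score approximation error $\cO(N^{-2\beta/d}\,\mathrm{polylog}(N))$. Plugging this class into the bias--variance argument of Theorem~\ref{thm::Generalization} (with $d_y=0$, $N=n^{d/(d+2\beta)}$) controls $\cR(\shat)$, and the Girsanov step of Theorem~\ref{thm::TVbound} combined with $\cT(\yb^\star)$ --- as in the proof of Proposition~\ref{prop:transition_estimation} --- transfers this to $\yb^\star$, yielding $\cT(\yb^\star)\cdot\cO(n^{-\beta/(d+2\beta)}(\log n)^{\max(19/2,(\beta+2)/2)})$; the bounded-KL hypothesis needed in Theorem~\ref{thm::TVbound} is automatic here because the exponential-times-bounded structure is available, as in its fast-rate case.

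\textbf{Step 3 (posterior mean) and the main obstacle.} Comparing radially clipped expectations, for any radius $R$,
\[
\big\|\EE_{P(\cdot\mid\yb^\star)}[\xb]-\EE_{\hat P(\cdot\mid\yb^\star)}[\xb]\big\|\ \le\ 2R\,\tTV\big(\hat P(\cdot\mid\yb^\star),P(\cdot\mid\yb^\star)\big)+\EE_{P}\big[\|\xb\|\,\indic{\|\xb\|>R}\big]+\EE_{\hat P}\big[\|\xb\|\,\indic{\|\xb\|>R}\big].
\]
The first term is controlled by Step 2; the second is $\cO(Re^{-cR^2})$ since $P(\cdot\mid\yb^\star)$ is sub-Gaussian (the extra Bayes factor is $\le1$); the third is controlled by a polylogarithmic second-moment bound on the early-stopped generated law $\hat P(\cdot\mid\yb^\star)$, obtained from the backward SDE \eqref{eq:backward approx} with bounded score output via Gr\"onwall's inequality. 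Choosing $R=\Theta(\mathrm{polylog}(n))$ to balance the tail terms against $n^{-\beta/(d+2\beta)}$ gives the posterior-mean bound with the slightly larger exponent $\max(11,(\beta+5)/2)$. The main obstacle is Step 1: establishing that the induced conditional score really does split into an affine-in-$\xb$ term with cheaply approximable coefficients plus the log-gradient of a Gaussian-smoothed $\beta$-H\"older function of a $d$-dimensional \emph{linear image} of $(\xb,\yb)$ --- the structural fact responsible for the absence of $m$ --- and tracking through Bayes' rule, the square completion, and the OU diffusion that the conditions $\sigma^2\le\lambda_i\lesssim\sigma^4$ and $\log\sigma=\cO(\log n)$ keep all constants (covariances, affine-map norms, truncation radii, output bounds) under control; once this reduction is in hand, the remaining analysis is inherited essentially verbatim from Theorems~\ref{thm::Generalization}, \ref{thm::TVbound} and Proposition~\ref{prop:transition_estimation}.
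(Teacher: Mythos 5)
Your proposal is correct in outline and reaches the same rates, but your Step 1 is a genuinely different decomposition from the paper's. The paper splits the conditional score at time $t$ by Bayes' rule, $\nabla\log p_t(\xb\mid\yb)=\nabla_{\xb}\log p_t(\yb\mid\xb)+\nabla\log p_t(\xb)$, treats the $\yb$-dependent part as an exactly linear map $[\bA(t),\bB(t)][\yb^\top,\xb^\top]^\top$ built from the SVD of $\bH$ (this is precisely where $\sigma^2\le\lambda_i\lesssim\sigma^4$ and $\log\sigma=\cO(\log n)$ enter, keeping the entries of $\bA(t),\bB(t)$ and the clipping level $\lambda_\star$ under control so a polylog-size ReLU subnetwork suffices), and handles the remaining term by directly invoking the unconditional score approximation (Proposition \ref{prop:: score approx marginal exp}, i.e.\ Theorem \ref{thm::score approx exp} with $d_y=0$); the estimation, Girsanov and distribution-shift steps are then exactly those of Theorems \ref{thm::Generalization}, \ref{thm::TVbound} and Proposition \ref{prop:transition_estimation}, and the posterior-mean bound uses the same clipping-radius splitting you propose, with the tail of $\hat{P}(\cdot\mid\yb^\star)$ controlled by sandwiching the backward SDE between two constant-drift comparison processes with drift $\pm C\sqrt{\log n}/\sigma_{T-t}$, which forces the effective radius to be of order $(\log n)^{3/2}$ and accounts for the jump from $\max(19/2,(\beta+2)/2)$ to $\max(11,(\beta+5)/2)$ — your Gr\"onwall/second-moment variant plays the same role. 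You instead condition at time $0$ and push the posterior through the OU kernel by completing the square, obtaining an affine-in-$(\xb,\yb)$ term plus $(\partial_{\xb}\mu_t)^\top\nabla\log\tilde h(\mu_t(\xb,\yb),t)$ with $\tilde h$ a Gaussian smoothing of $f$. This is an exact identity, and it actually sidesteps a delicate point in the paper's own derivation (the paper's explicit linear formula rests on treating $p_t(\yb\mid\xb_t)$ as Gaussian, which is only approximate since $p(\xb_0\mid\xb_t)$ is not Gaussian), so your structural step is arguably the more careful one; both routes isolate the $\yb$-dependence into affine maps, which is why $m$ is absent from the rate. What your route costs is that the smoothing covariance $\Sigma_t=(C_2I+\sigma^{-2}\bH^\top\bH+\alpha_t^2\sigma_t^{-2}I)^{-1}$ is anisotropic, so the diffused-local-polynomial lemmas (Lemmas \ref{lemma::diffused localpoly approx exp}--\ref{lemma::relu approx diffused1 exp}) cannot be cited verbatim: you must diagonalize, redo the kernel Taylor expansion with $d$ coordinate-dependent $t$-varying bandwidths, and implement $\mu_t$, $\Sigma_t$ and $(\partial_{\xb}\mu_t)^\top=\alpha_t\sigma_t^{-2}\Sigma_t$ inside the network, which is exactly where the eigenvalue and $\log\sigma$ hypotheses must be re-used; so "inherited essentially verbatim" is optimistic, but the adaptation is routine and the overall argument is sound, whereas the paper's route buys a literal reuse of the unconditional approximation theorem at the price of the heuristic Gaussian-posterior step.
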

The proof is provided in Appendix \ref{sec::proof of prop::inverse problem}. This is the first statistical guarantee of diffusion models for linear inverse problems. The rate of convergence is dependent on the smoothness of $p(\xb)$ and the dimension of $\xb$, but independent of the measurement dimension $m$. Moreover, the statistical convergence rate is dependent on the distribution shift coefficient $\cT(\yb^*)$. This suggests that if $\yb^*$ significantly deviates from the training data distribution, the estimation of $\xb$ may suffer, advertising the importance of data coverage in inverse problems \citep{yu2023distribution}.

\section{Conclusion}
In this paper, we have developed a sharp statistical theory for conditional diffusion models trained with classifier-free guidance. By focusing on a broad class of conditional distributions characterized by H\"{o}lder smoothness and sub-Gaussian tails, we have demonstrated the existence of a suitably sized score neural network capable of approximating the score function with an arbitrarily small error. We have further established score estimation and distribution estimation guarantees using conditional diffusion models. The statistical rate of convergence matches the minimax optimal rate. Moreover, we have applied our established theories to explain the empirical success of diffusion models in reinforcement learning and inverse problems. These results showcase the practical relevance of our statistical analysis and provide the first theoretical underpinning of conditional diffusion models.

\bibliography{ref.bib}
\bibliographystyle{plainnat}

\newpage
\appendix
\addcontentsline{toc}{section}{Appendix} 
\part{Appendix} 
\parttoc

\section{Proof of Theorem \ref{thm::score approx}}\label{sec::proof score approx}
This section is organized as follows: Appendix~\ref{sec:key_steps_thm3.2} presents the key steps for proving Theorem~\ref{thm::score approx}.  Appendix \ref{sec::detail statements} lists the detailed statements for proving the key steps and Theorem~\ref{thm::score approx}. Appendix \ref{sec:: sub proof of score approx} shows the key steps for proving the most critical statement (Proposition \ref{prop::score approx bounded}) mentioned in Appendix \ref{sec::detail statements}. Appendices \ref{sec::diffuse approx} and \ref{sec::sub relu approx} elaborate on the proof of this statement. Appendix \ref{pf:score approx steps12} provides proofs for other statements mentioned in Appendix \ref{sec::detail statements}. Appendix \ref{sec::sub other lemmas} contains the proofs of further supporting lemmas. 
Moreover, to further simplify the notations and demonstrate the meaning of $N$ (see the detailed interpretation in Appendix \ref{sec::diffuse approx}), we replace $N$ by $N^{d+d_y}$ in the statements of Theorem~\ref{thm::score approx} without loss of generality. Correspondingly, we redefine $C_{\sigma}$ as $(d+d_y)C_{\sigma}$ and $C_{\alpha}$ as $(d+d_y)C_{\alpha}$ so that the time $t$ is still within $[N^{-C_\sigma},C_{\alpha}\log N]$. By adjusting the constants, our target becomes
\begin{align*}
    \int_{\R^{d}} \norm{\sb(\xb,\yb,t)-\nabla\log p_{t}(\xb|\yb)}^2_2 p_{t}(\xb|\yb) \diff \xb \lesssim \frac{B^2}{\sigma_t^4} N^{-\beta}(\log N)^{d+\beta/2+1},
\end{align*}
and the hyperparameters in the network class $\cF$ should satisfy
\begin{align*}
    & \hspace{0.4in} M_t = \cO\left(\sqrt{\log N}/\sigma^2_t\right),~
     W = {\cO}\left(N^{d+d_y}\log^7 N\right),\\
     & \kappa =\exp \left({\cO}(\log^4 N)\right),~
    L = {\cO}(\log^4 N),~
     K= {\cO}\left(N^{d+d_y}\log^9 N\right).
\end{align*}
\subsection{Key Steps for Proving Theorem \ref{thm::score approx}}\label{sec:key_steps_thm3.2}
To construct a ReLU network approximation, we rewrite the score function as $\nabla \log p_t(\xb | \yb) = 
 \frac{\nabla p_t(\xb | \yb)}{p_t(\xb | \yb)}$. The idea is to approximate $\nabla p_t(\xb | \yb)$ and $p_t(\xb | \yb)$ separately using similar techniques. However, even though the original data density function has H\"{o}lder regularity conditions, the diffused density function $p_t(\xb | \yb)$ gives rise to substantial caveats. The first challenge is $\xb$ being unbounded, which makes it difficult to derive a uniform approximation of $p_t(\xb|\yb)$. The second challenge is more intricate: $p_t(\xb | \yb)$ can be arbitrarily small so that $1 / p_t(\xb | \yb)$ quickly blows up. Consequently, our proof consists of three key steps, where the first two steps carefully address the caveats by proper truncation on domain $\xb_t$ and the value of $p(\xb | \yb)$.
\begin{description}
    \item[Step 1] ({\bf Truncate domain of $\xb$}). For any time $t$, we truncate the domain of input $\xb$ by an $\ell_\infty$-ball of radius $R$ (to be chosen later in {\bf Step 3}), that is, we denote $\cD_1 = \set{\xb:\norm{\xb}_{\infty}\le R}$. On the complement of $\cD_1$, we set our score approximation to be uniformly bounded by a constant depending on $R$ and $t$. We prove in Lemma \ref{lemma::truncation x} that this domain truncation induces a small approximation error when the radius $R$ is sufficiently large.
    \item[Step 2] ({\bf Truncate the value of $p_t$}). To prevent the explosion of $\nabla \log p_t(\xb|\yb)= \frac{\nabla p_t(\xb|\yb)}{p_t(\xb|\yb)}$ when $p_t(\xb|\yb)$ is small, we set a threshold $\epslow$ for $p_t$ and define $\cD_2 = \set{\xb : p_t(\xb | \yb) \geq \epsilon_{\rm low}}$. Analogous to {\bf Step 1}, we also set our approximation to be bounded by the constant we mention in Step 1 on the complement of $\cD_2$. We show in Lemma \ref{lemma::truncation p} that focusing on $\cD_2$ also induces controllable approximation error.
    \item[Step 3] ({\bf ReLU network approximation}). Let $\cD = \cD_1 \cap \cD_2$. We use a ReLU network to approximate $p_t$ and $\nabla p_t$ on $\cD$ and subsequently combine the network approximators to construct a score approximation $\sb(\xb,\yb,t)$. We establish an $L_\infty$ approximation error guarantee of $\sb$ to $\nabla \log p_t$ on $\cD$ in Proposition \ref{prop::score approx bounded}, building upon the approximation errors of $p_t$ and $\nabla p_t$.
\end{description}
In the sequel, we delve into each step by providing precise statements. We then use them to prove Theorem~\ref{thm::score approx}. All the supporting results are postponed to Appendices~\ref{sec:: sub proof of score approx} to \ref{sec::sub other lemmas}.
\subsection{Detailed Statements in Steps 1 - 3 and Proof of Theorem \ref{thm::score approx}}\label{sec::detail statements}
 
Now we present crucial results in {\bf Steps 1 - 3}  and use them to prove Theorem~\ref{thm::score approx}.
\subsubsection{Formal Statements in Steps 1 - 3}
This section contains the statements of Lemma~\ref{lemma::truncation x}, Lemma~\ref{lemma::truncation p}, and Proposition~\ref{prop::score approx bounded}.
\begin{lemma}[Truncate $\xb$]\label{lemma::truncation x}
Suppose Assumption \ref{assump:sub} holds.  For any $R>1$, $\yb$ and $t>0$, we have
\begin{align}
&\int_{\norm{\xb}_{\infty}\ge R}  p_{t}(\xb|\yb)\diff \xb \lesssim R\exp(-C_2'R^2),\label{equ::sub truncate x constant}\\
&\int_{\norm{\xb}_{\infty}\ge R} \norm{\nabla\log p_{t}(\xb|\yb)}_2^2 p_{t}(\xb|\yb)\diff \xb \lesssim \frac{1}{\sigma_t^4}R^3\exp(-C_2'R^2),\label{equ::sub truncate x l2score}
\end{align}
where $ C_2^{\prime}=\frac{C_2}{2\max(C_2,1)}.$
\end{lemma}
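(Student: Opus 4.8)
\textbf{Proof plan for Lemma~\ref{lemma::truncation x}.}

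The plan is to control both integrals by the tail behavior of the Gaussian transition kernel together with the light-tail assumption on the initial conditional density. The starting point is the integral representation
\begin{align*}
p_t(\xb|\yb) = \int_{\R^d} p(\zb|\yb)\, \frac{1}{\sigma_t^d (2\pi)^{d/2}} \exp\left(-\frac{\norm{\alpha_t \zb - \xb}^2}{2\sigma_t^2}\right) \diff \zb.
\end{align*}
Using Assumption~\ref{assump:sub}, namely $p(\zb|\yb) \le C_1 \exp(-C_2 \norm{\zb}^2/2)$, I would combine the two Gaussian factors in $\zb$ by completing the square. A direct computation (this is exactly the manipulation leading to \eqref{eq:fast_pt}) shows that $p_t(\xb|\yb) \le C_1 (\alpha_t^2 + C_2\sigma_t^2)^{-d/2} \exp\left(-\frac{C_2 \norm{\xb}^2}{2(\alpha_t^2 + C_2\sigma_t^2)}\right)$. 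Since $\alpha_t^2 + C_2\sigma_t^2 \le \max(C_2,1)(\alpha_t^2+\sigma_t^2) = \max(C_2,1)$, the exponent is bounded above by $-\frac{C_2}{2\max(C_2,1)}\norm{\xb}^2 = -C_2' \norm{\xb}^2$, and the prefactor $(\alpha_t^2+C_2\sigma_t^2)^{-d/2}$ is bounded by a constant uniformly in $t$ (one should check the small-$t$ regime: as $t\to 0$, $\alpha_t^2 \to 1$, so the prefactor stays bounded). This gives the pointwise sub-Gaussian bound $p_t(\xb|\yb) \lesssim \exp(-C_2'\norm{\xb}^2)$ with a constant independent of $t$ and $\yb$.

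For \eqref{equ::sub truncate x constant}, I would then integrate this pointwise bound over $\{\norm{\xb}_\infty \ge R\}$. By a union bound over the $d$ coordinates and the standard one-dimensional Gaussian tail estimate $\int_R^\infty e^{-C_2' u^2}\diff u \lesssim \frac{1}{R} e^{-C_2' R^2}$ (valid for $R>1$), together with the fact that the remaining $d-1$ coordinates integrate to a constant, one obtains $\int_{\norm{\xb}_\infty \ge R} e^{-C_2'\norm{\xb}^2}\diff \xb \lesssim \frac{1}{R} e^{-C_2' R^2}$. Absorbing the $1/R$ into the constant (again using $R>1$) yields the stated $R\exp(-C_2'R^2)$ bound, which is in fact slightly loose but sufficient. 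For \eqref{equ::sub truncate x l2score}, I need in addition a pointwise bound on $\norm{\nabla \log p_t(\xb|\yb)}_2$. Writing $\nabla \log p_t = \nabla p_t / p_t$ and using the representation $\nabla_\xb p_t(\xb|\yb) = \int p(\zb|\yb) \frac{\alpha_t\zb - \xb}{\sigma_t^2}\, \phi_{\sigma_t}(\alpha_t\zb - \xb)\diff \zb$, a Gaussian moment estimate shows $\norm{\nabla \log p_t(\xb|\yb)}_2 \lesssim \frac{1}{\sigma_t^2}(\norm{\xb} + 1)$ — heuristically, the conditional expectation of $(\alpha_t\zb - \xb)/\sigma_t^2$ under the Gaussian-tilted measure has magnitude controlled by $(\norm{\xb}+\text{const})/\sigma_t^2$. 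Then
\begin{align*}
\int_{\norm{\xb}_\infty \ge R} \norm{\nabla\log p_t(\xb|\yb)}_2^2\, p_t(\xb|\yb)\diff \xb \lesssim \frac{1}{\sigma_t^4}\int_{\norm{\xb}_\infty \ge R} (\norm{\xb}+1)^2 e^{-C_2'\norm{\xb}^2}\diff \xb \lesssim \frac{1}{\sigma_t^4} R^3 e^{-C_2'R^2},
\end{align*}
where the last step again uses a coordinate-wise Gaussian tail bound, now with the extra polynomial factor $(\norm{\xb}+1)^2$ contributing an additional power of $R$ (so $R \cdot R^2 = R^3$, after absorbing the $1/R$ from the tail integral; one can be generous with the exponent of $R$ since it is dominated by the exponential).

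The main obstacle I anticipate is getting the pointwise score bound $\norm{\nabla \log p_t(\xb|\yb)}_2 \lesssim \sigma_t^{-2}(\norm{\xb}+1)$ cleanly, uniformly in $t$ and $\yb$, since this requires controlling the ratio of two Gaussian-type integrals and arguing that the tilting by the bounded-ratio density $p(\zb|\yb)/\exp(-C_2\norm{\zb}^2/2) = f(\zb,\yb)$ (which is merely Hölder, not necessarily bounded below under Assumption~\ref{assump:sub} alone) does not distort the mean too badly. The key realization is that one does not need a lower bound on $f$ here: the identity $\nabla \log p_t(\xb|\yb) = \alpha_t\,\EE[\zb \mid \xb_t = \xb, \yb]/\sigma_t^2 - \xb/\sigma_t^2$ shows the only thing to control is the posterior first moment $\EE[\zb\mid \xb_t = \xb,\yb]$, and the completing-the-square computation shifts the Gaussian to mean $\alpha_t\xb/(\alpha_t^2+C_2\sigma_t^2)$, so this posterior mean is at most $O(\norm{\xb}+1)$ after accounting for the Hölder (hence at most mildly growing, via the sub-Gaussian envelope) modulation from $f$. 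Everything else is routine Gaussian tail integration with $R>1$ used repeatedly to absorb polynomial-in-$R$ factors into the leading exponential.
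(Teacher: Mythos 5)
Your overall plan coincides with the paper's: first a pointwise sub-Gaussian envelope $p_t(\xb|\yb)\lesssim \exp(-C_2'\norm{\xb}_2^2)$ obtained by completing the square against the envelope in Assumption~\ref{assump:sub} (this is Lemma~\ref{lemma::density bound}), then a pointwise score bound $\norm{\nabla\log p_t(\xb|\yb)}_\infty\lesssim \sigma_t^{-2}(\norm{\xb}+1)$ (this is Lemma~\ref{lemma::score bound}), and finally coordinatewise Gaussian tail integration over $\{\norm{\xb}_\infty\ge R\}$ with $R>1$ absorbing polynomial factors. Those two outer steps of your write-up are fine (your $R^{-1}e^{-C_2'R^2}$ for \eqref{equ::sub truncate x constant} is even slightly sharper than the stated bound).

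The gap sits exactly at the step you yourself flag as the main obstacle, and the "key realization" you offer does not close it. The identity $\nabla\log p_t(\xb|\yb)=\big(\alpha_t\,\EE[\zb\mid \xb_t=\xb,\yb]-\xb\big)/\sigma_t^2$ is correct, but the completing-the-square computation only relocates the Gaussian for the \emph{upper envelope} $C_1\exp(-C_2\norm{\zb}^2/2)$; it says nothing about where the actual posterior, proportional to $p(\zb|\yb)\exp(-\norm{\alpha_t\zb-\xb}^2/(2\sigma_t^2))$, places its mass, because Assumption~\ref{assump:sub} provides no lower bound on $p(\cdot|\yb)$. Both the numerator and the denominator of the posterior mean involve the unknown density, and controlling their ratio is precisely the hard part: a priori the posterior could concentrate far from $\alpha_t\xb/(\alpha_t^2+C_2\sigma_t^2)$, wherever $p(\cdot|\yb)$ happens to carry mass, so "the posterior mean is $O(\norm{\xb}+1)$" is asserted rather than proved. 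The bound is true, but the paper needs two dedicated lemmas to get it: a pointwise, $t$-dependent lower bound $p_t(\xb|\yb)\gtrsim \sigma_t^{-d}\exp\!\big(-(\norm{\xb}^2+1)/\sigma_t^2\big)$ obtained by restricting the convolution to a fixed ball carrying constant initial mass (Lemma~\ref{lemma::density bound}), and the clipping Lemma~\ref{lemma::clipz}, which truncates the $\zb$-integral to a region of radius $\sim\sigma_t\sqrt{\log\epsilon^{-1}}$ about $\xb/\alpha_t$ at cost $\epsilon$; choosing $\epsilon$ at the level of that lower bound makes $\sqrt{\log\epsilon^{-1}}\asymp(\norm{\xb}+1)/\sigma_t$, which is exactly where the $\sigma_t^{-2}(\norm{\xb}+1)$ rate comes from. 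To make your proof complete you must either import this quantitative argument or supply an equivalent control of the posterior mean; without it, inequality \eqref{equ::sub truncate x l2score} is unsupported.
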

The proof of Lemma~\ref{lemma::truncation x} is provided in Appendix~\ref{pf:score approx steps12}. Lemma~\ref{lemma::truncation x} is a consequence of the light tail in the data distribution. To better interpret, we can set \eqref{equ::sub truncate x constant} to be smaller than $\epsilon > 0$. Then the truncation radius can be chosen as $R = \cO\left(\sqrt{\log 1/\epsilon}\right)$.

Moreover, since the score function can be written as $\nabla\log p_t(\xb|\yb)=\frac{\nabla p_t(\xb|\yb)}{p_t(\xb|\yb)}$, its magnitude will be difficult to control when the density function $p_t(\xb|\yb)$ is extremely small. Thus, we also truncate $p_t(\xb|\yb)$ as stated in the following result.
\begin{lemma}[Truncate $p(\xb|\yb)$]\label{lemma::truncation p}
Suppose Assumption \ref{assump:sub} holds. For any $R>0$, $\yb$ and $\epslow>0$, we have
    \begin{align} &\int_{\norm{\xb}_{\infty}\le R} \indic{\abs{p_{t}(\xb|\yb)}<\epslow} p_{t}(\xb|\yb)\diff \xb \lesssim R^{d}\epslow,\label{equ::sub truncate px constant}
    \\ &\int_{\norm{\xb}_{\infty}\le R} \indic{\abs{p_{t}(\xb|\yb)}<\epslow}\norm{\nabla\log p_{t}(\xb|\yb)}^2 p_{t}(\xb|\yb)\diff \xb \lesssim \frac{\epslow}{\sigma_t^4}R^{d+2},\label{equ::sub truncate px l2score}
    \end{align}
\end{lemma}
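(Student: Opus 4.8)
\textbf{Proof proposal for Lemma~\ref{lemma::truncation p}.}

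The plan is to bound both integrals by first controlling the volume of the truncation region $\{\xb : \norm{\xb}_\infty \le R,\ p_t(\xb|\yb) < \epslow\}$ and then, for the second integral, additionally controlling the pointwise magnitude of the score $\nabla \log p_t(\xb|\yb)$ on the truncated cube $\cD_1 = \{\norm{\xb}_\infty \le R\}$. For the first bound \eqref{equ::sub truncate px constant}, I would simply observe that on the event $\{p_t(\xb|\yb) < \epslow\}$ the integrand $\indic{p_t(\xb|\yb) < \epslow}\, p_t(\xb|\yb)$ is itself bounded above by $\epslow$, so
\begin{align*}
\int_{\norm{\xb}_\infty \le R} \indic{p_t(\xb|\yb) < \epslow}\, p_t(\xb|\yb)\, \diff\xb \le \epslow \cdot \mathrm{vol}\left(\set{\norm{\xb}_\infty \le R}\right) = \epslow (2R)^d \lesssim R^d \epslow,
\end{align*}
which is immediate and requires no regularity beyond integrability.

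For the second bound \eqref{equ::sub truncate px l2score}, the extra factor $\norm{\nabla \log p_t(\xb|\yb)}^2$ must be absorbed. The key step is a uniform pointwise upper bound on $\norm{\nabla \log p_t(\xb|\yb)}$ valid on $\cD_1$. Writing $\nabla \log p_t(\xb|\yb) = \nabla p_t(\xb|\yb)/p_t(\xb|\yb)$ and using the Gaussian convolution representation \eqref{equ:: sketch p_t integration}, one computes $\nabla_\xb p_t(\xb|\yb) = \int p(\zb|\yb) \frac{\alpha_t \zb - \xb}{\sigma_t^2} \phi_t(\xb|\zb)\,\diff\zb$, hence by Cauchy--Schwarz (or Jensen, since $\phi_t(\cdot|\zb)$ normalizes) one gets that $\norm{\nabla \log p_t(\xb|\yb)}$ is at most the conditional-expectation form $\sigma_t^{-2}\,\EE[\,\norm{\alpha_t Z - \xb}\mid X_t = \xb\,]$, which for $\norm{\xb}_\infty \le R$ and after accounting for the sub-Gaussian tail of $p(\cdot|\yb)$ can be bounded by something of order $\sigma_t^{-2}(R + \sqrt{\log(1/\epslow)})$ or, more crudely but sufficiently, by $C R / \sigma_t^2$ on the relevant range of $R$ and $\epslow$ (the paper's $R^{d+2}$ versus $R^d$ gap is exactly an $R^2/\sigma_t^4$ factor, so a bound $\norm{\nabla\log p_t} \lesssim R/\sigma_t^2$ is what is needed). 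Combining this with the first bound gives
\begin{align*}
\int_{\norm{\xb}_\infty \le R} \indic{p_t(\xb|\yb) < \epslow}\, \norm{\nabla \log p_t(\xb|\yb)}^2 p_t(\xb|\yb)\, \diff\xb \lesssim \frac{R^2}{\sigma_t^4} \cdot R^d \epslow = \frac{\epslow}{\sigma_t^4} R^{d+2}.
\end{align*}

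The main obstacle I expect is making the pointwise score bound on $\cD_1$ fully rigorous with the right dependence on $R$ and $\sigma_t$ — in particular verifying that the conditional expectation $\EE[\norm{\alpha_t Z - \xb} \mid X_t = \xb]$ does not secretly carry an extra $1/p_t(\xb|\yb)$ blow-up (it does not, because the $p_t$ in the denominator of the score is cancelled by the normalization of the posterior of $Z$ given $X_t = \xb$), and that the sub-Gaussian tail of $p(\zb|\yb)$ from Assumption~\ref{assump:sub} controls the posterior mean of $\norm{Z}$ uniformly over $\xb \in \cD_1$ and $t > 0$. This is essentially the same tail computation underlying Lemma~\ref{lemma::truncation x}, so I would either cite/reuse that lemma's intermediate estimates or reproduce the short Gaussian-tail argument; everything else is elementary.
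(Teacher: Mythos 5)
Your proposal follows essentially the same route as the paper: for \eqref{equ::sub truncate px constant} the paper likewise bounds the integrand by $\epslow$ and integrates over the cube, and for \eqref{equ::sub truncate px l2score} it bounds $p_t$ by $\epslow$ on the truncation event and invokes its pointwise score bound $\norm{\nabla\log p_t(\xb|\yb)}_\infty \lesssim (\norm{\xb}+1)/\sigma_t^2$ (Lemma~\ref{lemma::score bound}), so that $\int_{\norm{\xb}_\infty\le R}(\norm{\xb}+1)^2\,\diff\xb \lesssim R^{d+2}$ gives the claim — exactly your $R/\sigma_t^2$ bound. One small caution on your sketch of that score bound: the cancellation of $p_t$ by the posterior normalization and the sub-Gaussian upper tail of $p(\cdot|\yb)$ are not by themselves enough to control $\EE[\norm{\alpha_t Z-\xb}\mid X_t=\xb]$; the paper additionally uses a lower bound on $p_t(\xb|\yb)$ (Lemma~\ref{lemma::density bound}) together with the clipping argument of Lemma~\ref{lemma::clipz}, which is precisely the ``reuse the intermediate tail estimates'' step you anticipate, so your plan is sound once that ingredient is made explicit.
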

The proof of Lemmas \ref{lemma::truncation p} is provided in Appendix \ref{pf:score approx steps12}. Note that Lemma~\ref{lemma::truncation p} concerns the truncated domain $\cD_1$. Combining Lemmas~\ref{lemma::truncation x} and \ref{lemma::truncation p} controls the truncation error when restricting approximation to the domain $\cD$. Accordingly, we provide an approximation theory on $\cD$, with properly chosen $R$ and $\epsilon_{\rm low}$.
\begin{proposition}[Approximate the score]\label{prop::score approx bounded}
Suppose Assumption~\ref{assump:sub} holds. We consider time $t \in [N^{-C_{\sigma}},C_{\alpha}\log N]$ for constants $C_\sigma$ and $C_\alpha$. Given any integer $N > 0$, we constrain $(\xb, \yb) \in [-C_x \sqrt{\log N},C_x \sqrt{\log N}]^{d} \times [0,1]^{d_y}$, where $C_x$ is a constant depending on $d$, $\beta$, $B$, $C_1$ and $C_2$. 
Then there exists a ReLU neural network class $ \cF(M_t, W, \kappa, L, K) $ which contains a mapping $\sb(\xb,\yb,t) $ satisfying 
\begin{align}\label{equ::score linfty bound}
p_{t}(\xb|\yb)\norm{\nabla\log p_{t}(\xb|\yb)-\sb(\xb,\yb,t)}_{\infty} \lesssim  \frac{B}{\sigma_t^2}N^{-\beta}(\log N)^{\frac{d+s+1}{2}} \quad \text{for~any}~t \in [N^{-C_{\sigma}},C_{\alpha}\log N].
\end{align}
Furthermore, the neural network hyperparameters satisfy
\begin{align}
&\hspace{1.5cm} M_t=\cO\left(\sqrt{\log N}/\sigma^2_t \right),~ W = {\cO}\left(N^{d+d_y}\log^7 N\right),\\ &\kappa =\exp \left({\cO}(\log^4 N)\right),~L= {\cO}(\log^4 N),~K=  {\cO}\left(N^{d+d_y}\log^9 N\right). \label{equ::hyper}
\end{align}
\end{proposition}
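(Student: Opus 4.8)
\textbf{Proof plan for Proposition~\ref{prop::score approx bounded}.}

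The plan is to implement the decomposition $\nabla \log p_t(\xb|\yb) = \nabla p_t(\xb|\yb)/p_t(\xb|\yb)$ by building ReLU network approximators for the numerator and denominator separately, then composing them with a network that implements (a truncated version of) reciprocal/division. First I would set up the \emph{diffused local polynomial} construction: partition the truncated cube $[-C_x\sqrt{\log N}, C_x\sqrt{\log N}]^d \times [0,1]^{d_y}$ into roughly $N^{d+d_y}$ subcubes of side length $\cO(\sqrt{\log N}/N)$, and on each subcube approximate the initial conditional density $p(\zb|\yb)$ (and its gradient) by its degree-$s$ Taylor polynomial $h^{\rm density}_{\rm Taylor}$. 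Because $p(\cdot|\cdot) \in \mH^\beta$, the pointwise Taylor remainder on a subcube of side $h$ is $\cO(B h^\beta)$; summing the locally supported pieces via a partition of unity gives a global piecewise polynomial of controlled degree. The crucial observation (following \cite{oko2023diffusion}) is that plugging such a polynomial into the Gaussian convolution \eqref{equ:: sketch p_t integration} yields, after expanding $\exp(-\|\alpha_t\zb - \xb\|^2/2\sigma_t^2)$ into its own Taylor polynomial $h^{\rm kernel}_{\rm Taylor}$ on the relevant range, an integrand that is a \emph{polynomial in $(\zb,\xb)$ times a Gaussian} — whose $\zb$-integral is computable in closed form as a polynomial in $\xb$, $\alpha_t$, $\sigma_t$ (and $1/\sigma_t$). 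This makes the whole object neural-network-implementable, with the $t$-dependence entering only through the super-smooth functions $\alpha_t, \sigma_t^2$, which ReLU networks approximate with polylogarithmic cost.

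Next I would quantify the three sources of approximation error for $p_t$ and $\nabla p_t$ on $\cD$: (i) the local Taylor error of $p$, which contributes $\cO(B(\log N)^{?}N^{-\beta})$ after accounting for the Gaussian normalization and the light tail; (ii) the Taylor truncation error of the exponential kernel, which is controlled by taking polynomially-many (in $\log N$) terms so it is dominated by (i); and (iii) the ReLU implementation error of the resulting explicit polynomial / rational function in $(\xb,\yb,t)$, which by standard constructions (multiplication, monomials, approximating $1/\sigma_t$ and $\alpha_t$) can be made exponentially small in a polylog budget, hence negligible. This yields $|\hat p_t - p_t| \lesssim \frac{B}{\sigma_t}N^{-\beta}(\log N)^{c}$ and similarly for $\nabla p_t$, with the $\sigma_t$ powers tracking the Gaussian prefactor $\sigma_t^{-d}$ and the gradient's extra $\sigma_t^{-1}$. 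Then for the division step: on $\cD = \cD_1 \cap \cD_2$ we have $p_t(\xb|\yb) \ge \epsilon_{\rm low}$, so I compose with a ReLU network computing $u \mapsto 1/\max(u, \epsilon_{\rm low}/2)$ (accurate on $[\epsilon_{\rm low}/2, \cdot]$ with error $\cO(\epsilon_{\rm low}^{-2})\times$ small), multiply by the numerator network, and clip the output at the stated $M_t = \cO(\sqrt{\log N}/\sigma_t^2)$ (legitimate since the true score on $\cD$ is $\cO(\sqrt{\log N}/\sigma_t^2)$ by the light-tail bound). Choosing $R = C_x\sqrt{\log N}$ and $\epsilon_{\rm low}$ as a suitable negative power of $N$ (roughly $N^{-\beta}$ up to logs, balancing Lemma~\ref{lemma::truncation p}'s truncation cost against the $1/\epsilon_{\rm low}$ blow-up in the error of the reciprocal) makes the product of "error in $\hat p_t$" and "$1/\epsilon_{\rm low}$" land at $\frac{B}{\sigma_t^2}N^{-\beta}(\log N)^{(d+s+1)/2}$, after which multiplying the $L_\infty$ bound by $p_t(\xb|\yb) \le \cO(1)$ gives the weighted bound \eqref{equ::score linfty bound}.

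Finally I would bookkeep the network hyperparameters. The width and sparsity are dominated by realizing the $\approx N^{d+d_y}$ local polynomial pieces in parallel, each of polylog size, giving $W, K = \cO(N^{d+d_y}\mathrm{polylog}\,N)$; the depth $L = \cO(\log^4 N)$ comes from the nested multiplications and the reciprocal/exponential subnetworks (each $\cO(\log^2 N)$ deep, composed a constant number of times, with an extra $\log^2 N$ factor from iterated squaring in the polynomial evaluation); and $\kappa = \exp(\cO(\log^4 N))$ absorbs the possibly large coefficients from $1/\sigma_t^d$ and the closed-form Gaussian moments. I expect the \textbf{main obstacle} to be step (i)–(ii): rigorously tracking how the $\sigma_t^{-d}$ Gaussian prefactor and the interplay between the density Taylor remainder and the kernel Taylor truncation propagate through the closed-form $\zb$-integral, so that the combined error is genuinely $\cO(B\sigma_t^{-1}N^{-\beta}\mathrm{polylog}\,N)$ uniformly over the \emph{entire} range $t \in [N^{-C_\sigma}, C_\alpha \log N]$ — in particular handling small $t$ (where $\sigma_t \to 0$ and the kernel concentrates, so the effective Taylor range shrinks but the prefactor explodes) and large $t$ (where $\alpha_t \to 0$ and $p_t$ approaches a Gaussian) within one unified construction. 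The reciprocal-truncation/$\epsilon_{\rm low}$ balancing is delicate but essentially a one-parameter optimization once the $p_t$ error bound is in hand.
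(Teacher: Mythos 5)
Your overall architecture is the same as the paper's: diffused local polynomials (density Taylor polynomial times kernel Taylor polynomial, integrated in closed form) for $p_t$ and $\nabla p_t$, ReLU implementation of the resulting explicit functions of $(\xb,\yb,\alpha_t,\sigma_t)$, and a truncated reciprocal plus output clipping at $M_t=\cO(\sqrt{\log N}/\sigma_t^2)$ to form the quotient. However, your error propagation through the division step has a genuine gap, and it sits exactly where the weighted form of \eqref{equ::score linfty bound} matters. You propose to bound the unweighted $L_\infty$ error by (numerator/denominator errors) $\times\,1/\epsilon_{\rm low}$ and claim this product "lands at $\frac{B}{\sigma_t^2}N^{-\beta}(\log N)^{(d+s+1)/2}$," after which you multiply by $p_t\le\cO(1)$. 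This is internally inconsistent: with $\epsilon_{\rm low}\approx N^{-\beta}$ up to logs (which you need, and which the paper also uses), the ratio (error of $\hat p_t$)$/\epsilon_{\rm low}$ is of constant order in $N$, not $N^{-\beta}$; and multiplying a constant-order $L_\infty$ bound by $p_t\le\cO(1)$ cannot recover the $N^{-\beta}$ decay. Conversely, making the product $N^{-\beta}$ would force $\epsilon_{\rm low}$ to be constant order, which destroys the truncation step (Lemma~\ref{lemma::truncation p}) downstream. The fix, which is the paper's key observation, is that $\epsilon_{\rm low}$ should be chosen as a constant multiple of the $f_1$-approximation error so that on $\{p_t\ge\epsilon_{\rm low}\}$ one has $f_{1,{\rm clip}}\ge\tfrac12 p_t$ pointwise; then the quotient error is bounded by $\frac{B}{\sigma_t^2\,p_t}N^{-\beta}(\log N)^{(d+s+1)/2}$ — a $1/p_t$, not a $1/\epsilon_{\rm low}$ — and it is precisely the $p_t$ weight in \eqref{equ::score linfty bound} that cancels this factor (one also uses $|[\nabla p_t]_1|\lesssim \sigma_t^{-2}\sqrt{\log N}\,p_t$ from Lemma~\ref{lemma::score bound}); on $\{p_t<\epsilon_{\rm low}\}$ the weighted bound follows trivially from the clipping and the smallness of $\epsilon_{\rm low}$.

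A secondary point: you correctly flag uniformity over $t\in[N^{-C_\sigma},C_\alpha\log N]$ as the main obstacle, but you leave it unresolved, and your intermediate claim $|\hat p_t-p_t|\lesssim \frac{B}{\sigma_t}N^{-\beta}\mathrm{polylog}\,N$ is not automatic: naively the Gaussian prefactor gives $\sigma_t^{-d}$, which is ruinous at $t=t_0$. The paper removes the $\sigma_t$ dependence entirely by bounding the truncated integral in two ways (volume of $\textbf{B}_{\xb,N}$ measured against $\sigma_t^{-d}$ versus $\alpha_t^{-d}$) and taking the minimum, and it approximates $\sigma_t\nabla p_t$ rather than $\nabla p_t$ so that only a single $1/\sigma_t$ is reintroduced by an explicit $1/\sigma_t$ subnetwork. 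Without these (or equivalent) devices your stated intermediate bounds are not established, so the small-$t$ regime remains open in your plan.
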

The proof of Proposition \ref{prop::score approx bounded} is rather involved and is deferred to Appendix~\ref{sec:: sub proof of score approx}. Proposition~\ref{prop::score approx bounded} confirms that the score function can be approximated on $[-C_x \sqrt{\log N},C_x \sqrt{\log N}]^{d} \times [0, 1]^{d_y}$ in the $L_\infty$ sense, which is an essential ingredient in the proof of Theorem~\ref{thm::score approx}. Meanwhile, we observe that the approximation error depends on the H\"{o}lder index $\beta$ of the data distribution. Thus, when $\beta$ is large,  approximation is relatively easy. 
Moreover, the approximation error increases, as $t$ decreases as $\sigma_t$ shrinks to $0$.

\subsubsection{Proof of Theorem~\ref{thm::score approx}}
\begin{proof}
Given Proposition~\ref{prop::score approx bounded}, we claim that the resulting $\sb(\xb, \yb, t) \in \cF$ is an $L_2$ approximator of the score function. In this regard, we reduce the proof of Theorem~\ref{thm::score approx} to the verification of this claim. Indeed, choosing $R=C_x \sqrt{\log N}=\sqrt{\frac{2\beta}{C_2^{\prime}}\log N}$ and $\epsilon_{\rm low}=C_3N^{-\beta} (\log N)^{\frac{d+s}{2}}$, we decompose the $L_2$ score approximation error as
   \begin{align*}
       &\quad\int_{\R^{d}} \norm{\sb(\xb,\yb,t)-\nabla\log p_{t}(\xb|\yb)}_2^2 p_{t}(\xb|\yb)\diff \xb\\
        &= \underbrace{\int_{\norm{\xb}_{\infty}>\sqrt{\frac{2\beta}{C_2^{\prime}}\log N}} \norm{\sb(\xb,\yb,t)-\nabla\log p_{t}(\xb|\yb)}_2^2 p_t(\xb|\yb) \diff \xb}_{(\bA_1)}\\ 
        &\quad + \underbrace{\int_{\norm{\xb}_{\infty}\le\sqrt{\frac{2\beta}{C_2^{\prime}}\log N}} \indic{\abs{p_{t}(\xb|\yb)}<\epslow}\norm{\sb(\xb,\yb,t)-\nabla\log p_{t}(\xb|\yb)}_2^2 p_{t}(\xb|\yb)\diff \xb}_{(\bA_2)} \\
        &\quad + \underbrace{\int_{\norm{\xb}_{\infty}\le\sqrt{\frac{2\beta}{C_2^{\prime}}\log N}} \indic{\abs{p_{t}(\xb|\yb)}\ge\epslow}\norm{\sb(\xb,\yb,t)-\nabla\log p_{t}(\xb|\yb)}_2^2 p_{t}(\xb|\yb)\diff \xb }_{(\bA_3)}.
   \end{align*}
Here $(\bA_1)$ is the truncation error due to the unbounded range of $\xb$; $(\bA_2)$ is the truncation error due to small $p_t(\xb | \yb)$. The remaining $(\bA_3)$ is the approximation error of $\sb(\xb, \yb, t)$ on $\cD$. We will bound the three terms separately.
   
\hspace{-0.6cm}\textbf{Bounding $(\bA_1)$.} According to Proposition \ref{prop::score approx bounded}, we have $\norm{\sb(\xb,\yb,t) }_{\infty} \lesssim \frac{\sqrt{\log N}}{\sigma^2_t}$ and thus
   \begin{align*}
       (\bA_1) &\le 2\int_{\norm{\xb}_{\infty}>\sqrt{\frac{2\beta}{C_2^{\prime}}\log N}} \norm{\sb(\xb,\yb,t)}_2^2 p_t(\xb|\yb) \diff \xb +2\int_{\norm{\xb}_{\infty}>\sqrt{\frac{2\beta}{C_2^{\prime}}\log N}} \norm{\nabla\log p_{t}(\xb|\yb)}_2^2 p_t(\xb|\yb) \diff \xb\\
       &\overset{(i)}{\lesssim} 2d\left(\frac{1}{\sigma^2_t}\sqrt{\log N}\right)^2 \int_{\norm{\xb}_{\infty}>\sqrt{\frac{2\beta}{C_2^{\prime}}\log N}} p_t(\xb|\yb) \diff \xb +\frac{2}{\sigma^4_t}\left(\frac{2\beta}{C_2^{\prime}}\log N\right)^{3/2}N^{-2\beta} \\
       &\overset{(ii)}{\lesssim} 2d\left(\frac{1}{\sigma^2_t}\sqrt{\log N}\right)^2 \left(\frac{2\beta}{C_2^{\prime}}\log N\right)^{1/2}N^{-2\beta} +\frac{2}{\sigma^4_t}\left(\frac{2\beta}{C_2^{\prime}}\log N\right)^{3/2}N^{-2\beta}\\
       &\lesssim \frac{N^{-2\beta}(\log N)^{3/2}}{\sigma^4_t}.
   \end{align*}
Here in $(i)$, we invoke the upper bound $\norm{\sb(\xb, \yb, t)}_2^2 \leq d \norm{\sb(\xb,\yb,t) }_{\infty}^2$, \eqref{equ::sub truncate x l2score} in Lemma \ref{lemma::truncation x}, and inequality $(ii)$ follows from \eqref{equ::sub truncate x constant} in Lemma \ref{lemma::truncation x}. 

\hspace{-0.6cm}\textbf{Bounding $(\bA_2)$.}
Analogous to $(\bA_1)$, we have
\begin{align*}
    (\bA_2)&\le \int_{\norm{\xb}_\infty\le\sqrt{\frac{2\beta}{C_2^{\prime}}\log N}} \indic{\abs{p_{t}(\xb|\yb)}<\epslow}\left(d\left(\frac{1}{\sigma^2_t}\sqrt{\log N}\right)^2 +\norm{\nabla \log p_t(\xb|\yb)}_2^2 \right)p_t(\xb|\yb)\\
    &\overset{(i)}{\lesssim}d\left(\frac{1}{\sigma^2_t}\sqrt{\log N}\right)^2 \left( \frac{2\beta}{C_2^{\prime}}\log N\right)^{d/2} \epslow+\frac{\epslow}{\sigma_t^4}\left( \frac{2\beta}{C_2^{\prime}}\log N\right)^{1+d/2}\\
    &\lesssim \frac{\epslow (\log N)^{1+d/2}}{\sigma_t^4},
\end{align*}
where inequality $(i)$ invokes \eqref{equ::sub truncate px constant} and \eqref{equ::sub truncate px l2score} in Lemma \ref{lemma::truncation p}.

\hspace{-0.6cm}\textbf{Bounding $(\bA_3)$.}
By the approximation guarantee \eqref{equ::score linfty bound} in Proposition \ref{prop::score approx bounded}, we immediately have
\begin{align*}
(\bA_3)&\le\int_{\norm{\xb}_\infty\le\sqrt{\frac{2\beta}{C_2^{\prime}}\log N}} \indic{\abs{p_{t}(\xb|\yb)}\ge\epslow} d\norm{\nabla\log p_{t}(\xb,\yb,t)-\sb(\xb,\yb,t)}_{\infty}^{2} p_t(\xb|\yb)\diff \xb \\
&=\int_{\norm{\xb}_\infty\le\sqrt{\frac{2\beta}{C_2^{\prime}}\log N}} \indic{\abs{p_{t}(\xb|\yb)}\ge\epslow} \frac{d\norm{\nabla\log p_{t}(\xb,\yb,t)-\sb(\xb,\yb,t)}_{\infty}^{2} p_t^2(\xb|\yb)}{p_t(\xb | \yb)}\diff \xb \\
&\overset{(i)}{\lesssim}\int_{\norm{\xb}_\infty\le\sqrt{\frac{2\beta}{C_2^{\prime}}\log N}} \indic{\abs{p_{t}(\xb|\yb)}\ge\epslow}  \frac{B^2}{\sigma_t^4}N^{-2\beta}(\log N)^{d+s+1} \frac{d}{p_t(\xb|\yb)} \diff \xb\\
&=\frac{B^2 d}{\sigma_t^4\epslow}N^{-2\beta}(\log N)^{d+s+1}\int_{\norm{\xb}_\infty\le\sqrt{\frac{2\beta}{C_2^{\prime}}\log N}} \indic{\abs{p_{t}(\xb|\yb)}\ge\epslow}   \frac{\epslow}{p_t(\xb|\yb)}\diff \xb\\
&\le \frac{B^2 d}{\sigma_t^4\epslow}N^{-2\beta}(\log N)^{d+s+1}\left(\frac{2\beta}{C_2^{\prime}}\log N\right)^{d/2}\\
&\lesssim \frac{B^2 d}{\sigma_t^4\epslow}N^{-2\beta}(\log N)^{3d/2+s+1},
\end{align*}
where we invoke \eqref{equ::score linfty bound} in $(i)$.

Combining the bounds of $(\bA_1)$, $(\bA_2)$ and $(\bA_3)$ together, we have
\begin{align*}
    &\quad\int_{\R^{d}} \norm{\sb(\xb,\yb,t)-\nabla\log p_{t}(\xb|\yb)}_2^2 p_{t}(\xb|\yb)\diff \xb \\&\lesssim \frac{N^{-2\beta}(\log N)^{3/2} +\epslow(\log N)^{1+d/2} +B^2d\epslow^{-1}N^{-2\beta}(\log N)^{d+s+1}}{\sigma^4_t}.
\end{align*}
Substitute $\epslow=C_3N^{-\beta} (\log N)^{\frac{d+s}{2}}$ into the display above, the $L_2$ approximation error is bounded by $$\cO\left(\frac{B^2}{\sigma^4_t}N^{-\beta}(\log N)^{d+\frac{s}{2}+1}\right)=\cO\left(\frac{B^2}{\sigma^4_t}N^{-\beta}(\log N)^{d+\frac{\beta}{2}+1}\right).$$ Overloading $N$ by $N^{\frac{1}{d+d_{y}}}$, we complete the proof. 
    
\end{proof}

\subsection{Proof of Proposition~\ref{prop::score approx bounded}}\label{sec:: sub proof of score approx}
Proposition~\ref{prop::score approx bounded} is the crux in proving Theorem~\ref{thm::score approx}, which constructs the so-called ``{\bf diffused local monomials}'' for approximating the score function. Recall that we rewrite the score function as $\frac{\nabla p_t(\xb | \yb)}{p_t(\xb | \yb)}$. A na\"{i}ve approach is to approximate $\nabla p_t(\xb | \yb)$ and $p_t(\xb | \yb)$ using local polynomials. However, we observe that $p_t$ is indexed by time $t$, which creates extra difficulty in devising a proper local polynomial approximation for all $t$. Our diffused local monomials are proposed analogously to local Taylor polynomial bases, with the capability to approximate the target score function indexed by $t$. As a side product, the proof for Proposition~\ref{prop::score approx bounded} directly verifies Lemmas~\ref{lemma::truncation x} and \ref{lemma::truncation p}.

\subsubsection{Key Steps for Proving Proposition \ref{prop::score approx bounded}}
The crest of the proof is the use of a set of \textbf{diffused local polynomials} as the basis functions to approximate the integral form of $p_t(\xb|\yb)$ and $\nabla p_t(\xb|\yb)$. To motivate the diffused local polynomials, we repeat the integral form of $p_t(\xb | \yb)$ as follows,
\begin{align}
p_t(\xb|\yb)&=\int_{\RR^{d}}p(\zb|\yb)\frac{1}{\sigma_t^{d}(2\pi)^{d/2}} \exp\left(-\frac{\norm{\alpha_t\zb-\xb}^2}{2\sigma_t^2}\right)\diff \zb.\label{equ::integral form of p_t}
\end{align}
Now we first construct Taylor expansions of the density function $p(\zb | \yb)$ and the Gaussian kernel $\exp\left(-\frac{\norm{\alpha_t \zb - \xb}^2}{2 \sigma_t^2}\right)$, denoted as $h^{\rm density}_{\rm Taylor}(\zb, \yb)$ and $h^{\rm kernel}_{\rm Taylor}(\zb, \xb, t)$, respectively. We define diffused local polynomials as
\begin{align*}
\textsf{Diffused-local-poly}(\xb, \yb, t) = \int_{\RR^{d}} \frac{1}{\sigma_t^{d}(2\pi)^{d/2}} h^{\rm density}_{\rm Taylor}(\zb, \yb) h^{\rm kernel}_{\rm Taylor}(\zb, \xb, t) \diff \zb.
\end{align*}
Roughly speaking, diffused local polynomials can be viewed as evolving a Taylor approximation of the data distribution along the forward diffusion process. As Taylor polynomials can well approximate H\"{o}lder densities, we expect the marginal density $p_t(\xb | \yb)$ can also be approximated by the diffused local polynomials (formal statement is provided in Lemma~\ref{lemma::diffused localpoly approx}). This constitutes the key idea of proving Proposition~\ref{prop::score approx bounded}. For notation simplicity, we postpone the formal definition of diffused local polynomials to \eqref{equ::Diffused local monomial} when proving Lemma~\ref{lemma::diffused localpoly approx}. Here we summarize an overview for establishing Proposition \ref{prop::score approx bounded} based on the usage of diffused local polynomials.
\begin{description}
    \item[Step (i)] ({\bf Diffused local polynomial approximation of $p_t(\xb | \yb)$}). Given $N>0$, we approximate $p_t(\xb|\yb)$ by a 
    diffused local polynomial $f_1$ that consists of at most $\cO(N^{d+d_y})$ diffused local monomials. The approximation error is bounded by ${\cO}(N^{-\beta}\log ^{\frac{d+s}{2}}N)$ in Lemma \ref{lemma::diffused localpoly approx}.
    \item[Step (ii)] ({\bf ReLU network implementation of $f_1$}). We construct a ReLU network to implement the diffused local polynomial in Lemma \ref{lemma::relu approx diffused}. The constructed network implements $f_1^{\rm ReLU}$ for approximating $f_1$ with a small error.
    \item[Step (iii)] ({\bf Diffused local polynomial approximation of $\nabla p_t$ and ReLU network implementation}). Since $p_t$ and $\nabla p_t$ have the same structure, we replicate {\bf Steps (i)} and {\bf (ii)} above to approximate $\nabla p_t$ by a neural network in Lemmas \ref{lemma::diffused localpoly approx1} and \ref{lemma::relu approx diffused1}. The constructed diffused local polynomial is denoted as $\mathbf{f}_2$ and its network implementation is denoted as $\mathbf{f}^{\trelu}_2$.
    \item[Step (iv)] ({\bf ReLU network approximation of the score function}). We combine the approximations $f_1^{\rm ReLU}$ and $\fb_2^{\rm ReLU}$ to construct a ReLU score function approximator $\mathbf{f}^{\trelu}_3$, which approximates $\nabla \log p_t=\frac{\nabla p_t}{p_t}$ with a small error on domain $\cD$.
\end{description}
We note that {\bf Step (iii)} is reminiscent to {\bf Steps (i)} and {\bf (ii)} combined, as $\nabla p_t(\xb | \yb)$ takes a similar integral form as \eqref{equ::integral form of p_t}. In the next section, we present the main results in {\bf Steps (i)} - {\bf (iii)} and use them to show Proposition~\ref{prop::score approx bounded}.

\subsubsection{Detailed Statements in Steps (i) - (iii) and Proof of Proposition \ref{prop::score approx bounded}}\label{sec::proof propo A3}
We first introduce the results in {\bf Steps (i)} and {\bf (ii)}, while the statements in {\bf Step (iii)} are analogous.

\paragraph{Formal statements in Steps (i) - (iii)} We have the following lemma proving the approximation power of diffused local polynomials.
\begin{lemma}[Diffused local polynomial approximation]\label{lemma::diffused localpoly approx}
Suppose Assumption~\ref{assump:sub} holds. For sufficiently large integer $N>0$, there exists a diffused local polynomial $f_1(\xb, \yb, t)$, consisting of at most $N^{d+d_y}(d+d_y)^s$ diffused local monomials, such that 
\begin{align}
     \abs{f_1(\xb,\yb,t)-p_t(\xb|\yb)} \lesssim BN^{-\beta}\log^{\frac{d+s}{2}}N , ~~~\forall \xb \in \RR^d, \yb \in [0,1]^{d_y}, \text{and}~t>0.
\end{align}
\end{lemma}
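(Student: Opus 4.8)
The plan is to realise $p_t(\xb\mid\yb)$ as a Gaussian convolution of the initial density — this is exactly the integral form \eqref{equ::integral form of p_t} — then replace that density by a compactly supported piecewise Taylor polynomial $\tilde p$, and take $f_1$ to be the convolution of $\tilde p$ against the \emph{exact} Gaussian kernel $\phi_t$. The point of this construction is twofold: the kernel is integrated exactly and $t$ enters it only through the super-smooth quantities $\alpha_t,\sigma_t$, so a single $f_1$ serves all $t>0$ with no partition in $t$; and when $\tilde p$ is written out in monomials the result is automatically a sum of ``diffused local monomials'', each obtained by integrating one monomial in $(\zb,\yb)$ supported on one sub-cube against $\phi_t$.

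Concretely, I would first pick a truncation radius $R\asymp\sqrt{\log N}$ large enough that $C_1\exp(-C_2R^2/2)$ falls below the target accuracy, partition the box $[-R,R]^d\times[0,1]^{d_y}$ into $N^{d+d_y}$ congruent sub-cubes $\{C_j\}$ ($\ell_\infty$-side $\asymp R/N$ in the $\zb$-coordinates, $1/N$ in the $\yb$-coordinates), let $h_j(\zb,\yb)$ be the order-$s$ Taylor polynomial of $(\zb,\yb)\mapsto p(\zb\mid\yb)$ about the centre of $C_j$, and put $\tilde p:=h_j$ on $C_j$ and $\tilde p:=0$ off the box. Then $f_1(\xb,\yb,t):=\sum_j\int_{C_j}h_j(\zb,\yb)\,\sigma_t^{-d}(2\pi)^{-d/2}\exp\!\big(-\norm{\alpha_t\zb-\xb}^2/(2\sigma_t^2)\big)\diff\zb$, and since each $h_j$ has at most $\binom{d+d_y+s}{s}$ monomials in $(\zb,\yb)$, $f_1$ is a sum of at most $N^{d+d_y}\binom{d+d_y+s}{s}$ — of order $N^{d+d_y}(d+d_y)^s$ — diffused local monomials, matching the claimed count. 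For the error I would use
\begin{align*}
\abs{f_1(\xb,\yb,t)-p_t(\xb\mid\yb)}\le\int_{\RR^d}\abs{\tilde p(\zb,\yb)-p(\zb\mid\yb)}\,\frac{1}{\sigma_t^{d}(2\pi)^{d/2}}\exp\!\Big(-\frac{\norm{\alpha_t\zb-\xb}^2}{2\sigma_t^2}\Big)\diff\zb ,
\end{align*}
and split the $\zb$-integral three ways: (i) outside the box, where $\tilde p=0$ and the integrand is $p(\zb\mid\yb)\,\phi_t$; completing the square in $\zb$ and using $\alpha_t^2+\sigma_t^2=1$ together with the light tail $p(\zb\mid\yb)\le C_1 e^{-C_2\norm{\zb}^2/2}$ — the computation behind Lemma~\ref{lemma::truncation x} — gives $\lesssim R e^{-C_2'R^2}\lesssim N^{-\beta}$; (ii) over sub-cubes inside the box on which $\sup p(\cdot\mid\yb)\le B(R/N)^\beta$, where $\abs{\tilde p-p}\lesssim\sup p$ and integrating against $p$ again yields a light-tail bound $\lesssim N^{-\beta}$; (iii) over the remaining ``bulk'' sub-cubes, all within $\norm{\zb}_\infty\lesssim R$, where the Taylor remainder gives $\abs{\tilde p-p}\lesssim B(R/N)^\beta\lesssim BN^{-\beta}(\log N)^{\beta/2}$ uniformly while $\sigma_t^{-d}(2\pi)^{-d/2}\int_{\norm{\zb}_\infty\lesssim R}\exp(-\norm{\alpha_t\zb-\xb}^2/(2\sigma_t^2))\diff\zb\lesssim(\log N)^{d/2}$. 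Collecting the three pieces gives $\lesssim BN^{-\beta}(\log N)^{(d+s)/2}$; for $\norm{\xb}$ large the same tail estimate shows $f_1$ and $p_t$ are both exponentially small, so the bound holds for every $\xb\in\RR^d$.

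The main obstacle is getting a bound that is \emph{uniform in $t>0$}. The kernel normalisation $\sigma_t^{-d}(2\pi)^{-d/2}$, integrated over the truncated cube, naively contributes a factor $\alpha_t^{-d}$ that blows up as $t\to\infty$ (and $\sigma_t^{-d}$ as $t\to0$); both are defused using only the identity $\alpha_t^2+\sigma_t^2=1$. After the substitution $\wb=\alpha_t\zb$ the kernel mass over the cube is $\alpha_t^{-d}\,\PP_{\wb\sim{\sf N}(\xb,\sigma_t^2 I)}(\norm{\wb}_\infty\le\alpha_tR)$, which is bounded by a constant when $\alpha_t\ge\sigma_t$ and by $\lesssim R^d/\sigma_t^d\lesssim(\log N)^{d/2}$ when $\alpha_t<\sigma_t$ (in which case $\sigma_t\ge 1/\sqrt2$); the analogous cancellation in the truncation term comes from completing the square against the sub-Gaussian tail of $p$. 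The remaining work — the sub-cube bookkeeping, the explicit dimension-dependent constants in the Taylor remainder, and pinning down the exact power of $\log N$ — is routine.
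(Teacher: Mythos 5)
Your construction reproduces the paper's Steps (i)--(ii) faithfully: truncation of the $\zb$-integral at radius $R\asymp\sqrt{\log N}$ using the sub-Gaussian tail, piecewise Taylor approximation of $p(\cdot\mid\yb)$ on $N^{d+d_y}$ sub-cubes, and the error bookkeeping (including the uniformity in $t$ via $\alpha_t^2+\sigma_t^2=1$, which matches the paper's bound $m(\textbf{B}_{\xb,N})/\sigma_t^d\lesssim\min(\alpha_t^{-d},\sigma_t^{-d})\log^{d/2}N\lesssim\log^{d/2}N$). The monomial count and the final rate $BN^{-\beta}(\log N)^{(d+s)/2}$ also come out right.

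The genuine gap is that you convolve the local Taylor polynomial against the \emph{exact} Gaussian kernel and present this as a feature (``the kernel is integrated exactly''), whereas the lemma asserts that $f_1$ is a \emph{diffused local polynomial}, a defined object in which the kernel itself is replaced by its degree-$p$ Taylor polynomial with $p=\cO(\log N)$ (Step (iii) of the paper's proof, with kernel-approximation error $N^{-\beta}$ controlled via $p!\ge(p/3)^p$). This is not a cosmetic distinction: the reason for Taylor-expanding $\exp\left(-\abs{x-\alpha_t z}^2/(2\sigma_t^2)\right)$ is that the resulting one-dimensional building blocks $g(x,n,v,k)$ in \eqref{equ::h} are integrals of polynomials and therefore admit explicit closed forms as polynomials in $\alpha_t$, $\sigma_t^{-1}$ and the clipped affine functions $f_{\underline{D}}(x),f_{\overline{D}}(x)$, which is exactly what Lemma~\ref{lemma::relu approx diffused} (via Lemma~\ref{lemma::approx h}) needs in order to implement $f_1$ by a ReLU network of the stated size. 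With the exact kernel, your ``diffused local monomials'' are error-function-type expressions; they neither satisfy the paper's definition (so the lemma as stated is not proved) nor are implementable by the subsequent construction without introducing precisely the polynomial approximation of the exponential that you skipped. A secondary, related point: you localize in $\yb$ with sharp indicators, while the definition uses the trapezoid partition of unity $\prod_j\phi\left(3N(y_j-w_j/N)\right)$, chosen because a discontinuous indicator in $\yb$ cannot be realized by a ReLU network (the $\xb$-indicator is harmless since it only sets integration limits). Restoring Step (iii) — and the trapezoid weights in $\yb$ — would bring your argument in line with the statement.
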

The proof of Lemma \ref{lemma::diffused localpoly approx} is provided in Appendix~\ref{sec::diffuse approx}. We next show that the diffused local polynomial can be efficiently implemented by a ReLU network with controllable error.
\begin{lemma}[ReLU approximation]\label{lemma::relu approx diffused}
Suppose Assumption \ref{assump:sub} holds. Given the diffused local polynomial $f_1$ in Lemma \ref{lemma::diffused localpoly approx}, for any $\epsilon >0$, there exists a ReLU network $ \cF( W, \kappa, L, K) $ that gives rise to a function $f_1^{\trelu}(\xb,\yb,t) \in \cF$ satisfying 
\begin{align}
     \abs{f_1(\xb,\yb,t)-f_1^{\trelu}(\xb,\yb,t)}\le \epsilon,
\end{align}
for any $\xb \in  [-C_x\sqrt{\log N},C_x\sqrt{\log N}]^{d}$, $\yb\in[0,1]^{d_y}$ and $t\in[N^{-C_{\sigma}},C_{\alpha}\log N]$.
The network configuration is
\begin{align*}
& W = {\cO}\left(N^{d+d_y}(\log^7 N+\log N\log^3 \epsilon^{-1})\right), \quad \kappa =\exp \left({\cO}(\log^4 N+\log^2 \epsilon^{-1})\right), \\
& \qquad L = {\cO}(\log^4 N+\log^2 \epsilon^{-1}), \quad K = {\cO}\left(N^{d+d_y}(\log^9 N+\log N\log^3 \epsilon^{-1})\right).
\end{align*}
\end{lemma}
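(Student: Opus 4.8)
The plan is to implement the diffused local polynomial $f_1$ from Lemma~\ref{lemma::diffused localpoly approx} by a ReLU network, building it up compositionally from standard ReLU approximation subroutines. Recall that $f_1$ is a sum of at most $N^{d+d_y}(d+d_y)^s$ diffused local monomials; by the (to-be-stated) explicit form of these monomials at \eqref{equ::Diffused local monomial}, each one is, after carrying out the Gaussian integral over $\zb$ in closed form, a product of: (a) a partition-of-unity/bump factor that localizes in $(\xb,\yb)$ to a cell of a grid of width $\sim 1/N$, (b) polynomial powers of the coordinates of $\xb$ and $\yb$ up to degree $s$, and (c) a smooth function of $t$ entering only through $\alpha_t = e^{-t/2}$ and $\sigma_t^2 = 1-e^{-t}$, together with the Gaussian-type factor $\exp(-\|\alpha_t\zb - \xb\|^2/2\sigma_t^2)$ evaluated at grid points. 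So the whole object is a bounded-degree polynomial in $(\xb,\yb)$ whose coefficients are smooth (indeed analytic) functions of $t$ on the compact interval $[N^{-C_\sigma}, C_\alpha \log N]$, multiplied by localizing bumps.

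First I would assemble the standard toolkit of ReLU approximation lemmas (these are the classical constructions of Yarotsky / Schmidt-Hieber type, already used in \cite{oko2023diffusion, chen2023score}): (1) multiplication of two numbers on a bounded range up to error $\epsilon$ with a network of depth $\cO(\log(1/\epsilon) + \log(\text{range}))$ and $\cO(\log(1/\epsilon))$ parameters; (2) from this, approximate multiplication of $k$ numbers, costing an extra $\log k$ factor in depth; (3) approximation of the scalar maps $t\mapsto \alpha_t$, $t\mapsto \sigma_t^{-2}$, $t\mapsto \sigma_t^{-d}$, and the univariate exponential on a bounded domain, each to error $\epsilon$ using $\mathrm{poly}(\log(1/\epsilon))$ size (these are analytic on the relevant compact range, so polynomial approximation followed by ReLU implementation of the polynomial works, being careful that $\sigma_t^{-1}$ blows up only polynomially in $N$ since $t \ge N^{-C_\sigma}$); (4) a ReLU implementation of the localizing bump / partition-of-unity functions, which are piecewise-linear and hence represented \emph{exactly} by small ReLU networks. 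Second, I would implement a single diffused local monomial by composing these pieces: compute $\alpha_t, \sigma_t$ approximately from $t$; form the finitely many scalar arguments of the Gaussian factor (each is $\alpha_t \zb_{\text{grid}} - \xb$ evaluated at a fixed grid point, an affine map composed with the $\alpha_t$-approximant, so one more multiplication); exponentiate; multiply by the polynomial powers of $\xb,\yb$ and by the bump; then track the propagated error through the composition using Lipschitz bounds on each subroutine on the relevant bounded domains (the domain bound $\|\xb\|_\infty \le C_x\sqrt{\log N}$, $\yb \in [0,1]^{d_y}$, $t$ in the stated interval makes all intermediate quantities bounded by $\exp(\cO(\log^2 N))$ at worst, which is what forces $\kappa = \exp(\cO(\log^4 N + \log^2\epsilon^{-1}))$). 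Third, I would sum the $\cO(N^{d+d_y})$ monomial sub-networks in parallel (a parallel composition plus a final linear layer), which multiplies the width and the parameter count $K$ by $N^{d+d_y}$ but leaves the depth $L$ unchanged up to the $\cO(\log^4 N + \log^2\epsilon^{-1})$ already present — yielding exactly the claimed hyperparameter bounds after redistributing the per-monomial target error as $\epsilon / N^{d+d_y}$ (the extra $\log N$ factors absorbed into the polylog terms).

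The main obstacle, and the step requiring the most care, is the \textbf{error propagation through the composition} combined with \textbf{controlling the parameter magnitude $\kappa$}. The difficulty is twofold: the factor $\sigma_t^{-d}$ and the Gaussian kernel values are only polynomially (not boundedly) large in $N$ because $t$ can be as small as $N^{-C_\sigma}$, so when approximate-multiplying such factors the relative errors must be rescaled, which is where the $\log^3 N$ and $\log^4 N$ exponents in the width/depth come from rather than lower powers; and the weight magnitudes compound multiplicatively through the $\cO(\log^4 N)$ layers, giving the doubly-exponential-looking $\kappa = \exp(\cO(\log^4 N))$. One must verify that (i) each scalar subroutine's output stays within the input range assumed by the next subroutine (e.g.\ the exponent fed to the $\exp$-approximant is bounded below, so we only need $\exp$ on, say, $(-\infty, \cO(\log N)]$, truncated), and (ii) the accumulated error, after multiplying by the number of monomials, is still $\le \epsilon$. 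All of this is routine in spirit — it is the same bookkeeping as in \cite[Appendix]{oko2023diffusion} — but getting the polylog exponents to match the stated $(\log^7 N, \log^9 N, \log^4 N)$ requires choosing the intermediate accuracies and grid resolutions deliberately; I would handle it by allocating a budget of $\log^{-c}N$ error to each of the $\cO(\log N)$-many composed stages and tallying.
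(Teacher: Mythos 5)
Your plan follows essentially the same route as the paper: express $f_1$ as a sum of $\cO(N^{d+d_y})$ diffused local monomials, implement each monomial with standard ReLU subroutines (approximate multiplication, powers, reciprocals, the maps $t\mapsto\alpha_t,\sigma_t$, exact piecewise-linear bumps), allocate a per-monomial error budget of order $\epsilon/N^{d+d_y}$ (up to the $R^s$ and factorial factors), and combine in parallel with a final linear layer, tracking error propagation and weight magnitudes to get the stated $(W,\kappa,L,K)$.

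One structural point in your description is off, though it does not sink the strategy: the $f_1$ you are asked to implement (from Lemma~\ref{lemma::diffused localpoly approx}) contains \emph{no} exponential factor, because the Gaussian kernel was already replaced there by its degree-$p=\cO(\log N)$ Taylor polynomial precisely so that the $\zb$-integral is computable in closed form. Consequently you do not need a ReLU approximant of $\exp$; instead, the nontrivial content of each univariate factor $g(x,n,v,k)$ in \eqref{equ::h} is that the integration limits depend on $x$ and $t$ through clipped affine expressions of the form $\mathrm{clip}\bigl((x-\alpha_t c)/\sigma_t,\,\cO(\sqrt{\log N})\bigr)$, raised to powers up to $j+2k+1=\cO(\log N)$, multiplied by $\sigma_t^{j}/\alpha_t^{n+1}$. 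So the subnetworks actually required are the clip operator (exact in ReLU), reciprocal networks for $1/\sigma_t$ and $1/\alpha_t$, and high-degree power/product networks — this is where the $\log^4 N$ depth and $\exp(\cO(\log^4 N))$ weight bound arise, exactly as in your error-budgeting step — rather than an exponential approximant evaluated "at grid points". With that correction, your compositional bookkeeping coincides with the paper's construction of $\dmono^{\trelu}$ and $f_{v,k,j}^{\trelu}$.
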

The proof of Lemma \ref{lemma::relu approx diffused} is provided in Appendix~\ref{sec::sub relu approx}. The approximation guarantee of $f_1^{\rm ReLU}$ holds on the truncated domain, although $f_1$ can approximate the score function in the whole space. Yet Lemma~\ref{lemma::relu approx diffused} is enough for establishing Proposition~\ref{prop::score approx bounded}, as the latter only concerns the truncated domain $\cD$. A direct consequence of Lemmas~\ref{lemma::diffused localpoly approx} and \ref{lemma::relu approx diffused} is the existence of a ReLU network capable of approximating the marginal density function $p_t(\xb | \yb)$. Turning towards $\nabla p_t(\xb | \yb)$, we have similar results.
\begin{lemma}[Counterpart of Lemma \ref{lemma::diffused localpoly approx}]\label{lemma::diffused localpoly approx1}
Suppose Assumption \ref{assump:sub} holds. For sufficiently large integer $N>0$, there exists a mapping $\fb_2 = [f_{2, 1}, \dots, f_{2, d}]^\top$ with $f_{2, i}$ a diffused local polynomial for $i = 1, \dots, d$. Each $f_{2, i}$ consists of at most $N^{d+d_y}(d+d_y)^s$ diffused local monomials and satisfies
\begin{align}
     \abs{f_{2,i}(\xb,\yb,t)-[\sigma_t\nabla p_t(\xb|\yb)]_{i}} \lesssim BN^{-\beta}\log^{\frac{d+s+1}{2}}N , ~~~\forall \xb \in \RR^d, \yb \in [0,1]^{d_y}, \text{and}~t>0.
\end{align}
\end{lemma}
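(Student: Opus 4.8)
The plan is to reduce Lemma~\ref{lemma::diffused localpoly approx1} to a near-verbatim repetition of the construction behind Lemma~\ref{lemma::diffused localpoly approx}, using that $\nabla_{\xb} p_t$ has the same integral representation as $p_t$ up to one extra linear factor. Differentiating \eqref{equ::integral form of p_t} under the integral sign — the derivative lands only on the super-smooth Gaussian kernel and consumes no regularity of $p(\cdot\,|\,\yb)$ beyond Assumption~\ref{assump:sub} — gives, for each coordinate $i\in[d]$,
\[
\bigl[\sigma_t\nabla_{\xb} p_t(\xb|\yb)\bigr]_i
=\int_{\RR^d}p(\zb|\yb)\,\frac{1}{\sigma_t^{d}(2\pi)^{d/2}}\exp\!\Bigl(-\frac{\norm{\alpha_t\zb-\xb}^2}{2\sigma_t^2}\Bigr)\cdot\frac{(\alpha_t\zb-\xb)_i}{\sigma_t}\,\diff\zb ,
\]
which is exactly the integrand of \eqref{equ::integral form of p_t} multiplied by the scalar $(\alpha_t\zb-\xb)_i/\sigma_t$, a degree-one polynomial in $(\zb,\xb)$ whose coefficients are analytic in $t$ through $\alpha_t,\sigma_t$.

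First I would run the construction of Lemma~\ref{lemma::diffused localpoly approx} on this new integrand: truncate the integration variable $\zb$ to the cube $\cD_1=\{\norm{\zb}_\infty\le R\}$ with $R=\cO(\sqrt{\log N})$; partition $\cD_1$ into $\cO(N^{d+d_y})$ congruent subcubes with grid centers $\vb$; and on each subcube replace $p(\zb|\yb)$ by its degree-$s$ Taylor polynomial $h^{\rm density}_{\rm Taylor}(\zb,\vb,\yb)$ in $(\zb,\yb)$. Since $h^{\rm density}_{\rm Taylor}(\zb,\vb,\yb)\,(\alpha_t\zb-\xb)_i$ is still a polynomial in $\zb$, its Gaussian integral over the subcube is available in closed form — a function of $\xb$ built from incomplete Gaussian moments, i.e.\ a finite sum of diffused local monomials. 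Summing over the $\cO(N^{d+d_y})$ grid cells defines $f_{2,i}(\xb,\yb,t)$; because the linear prefactor only raises the degree of the $\zb$-polynomial by one, the monomial count changes by at most a dimension-dependent constant, so after adjusting constants each $f_{2,i}$ consists of at most $N^{d+d_y}(d+d_y)^s$ diffused local monomials.

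For the error, uniformly over $\xb\in\RR^d$, $\yb\in[0,1]^{d_y}$ and $t>0$, I would split $\bigl[\sigma_t\nabla_{\xb} p_t(\xb|\yb)\bigr]_i-f_{2,i}(\xb,\yb,t)$ into the $\zb$-truncation error and the Taylor-remainder error on $\cD_1$. Both are handled as in Lemma~\ref{lemma::diffused localpoly approx}: the truncation error uses the sub-Gaussian tail of Assumption~\ref{assump:sub} together with $\sup_{u}|u|e^{-u^2/2}=\cO(1)$ so that the prefactor $(\alpha_t\zb-\xb)_i/\sigma_t$ cannot amplify the tail; and the Taylor remainder of $p(\zb|\yb)$ on subcubes of side $\cO(N^{-1}\sqrt{\log N})$ is $\cO(BN^{-\beta}(\log N)^{\beta/2})$ per point. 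Multiplying by the integral of the prefactor-weighted kernel over $\cD_1$ — bounded by $\cO((\log N)^{(d+1)/2})$ once one uses the Gaussian concentration of the kernel (so that the relevant values of $(\alpha_t\zb-\xb)/\sigma_t$ are $\cO(\sqrt{\log N})$ on the region that contributes above the target accuracy) and the Gaussian decay of $p_t(\cdot|\yb)$ in $\xb$ — yields the bound $\cO(BN^{-\beta}(\log N)^{(d+s+1)/2})$, one half-power of $\log N$ more than in Lemma~\ref{lemma::diffused localpoly approx}.

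The main obstacle is precisely this prefactor bookkeeping: $1/\sigma_t$ is as large as $N^{C_\sigma/2}$ near the early-stopping time $t_0=N^{-C_\sigma}$, so one cannot bound $(\alpha_t\zb-\xb)_i/\sigma_t$ crudely; it must be paired with the Gaussian kernel, using that for small $\sigma_t$ the kernel concentrates at scale $\sigma_t\sqrt{\log N}$ (making the ratio $\cO(\sqrt{\log N})$ on all but $N^{-c}$ of the mass), while for $\sigma_t\asymp 1$ one has $|\alpha_t\zb-\xb|=\cO(\sqrt{\log N})$ directly on $\cD_1$. Checking that this interaction contributes only a single extra $\sqrt{\log N}$ — rather than a polynomial-in-$N$ blow-up — is the one genuinely new ingredient relative to the proof of Lemma~\ref{lemma::diffused localpoly approx}; the companion ReLU-network implementation (Lemma~\ref{lemma::relu approx diffused1}, the analogue of Lemma~\ref{lemma::relu approx diffused}) then follows from the same circuit construction with the linear prefactor realized exactly by ReLU layers.
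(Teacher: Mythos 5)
Your proposal is correct and follows essentially the same route as the paper: differentiate under the integral so that $\sigma_t\nabla p_t$ has the same integrand as $p_t$ up to the factor $(\alpha_t\zb-\xb)_i/\sigma_t$, truncate with the moment-weighted tail bound (the paper's Lemma~\ref{lemma::clipz} with $\vb=\mathbf{e}_i$), and rerun the diffused local polynomial construction, with the prefactor bounded by $\cO(\sqrt{\log N})$ on the $\sigma_t$-adapted truncation region so that the only change is the one-degree increase of the local polynomials and the extra half power of $\log N$ in the error. Your bookkeeping of the $1/\sigma_t$ interaction matches how the paper resolves it, so no gap.
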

\begin{lemma}[Counterpart of Lemma \ref{lemma::relu approx diffused}]\label{lemma::relu approx diffused1}
Suppose Assumption \ref{assump:sub} holds. Given the diffused local polynomial mapping $\mathbf{f}_2$ in Lemma \ref{lemma::diffused localpoly approx1}, for any $\epsilon >0$, there exists a ReLU network $ \cF( W, \kappa, L, K) $ that gives rise to a mapping $\mathbf{f}_2^{\trelu}(\xb,\yb,t) \in \cF$ satisfying 
\begin{align}
     \norm{\mathbf{f}_2(\xb,\yb,t)-\mathbf{f}_2^{\trelu}(\xb,\yb,t)}_{\infty}\le  \epsilon,
\end{align}
for any $\xb \in  [-C_x\sqrt{\log N},C_x\sqrt{\log N}]^{d}$, $\yb\in[0,1]^{d_y}$ and $t\in[N^{-C_{\sigma}},C_{\alpha}\log N]$.
The network configuration is the same as in Lemma \ref{lemma::relu approx diffused}.
\end{lemma}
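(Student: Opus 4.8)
The plan is to reduce Lemma~\ref{lemma::relu approx diffused1} to the scalar statement of Lemma~\ref{lemma::relu approx diffused} applied coordinatewise, followed by a parallel-composition argument for ReLU networks. By Lemma~\ref{lemma::diffused localpoly approx1}, each component $f_{2,i}$ of $\mathbf{f}_2$ is itself a diffused local polynomial built from at most $N^{d+d_y}(d+d_y)^s$ diffused local monomials, and its algebraic form is identical to that of the scalar diffused local polynomial $f_1$ in Lemma~\ref{lemma::diffused localpoly approx} (the only difference being the target it approximates, $[\sigma_t \nabla p_t]_i$ versus $p_t$). Consequently the construction underlying Lemma~\ref{lemma::relu approx diffused} --- approximating the Gaussian and polynomial building blocks, the monomials, and their weighted sum by ReLU subnetworks, then clipping --- applies verbatim to each $f_{2,i}$. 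First I would invoke this to obtain, for every $i \in [d]$, a ReLU network producing $f_{2,i}^{\trelu}$ with $|f_{2,i}(\xb,\yb,t) - f_{2,i}^{\trelu}(\xb,\yb,t)| \le \epsilon$ on $[-C_x\sqrt{\log N}, C_x\sqrt{\log N}]^d \times [0,1]^{d_y} \times [N^{-C_\sigma}, C_\alpha \log N]$, with hyperparameters all of the orders listed in Lemma~\ref{lemma::relu approx diffused}.

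Next I would stack these $d$ subnetworks in parallel: they share the common input vector $[\xb^\top, \yb^\top, t]^\top$, occupy disjoint sets of neurons in each layer, and their outputs are concatenated into the vector $\mathbf{f}_2^{\trelu} = [f_{2,1}^{\trelu}, \ldots, f_{2,d}^{\trelu}]^\top$. Padding the shallower subnetworks with identity layers so that all depths agree keeps the total depth at $L = \cO(\log^4 N + \log^2 \epsilon^{-1})$; the width and the number of nonzero parameters each grow by at most the constant factor $d$, hence remain $W = \cO(N^{d+d_y}(\log^7 N + \log N \log^3 \epsilon^{-1}))$ and $K = \cO(N^{d+d_y}(\log^9 N + \log N \log^3 \epsilon^{-1}))$; and the per-parameter magnitude bound $\kappa = \exp(\cO(\log^4 N + \log^2 \epsilon^{-1}))$ is unchanged. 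The error bound then follows since $\norm{\mathbf{f}_2(\xb,\yb,t) - \mathbf{f}_2^{\trelu}(\xb,\yb,t)}_\infty = \max_{i \in [d]} |f_{2,i}(\xb,\yb,t) - f_{2,i}^{\trelu}(\xb,\yb,t)| \le \epsilon$, which is exactly the claim with the same network configuration as Lemma~\ref{lemma::relu approx diffused}.

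The only point requiring genuine care --- and the step I expect to be the main obstacle --- is verifying that nothing in the proof of Lemma~\ref{lemma::relu approx diffused} secretly exploits a feature of the density $p_t$ that its scaled gradient $\sigma_t \nabla p_t$ fails to share. In particular, the coefficients of the monomials in $f_{2,i}$ and the output range of $f_{2,i}$ carry an extra $\sqrt{\log N}$ factor relative to those of $f_1$ (compare the error bound $BN^{-\beta}\log^{(d+s+1)/2}N$ in Lemma~\ref{lemma::diffused localpoly approx1} with $BN^{-\beta}\log^{(d+s)/2}N$ in Lemma~\ref{lemma::diffused localpoly approx}), so one must confirm that the clipping thresholds, the per-monomial ReLU-approximation errors, and the accumulated error over the $\cO(N^{d+d_y})$ monomials remain controlled after absorbing this polylogarithmic factor into the hidden constants. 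Since $d$ is a fixed constant and the monomial count is unchanged, this is pure bookkeeping and introduces no new ideas; the resulting $\mathbf{f}_2^{\trelu}$ then feeds directly into the assembly of the score approximator in the proof of Proposition~\ref{prop::score approx bounded}.
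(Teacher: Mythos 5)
Your proposal matches the paper's argument: the paper proves this lemma by noting that each coordinate $f_{2,i}$ has exactly the same diffused-local-polynomial structure as $f_1$, so the construction of Lemma~\ref{lemma::relu approx diffused} applies verbatim coordinatewise, and the $d$ coordinate networks are assembled into the vector-valued map $\fb_2^{\trelu}$ with only constant-factor (in $d$) changes to width and sparsity. Your additional bookkeeping remark about the degree-one increase (the extra $\sqrt{\log N}$ factor) is consistent with the paper's treatment and does not change the stated network configuration.
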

The proofs of Lemmas~\ref{lemma::diffused localpoly approx1} and \ref{lemma::relu approx diffused1} are provided in Appendix~\ref{sec::diffuse approx} and \ref{sec::sub relu approx}, respectively. The only difference in $\fb_2^{\rm ReLU}$ is that it is a multi-dimensional input-output mapping. In the proof, we construct coordinate mappings of $\fb_2^{\rm ReLU}$, and therefore, it reduces to the construction of $f_1^{\rm ReLU}$.

\paragraph{Proof of Proposition~\ref{prop::score approx bounded}} We assemble the approximators $f_1^{\rm ReLU}$ and $\fb_2^{\rm ReLU}$ for approximating the score function.
\begin{proof}
Without loss of generality, we focus on the first coordinate of the score function, which we denote as $\nabla \log p_{t, 1} = [\nabla \log p_t]_1$. From {\bf Steps (i)} - {\bf (iii)}, we obtain $ f_1, f^{\trelu}_1 $ to approximate $p_t(\xb|\yb)$, and $\mathbf{f}_2,\mathbf{f}^{\trelu}_2$ to approximate $\nabla p_t(\xb|\yb)$. We denote the first coordinate of $\mathbf{f}_2$ as $f_2$. In the following, we first show a construction using $f_1$ and $f_2$ for approximating $\nabla \log p_{t, 1}$ and then use a ReLU neural network to implement it. The network implementation relies on $f_1^{\rm ReLU}$ and $\fb_2^{\rm ReLU}$.
    
According to Lemma \ref{lemma::diffused localpoly approx}, the deviation between $p_t(\xb|\yb)$ and $f_1(\xb,\yb,t)$ is upper bounded by $\cO\left(N^{-\beta}\log^{\frac{d+s}{2}}N\right)$. Thus, there exists a constant $C_3>0$ such that if $p_t(\xb|\yb)\ge C_3N^{-\beta}\log^{\frac{d+s}{2}}N =:\epslow$, we must have $f_1(\xb,\yb,t)>\frac{1}{2}p_t(\xb|\yb)$. 
    
Now for $\epslow>C_3N^{-\beta}\log^{\frac{d+s}{2}}N$, 
we denote the clipped version of $f_1$ by $f_{1,\text{clip}}=\max(f_1,\epslow)$, and define the score approximator 
\begin{align*}
    f_3(\xb,\yb,t)=\min\left(\frac{f_2}{\sigma_t f_{1,\text{clip}}}, \frac{C_5}{\sigma^2_t}\left(C_x\sqrt{d\log N}+1\right)\right)
\end{align*}
for a sufficiently large constant $C_5$. 

By its definition, we have $\left| f_3(\xb,\yb,t) \right| \le \frac{C_5}{\sigma^2_t}(C_x\sqrt{d\log N}+1)$. This upper bound coincides with the upper bound of $\norm{\nabla\log p_{t}(\xb|\yb)}_{\infty}$ when $\xb \in [-C_x\sqrt{\log N},C_x\sqrt{\log N}]^d$, as shown in Lemma~\ref{lemma::score bound}. It remains to bound the difference between $\nabla \log p_{t, 1}$ and $f_3$. We have
\begin{align*}
\abs{\nabla\log p_{t,1}-f_3} & \le\abs{\nabla\log p_{t,1}-\frac{f_2}{\sigma_tf_{1,\text{clip}}}} \\
& = \abs{\frac{[\nabla p_{t}]_1}{p_t} - \frac{[\nabla p_{t}]_1}{f_{1, \textrm{clip}}} + \frac{[\nabla p_{t}]_1}{f_{1, \textrm{clip}}} - \frac{f_2}{\sigma_t f_{1,\text{clip}}}} \\
& \le \abs{[\nabla p_{t}]_1}\abs{\frac{1}{p_t}-\frac{1}{ f_{1,\text{clip}}}}+\frac{\abs{\sigma_t[\nabla p_{t}]_1 -f_2}}{\sigma_t f_{1,\text{clip}}}.
\end{align*}
Since $\norm{\nabla \log p_t(\xb | \yb)}_\infty \leq \frac{C_5}{\sigma_t^2} (C_x\sqrt{d\log N}+1)$ implies $\abs{[\nabla p_{t}]_1} \le\frac{ C_5}{\sigma_t^2} (C_x\sqrt{d\log N}+1) p_t$, for $p_t \geq \epsilon_{\rm low}$, we have
\begin{align*}
\abs{\nabla\log p_{t,1}-f_3} &\le \frac{ C_5}{\sigma_t^2} (C_x\sqrt{d\log N}+1) p_t\abs{\frac{1}{p_t}-\frac{1}{ f_{1,\text{clip}}}}+\frac{\abs{\sigma_t[\nabla p_{t}]_1-f_2}}{\sigma_t f_{1,\text{clip}}}\\
&\lesssim \frac{1}{f_{1,\text{clip}}}\left(\frac{1}{\sigma_t^2} \sqrt{\log N}\abs{p_t-f_{1,\text{clip}}}+\abs{\frac{\sigma_t[\nabla p_{t}]_1 -f_2}{\sigma_t}}\right)\\
& \overset{(i)}{\lesssim} \frac{1}{p_t}\left(\frac{1}{\sigma_t^2} \sqrt{\log N}\abs{p_t-f_{1,\text{clip}}}+\abs{\frac{\sigma_t[\nabla p_{t}]_1 -f_2}{\sigma_t}}\right)\\
&\overset{(ii)}{\lesssim} \frac{B}{\sigma_t^2p_t}N^{-\beta}(\log N)^{\frac{d+s+1}{2}},
\end{align*}
where inequality $(i)$ follows from $f_{1, \textrm{clip}} \geq \frac{1}{2} p_t$ and inequality $(ii)$ invokes the approximation guarantees of $f_1$ and $f_2$ in Lemmas \ref{lemma::diffused localpoly approx} and \ref{lemma::diffused localpoly approx1}.
The other coordinates of $\nabla \log p_t$ can be approximated in the same manner. Stacking these coordinate approximations, we obtain a mapping $\textbf{f}_3$ for approximating $\nabla \log p_t$ with
\begin{align}\label{equ::poly error}
\norm{\nabla\log p_{t}-\textbf{f}_3}_{\infty} \lesssim  \frac{B}{\sigma_t^2p_t}N^{-\beta}(\log N)^{\frac{d+s+1}{2}}.
\end{align}
Here $\fb_3$ is defined as 
\begin{align}\label{equ:: definition fb3}
    \fb_3(\xb,\yb,t)=\min\left(\frac{\fb_2}{\sigma_t f_{1,\text{clip}}}, \frac{C_5}{\sigma^2_t}\left(C_x\sqrt{d\log N}+1\right)\right).
\end{align}
Now we construct a ReLU network $\fb_3^{\rm ReLU}$ to implement $\textbf{f}_3$. The majority of the network utilizes the network constructed in Lemmas \ref{lemma::relu approx diffused} and \ref{lemma::relu approx diffused1}. However, to facilitate the implementation, we also need to implement some basic operations using ReLU networks, namely, the inverse function, the product function, $\sigma_t$ as a function of $t$, and an entrywise minimization operator. With these ingredients, our constructed network architecture is depicted in Figure \ref{fig:ReLU2}. Details about how to determine the network size and the error propagation are deferred to Appendix \ref{sec::f3 relu}.
\begin{figure}
\hspace{2.4cm}
\includegraphics[width=0.57\linewidth]{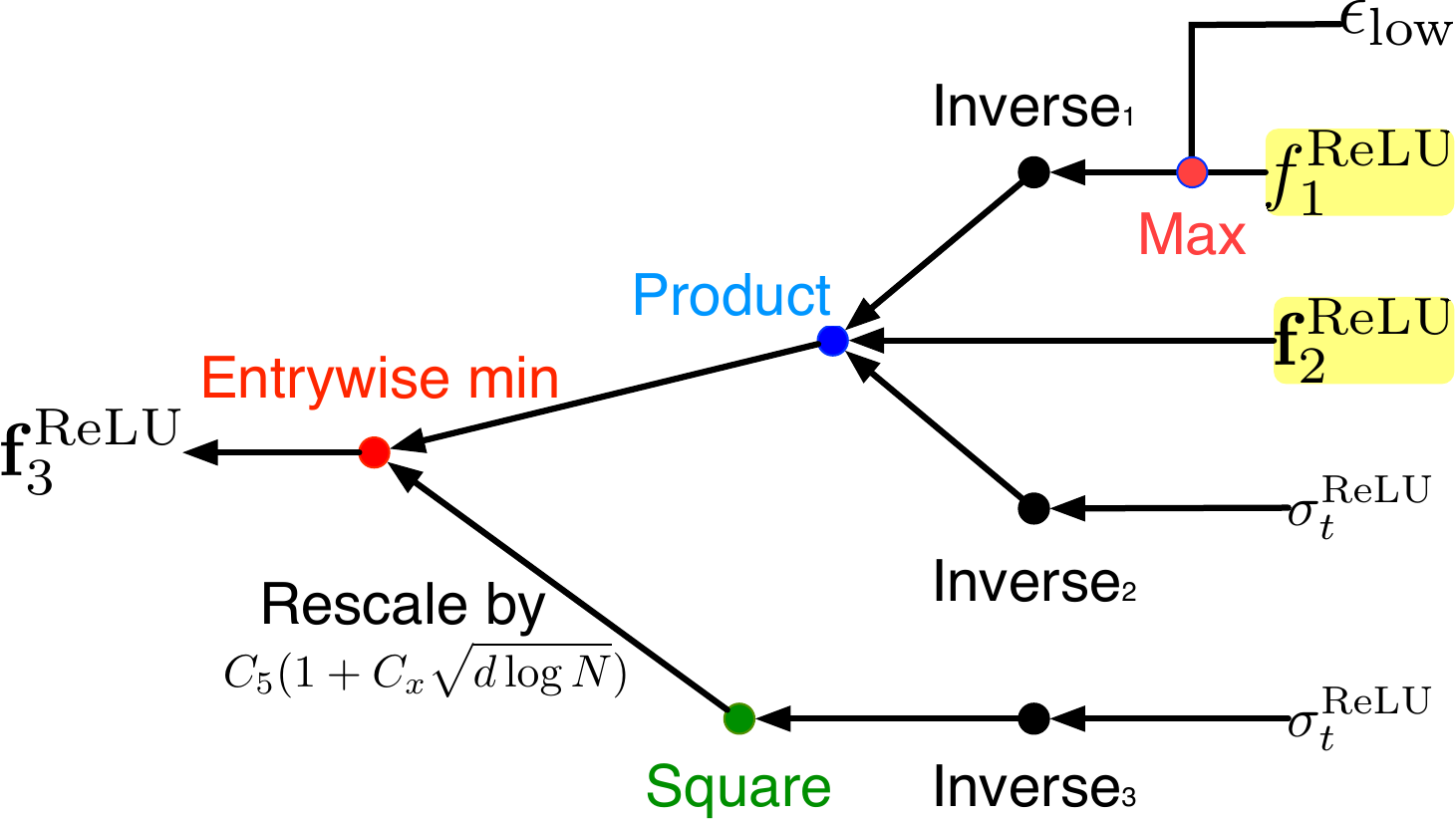}
\caption{The network architecture of $\fb_3^{\rm ReLU}$. We implement all the components of $\fb_3$ ($f_1$, $\fb_2$ and $\sigma_t$) through ReLU networks and combine them using the ReLU-approximated operators (\textit{product}, \textit{inverse} (\textit{reciprocal}) and \textit{entrywise-min}) to express $\fb_3$ according to its definition in \eqref{equ:: definition fb3}.}
\label{fig:ReLU2}
\end{figure}

 From the construction in the figure and the hyperparameter configuration in Lemma \ref{lemma::relu approx diffused} and \ref{lemma::relu approx diffused1}, we can obtain a ReLU network $ \cF(M_t, W, \kappa, L, K) $ with 
    \begin{align*}
    & \hspace{0.4in} M_t = \cO\left(\sqrt{\log N}/\sigma^2_t\right),~
     W = {\cO}\left(N^{d+d_y}\log^7 N\right),\\
     & \kappa =\exp \left({\cO}(\log^4 N)\right),~
    L = {\cO}(\log^4 N),~
     K= {\cO}\left(N^{d+d_y}\log^9 N\right).
\end{align*} This network contains $\fb^{\trelu}_{3}$ such that for any $\xb \in  [-C_x\sqrt{\log N},C_x\sqrt{\log N}]^{d}$, $\yb\in[0,1]^{d_y}$ and $t\in[N^{-C_{\sigma}},C_{\alpha}\log N]$,
\begin{align}\label{equ::f3 relu bound}
    \norm{\fb^{\trelu}_{3}(\xb,\yb,t)-\fb_{3}(\xb,\yb,t)}_{\infty}\le N^{-\beta}.
\end{align}
Thus, we have
\begin{align*}
    \norm{\nabla\log p_{t}(\xb|\yb)-\fb^{\trelu}_3(\xb,\yb,t)}_{\infty}\lesssim  \frac{B}{\sigma_t^2p_t(\xb|\yb)}N^{-\beta}(\log N)^{\frac{d+s+1}{2}}.
\end{align*}
We complete our proof.
\end{proof}

\subsection{Proofs of Lemmas \ref{lemma::diffused localpoly approx} and \ref{lemma::diffused localpoly approx1}}\label{sec::diffuse approx}
To prove the lemma, we first need some properties of the density function $p_t(\xb|\yb)$ and the score function $\nabla \log p_t(\xb|\yb)$.
\begin{lemma}[Clip the integral]\label{lemma::clipz}
    Under Assumption \ref{assump:sub}, for any $\mathbf{v}\in \Z_{+}^d$ with $\norm{\mathbf{v}}_1 \le n$. There exists a constant $C(n,d) \ge 1$ such that for any $\xb$ and $0 < \epsilon \le \frac{1}{e}$, it holds that
    \begin{align*}
        \int_{\R^d \backslash \textbf{B}_x} \left|\left(\frac{\alpha_t\zb-\xb}{\sigma_t}\right)^{\mathbf{v}}\right|&p(\zb|\yb)\frac{1}{\sigma_t^{d}(2\pi)^{d/2}}\exp\left(-\frac{\norm{\alpha_t\zb-\xb}^2}{2\sigma_t^2}\right)\diff \zb \le \epsilon,
    \end{align*}
    where 
    \begin{align}
        \textbf{B}_x&=\left[\frac{\xb-\sigma_tC(n,d)\sqrt{\log\epsilon^{-1}}}{\alpha_t},\frac{\xb+\sigma_tC(n,d)\sqrt{\log\epsilon^{-1}}}{\alpha_t}\right] \nonumber\\ &\quad\quad\quad\bigcap \left[-C(n,d)\sqrt{\log\epsilon^{-1}},C(n,d)\sqrt{\log\epsilon^{-1}}\right].\label{equ::definition clip}
    \end{align}
\end{lemma}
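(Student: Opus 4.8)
The plan is to bound the initial density by its Gaussian envelope from Assumption~\ref{assump:sub} and reduce the integral to standard Gaussian tail and moment estimates, using that $\textbf{B}_x$ is designed precisely to contain the bulk of the Gaussian kernel $\exp(-\norm{\alpha_t\zb-\xb}^2/(2\sigma_t^2))$. Write $\textbf{B}_x=\textbf{B}_x^{(1)}\cap\textbf{B}_x^{(2)}$ with $\textbf{B}_x^{(1)}=\{\zb:\norm{\alpha_t\zb-\xb}_\infty\le\sigma_tC\sqrt{\log\epsilon^{-1}}\}$ and $\textbf{B}_x^{(2)}=\{\zb:\norm{\zb}_\infty\le C\sqrt{\log\epsilon^{-1}}\}$, so $\R^d\setminus\textbf{B}_x=(\R^d\setminus\textbf{B}_x^{(1)})\cup(\R^d\setminus\textbf{B}_x^{(2)})$, and I would bound the integral over each of the two pieces separately. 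On each piece I would change variables to $u=(\alpha_t\zb-\xb)/\sigma_t$, which turns the kernel into the standard Gaussian $\exp(-\norm{u}^2/2)$, the monomial into $\abs{u^{\mathbf{v}}}$, and contributes a Jacobian $(\sigma_t/\alpha_t)^d$. On $\R^d\setminus\textbf{B}_x^{(1)}$ one has $\norm{u}_\infty>C\sqrt{\log\epsilon^{-1}}$ directly; on $\R^d\setminus\textbf{B}_x^{(2)}$ the bound $\norm{\zb}_\infty>C\sqrt{\log\epsilon^{-1}}$ together with $p(\zb\mid\yb)\le C_1\exp(-C_2\norm{\zb}^2/2)$ yields $p(\zb\mid\yb)\le C_1\epsilon^{C_2C^2/2}$ on that event. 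In either case, peeling off a constant fraction of the relevant Gaussian (e.g. $\exp(-r^2/2)\le\epsilon^{(1-\theta)C^2/2}\exp(-\theta r^2/2)$ on $\{r>C\sqrt{\log\epsilon^{-1}}\}$) leaves a factor $\epsilon^{cC^2}$ times an integral that still converges, with $\abs{u^{\mathbf{v}}}$ contributing only a finite Gaussian moment and at worst a polynomial factor in $\sqrt{\log\epsilon^{-1}}$.

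The one real obstacle is the Jacobian $\alpha_t^{-d}$, which blows up as $t\to\infty$ and is not controllable from boundedness of the density alone; moreover the bound must be uniform in $\xb$. I would resolve this by splitting on the size of $\alpha_t$. When $\alpha_t^2\ge\tfrac12$ the factor $\alpha_t^{-d}\le 2^{d/2}$ is harmless and the scheme above goes through (using $p\le C_1$ on the first piece and the Gaussian envelope on the second). When $\alpha_t^2<\tfrac12$ --- so that $\sigma_t^2>\tfrac12$ is bounded below --- I would instead retain the density's Gaussian and invoke the completing-the-square identity already used in \eqref{eq:fast_pt}: the product $\exp(-C_2\norm{\zb}^2/2)\,\sigma_t^{-d}\exp(-\norm{\alpha_t\zb-\xb}^2/(2\sigma_t^2))$ equals $\gamma_t^{-d/2}\exp\!\big(-C_2\norm{\xb}^2/(2\gamma_t)\big)$ times a Gaussian density in $\zb$ with mean $\mu_t=\alpha_t\xb/\gamma_t$ and variance $\tau_t^2=\sigma_t^2/\gamma_t$, where $\gamma_t:=\alpha_t^2+C_2\sigma_t^2$ satisfies $\min(C_2,1)\le\gamma_t\le\max(C_2,1)$ --- so no $\alpha_t^{-d}$ survives. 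Writing $\zb=\mu_t+\tau_t w$ with $w\sim{\sf N}(\mathbf{0},I)$, the monomial becomes $\abs{\big(-\tfrac{C_2\sigma_t}{\gamma_t}\xb+\tfrac{\alpha_t}{\sqrt{\gamma_t}}w\big)^{\mathbf{v}}}$, which expands into terms $\norm{\xb}^{j}\abs{w^{\mathbf{k}}}$ with $j\le\norm{\mathbf{v}}_1\le n$; the prefactor $\gamma_t^{-d/2}\exp(-C_2\norm{\xb}^2/(2\gamma_t))$ then absorbs every power of $\norm{\xb}$ into a constant depending only on $n,d$ and the constants of Assumption~\ref{assump:sub}, leaving a bounded multiple of $(1+\norm{w}^{n})\exp(-\norm{w}^2/2)$.

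Finally I would verify that on either excluded region this leftover Gaussian in $w$ (or in $u$) actually pays an $\epsilon$-factor. On $\R^d\setminus\textbf{B}_x^{(1)}$ the displacement $\alpha_t\zb-\xb$ is large in $\ell_\infty$, and since $C_2\sigma_t/\gamma_t$ and $\alpha_t/\sqrt{\gamma_t}$ are bounded, this forces $\norm{w}_\infty\gtrsim\sqrt{\log\epsilon^{-1}}$ unless $\norm{\xb}_\infty\gtrsim\sqrt{\log\epsilon^{-1}}$, in which case $\exp(-C_2\norm{\xb}^2/(2\gamma_t))\le\epsilon^{cC^2}$ already; the same dichotomy holds on $\R^d\setminus\textbf{B}_x^{(2)}$ using $\norm{\mu_t}\le\gamma_t^{-1}\alpha_t\sqrt{d}\,\norm{\xb}$ and $\tau_t\le\gamma_t^{-1/2}$. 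Either way each of the finitely many pieces is bounded by $A\,(\log\epsilon^{-1})^{n/2}\,\epsilon^{cC^2}$ for constants $A,c>0$ depending only on $n,d,C_1,C_2$. Since $\epsilon\le 1/e$, choosing $C=C(n,d)\ge 1$ large enough (so that $cC^2$ exceeds $1+\log A$ for each piece) makes every piece at most $\epsilon$ divided by the number of pieces, which gives the claim. I expect the main difficulty to be exactly this uniform taming of the Jacobian $\alpha_t^{-d}$ simultaneously over all $t>0$ and all $\xb\in\R^d$; once the regimes $\alpha_t^2\ge\tfrac12$ versus $\alpha_t^2<\tfrac12$ and the Gaussian-product identity are in place, the remainder is routine one-dimensional Gaussian tail bookkeeping.
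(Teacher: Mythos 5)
Your argument is correct, and it reaches the bound by a route that genuinely differs from the paper's. The paper splits the excluded region coordinate-wise, $\R^d\setminus\textbf{B}_x\subseteq\bigcup_{i}\R\times\cdots\times(\R\setminus B_x^i)\times\cdots\times\R$, and then works with one-dimensional integrals: each factor (the off-coordinate terms $A_{i,j}$ as well as the bad-coordinate terms $(\spadesuit)$ and $(\clubsuit)$) is bounded twice, once keeping the sub-Gaussian envelope and bounding the monomial-times-kernel by a constant (giving a $1/\sigma_t$), and once dropping the envelope and changing variables in the heat kernel (giving a $1/\alpha_t$), and the minimum is $O(1)$ since $\alpha_t^2+\sigma_t^2=1$; the factor $\epsilon^{cC^2}$ then comes from the kernel tail when the first constraint fails and from the envelope tail when the second fails. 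Your split into $\alpha_t^2\ge\tfrac12$ versus $\alpha_t^2<\tfrac12$ is this min trick in disguise, but in the small-$\alpha_t$ regime you replace the paper's per-coordinate estimates by the $d$-dimensional completing-the-square identity (the same computation the paper uses in Lemma~\ref{lemma::density bound} and in \eqref{eq:fast_pt}), with $\gamma_t=\alpha_t^2+C_2\sigma_t^2$ bounded above and below, which removes the Jacobian entirely at the price of re-expressing the monomial in $\xb$ and the rescaled variable $w$ and running a dichotomy: either $\norm{\xb}_\infty\gtrsim C\sqrt{\log\epsilon^{-1}}$, so the prefactor $\exp(-C_2\norm{\xb}^2/(2\gamma_t))$ pays the $\epsilon$ power, or some coordinate of $w$ exceeds a constant multiple of $C\sqrt{\log\epsilon^{-1}}$, so the Gaussian tail pays. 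That dichotomy is an extra moving part the paper avoids by staying in the original variables, but it is sound; the two small bookkeeping points you should make explicit are that the prefactor cannot be spent twice (use $\exp(-C_2\norm{\xb}^2/(4\gamma_t))$ to absorb the powers $\norm{\xb}^j$, $j\le n$, and the remaining half to produce $\epsilon^{cC^2}$), and that your constant $A$ may grow polynomially in $C(n,d)$, which is harmless since $\epsilon^{cC^2}\le e^{-cC^2}$ beats any polynomial in $C$ for $\epsilon\le 1/e$. What your route buys is reuse of the Gaussian-product computation and a uniform treatment of the whole excluded set; what the paper's buys is brevity, since every one-dimensional integral is dispatched by the same two-line min argument.
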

The proof of the lemma is provided in Appendix \ref{sec::Proof of lemma clipz}. Besides, we need to bound the density and the gradient of density.
\begin{lemma}\label{lemma::density bound}
Under Assumption \ref{assump:sub}, there exists a constant $C_4$ such that the diffused density function $p_t(\xb |\yb)$ can be bounded as:
\begin{align}
\frac{C_4}{\sigma_{t}^d} \exp\left(-\frac{\norm{\xb}^2+1}{\sigma_{t}^2}\right)    \le p_t(\xb |\yb)\le \frac{C_1}{(\alpha_t^2+C_2\sigma_t^2)^{d/2}}\exp\left(\frac{-C_2\norm{\xb}^2}{2(\alpha_t^2+C_2\sigma_t^2)}\right)
\end{align}
and the gradient can be bounded as
\begin{align}\label{equ::gradient bound}
    \norm{\nabla p_t(\xb,\yb)}_{\infty} \le \frac{C_1}{(\alpha_t^2+C_2\sigma_t^2)^{d/2}}\exp\left(\frac{-C_2\norm{\xb}^2}{2(\alpha_t^2+C_2\sigma_t^2)}\right) \left(\frac{\alpha_t}{\sigma_t\sqrt{\alpha^2_t+C_2\sigma^2_t}}+\frac{C_2\norm{\xb}_{\infty}}{\alpha^2_t+C_2\sigma^2_t} \right).
\end{align}
\end{lemma}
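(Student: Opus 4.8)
The plan is to obtain both bounds directly from the integral representation
\eqref{equ::integral form of p_t}, namely
$p_t(\xb|\yb)=\int_{\RR^d}p(\zb|\yb)\frac{1}{\sigma_t^d(2\pi)^{d/2}}\exp\!\big(-\|\alpha_t\zb-\xb\|^2/(2\sigma_t^2)\big)\diff\zb$,
together with the light-tail estimate $p(\zb|\yb)\le C_1\exp(-C_2\|\zb\|^2/2)$ from
Assumption~\ref{assump:sub}. For the \emph{upper bound}, I would substitute the light-tail
estimate into the integral and complete the square in the exponent: the product
$\exp(-C_2\|\zb\|^2/2)\exp(-\|\alpha_t\zb-\xb\|^2/(2\sigma_t^2))$ is, up to a factor depending
only on $\xb$, a Gaussian in $\zb$. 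Precisely, the exponent equals
$-\tfrac{\alpha_t^2+C_2\sigma_t^2}{2\sigma_t^2}\big\|\zb-\tfrac{\alpha_t\xb}{\alpha_t^2+C_2\sigma_t^2}\big\|^2-\tfrac{C_2\|\xb\|^2}{2(\alpha_t^2+C_2\sigma_t^2)}$,
exactly the algebraic manipulation already displayed in \eqref{eq:fast_pt}. Integrating the
Gaussian in $\zb$ produces the normalizing factor $(\alpha_t^2+C_2\sigma_t^2)^{-d/2}$ and yields
the claimed upper bound $\frac{C_1}{(\alpha_t^2+C_2\sigma_t^2)^{d/2}}\exp\!\big(\tfrac{-C_2\|\xb\|^2}{2(\alpha_t^2+C_2\sigma_t^2)}\big)$.

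For the \emph{lower bound}, I would restrict the integration to a unit ball around
$\xb/\alpha_t$, say $B=\{\zb:\|\alpha_t\zb-\xb\|\le\sigma_t\}$, on which the Gaussian kernel is at
least $\frac{1}{\sigma_t^d(2\pi)^{d/2}}e^{-1/2}$; on this ball $p(\zb|\yb)$ is bounded below by a
constant times $\exp(-C_2\|\zb\|^2/2)\ge \exp(-C(\|\xb\|^2+1)/\sigma_t^2)$ after a crude bound
$\|\zb\|\le(\|\xb\|+\sigma_t)/\alpha_t$ and absorbing $\alpha_t\in(0,1]$ and powers of $\sigma_t$;
multiplying by the volume $\mathrm{vol}(B)\asymp\sigma_t^d$ gives, after adjusting constants, the
stated form $\frac{C_4}{\sigma_t^d}\exp\!\big(-\tfrac{\|\xb\|^2+1}{\sigma_t^2}\big)$. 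One subtlety
is that the density lower bound on the small ball must be nontrivial; since we only assume the
raw light-tail \emph{upper} bound on $p$ in Assumption~\ref{assump:sub} (not a pointwise lower
bound), I would instead exploit that $p(\cdot|\yb)$ is a bona fide density that is $\beta$-H\"older
hence continuous, so $\int_B p(\zb|\yb)\diff\zb$ can be lower bounded using that the total mass
outside a large ball is $o(1)$ — alternatively one invokes the companion tail computation in
Lemma~\ref{lemma::clipz} to guarantee a constant fraction of the mass lies near the origin and
propagate it forward through the Gaussian smoothing.

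For the \emph{gradient bound} \eqref{equ::gradient bound}, I would differentiate under the integral
sign: $\partial_{x_i}p_t(\xb|\yb)=\int p(\zb|\yb)\frac{1}{\sigma_t^d(2\pi)^{d/2}}\cdot\frac{\alpha_t z_i-x_i}{\sigma_t^2}\exp\!\big(-\|\alpha_t\zb-\xb\|^2/(2\sigma_t^2)\big)\diff\zb$.
Applying the same complete-the-square substitution, the integral becomes the $i$-th moment of a
Gaussian with mean $\tfrac{\alpha_t\xb}{\alpha_t^2+C_2\sigma_t^2}$ and covariance
$\tfrac{\sigma_t^2}{\alpha_t^2+C_2\sigma_t^2}I$, against the weight $f(\zb,\yb)$; bounding the
linear factor $\tfrac{\alpha_t z_i-x_i}{\sigma_t^2}$ in terms of the deviation of $\zb$ from this
mean (contributing the $\tfrac{\alpha_t}{\sigma_t\sqrt{\alpha_t^2+C_2\sigma_t^2}}$ term) plus the
shift of the mean (contributing the $\tfrac{C_2\|\xb\|_\infty}{\alpha_t^2+C_2\sigma_t^2}$ term),
and using the upper bound on $p_t$ just derived, gives the claimed estimate. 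The main obstacle I
anticipate is the lower bound: getting the precise exponent $\|\xb\|^2/\sigma_t^2$ with clean
constants requires care in choosing the localization ball and in tracking how $\alpha_t\le 1$ and
$\sigma_t\le 1$ enter, but this is a routine—if fiddly—Gaussian computation rather than a
conceptual difficulty.
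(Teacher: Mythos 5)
Your upper bound and gradient bound follow exactly the paper's route: plug the sub-Gaussian envelope $p(\zb|\yb)\le C_1\exp(-C_2\|\zb\|_2^2/2)$ into the integral representation, complete the square so the $\zb$-integral becomes a Gaussian with mean $\alpha_t\xb/(\alpha_t^2+C_2\sigma_t^2)$ and variance $\sigma_t^2/(\alpha_t^2+C_2\sigma_t^2)$, and for the gradient split $(\alpha_t z_i-x_i)/\sigma_t^2$ into the deviation from that mean (giving the $\alpha_t/(\sigma_t\sqrt{\alpha_t^2+C_2\sigma_t^2})$ term, via the first absolute moment of a standard Gaussian) plus the shift of the mean (giving the $C_2\|\xb\|_\infty/(\alpha_t^2+C_2\sigma_t^2)$ term). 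Two cosmetic slips: under Assumption~\ref{assump:sub} there is no weight $f(\zb,\yb)$ left after the envelope bound (that notation belongs to Assumption~\ref{assump::expdensity}), and one does not "use the upper bound on $p_t$" for the gradient — one simply redoes the envelope computation with the extra linear factor; neither affects the argument.

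For the lower bound, your first-choice localization on $B=\{\zb:\|\alpha_t\zb-\xb\|\le\sigma_t\}$ cannot be repaired in the way your first alternative suggests: Assumption~\ref{assump:sub} gives no pointwise lower bound on $p(\cdot|\yb)$, and knowing that most of the mass lies in a large ball around the origin says nothing about the mass in the particular small ball $B$ near $\xb/\alpha_t$, where the density may vanish entirely. Only your second alternative works, and it is exactly the paper's proof: restrict the integral to a fixed ball $\{\|\zb\|_2\le R\}$ (the paper takes $R=1$) carrying a constant fraction of the mass, bound the kernel there from below via $\|\xb-\alpha_t\zb\|^2\le 2(\|\xb\|_2^2+\alpha_t^2R^2)$ so that it is at least $\exp(-(\|\xb\|_2^2+1)/\sigma_t^2)$, and set $C_4\propto\Pr_{\zb\sim p(\cdot|\yb)}[\|\zb\|_2\le 1]$. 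This localization near the origin, rather than near $\xb/\alpha_t$, is precisely why the lemma's lower bound carries the weak exponent $\exp(-(\|\xb\|_2^2+1)/\sigma_t^2)$; with that substitution your plan coincides with the paper's argument.
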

The proof is provided in Appendix \ref{sec::Proof of Lemma density bound}. Moreover, we can bound the score function.
\begin{lemma}\label{lemma::score bound}
Under Assumption \ref{assump:sub}, there exists a constant $C_5$ such that  the score function $\nabla\log p_t(\xb |\yb)$ can be bounded as:
\begin{align}
   \norm{\nabla \log p_t(\xb |\yb)}_\infty\le \frac{C_5}{\sigma_t^2} (\norm{\xb}+1).
\end{align}
\end{lemma}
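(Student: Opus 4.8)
The plan is to bound the conditional score $\nabla \log p_t(\xb|\yb) = \nabla p_t(\xb|\yb)/p_t(\xb|\yb)$ by combining the two-sided density estimate and the gradient bound already established in Lemma~\ref{lemma::density bound}. Since both the upper bound on $\norm{\nabla p_t(\xb|\yb)}_\infty$ and the lower bound on $p_t(\xb|\yb)$ are available from that lemma, the proof is essentially an algebraic division followed by a careful tracking of the $\alpha_t,\sigma_t$ factors.

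First I would write, coordinatewise,
\begin{align*}
\norm{\nabla \log p_t(\xb|\yb)}_\infty \le \frac{\norm{\nabla p_t(\xb|\yb)}_\infty}{p_t(\xb|\yb)},
\end{align*}
and then substitute the upper bound \eqref{equ::gradient bound} on the numerator and the lower bound $p_t(\xb|\yb) \ge \frac{C_4}{\sigma_t^d}\exp\!\left(-\frac{\norm{\xb}^2+1}{\sigma_t^2}\right)$ on the denominator. The two Gaussian-type exponential prefactors do not cancel exactly because the numerator bound carries $\exp\!\left(\frac{-C_2\norm{\xb}^2}{2(\alpha_t^2+C_2\sigma_t^2)}\right)$ while the denominator lower bound carries $\exp\!\left(-\frac{\norm{\xb}^2+1}{\sigma_t^2}\right)$; I would check that the ratio of these exponentials is bounded by a constant independent of $\xb$ and $t$, using $\alpha_t^2 + C_2\sigma_t^2 \ge \min(1,C_2)$ and $\sigma_t^2 \le 1$ (so that $\frac{1}{\sigma_t^2} \ge \frac{C_2}{2(\alpha_t^2+C_2\sigma_t^2)}$ up to constants). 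This is where one must be slightly careful: the quotient of exponentials is of the form $\exp\!\left(a(t)\norm{\xb}^2 + b(t)\right)$, and I would verify that $a(t) \le 0$ for $t$ in the relevant range so that the exponential factor is at most $\exp(b(t)) = \cO(\exp(1/\sigma_t^2))$ — wait, that grows; so instead the correct grouping is to keep the $\exp(1/\sigma_t^2)$ factor attached to the $1/p_t$ bound and observe it is cancelled by a matching decay. Actually the cleaner route, which I would take, is to not use the crude lower bound from Lemma~\ref{lemma::density bound} directly but rather argue as in the proof sketch of \eqref{eq:fast_pt}: one shows $p_t(\xb|\yb)$ equals a Gaussian prefactor times an integral $h(\xb,\yb,t)$ that is bounded below by a constant times $\sigma_t^{-d}$-free quantity, so that the score decomposes as a linear-in-$\xb$ term $\frac{-C_2\xb}{\alpha_t^2+C_2\sigma_t^2}$ plus $\nabla_\xb h/h$, the latter being controlled since $h$ is bounded away from zero and $\nabla_\xb h$ grows at most linearly in $\xb$ with the same Gaussian decay.

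Collecting terms, the dominant contribution is the $\frac{C_2\norm{\xb}_\infty}{\alpha_t^2+C_2\sigma_t^2}$ piece from \eqref{equ::gradient bound}, which after division and using $\alpha_t^2 + C_2\sigma_t^2 \asymp 1$ (bounded above and below for all $t$) and $\sigma_t^2 \le 1$ contributes $\cO(\norm{\xb}_\infty/\sigma_t^2)$, plus the $\frac{\alpha_t}{\sigma_t\sqrt{\alpha_t^2+C_2\sigma_t^2}}$ piece which contributes $\cO(1/\sigma_t^2)$ after the same simplifications (since $\alpha_t \le 1$ and we are free to bound $1/\sigma_t$ by $1/\sigma_t^2$). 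Adding these and absorbing constants into $C_5$ gives $\norm{\nabla \log p_t(\xb|\yb)}_\infty \le \frac{C_5}{\sigma_t^2}(\norm{\xb}+1)$, using $\norm{\xb}_\infty \le \norm{\xb}$ to pass to the Euclidean norm in the statement. The main obstacle is purely bookkeeping: ensuring that the exponential prefactors in the numerator and denominator bounds are matched so that their ratio is a harmless constant rather than something that blows up in $\xb$ or $t$; the safest way to handle this is the $h(\xb,\yb,t)$ decomposition of \eqref{eq:fast_pt} rather than the raw two-sided bound, so I would route the argument through that identity.
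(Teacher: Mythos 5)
There is a genuine gap at the point where you abandon the direct division and ``route the argument through'' the decomposition \eqref{eq:fast_pt}. That decomposition writes $p_t(\xb|\yb)$ as a Gaussian prefactor times $h(\xb,\yb,t)=\int f(\zb,\yb)\,\frac{1}{(2\pi)^{d/2}\hat\sigma_t^d}\exp\bigl(-\tfrac{\norm{\zb-\hat\alpha_t\xb}^2}{2\hat\sigma_t^2}\bigr)\diff\zb$ with $f(\zb,\yb)=p(\zb|\yb)e^{C_2\norm{\zb}^2/2}$, and the step you need --- ``$h$ is bounded away from zero'' --- is exactly what Assumption~\ref{assump::expdensity} buys via $f\ge C>0$. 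Lemma~\ref{lemma::score bound}, however, is stated and used under Assumption~\ref{assump:sub} only, which gives an \emph{upper} bound on the density and no lower bound: the data density may vanish on large regions (e.g.\ compactly supported $p(\cdot|\yb)$), in which case $h(\xb,\yb,t)$ is exponentially small in $\norm{\xb}^2/\sigma_t^2$ for $\xb$ away from the support and small $t$, and $\nabla_\xb h/h$ cannot be controlled the way you propose. Your own diagnosis of the first route was correct --- dividing the upper bound \eqref{equ::gradient bound} by the lower bound of Lemma~\ref{lemma::density bound} leaves an uncancelled factor $\exp\bigl(\tfrac{\norm{\xb}^2+1}{\sigma_t^2}-\tfrac{C_2\norm{\xb}^2}{2(\alpha_t^2+C_2\sigma_t^2)}\bigr)$ that blows up --- but the repair you chose imports a hypothesis the lemma does not have (under Assumption~\ref{assump::expdensity} the paper proves the sharper Lemma~\ref{lemma::scorebound2} instead), so as written the proposal does not prove the stated lemma.

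The paper's proof avoids separate bounds on numerator and denominator altogether and exploits the ratio structure: writing $\nabla\log p_{t,1}$ as a quotient of two integrals against the \emph{same} positive measure $p(\zb|\yb)\exp\bigl(-\tfrac{\norm{\xb-\alpha_t\zb}^2}{2\sigma_t^2}\bigr)\diff\zb$, it invokes Lemma~\ref{lemma::clipz} to clip the $\zb$-integration to the region $\textbf{B}_x$ up to an error $\epsilon$ in both integrals, and on $\textbf{B}_x$ the numerator's integrand is pointwise at most $\tfrac{C(0,d)\sqrt{\log\epsilon^{-1}}}{\sigma_t}\cdot\tfrac{1}{\sigma_t}$ times the denominator's, giving $|\nabla\log p_{t,1}|\le \tfrac{2C(0,d)\sqrt{\log\epsilon^{-1}}}{\sigma_t}+1$. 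The crude lower bound of Lemma~\ref{lemma::density bound} enters only through the choice $\epsilon=\min\bigl(\tfrac{C_4}{2\sigma_t^d}\exp(-\tfrac{\norm{\xb}^2+1}{\sigma_t^2}),\tfrac1e\bigr)$, i.e.\ only inside a logarithm, so it contributes $\sqrt{\log\epsilon^{-1}}\lesssim(\norm{\xb}+1)/\sigma_t$ rather than a multiplicative exponential, which is precisely how the final bound $\tfrac{C_5}{\sigma_t^2}(\norm{\xb}+1)$ emerges. If you want to salvage your write-up, this ``bound the integrand on the clipped region, use the density lower bound only to set the clipping accuracy'' step is the missing idea.
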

The proof is provided in Appendix \ref{sec::Proof of Lemma score bound}. With all the previous lemmas, we begin to prove Lemma \ref{lemma::diffused localpoly approx}.

\begin{proof}[Proof of Lemma \ref{lemma::diffused localpoly approx}]
The main idea of the proof is to approximate the integral form of $p_t(\xb|\yb)$, which can be presented as
\begin{align}\label{equ::integral form of p_t restate}
p_t(\xb|\yb)=\underbrace{\int_{\RR^{d}}}_{\textbf{Step (i)}} \underbrace{p(\zb|\yb)}_{\textbf{Step (ii)}}\frac{1}{\sigma_t^{d}(2\pi)^{d/2}} \underbrace{\exp\left(-\frac{\norm{\alpha_t\zb-\xb}^2}{2\sigma_t^2}\right)}_{\textbf{Step (iii)}}\diff \zb.
\end{align}
We prove the lemma in the following steps:
\begin{description}
\item[Step (i)] (\textbf{Clip the domain}) We first truncate the integral of $p_t(\xb,\yb)$ in a bounded region using Lemma \ref{lemma::clipz}.
\item[Step (ii)] (\textbf{Approximate} $p(\cdot)$) We approximate the initial distribution function $p(\xb|\yb)$ using local polynomials in the bounded region, which fully utilizes the H\"older smoothness.
\item[Step (iii)] (\textbf{Approximate} $\exp(\cdot)$) We approximate the exponential function in the integrand by polynomials using Taylor expansion.
\end{description}
Combining {\bf Steps (ii)} and {\bf (iii)}, we can approximate the whole integrand by a polynomial, so $p_t$ can be approximated by a diffused polynomial. Now we begin our formal proof.

\vspace{5pt}
\noindent $\bullet$ \textbf{Step (i)} 
We approximate $p_t(\xb|\yb)$ by an integral on a bounded domain using Lemma \ref{lemma::clipz}. We denote the integral by 
\begin{align}\label{equ::f3}
f_2(\xb,\yb,t)=\int_{\textbf{B}_{\xb,N}}p(\zb|\yb)\frac{1}{\sigma_t^{d}(2\pi)^{d/2}}\exp\left(-\frac{\norm{\alpha_t\zb-\xb}^2}{2\sigma_t^2}\right)\diff \zb,
\end{align}
where we take 
\begin{align}\label{equ::BxN}
\textbf{B}_{\xb,N}&=\left[\frac{\xb-\sigma_tC(0,d)\sqrt{\beta\log N}}{\alpha_t},\frac{\xb+\sigma_tC(0,d)\sqrt{\beta\log N}}{\alpha_t}\right]\nonumber\\
&\quad\bigcap \left[-C(0,d)\sqrt{\beta\log N},C(0,d)\sqrt{\beta\log N}\right].
\end{align}
Thus, we have
\begin{align}\label{equ::p f2}
\abs{f_2(\xb,\yb,t)-p_t(\xb,\yb,t)} \le N^{-\beta}, \text{ for any } \xb \in \RR^{d}, \yb \in [0,1]^{d_y}.
\end{align}
\paragraph{Understanding $N$} The integer parameter $N$ represents the number of segments into which each axis of the bounded domain $\textbf{B}_{\xb, N} \times [0,1]^{d_y}$ is subdivided. Consequently, employing $N$ subdivisions along each of the $d+d_y$ dimensions results in $N^{d+d_y}$
  hypercubes covering the entire domain. Each of these hypercubes serves as a localized region where a Taylor polynomial is employed to approximate the function within that specific region. The choice of 
$N$ plays a crucial role in the accuracy of the following approximation scheme.

\vspace{5pt}
\noindent $\bullet$ \textbf{Step (ii)} 
Then we approximate $p(\zb|\yb)$ on this bounded region using local polynomials. We take $R=2C(0,d)\sqrt{\beta\log N}$ and denote
\begin{align}
    \label{equ::density with scaled support}
    f(\xb,\yb)=p(R(\xb-1/2)|\yb), ~~~\text{for } \xb\in [0,1]^d \text{ and } \yb\in [0,1]^{d_y}.
\end{align}
By assumption \ref{assump:sub}, we know that $\norm{f}_{\cH^{\beta}([0,1]^{d+d_y})}\le BR^s$. To implement the local polynomial approximation technique, we define $\phi$ as a trapezoid function:
\begin{align}\label{equ::trapezoid}
\phi(a) =\left\{\begin{matrix}
1 & \abs{a}<1, \\
2-\abs{a} & \abs{a}\in [1,2], \\
0 & \abs{a}>2. \\
\end{matrix} \right.
\end{align}
The trapezoid function is commonly used in the construction of continuous approximators of 
 target functions \citep{chen2022nonparametric}. Now we consider the following local polynomials
\begin{align}
q(\xb,\yb)= \sum_{\vb\in[N]^d,\wb\in[N]^{d_y}}\psi_{\vb,\wb}(\xb,\yb)P_{\vb,\wb}(\xb,\yb)
\end{align}
where $$ P_{\vb,\wb}(\xb,\yb)= \sum_{\norm{\nb}_1+\norm{\npb}_1\le s} \frac{1}{\nb!\npb!}\frac{\partial^{\nb+\npb} f}{\partial \xb^{\nb}\partial \yb^{\npb}} \bigg|_{\xb=\frac{\vb}{N},\yb=\frac{\wb}{N}} \left(\xb-\frac{\vb}{N}\right)^\nb\left(\yb-\frac{\wb}{N}\right)^\npb $$ 
is the $s$-order Taylor polynomial of $f(\xb,\yb)$ at the point $\left(\frac{\vb}{N},\frac{\wb}{N}\right)$, and
$$\psi_{\vb,\wb}(\xb,\yb)=\one\set{\xb\in \left(\frac{\vb-1}{N},\frac{\vb}{N}\right]}\prod \limits_{j=1}^{d_y}\phi\left(3N\left(y_j-\frac{\wb}{N}\right)\right)$$ can be seen as an indicator function supported on the neighbor of the point. To be specific, $\psi_{\vb,\wb}(\xb,\yb)\neq 0$ if and only if $\xb\in \left(\frac{\vb-1}{N},\frac{\vb}{N}\right]$ and $\yb \in \left[\frac{\wb-{2\cdot \mathbf{1}}/{3}}{N},\frac{\wb+{2\cdot \mathbf{1}}/{3}}{N}\right]$, so the $L_{\infty}$ distance between $[\xb,\yb]$ and $\frac{[\vb,\wb]}{N}$ is at most $\frac{1}{N}$. Moreover, by Taylor expansion, there exist $\btheta\in [0,1]^{d}$ and $ \btheta^{\prime}\in[0,1]^{d_y}$ such that
\begin{align*}
    f(\xb,\yb)&=\sum_{\norm{\nb}_1+\norm{\npb}_1< s} \frac{1}{\nb!\npb!}\frac{\partial^{\nb+\npb} f}{\partial \xb^{\nb}\partial \yb^{\npb}} \bigg|_{\xb=\frac{\vb}{N},\yb=\frac{\wb}{N}} \left(\xb-\frac{\vb}{N}\right)^\nb\left(\yb-\frac{\wb}{N}\right)^\npb \\
   & \quad+\sum_{\norm{\nb}_1+\norm{\npb}_1= s} \frac{1}{\nb!\npb!}\frac{\partial^{\nb+\npb} f}{\partial \xb^{\nb}\partial \yb^{\npb}} \bigg|_{\xb=(\mathbf{1}-\btheta)\frac{\vb}{N}+\btheta\xb,\yb=(\mathbf{1}-\btheta^{\prime})\frac{\wb}{N}+\btheta^{\prime}\yb}\left(\xb-\frac{\vb}{N}\right)^\nb\left(\yb-\frac{\wb}{N}\right)^\npb.
\end{align*}
Thus, we have
\begin{align*}
    &\quad \abs{P_{\vb,\wb}(\xb,\yb) -f(\xb,\yb)}\\
    &=\Bigg| \sum_{\norm{\nb}_1+\norm{\npb}_1= s} \frac{1}{\nb!\npb!}\frac{\partial^{\nb+\npb} f}{\partial \xb^{\nb}\partial \yb^{\npb}} \bigg|_{\xb=\frac{\vb}{N},\yb=\frac{\wb}{N}} \left(\xb-\frac{\vb}{N}\right)^\nb\left(\yb-\frac{\wb}{N}\right)^\npb\\
    &\quad -\sum_{\norm{\nb}_1+\norm{\npb}_1= s} \frac{1}{\nb!\npb!}\frac{\partial^{\nb+\npb} f}{\partial \xb^{\nb}\partial \yb^{\npb}} \bigg|_{\xb=(\mathbf{1}-\btheta)\frac{\vb}{N}+\btheta\xb,\yb=(\mathbf{1}-\btheta^{\prime})\frac{\wb}{N}+\btheta^{\prime}\yb}\left(\xb-\frac{\vb}{N}\right)^\nb\left(\yb-\frac{\wb}{N}\right)^\npb  \Bigg|\\
    &\le \sum_{\norm{\nb}_1+\norm{\npb}_1= s} \frac{1}{\nb!\npb!}\left(\xb-\frac{\vb}{N}\right)^\nb\left(\yb-\frac{\wb}{N}\right)^\npb BR^s\left\|[\btheta\xb,\btheta^{\prime}\yb]-\frac{[\btheta\vb,\btheta^{\prime}\wb]}{N}\right\|_{\infty}^{\gamma}\\
    &\le \sum_{\norm{\nb}_1+\norm{\npb}_1= s} \frac{BR^s}{\nb!\npb!N^{\norm{\nb}_1+\norm{\npb}_1+\gamma}}\\
    &=\frac{BR^s(d+d_y)^s}{s!N^{\beta}},
\end{align*}
Combining the result above with the fact that $\sum_{\vb\in[N]^d,\wb\in[N]^{d_y}}\psi_{\vb,\wb}(\xb,\yb)=1$ for any $\xb\in(0,1]^{d}$ and $\yb \in [0,1]^{d_y}$, we claim that $q(\xb,\yb)$ is an approximation to $f(\xb,\yb)$ which satisfies
\begin{align}\label{equ:: local error}
\abs{f(\xb,\yb)-q(\xb,\yb)}\le B\frac{R^s(d+d_y)^s}{s!N^{\beta}}, \forall \xb\in (0,1]^{d},~ \yb\in [0,1]^{d_y}.
\end{align}
Now we replace $p(\zb|\yb)$ by $q\left(\frac{\zb}{R}+1/2,\yb\right)$ in \eqref{equ::f3} and define
\begin{align*}
f_3(\xb,\yb,t) &=\frac{1}{\sigma_{t}^d(2\pi)^{d/2}}\int_{\textbf{B}_{\xb,N}}q\left(\frac{\zb}{R}+1/2,\yb\right)\exp\left(-\frac{\norm{\xb-\alpha_t\zb}^2}{2\sigma_{t}^2}\right)\diff \zb\\
&=\frac{1}{\sigma_{t}^d(2\pi)^{d/2}}\int_{\textbf{B}_{\xb,N}}\sum_{\vb\in[N]^d}\psi_{\vb,\wb}\left(\frac{\zb}{R}+1/2,\yb\right)P_{\vb,\wb}
\left(\frac{\zb}{R}+1/2,\yb\right)\exp\left(-\frac{\norm{\xb-\alpha_t\zb}^2}{2\sigma_{t}^2}\right)\diff \zb\\
&=\sum_{\vb\in[N]^d,\wb\in[N]^{d_y}}\sum_{\norm{\nb}_1+\norm{\npb}_1\le s}\frac{1}{\nb!\npb!}\frac{\partial^{\nb+\npb} f}{\partial \xb^{\nb}\partial \yb^{\npb}} \bigg|_{\xb=\frac{\vb}{N},\yb=\frac{\wb}{N}} \left(\yb-\frac{\wb}{N}\right)^\npb \prod \limits_{j=1}^{d_y}\phi\left(3N\left(y_j-\frac{\wb}{N}\right)\right) \\
&~~~~~~\cdot\prod \limits_{i=1}^d \frac{1}{\sigma_{t}(2\pi)^{1/2}} \int \left(\frac{z_i}{R}+1/2-\frac{v_i}{N}\right)^{n_i}\exp\left(-\frac{(x_i-\alpha_tz_i)^2}{2\sigma_t^2}\right)\diff z_i.
\end{align*} 
The domain of the integral
\begin{equation*}
\int \left(\frac{z_i}{R}+1/2-\frac{v_i}{N}\right)^{n_i}\exp\left(-\frac{(x_i-\alpha_tz_i)^2}{2\sigma_t^2}\right)\diff z_i
\end{equation*}
is 
\begin{align}
B_{v_i,n_i,x_i}:&=\left[\left(\frac{v_i-1}{N}-1/2\right)R,\left(\frac{v_i}{N}-1/2\right)R\right] \nonumber \\
&~~~~~~~\bigcap \left[\frac{x_i-\sigma_tC(0,d)\sqrt{\beta\log N}}{\alpha_t},\frac{x_i+\sigma_tC(0,d)\sqrt{\beta\log N}}{\alpha_t}\right]. \label{equ::domain inte}
\end{align}
Now we bound the difference between $f_2$ and $f_3$. By \eqref{equ:: local error}, we have
\begin{align*}
\abs{f_3(\xb,\yb,t)-f_2(\xb,\yb,t)} &=\left| \int_{\textbf{B}_{\xb,N}}\left(p(\zb|\yb)-q\left(\frac{\zb}{R}+1/2,\yb\right)\right)\frac{1}{\sigma_t^{d}(2\pi)^{d/2}}\exp\left(-\frac{\norm{\alpha_t\zb-\xb}^2}{2\sigma_t^2}\right)\diff \zb \right| \\
&\lesssim \int_{\textbf{B}_{\xb,N}}\frac{BR^s}{N^{\beta}}\frac{1}{\sigma_t^{d}(2\pi)^{d/2}}\exp\left(-\frac{\norm{\alpha_t\zb-\xb}^2}{2\sigma_t^2}\right)\diff \zb \\
&\le \frac{BR^s}{N^{\beta}}\int_{\R^{d}}\frac{1}{\sigma_t^{d}(2\pi)^{d/2}}\exp\left(-\frac{\norm{\alpha_t\zb-\xb}^2}{2\sigma_t^2}\right)\diff \zb\\
&\lesssim \frac{BN^{-\beta}\log^{\frac{s}{2}} N}{\alpha^d_t}.
\end{align*} 
At the same time, we have
\begin{align}
\abs{f_3(\xb,\yb,t)-f_2(\xb,\yb,t)}
&\lesssim \int_{\textbf{B}_{\xb,N}}\frac{BR^s}{N^{\beta}}\frac{1}{\sigma_t^{d}(2\pi)^{d/2}}\exp\left(-\frac{\norm{\alpha_t\zb-\xb}^2}{2\sigma_t^2}\right)\diff \zb \nonumber\\
&\le \frac{BR^s}{N^{\beta}}\frac{m(\textbf{B}_{\xb,N})}{\sigma_t^{d}(2\pi)^{d/2}} \nonumber \\
&\lesssim \frac{BN^{-\beta}\log^{\frac{s+d}{2}} N}{\sigma^d_t}, \label{equ::f2f3}
\end{align} 
where $ m(\textbf{B}_{\xb,N})$ is the Lebesgue measure of $\textbf{B}_{\xb,N}$ in $\RR^{d}$.
Taking the minimum gives rise to 
\begin{equation*}
\abs{f_3(\xb,\yb,t)-f_2(\xb,\yb,t)} \lesssim B\min\left(\frac{1}{\sigma_t^d}, \frac{1}{\alpha_t^d}\right) N^{-\beta}\log^{\frac{d+s}{2}}N\lesssim BN^{-\beta}\log^{\frac{d+s}{2}}N.
\end{equation*}

\vspace{5pt}

\noindent $\bullet$ \textbf{Step (iii)} 
Next, we approximate $\exp\left(-\frac{\abs{x-\alpha_tz}^2}{2\sigma_{t}^2}\right)$ using Taylor expansions. By the choice of $\textbf{B}_{\xb,N}$, we know that $\abs{\frac{x_i-\alpha_t z_i}{\sigma_t}} \le C(0,d)\sqrt{\beta\log N}$ for any $i \in [d]$ and $\zb \in \textbf{B}_{\xb,N}$. Thus, by Taylor expansions we have
\begin{align}
\abs{\exp\left(-\frac{\abs{x-\alpha_tz}^2}{2\sigma_{t}^2}\right)-\sum_{k<p} \frac{1}{k!}\left(-\frac{\abs{x-\alpha_tz}^2}{2\sigma_{t}^2}\right)^k}\le \frac{C(0,d)^{2p}\beta^{p}\log^pN}{p!2^p}, \forall z\in[\underline{C}(x),\overline{C}(x)],
\end{align}
where
$$\underline{C}(x)=\max\left(\frac{x-\sigma_tC(0,d)\sqrt{\beta\log N}}{\alpha_t},-C(0,d)\sqrt{\beta\log N}\right) ,$$ and
$$\overline{C}(x)
=\min\left(\frac{x+\sigma_tC(0,d)\sqrt{\beta\log N}}{\alpha_t} ,C(0,d)\sqrt{\beta\log N}\right).$$ 
By setting $p=\frac{2}{3}C^2(0,d)\beta^2u\log N$ and invoking the inequality $p!\ge (\frac{p}{3})^p$ when $p\ge 3$, we have
\begin{align}\label{equ::exp error}
    \abs{\exp\left(-\frac{\abs{x-\alpha_tz}^2}{2\sigma_{t}^2}\right)-\sum_{k<p} \frac{1}{k!}\left(-\frac{\abs{x-\alpha_tz}^2}{2\sigma_{t}^2}\right)^k} \le N^{-\frac{2}{3}C^2(0,d)\beta^2u\log u}.
\end{align}
 Thus, we can set $$u=\max\left(e,\frac{3}{2C^2(0,d)\beta}+\frac{3\log d}{2C^2(0,d)\beta^2 \log N}\right)$$ to bound \eqref{equ::exp error} by $N^{-\beta}/d$, where $p= \cO(\log N)$. By multiplying the $d$ terms along $i$, we have
\begin{equation}\label{equ::exp error d}
\left| \prod \limits_{i=1}^d\left(\sum_{k<p} \frac{1}{k!}\left(-\frac{\abs{x_i-\alpha_tz_i}^2}{2\sigma_{t}^2}\right)^k\right)-\exp\left(-\frac{\norm{\alpha_t\zb-\xb}^2}{2\sigma_t^2}\right) \right| \le d\left(1+\frac{N^{-\beta}}{d}\right)^{d-1}\frac{N^{-\beta}}{d}\lesssim N^{-\beta}.
\end{equation}

 \vspace{5pt}
\noindent $\bullet$ \textbf{Putting all together.} Now we define
\begin{align*}
    f_1(\xb,\yb,t)&=\frac{1}{\sigma_{t}^d(2\pi)^{d/2}}\int_{\textbf{B}_{\xb,N}}q\left(\frac{\zb}{R}+1/2,\yb\right)\sum_{k<p} \frac{1}{k!}\left(-\frac{\abs{x_i-\alpha_tz_i}^2}{2\sigma_{t}^2}\right)^k\diff \zb\\
    &=\sum_{\vb\in[N]^d,\wb\in[N]^{d_y}}\sum_{\norm{\nb}_1+\norm{\npb}_1<s}\frac{1}{\nb!\npb!}\frac{\partial^{\nb+\npb} f}{\partial \xb^{\nb}\partial \yb^{\npb}} \bigg|_{\xb=\frac{\vb}{N},\yb=\frac{\wb}{N}} \left(\yb-\frac{\wb}{N}\right)^\npb \prod \limits_{j=1}^{d_y}\phi\left(3N\left(y_j-\frac{\wb}{N}\right)\right) \\
        &\qquad \cdot\prod \limits_{i=1}^d \frac{1}{\sigma_{t}(2\pi)^{1/2}} \int \left(\frac{z_i}{R}+1/2-\frac{v_i}{N}\right)^{n_i}\sum_{k<p} \frac{1}{k!}\left(-\frac{\abs{x_i-\alpha_tz_i}^2}{2\sigma_{t}^2}\right)^k \diff z_i.
\end{align*}

By the definition of $f_1$ and \eqref{equ::exp error d}, we have 
\begin{align}
\left| f_3(\xb,\yb,t)-f_1(\xb,\yb,t)\right|&\lesssim\left| \int_{\textbf{B}_{\xb,N}}q\left(\frac{\zb}{R}+1/2,\yb\right)\frac{1}{\sigma_t^{d}(2\pi)^{d/2}}N^{-\beta} \diff \zb \right| \nonumber\\
&\overset{(i)}{\lesssim}  \left| \int_{\textbf{B}_{\xb,N}}\left(\abs{p(\zb|\yb)}+BN^{-\beta}\log^{\frac{s}{2}} N\right)\frac{1}{\sigma_t^{d}(2\pi)^{d/2}} N^{-\beta} \diff \zb \right|\nonumber \\
&\le \left| \int_{\textbf{B}_{\xb,N}}\left(B+BN^{-\beta}\log^{\frac{s}{2}} N\right)\frac{1}{\sigma_t^{d}(2\pi)^{d/2}}N^{-\beta} \diff \zb \right|\nonumber \\
&{\lesssim}~  m(\textbf{B}_{\xb,N})\frac{B}{\sigma_t^d}N^{-\beta}\nonumber\\
&\overset{(ii)}{\lesssim} \min\left( \frac{1}{\alpha_t^d}, \frac{1}{\sigma_t^d} \right)BN^{-\beta} \log ^{\frac{d}{2}} N \nonumber\\
&\lesssim BN^{-\beta} \log ^{\frac{d}{2}} N. \label{equ::f3 f1}
\end{align}
For inequality (i), we invoke \eqref{equ:: local error}. For inequality (ii), we invoke $m(\textbf{B}_{\xb,N}) \lesssim \min\left( \frac{\sigma_t^d}{\alpha_t^d}, 1 \right)\log ^{\frac{d}{2}} N$, which can be obtained by the definition $\textbf{B}_{\xb,N}$ in \eqref{equ::BxN}.
Thus, adding up all the errors \eqref{equ::p f2}, \eqref{equ::f2f3} and \eqref{equ::f3 f1} gives rise to 
\begin{equation}\label{equ::poly error2}
    \left| p_t(\xb|\yb)-f_1(\xb,\yb,t)\right| \lesssim BN^{-\beta}\log^{\frac{d+s}{2}}N.
\end{equation}
We note that $f_1$ can be written as a linear combination of the following form of functions
\begin{align}\label{equ::Diffused local monomial}
       \dmono(\xb,\yb,t)= \left(\yb-\frac{\wb}{N}\right)^\npb \prod \limits_{j=1}^{d_y}\phi\left(3N\left(y_j-\frac{\wb}{N}\right)\right)  \prod\limits_{i=1}^{d} \sum_{k<p} g(x_i,n_i,v_i,k),
\end{align}
where
\begin{align}\label{equ::h}
    g(x,n,v,k)=\frac{1}{\sigma_{t}(2\pi)^{1/2}} \int \left(\frac{z}{R}+1/2-\frac{v}{N}\right)^{n} \frac{1}{k!}\left(-\frac{\abs{x-\alpha_tz}^2}{2\sigma_{t}^2}\right)^k \diff z.
\end{align}
We call $\dmono$ a \textbf{diffused local monomial} and then $f_1$ is a \textbf{diffused local polynomial} with at most $N^{d+d_y}(d+d_y)^s$ diffused local monomials, which is presented as
\begin{align}
    f_1(\xb,\yb,t)&=\sum_{\vb\in[N]^d,\wb\in[N]^{d_y}}\sum_{\norm{\nb}_1+\norm{\npb}_1\le s}\frac{1}{\nb!\npb!}\frac{\partial^{\nb+\npb} f}{\partial \xb^{\nb}\partial \yb^{\npb}} \bigg|_{\xb=\frac{\vb}{N},\yb=\frac{\wb}{N}} \dmono(\xb,\yb,t) \nonumber\\
    &=\sum_{\vb\in[N]^d,\wb\in[N]^{d_y}}\sum_{\norm{\nb}_1+\norm{\npb}_1\le s}\frac{R^{\norm{\nb}_1}}{\nb!\npb!}\frac{\partial^{\nb+\npb} p}{\partial \xb^{\nb}\partial \yb^{\npb}} \bigg|_{\xb=R\left(\frac{\vb}{N}-\frac{1}{2}\right),\yb=\frac{\wb}{N}} \dmono(\xb,\yb,t). \label{equ::decomposition of f1}
\end{align}
Thus, we complete the proof.
\end{proof}

In the following, we prove Lemma \ref{lemma::diffused localpoly approx1}. 
\begin{proof}[Proof of Lemma \ref{lemma::diffused localpoly approx1}]
Since we have
    \begin{equation*}
    \sigma_t \nabla p_t(\xb|\yb)=-\int_{\R^d}{\frac{\xb-\alpha_t\zb}{\sigma_{t}}p(\zb|\yb)\exp\left(-\frac{\norm{\xb-\alpha_t\zb}^2}{2\sigma_{t}^2}\right)\diff \zb},
\end{equation*}
when approximating the $i-$th element of the vector $ \sigma_t \nabla p_t(\xb|\yb)$, we can apply Lemma \ref{lemma::clipz} with $\vb=\mathbf{e}_i$ to confine the integral in a similar $\textbf{B}_{\xb,N}$ and completely follow the proof of Lemma \ref{lemma::diffused localpoly approx} to obtain the polynomial approximation. The only difference is that the degree of local polynomials increases by one, so the deviation between $f_2$ and $f_3$ becomes $BN^{-\beta}\log^{\frac{d+s+1}{2}}N$ instead of $BN^{-\beta}\log^{\frac{d+s}{2}}N$ in \eqref{equ::f2f3}. Thus, the approximation error also increases to $BN^{-\beta}\log^{\frac{d+s+1}{2}}N$.
\end{proof}

\subsection{Proofs of Lemmas \ref{lemma::relu approx diffused} and \ref{lemma::relu approx diffused1}}\label{sec::sub relu approx}
We only elaborate on the proof of Lemma \ref{lemma::relu approx diffused}, since the proof of Lemma \ref{lemma::relu approx diffused1} is completely the same. The main idea is to use a ReLU network to approximate the single diffused local monomial with a small error. We recall that the diffused local monomial is defined as 
\begin{align}\label{equ::diffused monomial restate}
   \dmono(\xb,\yb,t)= \left(\yb-\frac{\wb}{N}\right)^\npb \prod \limits_{j=1}^{d_y}\phi\left(3N\left(y_j-\frac{\wb}{N}\right)\right)  \prod\limits_{i=1}^{d} \sum_{k<p} g(x_i,n_i,v_i,k).
\end{align}
We can approximate each part of $\dmono$ with ReLU networks and combine them using a multiplication operator constructed by a ReLU network in Lemma \ref{lemma::product}.
\begin{lemma}[Implement $\phi(\cdot)$]\label{lemma::approx trapezoid}\label{lemma::trapezoid}
    The trapezoid function $\phi:\R \rightarrow \R$ can be exactly represented by a constant-sized ReLU network $\cF(4,3,1,9)$.
\end{lemma}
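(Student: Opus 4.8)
The plan is to exhibit $\phi$ as an exact, constant-size ReLU network and then verify the piecewise-linear identity by a short case analysis over the five slope regimes. The natural candidate is the one-hidden-layer network
\[
\phi(a) \;=\; \ReLU(a+2) - \ReLU(a+1) - \ReLU(a-1) + \ReLU(a-2),
\]
whose first affine map is $a \mapsto (a+2,\,a+1,\,a-1,\,a-2)^{\top}$ (a width-$4$ hidden layer with unit weights and biases $2,1,-1,-2$) and whose output layer takes the linear combination with coefficients $(1,-1,-1,1)$ and zero bias. The key observation making this work is that each shifted unit $\ReLU(a-c)$ contributes exactly one unit slope change at $a=c$, and the four coefficients above produce precisely the slope-change pattern $(+1,-1,-1,+1)$ of $\phi$ at the breakpoints $(-2,-1,1,2)$.

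To check correctness I would split $\R$ at the breakpoints $-2,-1,1,2$: for $a \le -2$ every ReLU argument is $\le 0$ so the output vanishes; for $a\in[-2,-1]$ only the first unit is active and the output equals $a+2 = 2-\abs{a}$; for $a\in[-1,1]$ the first two units are active and cancel to $(a+2)-(a+1)=1$; for $a\in[1,2]$ the first three are active, giving $(a+2)-(a+1)-(a-1)=2-a=2-\abs{a}$; and for $a\ge 2$ all four are active and sum to $0$. Continuity at the breakpoints is automatic since each $\ReLU$ is continuous, so the constructed map agrees with $\phi$ in \eqref{equ::trapezoid} on all of $\R$.

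It then remains only to read off the hyperparameters of this network against the architecture \eqref{equ::ReLU archi}: it has a single hidden layer of width $4$, all weights in $\{-1,0,1\}$ and all biases bounded in absolute value by a small constant, hence a constant magnitude bound, and a constant number of nonzero parameters; this places $\phi$ in the claimed constant-size class $\cF$. There is essentially no obstacle here — the representation is exact and finite — so the only care required is the bookkeeping of the nonzero-parameter count and the largest parameter magnitude to match the precise constants in the statement (a slightly more economical arrangement, e.g.\ exploiting the symmetry $\phi(a)=\phi(\abs{a})$ to first form $\abs{a}=\ReLU(a)+\ReLU(-a)$ and then apply $\min(1,\ReLU(2-\abs{a}))$ via $1-\ReLU(1-\ReLU(2-\abs{a}))$, can be substituted if one wants to tighten these constants).
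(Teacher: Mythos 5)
Your proof is correct and follows essentially the same route as the paper: an exact one-hidden-layer, width-$4$ ReLU representation of $\phi$ verified by a case check over the breakpoints (the paper uses the equivalent combination $\phi(x)=3-\sigma(x+1)-\sigma(-x+1)+\sigma(x-2)+\sigma(-x-2)$, yours places the four units at $\pm1,\pm2$ with output weights $(1,-1,-1,1)$ and no bias, which is the same idea). The exact bookkeeping constants in $\cF(4,3,1,9)$ are not matched literally by either construction, but both are constant-sized with parameter magnitudes at most $3$, which is all that is used downstream.
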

\begin{proof}[Proof of Lemma \ref{lemma::approx trapezoid}]
    This can be verified by the fact that $\phi(x)=3-\sigma(x+1)-\sigma(-x+1)+\sigma(x-2)+\sigma(-x-2)$, where $\sigma$ is the ReLU activation function. Thus by setting $$A_1=[1,-1,1,-1]^{\top},~ \bb_1=[1,1,-2,-2]^{\top},~A_2=[-1,-1,1,1], \text{ and } \bb_2=[3],$$ we have $(A_2\sigma(\cdot)+\bb_2)\circ (A_1 x +\bb_1)=\phi(\cdot)$.
    The proof is complete.
\end{proof}

\begin{lemma}[Approximate $g(\cdot)$ in \eqref{equ::h}]\label{lemma::approx h}
    Given $N>0$, $C_x>0$, there exists a ReLU network $\cF(W,\kappa,L,K)$ such that for any $n\le s$, $v\le N$, $k\le p$ and $\epsilon >0$, this network gives rise to a function $g^{\trelu}(x,n,v,k)$ such that
    \begin{align*}
        \abs{g^{\trelu}(x,n,v,k)-g(x,n,v,k)}\le \epsilon ~~\text{ for any } x\in\left[-C_x\sqrt{\log N},C_x\sqrt{\log N}\right].
    \end{align*}
    The hyperparameter of the network satisfies
  \begin{align*}
    & W = {\cO}\left(\log^6 N +\log^3 \epsilon^{-1} \right),
      \kappa =\exp \left({\cO}(\log^4 N+\log^2 \epsilon^{-1})\right),\\
      ~&L = {\cO}(\log^4 N+\log^2 \epsilon^{-1}),~
     K= {\cO}\left(\log^8 N+\log^4 \epsilon^{-1}\right).
\end{align*}
\end{lemma}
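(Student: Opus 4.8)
The plan is to build $g^{\trelu}$ by unwinding the definition of $g$ in \eqref{equ::h} into a fully explicit polynomial in the variables $x$, $\alpha_t$, $\sigma_t^{-1}$, and then invoking a standard library of ReLU sub-networks (monomial multiplication from Lemma~\ref{lemma::product}, and the polynomial-approximation machinery used throughout the diffusion-model literature, e.g.\ \cite{oko2023diffusion,chen2022nonparametric}) to implement that polynomial with controlled error. First I would carry out the inner integral in \eqref{equ::h} analytically: expanding $\left(\frac{z}{R}+1/2-\frac{v}{N}\right)^n$ by the binomial theorem and $\left(-\frac{|x-\alpha_t z|^2}{2\sigma_t^2}\right)^k$ as a polynomial of degree $2k$ in $z$, the integrand becomes a polynomial in $z$ of degree at most $n+2k \le s + 2p = \cO(\log N)$ times the implicit Gaussian weight that defines the integration domain $B_{v,n,x}$ (see \eqref{equ::domain inte}). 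Each term $\int z^m \diff z$ over the interval $B_{v,n,x}$ evaluates to a difference of powers of the two endpoints, each endpoint being an affine function of $x$ with coefficients that are rational functions of $\alpha_t$ and $\sigma_t$ (namely $\frac{x \pm \sigma_t C(0,d)\sqrt{\beta \log N}}{\alpha_t}$ clipped against $\pm C(0,d)\sqrt{\beta\log N}$). Hence $g(x,n,v,k)$ is, up to the prefactor $\frac{1}{\sigma_t(2\pi)^{1/2}}$, a polynomial in $x$, $\alpha_t^{-1}$, $\sigma_t^{-1}$ (and the clipping introduces at most a constant number of $\min/\max$ operations, each exactly representable by a two-layer ReLU network).

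Next I would bound the magnitude of all quantities on the stated domain: for $x \in [-C_x\sqrt{\log N}, C_x\sqrt{\log N}]$ and $t \in [N^{-C_\sigma}, C_\alpha \log N]$ we have $\sigma_t^{-1} \le N^{C_\sigma/2}$ and $\alpha_t^{-1} \le N^{C_\alpha/2}$ (polynomially bounded in $N$), while the degree in each variable is $\cO(\log N)$; therefore every monomial appearing in $g$ has magnitude at most $\exp(\cO(\log^2 N))$, and there are at most $\mathrm{poly}(\log N)$ such monomials. I would then approximate $\alpha_t = e^{-t/2}$ and $\sigma_t^{-1} = (1-e^{-t})^{-1/2}$ as functions of $t$ by ReLU networks to accuracy $\epsilon'$ — these are smooth on $[t_0, T]$ with derivatives controlled by $t_0^{-1} = N^{C_\sigma}$, so a network of depth $\cO(\log^2 N + \log^2 \epsilon'^{-1})$ and size $\mathrm{poly}(\log N, \log \epsilon'^{-1})$ suffices by the classical ReLU approximation of smooth/analytic functions (this is exactly the kind of estimate used in \cite{oko2023diffusion} for the $\sigma_t$, $\alpha_t$ factors). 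Feeding these approximations, together with the identity map for $x$, into a tree of the multiplication gadgets from Lemma~\ref{lemma::product} (of depth $\cO(\log(n+2k)) = \cO(\log\log N)$ per monomial) and summing the $\mathrm{poly}(\log N)$ monomials, I obtain $g^{\trelu}$. Propagating errors through the product tree — using that each factor is bounded by $\exp(\cO(\log^2 N))$ so that a perturbation of size $\delta$ in one factor perturbs a product of $\cO(\log N)$ such factors by at most $\exp(\cO(\log^2 N))\cdot \delta$ — shows that to achieve final accuracy $\epsilon$ it suffices to take the per-factor accuracy $\epsilon' = \epsilon \cdot \exp(-\cO(\log^2 N))$, i.e.\ $\log \epsilon'^{-1} = \cO(\log^2 N + \log \epsilon^{-1})$. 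Tracking the resulting depth $L = \cO(\log^2 N \cdot \log \epsilon'^{-1}) = \cO(\log^4 N + \log^2 \epsilon^{-1})$, width $W = \cO(\log^6 N + \log^3 \epsilon^{-1})$, weight magnitude $\kappa = \exp(\cO(\log^4 N + \log^2 \epsilon^{-1}))$, and sparsity $K = \cO(\log^8 N + \log^4 \epsilon^{-1})$ then matches the claimed hyperparameters.

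The main obstacle I expect is the careful error/complexity bookkeeping rather than any single conceptual step: the clipped integration domain $B_{v,n,x}$ makes $g$ a \emph{piecewise} polynomial in $x$ (with breakpoints depending on $v$, $N$, $\alpha_t$, $\sigma_t$), so one must either argue that on the relevant range of $x$ only one piece is active, or implement the $\min/\max$ clipping exactly and then control how the clipping interacts with the subsequent polynomial evaluation; and one must verify that the huge weight magnitudes $\kappa = \exp(\cO(\log^4 N))$ are genuinely needed only because of the $\sigma_t^{-1}$ powers and the $1/k!$ coefficients, not because of an error blow-up that degrades the final rate. A secondary subtlety is that the $\frac{1}{k!}$ Taylor coefficients, while small, multiply monomials of large magnitude, so the summation must be organized (as in \eqref{equ::exp error}, using $p! \ge (p/3)^p$) to keep the accumulated constant sub-$N^\beta$; I would reuse the estimates already established in the proof of Lemma~\ref{lemma::diffused localpoly approx} verbatim here. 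Everything else is an assembly of off-the-shelf ReLU-network gadgets.
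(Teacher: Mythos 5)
Your proposal is correct and follows essentially the same route as the paper's proof: evaluate the integral in \eqref{equ::h} in closed form (binomial expansion plus term-by-term integration, yielding powers of the clipped endpoints of $B_{v,n,x}$ as affine functions of $x$ with coefficients rational in $\alpha_t,\sigma_t$), then assemble ReLU gadgets for products, inverses, powers, clipping, and the $\alpha_t,\sigma_t$ maps, with the same error-propagation bookkeeping (per-factor accuracy $\epsilon\cdot\exp(-\cO(\log^2 N))$) that yields the stated $(W,\kappa,L,K)$. The paper merely organizes the computation after the change of variables $w=(x-\alpha_t z)/\sigma_t$ before expanding, which is an immaterial difference.
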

The proof of Lemma \ref{lemma::approx h} is provided in Appendix \ref{sec::sub other lemmas}. With all the lemmas above, we begin our proof of Lemma \ref{lemma::relu approx diffused}.
\begin{proof}[Proof of Lemma \ref{lemma::relu approx diffused}
]
    From Lemma \ref{lemma::approx h}, for any $i \in[d]$, $n \in [s]$, $v \in [N]$ and $k \le p$, we can substitute $g(x_i,n_i,v_i,k)$ with the corresponding ReLU approximator $g^{\trelu}(x_i,n_i,v_i,k)$ in the expression of diffused local monomial \eqref{equ::diffused monomial restate}. To be specific, we aggregate $g^{\trelu}(x_i,n_i,v_i,k)$ along $k$ to approximate $\sum_{k<p}g(x_i,n_i,v_i,k)$. Then we multiply $$\left(\yb-\frac{\wb}{N}\right)^\npb, 
 \set{\phi\left(3N\left(y_j-\frac{\wb}{N}\right)\right)}_{j=1}^{d_y}~ \text{ and }~ \set{\sum_{k<p}g^{\trelu}(x_i,n_i,v_i,k)}_{i=1}^d$$ together using $f_{\text{mult}}$ in Lemma \ref{lemma::product} to get a series of ReLU network functionss $$\cD^{\trelu}=\set{\dmono^{\trelu}:\norm{\nb} +\norm{\npb} \le s,\vb \in [N]^{d},\wb \in [N]^{d_y}} $$ such that 
 \begin{align*}
     \abs{\dmono(\xb,\yb,t)-\dmono^{\trelu}(\xb,\yb,t)}\le \frac{s!\epsilon}{(d+d_y)^{s}R^{s}N^{d+d_y}},
 \end{align*}
for any $\xb \in  [-C_x\sqrt{\log N},C_x\sqrt{\log N}]^{d}$, $\yb\in[0,1]^{d_y}$ and $t\in[N^{-C_{\sigma}},C_{\alpha}\log N]$ to approximate $\dmono$. Details about how to determine the network size and the error propagation are deferred to Appendix \ref{sec::f1 relu}.

 At last, we derive a linear combination of these ReLU network functions in $\cD^{\trelu}$ to get an approximation for $f_1$, which is presented as 
 \begin{align*}
f^{\trelu}_1(\xb,\yb,t)=\sum_{\vb\in[N]^d,\wb\in[N]^{d_y}}\sum_{\norm{\nb}_1+\norm{\npb}_1\le s}\frac{R^{\norm{\nb}_1}}{\nb!\npb!}\frac{\partial^{\nb+\npb} p}{\partial \xb^{\nb}\partial \yb^{\npb}} \bigg|_{\xb=R\left(\frac{\vb}{N}-\frac{1}{2}\right),\yb=\frac{\wb}{N}} \dmono^{\trelu}(\xb,\yb,t).
 \end{align*}
 
 From the choice of hyperparameters in Lemma \ref{lemma::approx h} and our process of constructing these ReLU network functions 
 above, we know that $f_1^{\trelu}(\xb,\yb,t) \in \cF( W, \kappa, L, K) $ with 
\begin{align*}
& W = {\cO}\left(N^{d+d_y}(\log^7 N+\log N\log^3 \epsilon^{-1})\right), \quad \kappa =\exp \left({\cO}(\log^4 N+\log^2 \epsilon^{-1})\right), \\
& \qquad L = {\cO}(\log^4 N+\log^2 \epsilon^{-1}), \quad K = {\cO}\left(N^{d+d_y}(\log^9 N+\log N\log^3 \epsilon^{-1})\right).
\end{align*}
and satisfies that for any $x \in  [-C_x\sqrt{\log N},C_x\sqrt{\log N}]$, $\yb\in[0,1]^{d_y}$ and $t\in[N^{-C_{\sigma}},C_{\alpha}\log N]$,
\begin{align*}
     \abs{f_1(\xb,\yb,t)-f_1^{\trelu}(\xb,\yb,t)}&\le \sum_{\vb\in[N]^d,\wb\in[N]^{d_y}}\sum_{\norm{\nb}_1+\norm{\npb}_1\le s}\frac{R^{\norm{\nb}_1}}{\nb!\npb!}\cdot \frac{s!\epsilon}{(d+d_y)^{s}R^{s}N^{d+d_y}}\\
     &\le \frac{(d+d_y)^{s}R^{s}N^{d+d_y}}{s!}\cdot \frac{s!\epsilon}{(d+d_y)^{s}R^{s}N^{d+d_y}} =\epsilon.
\end{align*}    
The proof is complete.
\end{proof}

\subsection{Proofs in Steps 1 and 2 for Theorem~\ref{thm::score approx}}\label{pf:score approx steps12}
\subsubsection{Proof of Lemma \ref{lemma::truncation x}}\label{sec::lemma::truncation x}

\begin{proof}
By applying Lemmas \ref{lemma::density bound} and \ref{lemma::score bound}, we have
\begin{align*}
& \quad \int_{\norm{\xb}\ge R} \norm{\nabla\log p_{t}(\xb|\yb)}^2 p_{t}(\xb|\yb)\diff \xb \\
&\lesssim \frac{C_5^2}{\sigma_t^4}\frac{C_1}{(\alpha_t^2+C_2\sigma_t^2)^{d/2}}  \int_{\norm{\xb}\ge R}  (\norm{\xb}^2_2+1)\exp\left(\frac{-C_2\norm{\xb}^2_2}{2(\alpha_t^2+C_2\sigma_t^2)}\right) \diff \xb\\
&\lesssim \frac{1}{\sigma_t^4}\left(\frac{R^3}{3(\alpha_t^2+C_2\sigma_t^2)^{3/2}}+\frac{R}{(\alpha_t^2+C_2\sigma_t^2)^{1/2}}\right)\exp\left(\frac{-C_2R^2}{2(\alpha_t^2+C_2\sigma_t^2)}\right)\\
&\lesssim \frac{1}{\sigma_t^4}R^3\exp(-C_2^{\prime}R^2),
\end{align*}
where $C^{\prime}_2=\min_{t>0}\frac{C_2}{2(\alpha_t^2+C_2\sigma_t^2)}=\frac{C_2}{2\max(C_2,1)}$. Similarly, we have
\begin{align*}
    \int_{\norm{\xb}\ge R}  p_{t}(\xb|\yb)\diff \xb 
    &\lesssim \frac{C_1}{(\alpha_t^2+C_2\sigma_t^2)^{d/2}}  \int_{\norm{\xb}\ge R}  \exp\left(\frac{-C_2\norm{\xb}^2_2}{2(\alpha_t^2+C_2\sigma_t^2)}\right) \diff \xb\\
    &\lesssim \frac{R}{(\alpha_t^2+C_2\sigma_t^2)^{1/2}}\exp\left(\frac{-C_2R^2}{2(\alpha_t^2+C_2\sigma_t^2)}\right)\\
    &\lesssim R\exp(-C_2^{\prime}R^2).
\end{align*}
The proof is complete.
\end{proof}
\subsubsection{Proof of Lemma \ref{lemma::truncation p}.} \label{sec::proof lemma truncation p}
\begin{proof}
By Lemma \ref{lemma::score bound}, for any $\epslow>0$ we have
\begin{align*}
    &\quad\int_{\norm{\xb}_{\infty}\le R} \indic{\abs{p_{t}(\xb|\yb)}<\epslow}\norm{\nabla\log p_{t}(\xb|\yb)}^2 p_{t}(\xb|\yb)\diff \xb\\
    &\le \int_{\norm{\xb}_{\infty}\le R} \epslow\norm{\nabla\log p_{t}(\xb|\yb)}^2 \diff \xb
    \\ &\le \int_{\norm{\xb}_{\infty}\le R} \epslow\left(\frac{C_5}{\sigma_t^2} (\norm{\xb}+1)\right)^2 \diff \xb
    \\&\lesssim \frac{\epslow}{\sigma_t^4}R^{d+2}.
\end{align*}
and 
\begin{align*}
    \int_{\norm{\xb}_{\infty}\le R} \indic{\abs{p_{t}(\xb|\yb)}<\epslow} p_{t}(\xb|\yb)\diff \xb
    &\le \int_{\norm{\xb}_{\infty}\le R} \epslow \diff \xb\lesssim R^{d} \epslow.
\end{align*}
The proof is complete.
\end{proof}

\subsection{Proofs of Further Supporting Lemmas}\label{sec::sub other lemmas}
In this section, we will prove Lemmas \ref{lemma::clipz}, \ref{lemma::density bound}, \ref{lemma::score bound} and
\ref{lemma::approx h}. 

\subsubsection{Proof of Lemma \ref{lemma::clipz}}\label{sec::Proof of lemma clipz}
\begin{proof}
We first decompose the domain of integration $\R^d \backslash \textbf{B}_x$ into a cartesian product of $d$ univariate domains.
We define 
    \begin{align*}
        B_x^i&= \left[\frac{x_i-\sigma_tC(n,d)\sqrt{\log\epsilon^{-1}}}{\alpha_t},\frac{x_i+\sigma_tC(n,d)\sqrt{\log\epsilon^{-1}}}{\alpha_t}\right] \cap \left[-C(n,d)\sqrt{\log\epsilon^{-1}},C(n,d)\sqrt{\log\epsilon^{-1}}\right]\\
       & =:B^{i}_{x,1} \cap B^{i}_{x,2}.
    \end{align*}
By the definition of $\textbf{B}_x$ and $ B_x^i$, it holds that $\R^d \backslash \textbf{B}_x \subseteq \bigcup\limits_{i=1}^n \left(\R \times ... \times (\R \backslash B_x^i) \times ...\times \R\right).$ Thus, we have
    \begin{align*}
       &\quad \int_{\R^d \backslash \textbf{B}_x} \left|\left(\frac{\alpha_t\zb-\xb}{\sigma_t}\right)\right|^{\mathbf{v}}p(\zb|\yb)\frac{1}{\sigma_t^{d}(2\pi)^{d/2}}\exp\left(-\frac{\norm{\alpha_t\zb-\xb}^2}{2\sigma_t^2}\right)\diff \zb \\
        &\le \sum_{i=1}^d \int_{\R \times ... \times (\R \backslash B_x^i) \times ...\times \R} \left|\left(\frac{\alpha_t\zb-\xb}{\sigma_t}\right)^{\mathbf{v}}\right|p(\zb|\yb)\frac{1}{\sigma_t^{d}(2\pi)^{d/2}}\exp\left(-\frac{\norm{\alpha_t\zb-\xb}^2}{2\sigma_t^2}\right)\diff \zb\\
        &\le C_1\sum_{i=1}^d \prod_{j\neq i} \underbrace{ \int_{\R} \left|\frac{\alpha_t z_j-x_j}{\sigma_t}\right|^{v_j}\exp(-C_2z_j^2/2)\frac{1}{\sigma_t(2\pi)^{1/2}}\exp\left(-\frac{\abs{\alpha_tz_j-x_j}^2}{2\sigma_t^2}\right)\diff z_j}_{A_{i,j}}\\
        &\qquad \times \underbrace{ \int_{\R \backslash B_x^i} \left|\frac{\alpha_t z_i-x_i}{\sigma_t}\right|^{v_i}\exp(-C_2z_i^2/2)\frac{1}{\sigma_t(2\pi)^{1/2}}\exp\left(-\frac{\abs{\alpha_tz_i-x_i}^2}{2\sigma_t^2}\right)\diff z_i}_{A_i},
    \end{align*}
    where in the last inequality, we invoke the sub-Gaussian tail condition in Assumption \ref{assump:sub} to bound $p(\zb|\yb)$ and decompose the integral according to coordinates. It remains to bound $A_{i,j}$ and $A_i$, respectively. For the term $A_{i, j}$, we have
    \begin{align}
    A_{i,j} & \le \frac{1}{\sigma_t(2\pi)^{1/2}} \int_{\R} (\frac{v_j}{e})^{v_j/2} \exp(-C_2z_j^2/2) \diff z_j  \lesssim \frac{1}{\sigma_t}, \label{equ::Aij bound1}
    \end{align}
    where the first inequality follows from
    \begin{align*}
        \left|\frac{\alpha_t z-x}{\sigma_t}\right|^{v}\exp\left(-\frac{\abs{\alpha_tz-x}^2}{2\sigma_t^2}\right)\le \left(\frac{v}{e}\right)^{v/2}, ~~~\forall z,x \in \RR^{d}.
    \end{align*}
    At the same time, we have
    \begin{align}
        A_{i,j}  &\le  \int_{\R} \left|\frac{\alpha_t z_j-x_j}{\sigma_t}\right|^{v_j}\frac{1}{\sigma_t(2\pi)^{1/2}}\exp\left(-\frac{\abs{\alpha_tz_j-x_j}^2}{2\sigma_t^2}\right)\diff z_j \nonumber\\
        &\lesssim  \int_{\R} \frac{w^{v_j}}{\alpha_t (2\pi)^{1/2}} \exp(-w^2/2) \diff w  \lesssim \frac{1}{\alpha_t}. \label{equ::Aij bound2}
    \end{align} 
    Combining \eqref{equ::Aij bound1} and \eqref{equ::bound A1 1} yields
    \begin{align*}
    A_{i, j} \lesssim \min\left(\frac{1}{\sigma_t},\frac{1}{\alpha_t}\right) = \cO(1).
    \end{align*}
    For term $A_{i}$, we bound by
    \begin{align}
        A_i&=\int_{\R \backslash B^{i}_x} \left|\frac{\alpha_t z_i-x_i}{\sigma_t}\right|^{v_i}\exp(-C_2z_i^2/2)\frac{1}{\sigma_t(2\pi)^{1/2}}\exp\left(-\frac{\abs{\alpha_tz_i-x_i}^2}{2\sigma_t^2}\right)\diff z_i\nonumber \\ 
   &\le \underbrace{\int_{\R \backslash B^{i}_{x,1}} \left|\frac{\alpha_t z_i-x_i}{\sigma_t}\right|^{v_i}\exp(-C_2 z_{i}^2/2)\frac{1}{\sigma_t(2\pi)^{1/2}}\exp\left(-\frac{\abs{\alpha_t z_{i}-x_{i}}^2}{2\sigma_t^2}\right) \diff z_{i} }_{(\spadesuit)}\nonumber\\
             &\quad+ \underbrace{\int_{\R \backslash B^{i}_{x,2} } \left|\frac{\alpha_t z_i-x_i}{\sigma_t}\right|^{v_i}\exp(-C_2 z_i^2/2)\frac{1}{\sigma_t(2\pi)^{1/2}}\exp\left(-\frac{\abs{\alpha_tz_i-x_i}^2}{2\sigma_t^2}\right)\diff z_i}_{(\clubsuit)} \nonumber.
    \end{align}
    Now we deal with $(\spadesuit)$ and $(\clubsuit)$ using the same technique. Note that when $z_i \in \R \backslash B^{i}_{1,x}$, we have
   \begin{equation}\label{equ::bound region}
       \left|\frac{\alpha_t z_i-x_i}{\sigma_t}\right|>C(n,d)\sqrt{\log \epsilon^{-1}} .
   \end{equation}
   By setting $C(n,d) \ge \norm{\vb}_1$ and $\epsilon<\frac{1}{e}$, we obtain that when $\left|\frac{\alpha_t z_i-x_i}{\sigma_t}\right|\ge v_i$, 
    $$ \left|\frac{\alpha_t z_i-x_i}{\sigma_t}\right|^{v_i}\exp\left(-\frac{\abs{\alpha_tz_i-x_i}^2}{2\sigma_t^2}\right)$$
    decreases as $\left|\frac{\alpha_t z_i-x_i}{\sigma_t}\right|$ increases. Therefore, inequality \eqref{equ::bound region} leads to
\begin{align}
    (\spadesuit) &\le \frac{1}{\sigma_t (2\pi)^{1/2}} \int_{\R \backslash B^{i}_{1,x}} C(n,d)^{v_i}\left(\log \epsilon^{-1}\right)^{v_i/2}\epsilon^{\frac{C(n,d)^2}{2}}\exp(-C_2z_i^2/2) \diff z_i \nonumber\\
    &\lesssim \frac{C(n,d)^{v_i}\left(\log \epsilon^{-1}\right)^{v_i/2}\epsilon^{\frac{C(n,d)^2}{2}}}{\sigma_t}. \label{equ::bound A1 1}
\end{align}
Meanwhile, we also have
\begin{align}
    (\spadesuit) &\le \int_{\R \backslash B^{i}_{x,1}} \left|\frac{\alpha_t z_i-x_i}{\sigma_t}\right|^{v_i}\frac{1}{\sigma_t(2\pi)^{1/2}}\exp\left(-\frac{\abs{\alpha_tz_i-x_i}^2}{2\sigma_t^2}\right)\diff z_i \nonumber \\
    &=\frac{1}{\alpha_t (2\pi)^{1/2}}\int_{|w|>C(n,d)\sqrt{\log \epsilon^{-1}}} w^{v_i}\exp(-w^2/2) \diff w \nonumber \\
    &\lesssim \frac{1}{\alpha_t (2\pi)^{1/2}} C(n,d)^{v_i+2}\left(\log \epsilon^{-1}\right)^{(v_i+2)/2}\epsilon^{\frac{C(n,d)^2}{2}}. \label{equ::bound A1 2}
\end{align}
Combining \eqref{equ::bound A1 1} and \eqref{equ::bound A1 2}, we deduce
\begin{equation*}
    (\spadesuit) \lesssim C(n,d)^{v_i+2}\left(\log \epsilon^{-1}\right)^{(v_i+2)/2}\epsilon^{\frac{C(n,d)^2}{2}}.
\end{equation*}
The term $(\clubsuit)$ assumes analogous upper bounds in the following:
\begin{align*}
    (\clubsuit) &\le \epsilon^{\frac{C_2C(n,d)^2}{2}} \int_{\R }\frac{1}{\alpha_t(2\pi)^{1/2}} w^{v_i}\exp(-w^2/2) \diff w \lesssim \frac{1}{\alpha_t}\epsilon^{\frac{C_2C(n,d)^2}{2}}
\end{align*}
and 
\begin{align*}
    (\clubsuit) &\le \int_{\R \backslash B^{i}_{x,2}}\frac{1}{\sigma_t(2\pi)^{1/2}} \left(\frac{v_i}{e}\right)^{v_i} \exp(-C_2z_i^2/2) \diff z_i \lesssim \frac{1}{\sigma_t} \left(\log \epsilon^{-1}\right)^{1/2}\epsilon^{\frac{C_2C(n,d)^2}{2}},
\end{align*}
Taking minimum over the upper bounds above, we derive
\begin{align*}
    (\clubsuit) \lesssim C(n,d)\left(\log \epsilon^{-1}\right)^{1/2}\epsilon^{\frac{C_2C(n,d)^2}{2}}.
\end{align*}
Adding up $(\spadesuit)$ and $(\clubsuit)$, we obtain
    \begin{align*}
        &A_i \leq (\spadesuit)+(\clubsuit)\lesssim C(n,d)^{v_i+2}\left(\log \epsilon^{-1}\right)^{(v_i+2)/2}\epsilon^{\frac{C(n,d)^2}{2}} +C(n,d)\left(\log \epsilon^{-1}\right)^{1/2}\epsilon^{\frac{C_2C(n,d)^2}{2}}.
    \end{align*}
Setting the constant $C(n,d)$ sufficiently large, we ensure that $A_i \le \epsilon$. The proof is complete by taking the product of $A_{i, j}$ and $A_{i}$.
\end{proof}

\subsubsection{Proof of Lemma \ref{lemma::density bound}}\label{sec::Proof of Lemma density bound}
\begin{proof}
For the upper bound of the diffused density function, we have
    \begin{align*}
        p_t(\xb |\yb)&=\frac{1}{\sigma_{t}^d(2\pi)^{d/2}}\int{p(\zb|\yb)\exp\left(-\frac{\norm{\xb-\alpha_t\zb}^2}{2\sigma_{t}^2}\right)\diff \zb}\\
        &\le \frac{C_1}{\sigma_{t}^d(2\pi)^{d/2}}\int{\exp(-C_2\norm{\zb}_2^2/2)\exp\left(-\frac{\norm{\xb-\alpha_t\zb}^2}{2\sigma_{t}^2}\right)\diff \zb}\\
        &=\frac{C_1}{\sigma_{t}^d(2\pi)^{d/2}}\exp\left(\frac{-C_2\norm{\xb}^2_2}{2(\alpha_t^2+C_2\sigma_t^2)}\right)\int{\exp\left(-\frac{\norm{\zb-\alpha_t\xb/(\alpha_t^2+C_2\sigma_t^2)}^2}{2\sigma_{t}^2/(\alpha_t^2+C_2\sigma_t^2)}\right)\diff \zb}\\
        &=\frac{C_1}{(\alpha_t^2+C_2\sigma_t^2)^{d/2}}\exp\left(\frac{-C_2\norm{\xb}^2_2}{2(\alpha_t^2+C_2\sigma_t^2)}\right),
    \end{align*}
where we invoke assumption \ref{assump:sub} to bound $p(\zb|\yb)$ in the first inequality. For the lower bound, we have
    \begin{align*}
        p_t(\xb |\yb)&=\frac{1}{\sigma_{t}^d(2\pi)^{d/2}}\int{p(\zb|\yb)\exp\left(-\frac{\norm{\xb-\alpha_t\zb}^2}{2\sigma_{t}^2}\right)\diff \zb}\\
        &\ge \frac{1}{\sigma_{t}^d(2\pi)^{d/2}}\int_{{\norm{\zb}_2\le R}}{p(\zb|\yb)\exp\left(-\frac{\norm{\xb-\alpha_t\zb}^2}{2\sigma_{t}^2}\right)\diff \zb}\\
        &\ge \frac{1}{\sigma_{t}^d(2\pi)^{d/2}}\int_{\norm{\zb}_2\le R}{p(\zb|\yb)\exp\left(-\frac{2(\norm{\xb}_2^2+\alpha_t^2R^2)}{2\sigma_{t}^2}\right)\diff \zb}\\
        &\ge \frac{1}{\sigma_{t}^d(2\pi)^{d/2}}\exp\left(-\frac{\norm{\xb}_2^2+\alpha_t^2R^2}{\sigma_{t}^2}\right) \Pr_{\zb \sim p}[\norm{\zb}_2\le R]\\
        &\ge \frac{C_4}{\sigma_{t}^d} \exp\left(-\frac{\norm{\xb}_2^2+1}{\sigma_{t}^2}\right) ,
    \end{align*}
where we take $R=1$ and $C_4=\Pr_{\zb \sim p}[\norm{\zb}\le 1]/(2\pi)^{d/2}$ in the last inequality. 

Now we consider bounding the gradient. By symmetry, we only need to bound the first element of $\nabla p_t(\xb |\yb)$, i.e. 
\[
\abs{\nabla p_t(\xb |\yb)_{1}}=\frac{1}{\sigma_{t}^d(2\pi)^{d/2}}\left|\int{\frac{x_1-\alpha_tz_1}{\sigma_{t}^2}p(\zb|\yb)\exp\left(-\frac{\norm{\xb-\alpha_t\zb}^2}{2\sigma_{t}^2}\right)\diff \zb}\right|.
\]
We have 
\begin{align}
      & \quad \abs{\nabla p_t(\xb |\yb)_{1}} \nonumber \\
      &\le \frac{C_1}{\sigma_{t}^d(2\pi)^{d/2}}\int{\left|\frac{x_1-\alpha_tz_1}{\sigma_{t}^2}\right|\exp(-C_2\norm{\zb}_2^2/2)\exp\left(-\frac{\norm{\xb-\alpha_t\zb}^2}{2\sigma_{t}^2}\right)\diff \zb}\nonumber \\
        &=\frac{C_1}{\sigma_{t}^d(2\pi)^{d/2}}\exp\left(\frac{-C_2\norm{\xb}^2_2}{2(\alpha_t^2+C_2\sigma_t^2)}\right)\underbrace{\int{\left|\frac{x_1-\alpha_tz_1}{\sigma_{t}^2}\right|\exp\left(-\frac{\norm{\zb-\alpha_t\xb/(\alpha_t^2+C_2\sigma_t^2)}^2}{2\sigma_{t}^2/(\alpha_t^2+C_2\sigma_t^2)}\right)\diff \zb}}_{D}. \label{equ::D bound}
\end{align}
For term $D$, we have
\begin{align*}
    D&\le\int{ \frac{\alpha_t}{\sigma_t\sqrt{\alpha^2_t+C_2\sigma^2_t}} \abs{\frac{z_1-\alpha_t x_1/(\alpha_t^2+C_2\sigma_t^2)}{\sigma_t /\sqrt{\alpha^2_t+C_2\sigma^2_t}}}     \exp\left(-\frac{\norm{\zb-\alpha_t\xb/(\alpha_t^2+C_2\sigma_t^2)}^2}{2\sigma_{t}^2/(\alpha_t^2+C_2\sigma_t^2)}\right)\diff \zb} \\
    &\quad +\int{ \frac{C_2\sigma^2_t \left|x_1\right|}{\sigma^2_t(\alpha^2_t+C_2\sigma^2_t)}    \exp\left(-\frac{\norm{\zb-\alpha_t\xb/(\alpha_t^2+C_2\sigma_t^2)}^2}{2\sigma_{t}^2/(\alpha_t^2+C_2\sigma_t^2)}\right)\diff \zb}\\
    &= \frac{(2\pi)^{d/2}\sigma^d_t}{(\alpha_t^2+C_2\sigma_t^2)^{d/2}} \left(\frac{\alpha_t}{\sigma_t\sqrt{\alpha^2_t+C_2\sigma^2_t}}\int \frac{1}{\sqrt{2\pi}}\abs{w}\exp(-w^2/2)\diff w+\frac{C_2\abs{x_1}}{\alpha^2_t+C_2\sigma^2_t} \right)\\
    &\le \frac{(2\pi)^{d/2}\sigma^d_t}{(\alpha_t^2+C_2\sigma_t^2)^{d/2}} \left(\frac{\alpha_t}{\sigma_t\sqrt{\alpha^2_t+C_2\sigma^2_t}}+\frac{C_2\abs{x_1}}{\alpha^2_t+C_2\sigma^2_t} \right),
\end{align*}
where the second inequality follows from the fact that $\int \abs{w}\exp(-w^2/2)\diff w\le \sqrt{2\pi}$ . Plugging this result into \eqref{equ::D bound} gives rise to
\begin{equation}
    \abs{\nabla p_t(\xb |\yb)_{1}}\le \frac{C_1}{(\alpha_t^2+C_2\sigma_t^2)^{d/2}}\exp\left(\frac{-C_2\norm{\xb}^2_2}{2(\alpha_t^2+C_2\sigma_t^2)}\right) \left(\frac{\alpha_t}{\sigma_t\sqrt{\alpha^2_t+C_2\sigma^2_t}}+\frac{C_2\abs{x_1}}{\alpha^2_t+C_2\sigma^2_t} \right).
\end{equation}
Thus, by repeating this proof to each element of $\nabla p_t(\xb,\yb)$, we have
\begin{align*}
    \norm{\nabla p_t(\xb,\yb)}_{\infty} \le \frac{C_1}{(\alpha_t^2+C_2\sigma_t^2)^{d/2}}\exp\left(\frac{-C_2\norm{\xb}^2_2}{2(\alpha_t^2+C_2\sigma_t^2)}\right) \left(\frac{\alpha_t}{\sigma_t\sqrt{\alpha^2_t+C_2\sigma^2_t}}+\frac{C_2\norm{\xb}_{\infty}}{\alpha^2_t+C_2\sigma^2_t} \right).
\end{align*}
We complete our proof.
\end{proof}
\subsubsection{Proof of Lemma \ref{lemma::score bound}}\label{sec::Proof of Lemma score bound}
\begin{proof}
By symmetry, we only consider the first element of $\nabla \log p_t(\xb |\yb)$, i.e.,
\[
\nabla \log p_t(\xb |\yb)_{1}=\dfrac{-\int_{\R^d}{\frac{x_1-\alpha_t z_1}{\sigma_{t}^2}p(\zb|\yb)\exp\left(-\frac{\norm{\xb-\alpha_t\zb}^2}{2\sigma_{t}^2}\right)\diff z}}{\int_{\R^d}{p(\zb|\yb)\exp(-\frac{\norm{\xb-\alpha_t \zb}^2}{2\sigma_{t}^2})d z}}.
\] 
Invoking Lemma \ref{lemma::clipz} with $n=1$, we know for any $0<\epsilon<1/e$ to be chosen later, we can clip the integral in the denominator to a bounded region so that
\begin{align}
    &\int_{\R^d \backslash \textbf{B}_{x}}{\left|\frac{x_1-\alpha_t z_1}{\sigma_{t}^2}\right|p(\zb|\yb)\exp\left(-\frac{\norm{\xb-\alpha_t\zb}^2}{2\sigma_{t}^2}\right)\diff z} \le \epsilon, \label{equ::clip score}\\
   & \int_{\R^d \backslash \textbf{B}_{x}}{p(\zb|\yb)\exp\left(-\frac{\norm{\xb-\alpha_t\zb}^2}{2\sigma_{t}^2}\right)\diff z} \le \epsilon. \label{equ::clip score1}
\end{align}
where $\textbf{B}_{x}$ is defined in \eqref{equ::definition clip}.
Suppose that $2\epsilon<p_t(\xb|\yb)$, then by \eqref{equ::clip score1} we know that 
\begin{align}\label{equ:: pt upper trunc}
    \int_{\textbf{B}_x}{p(\zb|\yb)\exp\left(-\frac{\norm{\xb-\alpha_t\zb}^2}{2\sigma_{t}^2}\right)\diff z} > \epsilon.
\end{align}
Combining \eqref{equ::clip score} and \eqref{equ:: pt upper trunc}, we have
\begin{align}\label{equ:: trunc B density}
    \frac{1}{p_t(\xb|\yb)}<\frac{2}{\int_{\textbf{B}_x}{p(\zb|\yb)\exp\left(-\frac{\norm{\xb-\alpha_t\zb}^2}{2\sigma_{t}^2}\right)\diff z}}.
\end{align}
Thus, we have
\begin{align*}
    \abs{\nabla \log p_t(\xb |\yb)_{1}} &\le  \abs{\frac{2\int_{\R^d}{\frac{x_1-\alpha_tz_1}{\sigma_{t}^2}p(\zb|\yb)\exp\left(-\frac{\norm{\xb-\alpha_t\zb}^2}{2\sigma_{t}^2}\right)\diff z}}{\int_{\textbf{B}_x}{p(\zb|\yb)\exp\left(-\frac{\norm{\xb-\alpha_t\zb}^2}{2\sigma_{t}^2}\right)\diff z}}}\\
    &\le 2\abs{\frac{\int_{\textbf{B}_x}{\frac{x_1-\alpha_tz_1}{\sigma_{t}^2}p(\zb|\yb)\exp\left(-\frac{\norm{\xb-\alpha_t\zb}^2}{2\sigma_{t}^2}\right)\diff z}}{\int_{\textbf{B}_x}{p(\zb|\yb)\exp\left(-\frac{\norm{\xb-\alpha_t\zb}^2}{2\sigma_{t}^2}\right)\diff z}}}+\frac{\epsilon}{\epsilon}\\
&\le\frac{2C(0,d)\sqrt{\log\epsilon^{-1}}}{\sigma_t}+1.
\end{align*}
 According to Lemma \ref{lemma::density bound}, we can set $\epsilon$ to be $\epsilon=\min \left(\frac{C_4}{2\sigma_{t}^d} \exp\left(-\frac{\norm{\xb}_2^2+1}{\sigma_{t}^2}\right),\frac{1}{e}\right)$. Thus, there exists a constant $C_5$ dependent on $d$ and $C_4$ such that $\abs{\nabla \log p_t(\xb|\yb)_i}\le \frac{C_5}{\sigma_t^2}(\norm{\xb}_2+1)$. By repeating the proof to each element of $\nabla \log p_t(\xb|\yb)$, we complete our proof.
\end{proof}

\subsubsection{Proof of Lemma \ref{lemma::approx h}}\label{sec::Proof of lemma::approx h}
\begin{proof}
    By the definition of $g(x,n,v,k)$ (see \ref{equ::h}), we have
    \begin{align*}
       g(x,n,v,k) &=\frac{1}{\sigma_{t}(2\pi)^{1/2}}\int \left(\frac{z}{R}+1/2-\frac{v}{N}\right)^{n}\frac{1}{k!}\left(-\frac{\abs{x-\alpha_tz}^2}{2\sigma_{t}^2}\right)^k\diff z\\
        &=\frac{1}{\alpha_t(2\pi)^{1/2}}\int\left(\frac{x-\sigma_tw}{\alpha_tR}+1/2-\frac{v}{N}\right)^n\frac{1}{k!}\left(-\frac{w^2}{2}\right)^k\diff w\\
        &=\frac{1}{\alpha_t(2\pi)^{1/2}}\sum_{j=0}^{n}C_n^j\left(\frac{x}{\alpha_tR}+1/2-\frac{v}{N}\right)^{n-j}\int\left(\frac{-\sigma_tw}{\alpha_tR}\right)^j\frac{1}{k!}\left(-\frac{w^2}{2}\right)^k\diff w\\
        &=\frac{1}{\alpha_t(2\pi)^{1/2}}\sum_{j=0}^{n}C_n^j\frac{(x+\alpha_tR/2-\alpha_tRv/N)^{n-j}\sigma_t^{j}}{\alpha_t^nR^n}\int (-w)^j\frac{1}{k!}\left(-\frac{w^2}{2}\right)^k\diff w\\
        &=\frac{1}{\alpha_t(2\pi)^{1/2}}\sum_{j=0}^{n}C_n^j\frac{(x+\alpha_tR/2-\alpha_tRv/N)^{n-j}\sigma_t^{j}}{\alpha_t^nR^n}\frac{(-1)^{j}}{(-2)^kk!}\frac{f_{\overline{D}}^{j+2k+1}(x)-f_{\underline{D}}^{j+2k+1}(x)}{j+2k+1}
    \end{align*}
    We point out that integral is taken in $\left[f_{\underline{D}}(x),f_{\overline{D}}(x)\right]$, 
 where 
 \begin{equation*}
     f_{\underline{D}}(x)=\clip{\frac{x-\alpha_tC(0,d)\left(\frac{2v}{N}-1\right)\sqrt{\beta\log N}}{\sigma_t},C(0,d)\sqrt{\beta\log N}}
 \end{equation*}
and
 \begin{equation*}
     f_{\overline{D}}(x)=\clip{\frac{x-\alpha_tC(0,d)\left(\frac{2(v-1)}{N}-1\right)\sqrt{\beta\log N}}{\sigma_t},C(0,d)\sqrt{\beta\log N}}.
 \end{equation*}
 Here the clip function represents $\text{clip}(x,B)=\max(\min(x,B),-B)$ for $B\ge 0$. 
Thus, we only need to approximate the function in the form of:
\begin{align}\label{equ::single component}
    f_{v,k,j}=\frac{(x+\alpha_tR/2-\alpha_tRv/N)^{n-j}(f_{\overline{D}}^{j+2k+1}(x)-f_{\underline{D}}^{j+2k+1}(x))\sigma_t^{j}}{\alpha_t^{n+1}}.
\end{align}
We construct a ReLU network to approximate the functions above step by step (see Figure \ref{fig:ReLU1}). By appropriately setting the parameters for each ReLU approximation function (details about how to determine the network size and the error propagation are deferred to Appendix \ref{sec::fvkj relu}), we derive a ReLU network $f^{\trelu}_{v,k,j}\in \cF( W, \kappa, L, K) $ with 
\begin{align*}
    & W = {\cO}\left(\log^6 N +\log^3 \epsilon^{-1} \right),
      \kappa =\exp \left({\cO}(\log^4 N+\log^2 \epsilon^{-1})\right),\\
      ~&L = {\cO}(\log^4 N+\log^2 \epsilon^{-1}),~
     K= {\cO}\left(\log^8 N+\log^4 \epsilon^{-1}\right).
\end{align*}
such that for any $x \in  [-C_x\sqrt{\log N},C_x\sqrt{\log N}]$ and $t\in[N^{-C_{\sigma}},C_{\alpha}\log N]$,
\begin{align}\label{equ::fvkj relu}
     \abs{f_{v,k,j}(x,t)-f_{v,k,j}^{\trelu}(x,t)}\le \epsilon .
\end{align}
\begin{figure}
\centering
\includegraphics[width=0.9\linewidth]{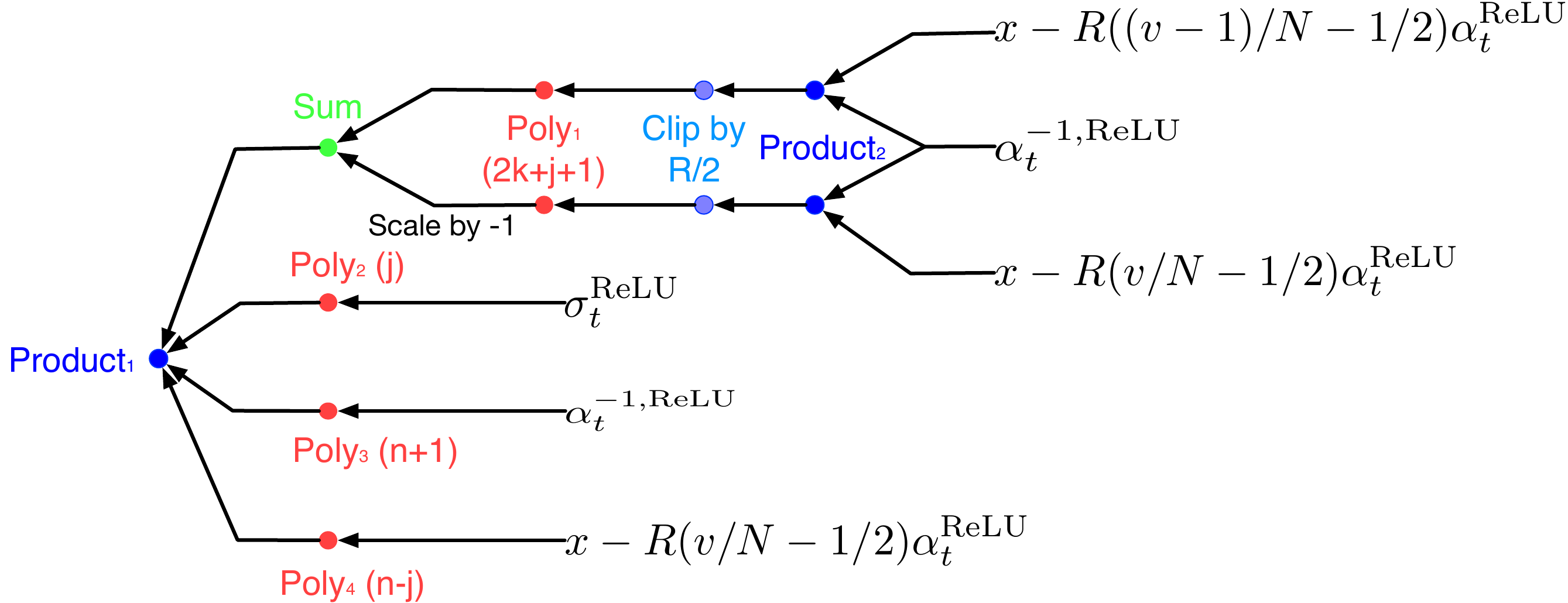}
\caption{Network architecture of $f^{\trelu}_{v,k,j}$. We implement all the basic functions (e.g., $x$, $\alpha_t$ and $\sigma_t$) through ReLU networks and combine them using the ReLU-expressed operators (\textit{product}, \textit{inverse}, \textit{clip} and \textit{poly}) to express $f_{v,k,j}$ according to its definition in \eqref{equ::single component}.}
\label{fig:ReLU1}
\end{figure}
Now we consider the following ReLU network
\begin{align}\label{equ::h relu}
    h^{\trelu}(x,n,v,k)=\frac{(-1)^{j}}{(2\pi)^{1/2}R^n(-2)^kk!}\sum_{j=0}^n \frac{C_n^j}{j+2k+1}f^{\trelu}_{v,k,j}.
\end{align}
By \eqref{equ::fvkj relu}, we have 
\begin{equation*}
    \left|h^{\trelu}(x,n,v,k)-g(x,n,v,k)\right| \le \frac{2^n\epsilon}{(2\pi)^{1/2}R^n 2^kk!} \le \epsilon.
\end{equation*}
We complete our ReLU approximation for $g(x,n,v,k)$.
\end{proof}

\section{Proof of Theorem \ref{thm::score approx exp}}\label{sec::all proof score sub exp}

\subsection{Key Steps for Proving Theorem \ref{thm::score approx exp}}
To prove Theorem \ref{thm::score approx exp} we mainly follow the proof of Theorem \ref{thm::score approx}. We also replace $N$ by $N^{d+d_y}$ for simplicity. The difference between the proofs is that the stronger assumption allows us to extract a Gaussian distribution from $p_t$. We provide an overview of our proof.
\begin{description}
    \item[Step 1] Under Assumption \ref{assump::expdensity}, we can decompose the score function into a linear function of $\xb$ and a diffused score function $\nabla \log h=\frac{\nabla h}{h}$, where $h$ is a convolution of $f$ and a Gaussian kernel (Lemma \ref{lemma::score decompose exp}).
    \item [Step 2] We truncate the domain of input $\xb$ as we do in Theorem \ref{thm::score approx} (Lemma \ref{lemma::truncation x exp}).
    \item[Step 3] We use the ReLU network to approximate $h(\xb,\yb,t)$ and $\nabla h(\xb,\yb,t)$ in the truncated domain and combine the results to construct a score approximator $ \sb(\xb,\yb,t)$ with small approximation error (Proposition \ref{prop::logh approx bounded exp}).
\end{description}
\subsection{Statements of Steps 1 - 3 and Using Them to Prove Theorem \ref{thm::score approx exp}}
\subsubsection{Formal Statements in Steps 1 - 3}
Firstly, under Assumption \ref{assump::expdensity}, we can decompose the score function as shown in the following lemma:
\begin{lemma}[Decomposing the score]\label{lemma::score decompose exp}
    Under Assumption \ref{assump::expdensity}, the score function can be written as
    \begin{align*}
        \nabla \log p_t(\xb |\yb)&
    =\frac{-C_2\xb}{\alpha_t^2+C_2\sigma_t^2} +\frac{\hat{\alpha}_t}{\hat{\sigma}_t}\cdot\frac{\int f(\zb,\yb)\left( \frac{\zb-\hat{\alpha}_{t}\xb}{\hat{\sigma}_t}\right) \exp\left(-\frac{\norm{\zb-\hat{\alpha}_t\xb}^2}{2\hat{\sigma}_{t}^2}\right)\diff \zb}{\int f(\zb,\yb) \exp\left(-\frac{\norm{\zb-\hat{\alpha}_t\xb}^2}{2\hat{\sigma}_{t}^2}\right)\diff \zb}\\
        &=\frac{-C_2\xb}{\alpha_t^2+C_2\sigma_t^2}+\frac{\nabla h(\xb,\yb,t)}{h(\xb,\yb,t)},
    \end{align*}
    where $\hat{\sigma}_t=\frac{\sigma_t}{(\alpha_t^2+C_2\sigma_t^2)^{1/2}}$, $\hat{\alpha}_t=\frac{\alpha_t}{\alpha_t^2+C_2\sigma_t^2}$ and $h(\xb,\yb,t)=\int f(\zb,\yb) \frac{1}{(2\pi)^{d/2}\hat{\sigma}_t^d}\exp\left(-\frac{\norm{\zb-\hat{\alpha}_t\xb}^2}{2\hat{\sigma}_{t}^2}\right)\diff \zb$.
\end{lemma}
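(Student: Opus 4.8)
\textbf{Proof proposal for Lemma~\ref{lemma::score decompose exp}.}

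The plan is to complete the square inside the Gaussian convolution that defines $p_t(\xb|\yb)$, exactly as in the upper-bound computation of Lemma~\ref{lemma::density bound} and in the display \eqref{eq:fast_pt}, and then differentiate. First I would substitute $p(\zb|\yb) = f(\zb,\yb)\exp(-C_2\norm{\zb}^2_2/2)$ from Assumption~\ref{assump::expdensity} into the forward-process integral formula
\begin{align*}
p_t(\xb|\yb) = \frac{1}{\sigma_t^d(2\pi)^{d/2}}\int_{\R^d} f(\zb,\yb)\exp\left(-\frac{C_2\norm{\zb}^2_2}{2}\right)\exp\left(-\frac{\norm{\alpha_t\zb-\xb}^2}{2\sigma_t^2}\right)\diff\zb.
\end{align*}
The key algebraic step is to merge the two exponentials: the quadratic form in $\zb$ is $C_2\norm{\zb}^2 + \norm{\alpha_t\zb-\xb}^2/\sigma_t^2 = (\alpha_t^2+C_2\sigma_t^2)/\sigma_t^2 \cdot \norm{\zb - \alpha_t\xb/(\alpha_t^2+C_2\sigma_t^2)}^2 + C_2\norm{\xb}^2/(\alpha_t^2+C_2\sigma_t^2)$. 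Writing $\hat\sigma_t = \sigma_t/(\alpha_t^2+C_2\sigma_t^2)^{1/2}$ and $\hat\alpha_t = \alpha_t/(\alpha_t^2+C_2\sigma_t^2)$, this gives
\begin{align*}
p_t(\xb|\yb) = \frac{1}{(\alpha_t^2+C_2\sigma_t^2)^{d/2}}\exp\left(\frac{-C_2\norm{\xb}^2_2}{2(\alpha_t^2+C_2\sigma_t^2)}\right)\cdot h(\xb,\yb,t),
\end{align*}
where $h(\xb,\yb,t) = \int f(\zb,\yb)\frac{1}{(2\pi)^{d/2}\hat\sigma_t^d}\exp(-\norm{\zb-\hat\alpha_t\xb}^2/(2\hat\sigma_t^2))\diff\zb$ after absorbing the remaining normalizing constants (one checks $\frac{1}{\sigma_t^d}\cdot\frac{(\alpha_t^2+C_2\sigma_t^2)^{d/2}}{1} = \frac{1}{\hat\sigma_t^d}$, so the constants match).

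Next I would take $\nabla\log$ of both sides. Since $\log p_t(\xb|\yb) = -\frac{d}{2}\log(\alpha_t^2+C_2\sigma_t^2) - \frac{C_2\norm{\xb}^2_2}{2(\alpha_t^2+C_2\sigma_t^2)} + \log h(\xb,\yb,t)$ and the first term is $\xb$-independent, differentiating in $\xb$ yields
\begin{align*}
\nabla\log p_t(\xb|\yb) = \frac{-C_2\xb}{\alpha_t^2+C_2\sigma_t^2} + \frac{\nabla h(\xb,\yb,t)}{h(\xb,\yb,t)}.
\end{align*}
Finally, to obtain the explicit integral form stated in the lemma, I would differentiate $h$ under the integral sign: $\nabla_{\xb} \exp(-\norm{\zb-\hat\alpha_t\xb}^2/(2\hat\sigma_t^2)) = \frac{\hat\alpha_t}{\hat\sigma_t^2}(\zb-\hat\alpha_t\xb)\exp(-\norm{\zb-\hat\alpha_t\xb}^2/(2\hat\sigma_t^2))$, which after pulling out $\hat\alpha_t/\hat\sigma_t$ and cancelling the $(2\pi)^{d/2}\hat\sigma_t^d$ factors in numerator and denominator gives precisely the claimed expression $\frac{\hat\alpha_t}{\hat\sigma_t}\cdot\frac{\int f(\zb,\yb)\frac{\zb-\hat\alpha_t\xb}{\hat\sigma_t}\exp(\cdot)\diff\zb}{\int f(\zb,\yb)\exp(\cdot)\diff\zb}$.

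This is essentially a bookkeeping exercise, so I do not expect a genuine obstacle; the only points requiring care are (i) verifying that the normalizing constants collapse correctly so that $h$ is literally a convolution of $f$ against a properly normalized Gaussian density, and (ii) justifying differentiation under the integral sign, which follows from the lower bound $f \geq C$ together with the Gaussian tail and the H\"older (hence locally bounded, polynomially controlled) growth of $f$ from Assumption~\ref{assump::expdensity} — these guarantee $h$ is bounded away from zero and smooth in $\xb$, so $\nabla h / h$ is well defined. I would state these as one-line remarks rather than belaboring them.
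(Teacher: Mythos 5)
Your proposal is correct and follows essentially the same route as the paper: substitute $p(\zb|\yb)=f(\zb,\yb)\exp(-C_2\norm{\zb}_2^2/2)$ into the forward-process integral, complete the square to factor $p_t(\xb|\yb)$ as a Gaussian envelope times $h(\xb,\yb,t)$, and then differentiate $\log p_t$ (equivalently compute $\nabla p_t/p_t$), with the constants collapsing exactly as you checked. The paper's proof in Appendix C is the same bookkeeping computation, so there is nothing to add.
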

Due to the smoothness and boundedness of $f(z,y)$, we can verify that $h(\xb,\yb,t)$ is both lower and upper bounded by some constants, which is a helpful property in approximating $\frac{\nabla h(\xb,\yb,t)}{h(\xb,\yb,t)}$. Following the proof of Theorem \ref{thm::score approx}, we also truncate the domain of $\xb$ on a bounded space $\set{\xb:\norm{\xb}_2 \le R}$. The proof of  Lemma \ref{lemma::score decompose exp} is provided in Appendix~\ref{sec::Proof of Lemma score decompose exp}.

\begin{lemma}[Truncation on $\xb$]\label{lemma::truncation x exp}
    Under Assumption \ref{assump::expdensity}, for any $R>1$, we have
    \begin{align}
             &\int_{\norm{\xb}_{\infty}\ge R}  p_{t}(\xb|\yb)\diff \xb \lesssim R\exp(-C_2'R^2),\\
            &\int_{\norm{\xb}_{\infty}\ge R} \norm{\nabla\log p_{t}(\xb|\yb)}_2^2 p_{t}(\xb|\yb)\diff \xb \lesssim \frac{1}{\sigma_t^2}R^3\exp(-C_2'R^2).
    \end{align}
    where $C_2^{\prime}=\frac{C_2}{2\max(1,C_2)}$.
\end{lemma}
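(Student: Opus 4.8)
\textbf{Proof plan for Lemma~\ref{lemma::truncation x exp}.}
The plan is to mimic the proof of Lemma~\ref{lemma::truncation x}, using the sharper structural information available under Assumption~\ref{assump::expdensity}. First I would invoke the upper bound on the diffused density from Lemma~\ref{lemma::density bound}, namely
\[
p_t(\xb|\yb)\le \frac{C_1}{(\alpha_t^2+C_2\sigma_t^2)^{d/2}}\exp\left(\frac{-C_2\norm{\xb}^2_2}{2(\alpha_t^2+C_2\sigma_t^2)}\right),
\]
which holds under Assumption~\ref{assump:sub} and hence \emph{a fortiori} under the stronger Assumption~\ref{assump::expdensity}. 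Integrating this tail bound over $\set{\norm{\xb}_\infty\ge R}$ and using the standard Gaussian tail estimate $\int_{|u|\ge a}\exp(-cu^2)\diff u\lesssim \frac{1}{\sqrt{c}}\exp(-ca^2)$ coordinatewise gives the first inequality, with the constant $C_2'=\frac{C_2}{2\max(1,C_2)}$ arising exactly as the uniform-in-$t$ lower bound on $\frac{C_2}{2(\alpha_t^2+C_2\sigma_t^2)}$ (since $\alpha_t^2+C_2\sigma_t^2=e^{-t}+C_2(1-e^{-t})\le\max(1,C_2)$).

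For the second, weighted inequality, the key improvement over Lemma~\ref{lemma::truncation x} is that the score bound is better under Assumption~\ref{assump::expdensity}: from the decomposition in Lemma~\ref{lemma::score decompose exp},
\[
\nabla\log p_t(\xb|\yb)=\frac{-C_2\xb}{\alpha_t^2+C_2\sigma_t^2}+\frac{\nabla h(\xb,\yb,t)}{h(\xb,\yb,t)},
\]
and since $h$ is bounded away from $0$ and $\infty$ (a consequence of $f\ge C$ and $f\in\mH^\beta$), the ratio $\nabla h/h$ is controlled by $\cO(1/\hat\sigma_t)\cdot(\norm{\xb}+1)=\cO(1/\sigma_t)\cdot(\norm{\xb}+1)$ rather than the $1/\sigma_t^2$ scaling of Lemma~\ref{lemma::score bound}. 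Hence $\norm{\nabla\log p_t(\xb|\yb)}\lesssim \frac{1}{\sigma_t}(\norm{\xb}+1)$, so that $\norm{\nabla\log p_t(\xb|\yb)}_2^2\lesssim \frac{1}{\sigma_t^2}(\norm{\xb}^2+1)$. Multiplying by the density tail bound above and integrating over $\set{\norm{\xb}_\infty\ge R}$, the polynomial factor $\norm{\xb}^2+1$ against the Gaussian tail produces at most an extra factor $R^2$, yielding $\frac{1}{\sigma_t^2}R^3\exp(-C_2'R^2)$ after absorbing constants depending on $d,C_1,C_2$; this is exactly the claimed bound.

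The main obstacle — really the only nontrivial point — is establishing the improved $1/\sigma_t$ (rather than $1/\sigma_t^2$) score bound cleanly, i.e. verifying that $h(\xb,\yb,t)$ is uniformly bounded below by a positive constant and that $\norm{\nabla h(\xb,\yb,t)}$ is $\cO(\frac{1}{\hat\sigma_t}(\norm{\xb}+1))$. The lower bound on $h$ follows because $h$ is a Gaussian convolution of $f\ge C$, so $h\ge C$ pointwise; the gradient estimate follows by differentiating under the integral sign, writing $\nabla h=\frac{\hat\alpha_t}{\hat\sigma_t}\int f(\zb,\yb)\frac{\zb-\hat\alpha_t\xb}{\hat\sigma_t}\phi_{\hat\sigma_t}(\zb-\hat\alpha_t\xb)\diff\zb$ as in Lemma~\ref{lemma::score decompose exp}, bounding $f$ above by its $\mH^\beta$ norm, and using the first-absolute-moment bound of the Gaussian together with $\hat\alpha_t=\cO(1)$ and $\hat\sigma_t=\Theta(\sigma_t)$. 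Everything else is a routine repetition of the computation in Appendix~\ref{sec::lemma::truncation x}.
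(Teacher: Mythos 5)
Your proposal is correct and follows essentially the same route as the paper: upper-bound $p_{t}(\xb|\yb)$ by the Gaussian envelope (the paper uses the factorization with $h\le B$, you use the sub-Gaussian tail of Lemma~\ref{lemma::density bound}, which are the same bound), control the score via the decomposition of Lemma~\ref{lemma::score decompose exp} together with $h\ge C$ and $\norm{\nabla h}_{\infty}\lesssim \hat{\alpha}_t/\hat{\sigma}_t$ — which is exactly the content of Lemmas~\ref{lemma::bound h and nabla h} and \ref{lemma::scorebound2} — and then integrate the Gaussian tail over $\set{\norm{\xb}_{\infty}\ge R}$. Your slightly looser score bound $\frac{1}{\sigma_t}(\norm{\xb}+1)$, versus the paper's $\norm{\xb}_{\infty}+\frac{1}{\sigma_t}$, still produces the claimed $\frac{1}{\sigma_t^2}R^{3}\exp(-C_2'R^2)$, so nothing is lost.
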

This lemma is a counterpart of Lemma \ref{lemma::truncation x}, and the proof is provided in Appendix \ref{sec::proof of lemma truncation x exp}.
Note that the truncation error scales with $\frac{1}{\sigma_t^2}$ instead of $\frac{1}{\sigma_t^4}$ as we obtained in Lemma \ref{lemma::truncation x}, which results from a tighter bound of the score function $\nabla \log p_t(\xb|\yb)$.

\begin{proposition}[Approximate the score]\label{prop::logh approx bounded exp}
    For sufficiently large integer $N$, there exists a ReLU network $ \cF( W, \kappa, L, K) $ that gives rise to a mapping $\sb(\xb,\yb,t) \in \cF$ satisfying 
    \begin{align}\label{equ::logh linfty bound exp}
        \norm{\sb(\xb,\yb,t)-\nabla\log h(\xb,\yb,t)+\frac{C_2\xb}{\alpha_t^2+C_2\sigma_t^2}}_{\infty} \lesssim  \frac{B}{\sigma_t}N^{-\beta}(\log N)^{\frac{s+1}{2}},
    \end{align}
    for any $\xb \in  [-C_x\sqrt{\log N},C_x\sqrt{\log N}]^{d}$, $\yb\in[0,1]^{d_y}$ and $t\in[N^{-C_{\sigma}},C_{\alpha}\log N]$. The network hyperparameter configuration satisfies
\begin{align}
    & \hspace{0.4in} M_t = \cO\left(\sqrt{\log N}/\sigma_t\right),~
     W = {\cO}\left(N^{d+d_y}\log^7 N\right),\label{equ::hyper1 exp}\\
     & \kappa =\exp \left({\cO}(\log^4 N)\right),~
    L = {\cO}(\log^4 N),~
     K= {\cO}\left(N^{d+d_y}\log^9 N\right). \label{equ::hyper2 exp}
\end{align}
\end{proposition}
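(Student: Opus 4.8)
\textbf{Proof proposal for Proposition~\ref{prop::logh approx bounded exp}.}
The plan is to mirror the proof of Proposition~\ref{prop::score approx bounded} but exploit the Gaussian extraction from Lemma~\ref{lemma::score decompose exp}, which removes the need for the $\epsilon_{\rm low}$-truncation of the denominator and thereby produces the improved rate. Since $h(\xb,\yb,t)=\int f(\zb,\yb)\,\frac{1}{(2\pi)^{d/2}\hat\sigma_t^d}\exp(-\|\zb-\hat\alpha_t\xb\|^2/(2\hat\sigma_t^2))\diff\zb$ has exactly the same convolutional structure as $p_t(\xb|\yb)$ in \eqref{equ::integral form of p_t} (with $\sigma_t,\alpha_t$ replaced by the super-smooth quantities $\hat\sigma_t,\hat\alpha_t$), I would first observe that the construction of diffused local polynomials in Lemmas~\ref{lemma::diffused localpoly approx} and \ref{lemma::diffused localpoly approx1} applies verbatim to $h$ and $\nabla_{\xb} h$. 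Concretely: build a diffused local polynomial $g_1(\xb,\yb,t)$ approximating $h(\xb,\yb,t)$ and $\gb_2(\xb,\yb,t)$ approximating $\hat\sigma_t\nabla_\xb h(\xb,\yb,t)$, each with $\cO(N^{d+d_y})$ monomials; because $f\in\mH^\beta$ with $\|f\|_{\mH^\beta}$ bounded (not $BR^s$ as for $p$, where the scaling $R^s$ came from rescaling the unbounded-support density), the local Taylor error is $\cO(BN^{-\beta})$, and after integration against the Gaussian kernel the error is $\cO(BN^{-\beta}\log^{(d+s)/2}N)$ for $g_1$ and one extra $\sqrt{\log N}$ factor for $\gb_2$.

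The key simplification — and the reason the $\sigma_t$-dependence improves from $\sigma_t^{-2}$ to $\sigma_t^{-1}$ — is that, by the boundedness and lower bound on $f$ together with the upper bound needed for the extended-region approximation, one checks $0<c\le h(\xb,\yb,t)\le C$ uniformly (this is the ``helpful property'' flagged after Lemma~\ref{lemma::score decompose exp}; it follows because $f$ is sandwiched between two constants and the Gaussian integrates to $1$). Therefore I do \emph{not} need to clip the denominator: setting $\fb_3=\gb_2/(\hat\sigma_t g_{1,\mathrm{clip}})$ with a trivial clip $g_{1,\mathrm{clip}}=\max(g_1,c/2)$ that is active nowhere on the relevant domain, the decomposition
\begin{align*}
\left\lvert\frac{\nabla_\xb h}{h}-\frac{\gb_2}{\hat\sigma_t g_{1,\mathrm{clip}}}\right\rvert
\le \lvert\nabla_\xb h\rvert\left\lvert\frac1h-\frac1{g_{1,\mathrm{clip}}}\right\rvert
+\frac{\lvert\hat\sigma_t\nabla_\xb h-\gb_2\rvert}{\hat\sigma_t g_{1,\mathrm{clip}}}
\end{align*}
is controlled purely by $1/h\lesssim 1$ and the approximation errors, giving
$\|\nabla\log h-\fb_3\|_\infty\lesssim \frac{B}{\sigma_t}N^{-\beta}(\log N)^{(s+1)/2}$
once we note $\hat\sigma_t\asymp\sigma_t$ (up to bounded factors of $\alpha_t^2+C_2\sigma_t^2$) and that $\lvert\nabla_\xb h\rvert\lesssim \sqrt{\log N}/\sigma_t$ on the truncated cube. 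Adding back the linear term $-C_2\xb/(\alpha_t^2+C_2\sigma_t^2)$ (trivially implementable by a one-layer ReLU network, since $\alpha_t,\sigma_t$ are super-smooth and can be approximated to negligible error as in Lemma~\ref{lemma::approx h}) yields the claimed $\sb$.

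For the ReLU implementation I would invoke Lemmas~\ref{lemma::relu approx diffused} and \ref{lemma::relu approx diffused1} to get $g_1^{\trelu},\gb_2^{\trelu}$ with error $N^{-\beta}$, then combine via the ReLU-approximated \emph{product}, \emph{reciprocal}, and \emph{entrywise-min} operators exactly as in Figure~\ref{fig:ReLU2}, plus a subnetwork implementing $\hat\sigma_t$ and the linear map $\xb\mapsto -C_2\xb/(\alpha_t^2+C_2\sigma_t^2)$; the error propagation through the bounded-denominator reciprocal is benign because $h$ is bounded away from $0$, so no $1/\epsilon_{\rm low}$ blow-up of the Lipschitz constant occurs. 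The resulting hyperparameters match \eqref{equ::hyper1 exp}--\eqref{equ::hyper2 exp}, with $M_t=\cO(\sqrt{\log N}/\sigma_t)$ rather than $\cO(\sqrt{\log N}/\sigma_t^2)$ precisely because the output bound inherits the single power of $1/\sigma_t$ from the bound on $\nabla\log h$ plus the linear term. I expect the main obstacle to be the technical verification that $f$ can be approximated by local Taylor polynomials on a cube slightly \emph{larger} than the data support (the ``shell region'' / extended-region argument alluded to in Section~\ref{sec:proof_details}), which is what legitimately lets us ignore the truncation-at-$\epsilon_{\rm low}$ step; this requires the upper bound on $f$ in Assumption~\ref{assump::expdensity} and a careful estimate of how the Gaussian kernel in $h$ redistributes mass near the boundary — everything else is a direct transcription of the machinery already built for Theorem~\ref{thm::score approx}.
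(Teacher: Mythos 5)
Your proposal follows essentially the same route as the paper's own proof: decompose via Lemma~\ref{lemma::score decompose exp}, approximate $h$ and (a $\hat\sigma_t$-normalized) $\nabla_\xb h$ by diffused local polynomials as in Lemmas~\ref{lemma::diffused localpoly approx exp}--\ref{lemma::diffused localpoly approx1 exp}, exploit $C\le h\le B$ (Lemma~\ref{lemma::bound h and nabla h}) to skip the $\epsilon_{\rm low}$ clipping, and assemble the ReLU network as in Figure~\ref{fig:ReLU4}. Your normalization $\hat\sigma_t\nabla_\xb h$ instead of the paper's $(\hat\sigma_t/\hat\alpha_t)\nabla_\xb h$ is immaterial since $\hat\alpha_t$ is bounded above, and your anticipated obstacle about approximating $f$ on a cube larger than the data support is in fact a non-issue: Assumption~\ref{assump::expdensity} makes $f$ $\beta$-H\"older on all of $\R^d\times[0,1]^{d_y}$ with the upper bound $B$, so the local Taylor construction applies directly on $[-L\sqrt{\log N},L\sqrt{\log N}]^d$.

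The one step that, as written, does not deliver the stated bound is your log-factor bookkeeping for $g_1$. After rescaling to the unit cube the Taylor error is $\cO(BR^s N^{-\beta})=\cO(BN^{-\beta}\log^{s/2}N)$ (the $R^s$ factor appears here too, contrary to your aside), and the key point is that the kernel $\frac{1}{(2\pi)^{d/2}\hat\sigma_t^d}\exp(-\norm{\zb-\hat\alpha_t\xb}^2/(2\hat\sigma_t^2))$ is a \emph{normalized density in $\zb$}, so the sup-norm Taylor error passes through the integral with no volume factor; this is exactly how Lemma~\ref{lemma::diffused localpoly approx exp} gets $\cO(BN^{-\beta}\log^{s/2}N)$ for $h$. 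Your proposal instead uses the crude bound $\sup|q-f|\cdot m(\textbf{B}_{\xb,N})/\hat\sigma_t^d$ (the device needed in Lemma~\ref{lemma::diffused localpoly approx}, where the kernel is not normalized in $\zb$), which costs an extra $\log^{d/2}N$; propagated through the term $\abs{\nabla_\xb h}\,\abs{1/h-1/g_1}$ it yields $\frac{B}{\sigma_t}N^{-\beta}(\log N)^{(d+s+1)/2}$ rather than the claimed $(\log N)^{(s+1)/2}$, and this weaker exponent would also degrade Theorem~\ref{thm::score approx exp}. The fix is the single observation above; the rest of your argument, including the error decomposition and the ReLU error propagation with bounded denominator, matches the paper.
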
 The proof of Proposition \ref{prop::logh approx bounded exp} is provided in \ref{sec::prop logh proof}. Now we are ready to prove Theorem~\ref{thm::score approx exp}.
\subsubsection{Proof of Theorem \ref{thm::score approx exp}}
\begin{proof}
    With the lemmas and the proposition above, the proof is quite straightforward. We take $C_x=\sqrt{\frac{2\beta}{C_2^{\prime}}}$ in Proposition \ref{prop::logh approx bounded exp} to obtain a ReLU score estimator $\sb$. According to the hyperparameter configuration \eqref{equ::hyper1 exp}, we have $\norm{\sb(\xb,\yb,t)}_2\lesssim\frac{\sqrt{\log N}}{\sigma_t}$ for any $\xb \in \RR^{d}$, $\yb\in \RR^{d_y}$ and $t>0$.
    Besides, we set the truncation radius $R=C_x\sqrt{\log N}$. By Lemma \ref{lemma::truncation x exp}, 
    \begin{align*}
        &\int_{\R^{d}} \norm{\sb-\nabla\log p_{t}(\xb|\yb)}_2^2 p_{t}(\xb|\yb)\diff \xb\\
        &\lesssim \int_{\norm{\xb}_{\infty}>\sqrt{\frac{2\beta}{C_2^{\prime}}\log N}} \left(2\left(\frac{1}{\sigma_t}\sqrt{\log N}\right)^2 +2\norm{\nabla\log p_{t}(\xb|\yb)}_2^2 \right)p_t(\xb|\yb) \diff \xb\\ 
        &+\int_{\norm{\xb}_{\infty}\le \sqrt{\frac{2\beta}{C_2^{\prime}}\log N}}\norm{\sb(\xb,\yb,t)-\nabla\log p_{t}(\xb|\yb)}_2^2 p_{t}(\xb|\yb)\diff \xb\\
        & \overset{(i)}{\lesssim} \frac{2d \log N}{\sigma^2_t}\left(\frac{2\beta}{C_2^{\prime}}\log N\right)^{1/2}N^{-2\beta} + \frac{2}{\sigma_t^2}\left(\frac{2\beta}{C_2^{\prime}}\log N\right)^{3/2}N^{-2\beta}+  \frac{B^2}{\sigma^2_t}N^{-2\beta}\log ^{s+1} N\\
        &\lesssim \frac{B^2}{\sigma^2_t}N^{-2\beta}\log ^{s+1} N.
    \end{align*}
    In (i) we invoke the truncation error bound in Lemma \ref{lemma::truncation x exp} and the approximation error bound in Proposition \ref{prop::logh approx bounded exp}. By turning $N$ back to $N^{\frac{1}{d+d_y}}$, the proof is complete.
\end{proof}

\subsection{Proofs in Step 3 for Theorem~\ref{thm::score approx exp}}\label{sec::prop logh proof}

Similarly to the proof of Proposition~\ref{prop::score approx bounded}, the approximation process is also divided into two stages. In the first stage, we approximate $h$ and $\nabla h$ up to a small error separately in the same way. In approximating $h(\xb,\yb,t)$, we first use another set of diffused local monomials (see \ref{equ::diffused monomial restate exp}) to approximate $p(\xb|\yb)$, which is presented in the following lemma.

\begin{lemma}[Diffused local polynomial approximation]\label{lemma::diffused localpoly approx exp}
    Under Assumption \ref{assump::expdensity}, for sufficiently large integer $N>0$ and constant $C_x>0$, there exists a diffused local polynomial with at most $N^{d+d_y}(d+d_y)^s$ diffused local monomials $f_1(\xb,\yb,t)$ such that 
\begin{align}
     \abs{f_1(\xb,\yb,t)-h(\xb,\yb,t)} \lesssim BN^{-\beta}\log^{\frac{s}{2}}N,
\end{align}
for any $\xb \in [-C_x\sqrt{\log N},C_x\sqrt{\log N}]^{d}$, $\yb \in [0,1]^{d_y}$, and $t>0$.
\end{lemma}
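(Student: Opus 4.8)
The plan is to transport the three-step construction from the proof of Lemma~\ref{lemma::diffused localpoly approx} to the kernel identity for $h$ furnished by Lemma~\ref{lemma::score decompose exp},
\begin{align*}
h(\xb,\yb,t)=\int_{\RR^d} f(\zb,\yb)\,\frac{1}{(2\pi)^{d/2}\hat{\sigma}_t^d}\,\exp\!\left(-\frac{\norm{\zb-\hat{\alpha}_t\xb}^2}{2\hat{\sigma}_t^2}\right)\diff\zb,
\end{align*}
where $\hat{\sigma}_t^2=\sigma_t^2/(\alpha_t^2+C_2\sigma_t^2)$ and $\hat{\alpha}_t=\alpha_t/(\alpha_t^2+C_2\sigma_t^2)$. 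The first facts I would record are structural: since $\alpha_t^2+\sigma_t^2=1$, both $\hat{\sigma}_t$ and $\hat{\alpha}_t$ are bounded above by a constant depending only on $C_2$; $f$ is bounded ($C\le f<B$) because its H\"older norm is finite; and the kernel above is a genuine probability density in $\zb$, so it integrates to $1$. These are precisely the sources of the faster rate: $h\ge C>0$ everywhere, so \emph{no} $\epsilon_{\mathrm{low}}$-truncation is needed; and the probability-density normalization removes the $\alpha_t^{-d}$ / $m(\textbf{B}_{\xb,N})\sigma_t^{-d}$ amplification factor that forced the $(\log N)^{(d+s)/2}$ power in Lemma~\ref{lemma::diffused localpoly approx}.

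Concretely, I would run the three steps of that earlier proof with the transformed kernel. \textbf{Step (i) (clip the domain).} Using a tail estimate of the same flavour as Lemma~\ref{lemma::clipz} --- easier here, since $f$ is bounded by $B$ rather than merely sub-Gaussian --- restrict the $\zb$-integral to
\begin{align*}
\textbf{B}_{\xb,N}=\left[\hat{\alpha}_t\xb-\hat{\sigma}_t C\sqrt{\beta\log N}\,\mathbf{1},\ \hat{\alpha}_t\xb+\hat{\sigma}_t C\sqrt{\beta\log N}\,\mathbf{1}\right]\cap\left[-C\sqrt{\beta\log N}\,\mathbf{1},\ C\sqrt{\beta\log N}\,\mathbf{1}\right],
\end{align*}
incurring error $\lesssim BN^{-\beta}$; since $\hat{\alpha}_t$ is bounded and $\xb\in[-C_x\sqrt{\log N},C_x\sqrt{\log N}]^{d}$, this box has side $\cO(\sqrt{\log N})$, so it is tiled by $N^{d+d_y}$ cells after rescaling to $[0,1]^{d+d_y}$ ($N$ segments per axis). \textbf{Step (ii) (localize $f$).} The rescaled $f$ over a cube of side $R=\cO(\sqrt{\log N})$ has H\"older norm $\lesssim BR^{s}$, so the tiled degree-$s$ Taylor polynomial $q$ approximates it within $\lesssim BR^{s}/N^{\beta}$ in sup-norm, exactly as in \eqref{equ:: local error}. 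Replacing $f$ by $q$ in the clipped integral costs $\lesssim (BR^{s}/N^{\beta})\int_{\textbf{B}_{\xb,N}}(2\pi)^{-d/2}\hat{\sigma}_t^{-d}\exp(-\norm{\zb-\hat{\alpha}_t\xb}^2/2\hat{\sigma}_t^2)\diff\zb\le BR^{s}/N^{\beta}\lesssim BN^{-\beta}(\log N)^{s/2}$, where the crucial inequality $\int(\cdots)\le1$ is the probability-density property --- this is where the earlier $(\log N)^{d/2}$ factor disappears. \textbf{Step (iii) (Taylor-expand the exponential).} On $\textbf{B}_{\xb,N}$ one has $\norm{(\zb-\hat{\alpha}_t\xb)/\hat{\sigma}_t}_{\infty}=\cO(\sqrt{\log N})$, so a degree-$p$ Taylor polynomial of $\exp(-u^2/2)$ with $p=\cO(\log N)$ (chosen so the raw per-coordinate error is $N^{-2\beta}$) approximates the kernel within $N^{-2\beta}$; multiplying by the bounded $|q|\lesssim B$ and integrating over $\textbf{B}_{\xb,N}$ (whose measure is $\lesssim\hat{\sigma}_t^{d}(\log N)^{d/2}$, cancelling the $\hat{\sigma}_t^{-d}$ normalization) contributes $\lesssim BN^{-2\beta}(\log N)^{d/2}$, dominated by the Step (ii) term.

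Summing the three contributions yields $|h(\xb,\yb,t)-f_1(\xb,\yb,t)|\lesssim BN^{-\beta}(\log N)^{s/2}$ uniformly over $\xb\in[-C_x\sqrt{\log N},C_x\sqrt{\log N}]^{d}$, $\yb\in[0,1]^{d_y}$, $t>0$. Finally, interchanging the finite sums (over grid cells $(\vb,\wb)$, multi-indices $(\nb,\npb)$ with $\norm{\nb}_1+\norm{\npb}_1\le s$, and Taylor order $k<p$) with the integral and exploiting separability of the Gaussian across coordinates, $f_1$ is written as a linear combination of at most $N^{d+d_y}(d+d_y)^{s}$ \emph{diffused local monomials} --- the exact analog of $\dmono$ in \eqref{equ::Diffused local monomial}, but with $\hat{\alpha}_t,\hat{\sigma}_t$ in place of $\alpha_t,\sigma_t$ --- each of which is an elementary object (a product of one-dimensional integrals of a monomial against a Gaussian over an interval, reducible to finite sums of powers of the interval endpoints times powers of $\hat{\sigma}_t,\hat{\alpha}_t$), hence closed-form and later ReLU-implementable. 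The only genuine work is the clip-domain estimate and the error bookkeeping; the structural mechanism --- a probability-density kernel together with a uniformly positive $h$, hence no $\epsilon_{\mathrm{low}}$ and no $(\log N)^{d/2}$ amplification in the dominant term --- is exactly what Section~\ref{sec:proof_details} predicts, so I expect no conceptual obstacle beyond re-running the earlier computation faithfully with the transformed kernel.
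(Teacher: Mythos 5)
Your proposal is correct and follows essentially the same route as the paper's proof: it uses the decomposition of Lemma~\ref{lemma::score decompose exp}, clips the $\zb$-integral to a Gaussian-centered box of side $\cO(\hat{\sigma}_t\sqrt{\log N})$ (the analogue of Lemma~\ref{lemma::clipz2}), tiles a rescaled cube of side $\cO(\sqrt{\log N})$ with degree-$s$ local Taylor polynomials of $f$, Taylor-expands the exponential to degree $\cO(\log N)$, and exploits exactly the two mechanisms the paper uses for the improved $\log^{s/2}N$ factor (the kernel is a probability density in $\zb$, and $h\ge C>0$ so no $\epsilon_{\rm low}$ truncation), ending with the same count of $N^{d+d_y}(d+d_y)^s$ diffused local monomials. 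The only cosmetic deviations are your extra intersection with a fixed box in Step (i) (unnecessary, and since $f$ does not decay its removal must be justified via the Gaussian distance with a large enough constant rather than "boundedness of $f$") and a slightly different accuracy bookkeeping in Step (iii); neither affects the argument.
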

The proof of Lemma \ref{lemma::diffused localpoly approx exp} is provided in Appendix  \ref{sec::proof of lemma::diffused localpoly approx exp}. We remark that the polynomial dependence on $\log N$ is smaller than the one we obtain in Lemma \ref{lemma::diffused localpoly approx}. In the following lemma,  we present the second stage of our approximation process in which we construct a ReLU network to approximate this diffused local polynomial with a sufficiently small error.
\begin{lemma}[ReLU approximation]\label{lemma::relu approx diffused exp}
    Under Assumption \ref{assump::expdensity}, given the diffused local polynomial $f_1$ in Lemma \ref{lemma::diffused localpoly approx exp}, for any $\epsilon >0$, there exists a ReLU network $ \cF( W, \kappa, L, K) $ that gives rise to a function $f_1^{\trelu}(\xb,\yb,t) \in \cF$ satisfying 
\begin{align}
     \abs{f_1(\xb,\yb,t)-f_1^{\trelu}(\xb,\yb,t)}\lesssim \epsilon,
\end{align}
for any $\xb \in  [-C_x\sqrt{\log N},C_x\sqrt{\log N}]^{d}$, $\yb\in[0,1]^{d_y}$ and $t\in[N^{-C_{\sigma}},C_{\alpha}\log N]$.
The network configuration is
\begin{align*}
& W = {\cO}\left(N^{d+d_y}(\log^7 N+\log N\log^3 \epsilon^{-1})\right), \quad \kappa =\exp \left({\cO}(\log^4 N+\log^2 \epsilon^{-1})\right), \\
& \qquad L = {\cO}(\log^4 N+\log^2 \epsilon^{-1}), \quad K = {\cO}\left(N^{d+d_y}(\log^9 N+\log N\log^3 \epsilon^{-1})\right).
\end{align*}

\end{lemma}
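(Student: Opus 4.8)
The plan is to mirror the argument used for Lemma~\ref{lemma::relu approx diffused} in the slow-rate analysis, but with the diffused local monomials adapted to the decomposition from Lemma~\ref{lemma::score decompose exp}. Recall that under Assumption~\ref{assump::expdensity}, the diffused local polynomial $f_1$ approximating $h(\xb,\yb,t)$ has the analogous structure
\begin{align*}
f_1(\xb,\yb,t)=\sum_{\vb\in[N]^d,\wb\in[N]^{d_y}}\sum_{\norm{\nb}_1+\norm{\npb}_1\le s}c_{\nb,\npb,\vb,\wb}\cdot\dmono(\xb,\yb,t),
\end{align*}
where each diffused local monomial $\dmono$ factors into a polynomial-times-trapezoid factor in $\yb$ and a product over coordinates $i\in[d]$ of univariate integrals $\sum_{k<p}\tilde g(x_i,n_i,v_i,k)$, with $\tilde g$ the analogue of \eqref{equ::h} but now built from the Gaussian kernel with modified parameters $\hat\alpha_t=\alpha_t/(\alpha_t^2+C_2\sigma_t^2)$ and $\hat\sigma_t=\sigma_t/(\alpha_t^2+C_2\sigma_t^2)^{1/2}$. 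First I would establish the explicit closed form of $\tilde g$ by the same change of variables $w=(\hat\alpha_t\xb-\zb)/\hat\sigma_t$ that was used in the proof of Lemma~\ref{lemma::approx h}, expressing $\tilde g$ as a finite sum of terms of the form (constant)$\times$(polynomial in $x$)$\times$(power of $\hat\sigma_t/\hat\alpha_t$)$\times$(difference of powers of clipped linear functions of $x$). The only real change from the slow-rate version is that $\sigma_t,\alpha_t$ are replaced by $\hat\sigma_t,\hat\alpha_t$; since $\alpha_t^2+C_2\sigma_t^2$ is bounded away from $0$ and $\infty$ for $t\in[N^{-C_\sigma},C_\alpha\log N]$ and is super-smooth in $t$, the ReLU-approximability of all the elementary building blocks ($x$, $\hat\alpha_t$, $\hat\sigma_t$, inverses, products, clips, monomial powers) carries over verbatim.

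The key steps, in order, are: (i) write $\tilde g(x,n,v,k)$ in the explicit monomial form just described, identifying the finitely many atomic functions $f_{v,k,j}$ to be approximated, exactly as in \eqref{equ::single component}; (ii) invoke the ReLU approximation toolkit — $f_{\text{mult}}$ (Lemma~\ref{lemma::product}), the inverse/clip/power network constructions, and the super-smooth approximation of $\hat\alpha_t,\hat\sigma_t$ as functions of $t$ — to build a ReLU network $f_{v,k,j}^{\trelu}$ with error $\le\epsilon$ and hyperparameters $W=\cO(\log^6 N+\log^3\epsilon^{-1})$, $\kappa=\exp(\cO(\log^4 N+\log^2\epsilon^{-1}))$, $L=\cO(\log^4 N+\log^2\epsilon^{-1})$, $K=\cO(\log^8 N+\log^4\epsilon^{-1})$, just as Lemma~\ref{lemma::approx h} gives; (iii) aggregate these over $j$ to form $\tilde g^{\trelu}$, then over $k<p$ (with $p=\cO(\log N)$) and multiply together the $d$ coordinate factors and the $\yb$-factor via $f_{\text{mult}}$ to obtain $\dmono^{\trelu}$ with error controlled by $\epsilon/\big((d+d_y)^sR^sN^{d+d_y}\big)$ after rescaling; (iv) take the linear combination with the coefficients $c_{\nb,\npb,\vb,\wb}$ to assemble $f_1^{\trelu}$, and bound the total error by a triangle inequality summed over the $\le N^{d+d_y}(d+d_y)^s$ monomials, which absorbs the $N^{d+d_y}$ blow-up and yields $|f_1-f_1^{\trelu}|\lesssim\epsilon$. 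Tracking how width, depth, sparsity and weight magnitude compose through these operations (parallel composition adds widths/sparsities, serial composition adds depths, products of clipped quantities keep magnitudes polynomial in $\log N$) gives the claimed network configuration, with the extra $\log N$ factor in $W$ and $K$ coming from stacking $\cO(N^{d+d_y})$ copies of the per-monomial network.

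The main obstacle I anticipate is purely bookkeeping rather than conceptual: carefully propagating the approximation errors through the nested products. A naive bound on a product of $d$ factors each with additive error $\delta$ and magnitude $M$ gives error $\sim dM^{d-1}\delta$, and here $M$ can grow polynomially in $\sqrt{\log N}$ (the clipped arguments scale like $C_x\sqrt{\log N}$), so one must choose the per-factor target accuracy $\delta$ small enough — polynomially smaller than $\epsilon$ in $\log N$ and $N$ — to kill these amplification factors, while simultaneously checking that this only inflates $\log\epsilon^{-1}$ by $\cO(\log N)$ terms and hence does not change the stated asymptotic hyperparameters. A secondary technical point is verifying that the lower bound $f\ge C$ and upper bound on $\|f\|_{\mathcal H^\beta}$ from Assumption~\ref{assump::expdensity} propagate to a two-sided bound on $h$ (hence on $f_1$ and $f_1^{\trelu}$), which is needed so that no clipping of $h$ is required and so the inverse network applied downstream (in Proposition~\ref{prop::logh approx bounded exp}) has a bounded argument; this is the structural place where Assumption~\ref{assump::expdensity} buys the improved rate over Assumption~\ref{assump:sub}. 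Since the proof of Lemma~\ref{lemma::relu approx diffused} already carries out the identical bookkeeping, I would largely cite that argument and only highlight the substitution $\sigma_t,\alpha_t\mapsto\hat\sigma_t,\hat\alpha_t$ and the absence of the $\epsilon_{\rm low}$-clipping step.
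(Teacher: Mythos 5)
Your proposal follows essentially the same route as the paper: the paper proves this lemma by repeating the construction of Lemma~\ref{lemma::relu approx diffused}, with the single substantive change being the ReLU approximation of the redefined $g$ in \eqref{equ:: new g} built from $\hat\alpha_t,\hat\sigma_t$ (Lemma~\ref{lemma::approx g new}), followed by the same per-monomial accuracy rescaling $s!\epsilon/((d+d_y)^sR^sN^{d+d_y})$ and aggregation over the $\le N^{d+d_y}(d+d_y)^s$ monomials. Your error-propagation bookkeeping and hyperparameter accounting match the paper's; the remark about the two-sided bound on $h$ is correct but is used later in Proposition~\ref{prop::logh approx bounded exp}, not in this lemma.
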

The proof of Lemma \ref{lemma::relu approx diffused exp} is provided in Appendix \ref{sec::sub relu approx exp}. Moreover, we have similar results for approximating $\nabla h(\xb,\yb,t)$:
\begin{lemma}[Counterpart of Lemma \ref{lemma::diffused localpoly approx exp}]\label{lemma::diffused localpoly approx1 exp}
    Under Assumption \ref{assump::expdensity}, for sufficiently large integer $N>0$ and $1\le i\le d$, there exists a diffused local polynomial with at most $N^{d+d_y}(d+d_y)^s$ diffused local monomials $f_{2,i}(\xb,\yb,t)$ such that 
\begin{align}
     \abs{f_{2,i}(\xb,\yb,t)-\left[\frac{\hat{\sigma}_t}{\hat{\alpha}_t}\nabla h(\xb,\yb,t)\right]_{i}} \lesssim BN^{-\beta}\log^{\frac{s+1}{2}}N , ~~~\forall \xb \in \RR^d, \yb \in [0,1]^{d_y}, t>0.
\end{align}
\end{lemma}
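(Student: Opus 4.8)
The plan is to follow, nearly verbatim, the three-step construction of Lemma~\ref{lemma::diffused localpoly approx exp}, the only new feature being an extra linear factor inside the integral. Differentiating the expression for $h$ from Lemma~\ref{lemma::score decompose exp} under the integral sign (legitimate since the integrand and its $\xb$-gradient are dominated by a Gaussian times the bounded $f$) gives
\begin{align*}
\left[\frac{\hat\sigma_t}{\hat\alpha_t}\nabla h(\xb,\yb,t)\right]_i = \int_{\RR^d} f(\zb,\yb)\,\frac{z_i-\hat\alpha_t x_i}{\hat\sigma_t}\,\frac{1}{(2\pi)^{d/2}\hat\sigma_t^d}\exp\!\left(-\frac{\norm{\zb-\hat\alpha_t\xb}^2}{2\hat\sigma_t^2}\right)\diff \zb,
\end{align*}
which is exactly the integral defining $h(\xb,\yb,t)$ carrying the extra factor $\frac{z_i-\hat\alpha_t x_i}{\hat\sigma_t}$. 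The diffused local polynomial $f_{2,i}$ will then be obtained by approximating $f$, this linear factor, and the Gaussian kernel by polynomials in $\zb$ and integrating term by term, exactly as in the construction of $f_1$ for $h$.

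Concretely, I would first truncate the $\zb$-integral to the bounded box $\textbf{B}_{\xb,N}$ on which $\frac{|z_i-\hat\alpha_t x_i|}{\hat\sigma_t}=\cO(\sqrt{\beta\log N})$, using the analogue of Lemma~\ref{lemma::clipz} with multi-index $\vb=\mathbf{e}_i$ (so $\norm{\vb}_1=1$) together with the boundedness of $f$ (Assumption~\ref{assump::expdensity}), which reduces the discarded tail to a polynomial-times-Gaussian integral bounded by $N^{-\beta}$ once $\textbf{B}_{\xb,N}$ has the correct logarithmic radius. On $\textbf{B}_{\xb,N}$ I would then replace $f(\zb,\yb)$ by its $s$-th order local Taylor polynomials on an $N^{d+d_y}$-grid exactly as in Lemma~\ref{lemma::diffused localpoly approx exp}; since $\norm{f}_{\cH^\beta}<B$, the rescaled version has H\"older norm $\lesssim BR^s$ with $R=\cO(\sqrt{\log N})$, giving a local error $\lesssim BN^{-\beta}(\log N)^{s/2}$. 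Finally I would Taylor-expand the Gaussian kernel to degree $p=\cO(\log N)$, which on $\textbf{B}_{\xb,N}$ again costs only $\lesssim N^{-\beta}$. Multiplying the three polynomial factors and integrating against the normalized Gaussian term by term produces (now possibly odd) Gaussian moments, each explicitly computable and packaged as a diffused local monomial of the form in \eqref{equ::Diffused local monomial}; collecting them over the grid yields $f_{2,i}$ with at most $N^{d+d_y}(d+d_y)^s$ diffused local monomials.

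The error bookkeeping is the crux. The only discrepancy with Lemma~\ref{lemma::diffused localpoly approx exp} is the factor $\frac{z_i-\hat\alpha_t x_i}{\hat\sigma_t}$: on $\textbf{B}_{\xb,N}$ it is $\cO(\sqrt{\log N})$ and it raises by one the degree of the local polynomial being integrated, and in either accounting it multiplies the Step~(ii) error by a single $\sqrt{\log N}$, turning $BN^{-\beta}(\log N)^{s/2}$ into $BN^{-\beta}(\log N)^{(s+1)/2}$; the domain-truncation and exponential-truncation errors remain $\lesssim N^{-\beta}$, so summing the three yields the claimed bound. I expect the main obstacle to be exactly this bookkeeping --- confirming that inserting the linear factor adds exactly one power of $\sqrt{\log N}$ and no more, that the resulting odd-moment Gaussian integrals still admit the same closed form used to define diffused local monomials, and that all implied constants remain uniform over $t>0$ and over the stated range of $\xb$ --- precisely the care already exercised for Lemma~\ref{lemma::diffused localpoly approx1} in the non-exponential setting.
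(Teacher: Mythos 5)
Your proposal is correct and follows essentially the same route as the paper: truncate the integral via Lemma~\ref{lemma::clipz2} with $\vb=\mathbf{e}_i$, reuse the local-polynomial and Gaussian-Taylor construction of Lemma~\ref{lemma::diffused localpoly approx exp}, and account for the extra factor $\frac{z_i-\hat\alpha_t x_i}{\hat\sigma_t}=\cO(\sqrt{\log N})$ on $\textbf{B}_{\xb,N}$, which raises the polynomial degree by one and contributes exactly the additional $\sqrt{\log N}$ in the error bound.
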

\begin{lemma}[Counterpart of Lemma \ref{lemma::relu approx diffused exp}]\label{lemma::relu approx diffused1 exp}
    Under Assumption \ref{assump::expdensity}, given the diffused local polynomial mapping $\mathbf{f}_2=[f_{2,1},f_{2,2},...,f_{2,d}]^{\top}$ in Lemma \ref{lemma::diffused localpoly approx1 exp}, for any $\epsilon >0$, there exists a ReLU network $ \cF( W, \kappa, L, K) $ that gives rise to a mapping $\mathbf{f}_2^{\trelu}(\xb,\yb,t) \in \cF$ satisfying 
\begin{align}
     \norm{\mathbf{f}_2(\xb,\yb,t)-\mathbf{f}_2^{\trelu}(\xb,\yb,t)}_{\infty}\lesssim \epsilon,
\end{align}
for any $\xb \in  [-C_x\sqrt{\log N},C_x\sqrt{\log N}]^{d}$, $\yb\in[0,1]^{d_y}$ and $t\in[N^{-C_{\sigma}},C_{\alpha}\log N]$.
The network configuration is the same as in Lemma \ref{lemma::relu approx diffused exp}.
\end{lemma}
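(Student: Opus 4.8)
The plan is to obtain the vector-valued ReLU network $\mathbf{f}_2^{\trelu}$ by a coordinate-wise application of Lemma \ref{lemma::relu approx diffused exp}, in exact parallel to how Lemma \ref{lemma::relu approx diffused1} is derived from Lemma \ref{lemma::relu approx diffused} in the non-exponential setting. The starting point is that, by Lemma \ref{lemma::diffused localpoly approx1 exp}, each coordinate $f_{2,i}$, $i\in[d]$, is a diffused local polynomial built from at most $N^{d+d_y}(d+d_y)^s$ diffused local monomials of the same functional form as the diffused local polynomial $f_1$ treated in Lemma \ref{lemma::relu approx diffused exp}. The only structural difference is that the polynomial factor inside each monomial has degree larger by one --- a consequence of the extra factor $(\zb-\hat{\alpha}_t\xb)/\hat{\sigma}_t$ appearing in the numerator of $\nabla h$ in Lemma \ref{lemma::score decompose exp} --- which changes neither the count of monomials nor the implementation scheme, and merely enlarges a constant number of binomial/Taylor terms inside each univariate diffused integral.

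First I would run the construction of Lemma \ref{lemma::relu approx diffused exp} separately on each $f_{2,i}$: represent the trapezoid factors exactly (Lemma \ref{lemma::approx trapezoid}), approximate the power-of-$\yb$ monomials and the univariate diffused integrals by the ReLU building blocks for \emph{product}, \emph{inverse}, \emph{clip} and \emph{poly} together with the ReLU implementations of $\sigma_t$ and $\alpha_t$ as functions of $t$, and combine all factors with the multiplication gadget of Lemma \ref{lemma::product}. This yields $f_{2,i}^{\trelu}$ with $\abs{f_{2,i}(\xb,\yb,t)-f_{2,i}^{\trelu}(\xb,\yb,t)}\lesssim \epsilon$ uniformly over $\xb\in[-C_x\sqrt{\log N},C_x\sqrt{\log N}]^d$, $\yb\in[0,1]^{d_y}$ and $t\in[N^{-C_\sigma},C_\alpha\log N]$, where each $f_{2,i}^{\trelu}$ lies in a network class with the hyperparameters of Lemma \ref{lemma::relu approx diffused exp}. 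Then I would stack the $d$ coordinate networks in parallel into $\mathbf{f}_2^{\trelu}=[f_{2,1}^{\trelu},\dots,f_{2,d}^{\trelu}]^{\top}$: placing a constant number $d$ of networks side by side leaves the depth $L$ unchanged and multiplies the width $W$, the number of nonzero parameters $K$ and the output dimension by $d$, so the asymptotic configuration $(W,\kappa,L,K)$ is identical to that of Lemma \ref{lemma::relu approx diffused exp}. Finally, $\norm{\mathbf{f}_2(\xb,\yb,t)-\mathbf{f}_2^{\trelu}(\xb,\yb,t)}_\infty=\max_{i\in[d]}\abs{f_{2,i}(\xb,\yb,t)-f_{2,i}^{\trelu}(\xb,\yb,t)}\lesssim \epsilon$, which is the claimed bound.

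There is no serious obstacle --- the argument is a bookkeeping reduction to the scalar case --- but the one point deserving care is to confirm that raising the degree of the polynomial factor by one does not inflate the exponents of $\log N$ or $\log\epsilon^{-1}$ in the hyperparameters. Since the extra factor contributes only $\cO(1)$ additional terms per monomial and at most one additional power inside each univariate integral, every error-propagation estimate in the proof of Lemma \ref{lemma::relu approx diffused exp} carries over with the same polylogarithmic orders; this mirrors exactly the behavior already recorded for Lemma \ref{lemma::relu approx diffused1} versus Lemma \ref{lemma::relu approx diffused}.
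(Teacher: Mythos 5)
Your proposal is correct and follows essentially the same route as the paper, which likewise reduces the vector-valued case to a coordinate-wise application of Lemma \ref{lemma::relu approx diffused exp} (mirroring how Lemma \ref{lemma::relu approx diffused1} follows from Lemma \ref{lemma::relu approx diffused}), with the only change being the degree increase by one in the local polynomials and unchanged asymptotic hyperparameters. One minor note: in the exponential setting the time-dependent building blocks are $\hat{\alpha}_t$ and $\hat{\sigma}_t$ (Lemmas \ref{lemma::relu alpha hat} and \ref{lemma::relu sigma hat}) rather than $\alpha_t$ and $\sigma_t$, but this does not affect your argument or the stated network configuration.
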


With all the lemmas above, we can begin our proof of Proposition \ref{prop::logh approx bounded exp}.

\begin{proof}[Proof of Proposition \ref{prop::logh approx bounded exp}]
From the lemmas above, we obtain $ f_1(\xb,\yb,t), f^{\trelu}_1(\xb,\yb,t) $ to approximate $h(\xb,\yb,t)$, and $\mathbf{f}_2,\mathbf{f}^{\trelu}_2$ to approximate $\frac{\hat{\sigma}_t}{\hat{\alpha}_t} \nabla h(\xb,\yb)$.
    By symmetry, we only consider approximating the first element of $ \nabla h(\xb,\yb,t)$, which we denote by $\nabla h_1(\xb,\yb)$.
    For simplicity, we denote the first element of $\mathbf{f}_2$ by $f_2$. 
    
By the definition of $h$, for any $\xb \in [-C_x\sqrt{\log N}]$, $\yb \in[0,1]^{d_y}$ and $N^{-C_\sigma}t \le t \le C_\alpha \log N$, we have $ C_1\le h(\xb,\yb,t) \le B$ and $\norm{\frac{\hat{\sigma}_t}{\hat{\alpha}_t} \nabla h(\xb,\yb,t)}_\infty \le \sqrt{\frac{2}{\pi}}B$ (see Lemma \ref{lemma::bound h and nabla h}). Accordingly, we make $N$ sufficiently large so that $\frac{C_1}{2} \le f_1(\xb,\yb,t) \le 2B$ and $f_2 \le B$. Then we have 
\begin{align*}
    & \quad \abs{ \frac{\nabla h_1(\xb,\yb,t)}{h(\xb,\yb,t)}- \frac{\hat{\alpha}_t}{\hat{\sigma}_t} \frac{f_2(\xb,\yb,t)}{f_1(\xb,\yb,t)} } \\
    &\le \abs{ \frac{\nabla h_1(\xb,\yb,t)}{h(\xb,\yb,t)}-\frac{\nabla h_1(\xb,\yb,t)}{f_1(\xb,\yb,t)} }+ \abs{ \frac{\nabla h_1(\xb,\yb,t)}{f_1(\xb,\yb,t)} -\frac{\hat{\alpha}_t}{\hat{\sigma}_t} \frac{f_2(\xb,\yb,t)}{f_1(\xb,\yb,t)} }
    \\
    &\le \abs{\nabla h_1(\xb,\yb,t)}\abs{\frac{h(\xb,\yb,t)-f_1(\xb,\yb,t)}{h(\xb,\yb,t)f_1(\xb,\yb,t)}}+
    \frac{\hat{\alpha}_t}{\hat{\sigma}_t}\abs{\frac{f_2(\xb,\yb,t)-\frac{\hat{\sigma}_t}{\hat{\alpha}_t}\nabla h_1(\xb,\yb,t)}{f_1(\xb,\yb,t)}}  \\ &\overset{(i)}{\le}
    \frac{\hat{\alpha}_t}{\hat{\sigma}_t} \left(\frac{2}{C^2_1} BN^{-\beta}\log^{\frac{s}{2}}N +\frac{2}{C_1} BN^{-\beta}\log^{\frac{s+1}{2}}N.\right)\\
    &\lesssim \frac{B}{\sigma_t}N^{-\beta}\log ^{\frac{s+1}{2}} N.
\end{align*}
In (i), we invoke the diffused polynomial approximation error bound in Lemmas \ref{lemma::diffused localpoly approx exp} and \ref{lemma::diffused localpoly approx1 exp} and the lower bound of $f_1$. Applying to other elements give rise to the bounded $L_{\infty}$ error:
\begin{align*}
    \left\|\frac{\nabla h(\xb,\yb,t)}{h(\xb,\yb,t)}- \frac{\hat{\alpha}_t}{\hat{\sigma}_t} \frac{\fb_2(\xb,\yb,t)}{f_1(\xb,\yb,t)}\right\|_{\infty} \lesssim \frac{B}{\sigma_t}N^{-\beta}\log ^{\frac{s+1}{2}} N.
\end{align*}

For the ReLU approximation, 
we use Lemmas \ref{lemma::relu approx diffused exp} and \ref{lemma::relu approx diffused1 exp} to construct a ReLU network to approximate 
\begin{align}\label{equ::fb3 new definition}
    \fb_3(\xb,\yb,t):=\frac{\hat{\alpha}_t}{\hat{\sigma}_t} \frac{\fb_2}{f_1}-\frac{C_2\xb}{\alpha_t^2+C_2\sigma_t^2}.
\end{align}
Our constructed network architecture is depicted in Figure \ref{fig:ReLU4}, and the details about how to determine the network size and the error propagation are presented in Appendix \ref{sec::f3 new relu}.
\begin{figure}
\centering\includegraphics[width=0.6\linewidth]{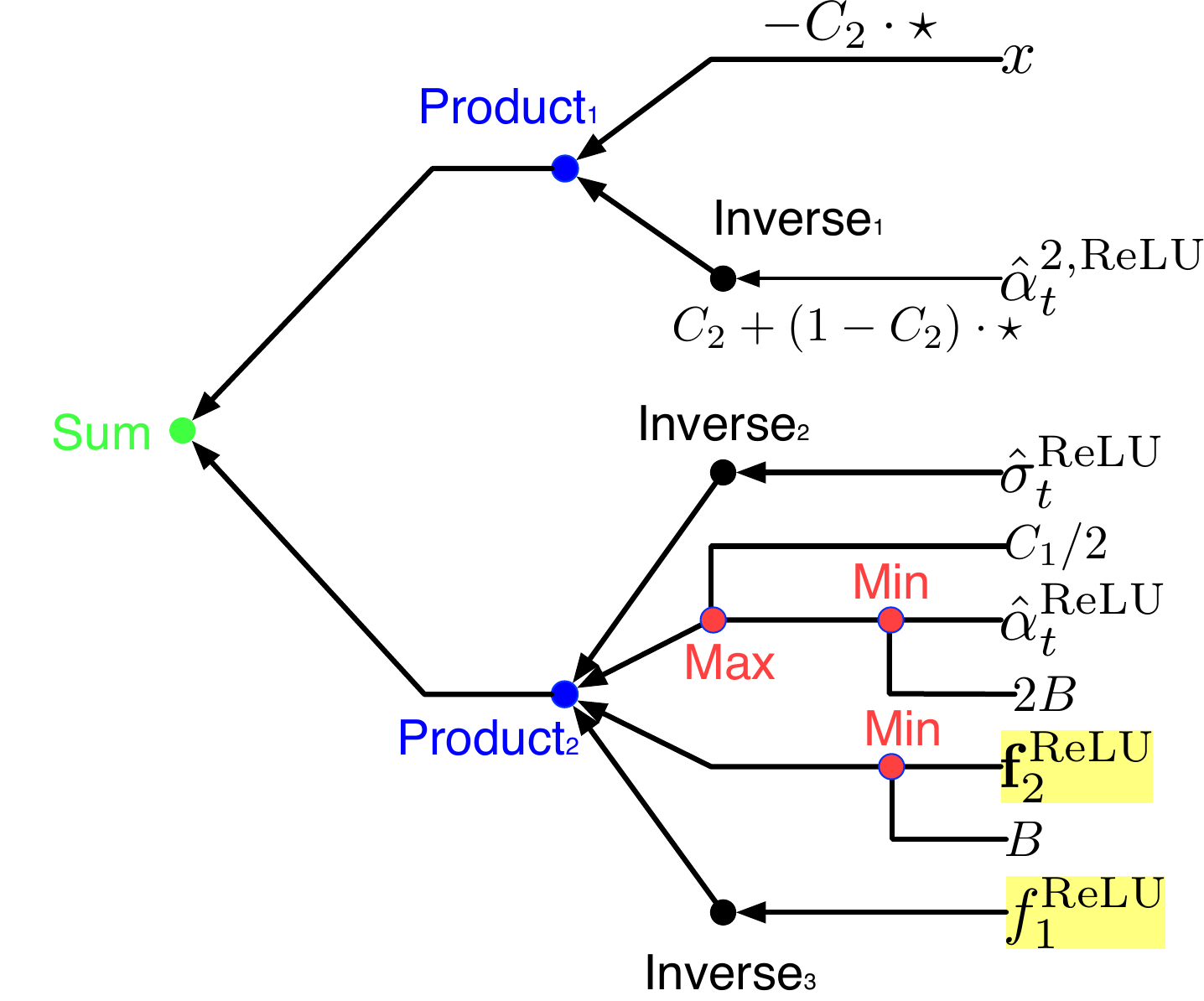}
\caption{Network architecture of $\fb^{\trelu}_3$. We implement all the components of $\fb_3$ ($f_1$, $\fb_2$, $\hat{\sigma}_t$ and $\hat{\alpha}_t$) through ReLU networks and combine them using the ReLU-expressed operators (\textit{product}, \textit{inverse} and \textit{entrywise-min/max}) to express $\fb_3$ according to its definition in \eqref{equ::fb3 new definition}.}
\label{fig:ReLU4}
\end{figure}
From the construction above we obtain a ReLU network $\fb^{\trelu}_{3} \in \cF(M_t, W, \kappa, L, K) $ with $M_t\lesssim \frac{\sqrt{\log N}}{\sigma_t} $, $ L = {\cO}(\log^4 N) , W= {\cO}\left(N^{d+d_y}(\log^7 N)\right)$, $ K= {\cO}\left(N^{d+d_y}(\log^9 N)\right)$ and $\kappa =\exp \left({\cO}(\log^4 N)\right)$ such that 
$$
\left\|\fb^{\trelu}_{3}(\xb,\yb,t)-\frac{\hat{\alpha}_t}{\hat{\sigma}_t} \frac{\fb_2(\xb,\yb,t)}{f_1(\xb,\yb,t)}+\frac{C_2\xb}{\alpha_t^2+C_2\sigma_t^2}\right\|\le N^{-\beta}.
$$
Thus, we have
$$\left\|\fb^{\trelu}_{3}-\nabla\log p_{t}(\xb|
\yb)\right\| \lesssim \frac{B}{\sigma_t}N^{-\beta}\log ^{\frac{s+1}{2}} N+N^{-\beta}\lesssim \frac{B}{\sigma_t}N^{-\beta}\log ^{\frac{s+1}{2}} N$$
for any $\xb \in [-C_x\sqrt{\log N},C_x\sqrt{\log N}]^d $, $\yb \in [0,1]^{d_y}$ and $ N^{-C_\sigma} \le t \le C_{\alpha}\log N$. By taking $\sb=\fb^{\trelu}_{3}$, the proof is complete.
\end{proof}
\subsection{Proofs of Lemmas \ref{lemma::diffused localpoly approx exp} and \ref{lemma::diffused localpoly approx1 exp}}\label{sec::proof of lemma::diffused localpoly approx exp}
To prove the lemma, we first show some properties of the $h(\xb,\yb,t)$ and $\nabla \log h(\xb,\yb,t)$.
\begin{lemma}\label{lemma::bound h and nabla h}
Under Assumption \ref{assump::expdensity}, $h(\xb,\yb,t)$ and $\frac{\hat{\sigma}_t}{\hat{\alpha}_t} \nabla h(\xb,\yb,t)$ can be bounded as:
\begin{align}
 C_1\le h(\xb,\yb,t) \le B, ~~~ \norm{\frac{\hat{\sigma}_t}{\hat{\alpha}_t} \nabla h(\xb,\yb,t)}_\infty \le \sqrt{\frac{2}{\pi}}B
\end{align}
\end{lemma}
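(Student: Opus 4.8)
The plan is to read off both estimates directly from the representation of $h$ in Lemma~\ref{lemma::score decompose exp} as a Gaussian convolution of the bounded H\"older function $f$. Write $g_{\xb,t}(\zb)=\frac{1}{(2\pi)^{d/2}\hat{\sigma}_t^d}\exp\left(-\frac{\norm{\zb-\hat{\alpha}_t\xb}^2}{2\hat{\sigma}_{t}^2}\right)$, which for every fixed $(\xb,t)$ with $t>0$ is a probability density in $\zb$ (recall $\hat{\alpha}_t,\hat{\sigma}_t>0$ whenever $t>0$), so that $h(\xb,\yb,t)=\int f(\zb,\yb)\,g_{\xb,t}(\zb)\diff \zb$. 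The only two facts about $f$ needed are $f(\zb,\yb)\ge C$ for all $(\zb,\yb)$, which is Assumption~\ref{assump::expdensity} (this $C$ is the constant written $C_1$ in the statement), and $f(\zb,\yb)\le B$ for all $(\zb,\yb)$, which is the $\norm{\cdot}_1=0$ term in the H\"older norm, i.e. $\sup_{\zb,\yb}\abs{f(\zb,\yb)}\le\norm{f}_{\mH^\beta}<B$. Integrating the sandwich $C\le f(\zb,\yb)\le B$ against $g_{\xb,t}$ and using $\int g_{\xb,t}(\zb)\diff \zb=1$ then gives $C\le h(\xb,\yb,t)\le B$ at once.

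\textbf{The gradient bound.} Next I would differentiate under the integral sign: since $f$ is bounded and, for $\xb$ ranging over a bounded set and $t$ over a compact subinterval of $(0,\infty)$, both $g_{\xb,t}(\zb)$ and $\nabla_\xb g_{\xb,t}(\zb)$ admit a common Gaussian-tailed, hence integrable, dominating function in $\zb$, dominated convergence yields $\nabla_\xb h(\xb,\yb,t)=\int f(\zb,\yb)\,\nabla_\xb g_{\xb,t}(\zb)\diff \zb$. A one-line computation gives $\nabla_\xb g_{\xb,t}(\zb)=g_{\xb,t}(\zb)\,\frac{\hat{\alpha}_t}{\hat{\sigma}_t}\cdot\frac{\zb-\hat{\alpha}_t\xb}{\hat{\sigma}_t}$, so
\begin{align*}
\frac{\hat{\sigma}_t}{\hat{\alpha}_t}\,\nabla_\xb h(\xb,\yb,t)=\int f(\zb,\yb)\,\frac{\zb-\hat{\alpha}_t\xb}{\hat{\sigma}_t}\,g_{\xb,t}(\zb)\diff \zb,
\end{align*}
which is exactly the expression already appearing in Lemma~\ref{lemma::score decompose exp}. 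Bounding $\abs{f}\le B$ coordinatewise, the $i$-th component is at most $B\int\abs{\frac{z_i-\hat{\alpha}_t x_i}{\hat{\sigma}_t}}\,g_{\xb,t}(\zb)\diff \zb$; the substitution $\wb=(\zb-\hat{\alpha}_t\xb)/\hat{\sigma}_t$ turns $g_{\xb,t}(\zb)\diff \zb$ into the standard Gaussian measure on $\R^d$, and the surviving one-dimensional marginal integral equals $\EE\abs{W_i}=\sqrt{\frac{2}{\pi}}$ for $W_i\sim{\sf N}(0,1)$. Hence $\abs{[\frac{\hat{\sigma}_t}{\hat{\alpha}_t}\nabla h]_i}\le\sqrt{\frac{2}{\pi}}\,B$ for every $i$, i.e. $\norm{\frac{\hat{\sigma}_t}{\hat{\alpha}_t}\nabla h}_\infty\le\sqrt{\frac{2}{\pi}}\,B$.

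\textbf{Where the difficulty lies.} There is essentially no obstacle here: the argument is a change of variables plus the elementary moment $\EE\abs{W}=\sqrt{2/\pi}$ for $W\sim{\sf N}(0,1)$, and the only point deserving a word of care is the interchange of $\nabla_\xb$ and $\int$, which the Gaussian tails of $g_{\xb,t}$ and $\nabla_\xb g_{\xb,t}$ together with the boundedness of $f$ render routine. I would perform the substitution $\wb=(\zb-\hat{\alpha}_t\xb)/\hat{\sigma}_t$ once in full, since the identical normalization reappears when $\nabla h$ is later approximated by diffused local polynomials and by ReLU networks in Lemmas~\ref{lemma::diffused localpoly approx1 exp} and~\ref{lemma::relu approx diffused1 exp}.
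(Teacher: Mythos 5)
Your proposal is correct and matches the paper's own argument: the paper likewise sandwiches $C\le f\le B$, integrates against the Gaussian kernel (which has unit mass), and uses $\int \frac{1}{(2\pi)^{d/2}\hat{\sigma}_t^d}\bigl|\frac{z_i-\hat{\alpha}_t x_i}{\hat{\sigma}_t}\bigr|\exp\bigl(-\frac{\norm{\zb-\hat{\alpha}_t\xb}^2}{2\hat{\sigma}_t^2}\bigr)\diff\zb=\sqrt{2/\pi}$ for the gradient bound. Your extra care about differentiating under the integral sign and the remark on the $C$ versus $C_1$ labeling are fine but not needed beyond what the paper records.
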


\begin{lemma}\label{lemma::scorebound2}
Under Assumption \ref{assump::expdensity}, the diffused density function $p_t(\xb |\yb)$ can be bounded as:
\begin{align}
\norm{\nabla \log p_t(\xb |\yb)}_{\infty} &\le \frac{C_2\norm{\xb}_{\infty}}{\alpha_t^2+C_2\sigma_t^2}+\frac{B}{C_1}\frac{\hat{\alpha}_t}{\hat{\sigma}_t}
\lesssim \norm{\xb}_{\infty}+\frac{1}{\sigma_t}
\end{align}
\end{lemma}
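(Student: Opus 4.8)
The plan is to combine the score decomposition from Lemma~\ref{lemma::score decompose exp} with the elementary bounds on $h$ and $\nabla h$ from Lemma~\ref{lemma::bound h and nabla h}. By Lemma~\ref{lemma::score decompose exp},
\begin{align*}
\nabla \log p_t(\xb|\yb) = \frac{-C_2\xb}{\alpha_t^2+C_2\sigma_t^2} + \frac{\nabla h(\xb,\yb,t)}{h(\xb,\yb,t)},
\end{align*}
so it suffices to bound the two summands in the $\ell_\infty$ norm and apply the triangle inequality. The first summand is explicit and contributes exactly $\frac{C_2\norm{\xb}_\infty}{\alpha_t^2+C_2\sigma_t^2}$. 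For the second, I would use the pointwise estimates $h(\xb,\yb,t)\ge C_1$ and $\norm{\tfrac{\hat\sigma_t}{\hat\alpha_t}\nabla h(\xb,\yb,t)}_\infty \le \sqrt{2/\pi}\,B$ from Lemma~\ref{lemma::bound h and nabla h}; dividing coordinatewise gives $\norm{\nabla h/h}_\infty \le \frac{\hat\alpha_t}{\hat\sigma_t}\cdot\frac{\sqrt{2/\pi}\,B}{C_1}\le \frac{B}{C_1}\cdot\frac{\hat\alpha_t}{\hat\sigma_t}$, which together with the triangle inequality yields the first displayed inequality of the lemma. If one prefers not to quote Lemma~\ref{lemma::bound h and nabla h}, the same bound follows directly: bound the numerator of the ratio of integrals using $f\le B$, the denominator using $f\ge C_1$, and change variables $\wb=(\zb-\hat\alpha_t\xb)/\hat\sigma_t$ so that the remaining ratio of Gaussian integrals equals $\EE\abs{W_i}=\sqrt{2/\pi}$ for $W\sim{\sf N}(0,I_d)$.

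For the second, $\lesssim$, claim I would use only the identities $\alpha_t=e^{-t/2}$, $\sigma_t^2=1-e^{-t}$, which give $\alpha_t^2+\sigma_t^2=1$ and $\alpha_t,\sigma_t\in(0,1]$. Consequently $\alpha_t^2+C_2\sigma_t^2\ge \min(1,C_2)(\alpha_t^2+\sigma_t^2)=\min(1,C_2)$, so $\frac{C_2}{\alpha_t^2+C_2\sigma_t^2}\le \max(1,C_2)$ is an absolute constant and the first term is $\lesssim\norm{\xb}_\infty$. Likewise, unfolding $\hat\alpha_t=\alpha_t/(\alpha_t^2+C_2\sigma_t^2)$ and $\hat\sigma_t=\sigma_t/(\alpha_t^2+C_2\sigma_t^2)^{1/2}$ gives $\frac{\hat\alpha_t}{\hat\sigma_t}=\frac{\alpha_t}{\sigma_t(\alpha_t^2+C_2\sigma_t^2)^{1/2}}\le \frac{1}{\sigma_t\min(1,\sqrt{C_2})}\lesssim\frac1{\sigma_t}$, so the second term is $\lesssim\frac1{\sigma_t}$. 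Adding the two bounds gives $\norm{\nabla\log p_t(\xb|\yb)}_\infty\lesssim\norm{\xb}_\infty+\frac1{\sigma_t}$.

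This argument is short and essentially bookkeeping, so there is no genuine obstacle. The only point requiring a little care is the simplification of the time-dependent factors $\hat\alpha_t/\hat\sigma_t$ and $\alpha_t^2+C_2\sigma_t^2$ into the clean $\frac1{\sigma_t}$ and constant forms, which rests on the Ornstein--Uhlenbeck relation $\alpha_t^2+\sigma_t^2=1$. The role of this lemma is auxiliary: the sharper (first) bound feeds into the domain-truncation estimate of Lemma~\ref{lemma::truncation x exp} and into the output-range (clipping) construction in Proposition~\ref{prop::logh approx bounded exp}, mirroring the way Lemma~\ref{lemma::score bound} is used in the weaker setting of Theorem~\ref{thm::score approx}.
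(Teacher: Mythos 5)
Your proof is correct and follows essentially the same route the paper intends: combine the decomposition $\nabla\log p_t(\xb|\yb)=\frac{-C_2\xb}{\alpha_t^2+C_2\sigma_t^2}+\frac{\nabla h}{h}$ from Lemma~\ref{lemma::score decompose exp} with the bounds $h\ge C_1$ and $\norm{\tfrac{\hat\sigma_t}{\hat\alpha_t}\nabla h}_\infty\le\sqrt{2/\pi}\,B$ from Lemma~\ref{lemma::bound h and nabla h}, then simplify the time factors via $\alpha_t^2+C_2\sigma_t^2\ge\min(1,C_2)$ and $\hat\alpha_t/\hat\sigma_t=\alpha_t/(\sigma_t\sqrt{\alpha_t^2+C_2\sigma_t^2})\lesssim 1/\sigma_t$. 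No gaps.
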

\begin{lemma}[Clip the integral]\label{lemma::clipz2}
    Under Assumption \ref{assump::expdensity}, for any $\xb \in \R^d$ and $\mathbf{v}\in \Z_{+}^d$ with $\norm{\mathbf{v}}_1 \le n$. There exists a constant $C(n,d)$ such that for any $\xb$ and  $\epsilon <0.99$ :
    \begin{align*}
        \bigg|\int_{\R^d}\left(\frac{\zb-\hat{\alpha}_t\xb}{\hat{\sigma}_t}\right)^\vb& f(\zb,\yb) \frac{1}{(2\pi)^{d/2}\hat{\sigma}_t^d}\exp\left(-\frac{\norm{\zb-\hat{\alpha}_t\xb}^2}{2\hat{\sigma}_{t}^2}\right)\diff \zb\\
        &-\int_{\textbf{B}_x}\left(\frac{\zb-\hat{\alpha}_t\xb}{\hat{\sigma}_t}\right)^\vb f(\zb,\yb) \frac{1}{(2\pi)^{d/2}\hat{\sigma}_t^d}\exp\left(-\frac{\norm{\zb-\hat{\alpha}_t\xb}^2}{2\hat{\sigma}_{t}^2}\right)\diff \zb\bigg| \le \epsilon,
    \end{align*}
    where $\textbf{B}_x=\left[\hat{\alpha}_t\xb-C(n,d)\hat{\sigma}_t\sqrt{\log \epsilon^{-1}},\hat{\alpha}_t\xb+C(n,d)\hat{\sigma}_t\sqrt{\log \epsilon^{-1}} \right] $.
    
\end{lemma}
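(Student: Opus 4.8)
The plan is to follow the template of the proof of Lemma~\ref{lemma::clipz}, but the argument is considerably lighter here: Assumption~\ref{assump::expdensity} gives the uniform upper bound $f(\zb,\yb)\le B$, so once $f$ is pulled out of the integral the remaining quantity is purely Gaussian and no sub-Gaussian tail bound on $p$ is needed. First I would observe that the left-hand side equals $\bigl|\int_{\R^d\setminus\textbf{B}_x}\bigl(\tfrac{\zb-\hat\alpha_t\xb}{\hat\sigma_t}\bigr)^{\vb} f(\zb,\yb)\tfrac{1}{(2\pi)^{d/2}\hat\sigma_t^d}\exp\bigl(-\tfrac{\norm{\zb-\hat\alpha_t\xb}^2}{2\hat\sigma_t^2}\bigr)\diff\zb\bigr|$, which, since $f\ge 0$ and the Gaussian weight is positive, is at most
\[
B\int_{\R^d\setminus\textbf{B}_x}\left|\left(\frac{\zb-\hat\alpha_t\xb}{\hat\sigma_t}\right)^{\vb}\right|\frac{1}{(2\pi)^{d/2}\hat\sigma_t^{d}}\exp\left(-\frac{\norm{\zb-\hat\alpha_t\xb}^2}{2\hat\sigma_t^2}\right)\diff\zb.
\]

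Next I would change variables $\wb=(\zb-\hat\alpha_t\xb)/\hat\sigma_t$. By definition $\textbf{B}_x$ is exactly the preimage of the cube $\{\norm{\wb}_\infty\le R\}$ with $R:=C(n,d)\sqrt{\log\epsilon^{-1}}$, so the domain becomes $\{\norm{\wb}_\infty> R\}$ and the Jacobian $\hat\sigma_t^{d}$ cancels the normalization, leaving $B\int_{\norm{\wb}_\infty>R}\prod_{i=1}^d|w_i|^{v_i}(2\pi)^{-d/2}e^{-\norm{\wb}^2/2}\diff\wb$. I would then use $\{\norm{\wb}_\infty>R\}\subseteq\bigcup_{i=1}^d\{|w_i|>R\}$ and a union bound to write this as a sum of $d$ products of one-dimensional integrals: the coordinates $j\ne i$ contribute the absolute Gaussian moments $\EE[|W|^{v_j}]$ (finite constants depending only on $n$), while coordinate $i$ contributes the tail moment $\int_{|w|>R}|w|^{v_i}(2\pi)^{-1/2}e^{-w^2/2}\diff w$.

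The last step is the standard one-dimensional Gaussian tail estimate: since $w\mapsto|w|^{m}e^{-w^2/2}$ is decreasing for $|w|\ge\sqrt{m}$, a repeated integration by parts gives $\int_{|w|>R}|w|^{m}(2\pi)^{-1/2}e^{-w^2/2}\diff w\lesssim R^{\,m-1}e^{-R^2/2}$ whenever $R\ge\sqrt{n}$, which holds once $C(n,d)$ is chosen large enough, because $\sqrt{\log\epsilon^{-1}}$ is bounded below by a positive constant on $\epsilon\le 0.99$. Substituting $R=C(n,d)\sqrt{\log\epsilon^{-1}}$ bounds the whole expression by $B\,d\cdot\mathrm{const}(n,d)\cdot\bigl(C(n,d)\sqrt{\log\epsilon^{-1}}\bigr)^{n-1}\epsilon^{\,C(n,d)^2/2}$, and since $\epsilon^{\delta}(\log\epsilon^{-1})^{k}\to0$ as $\epsilon\to0$ for any $\delta,k>0$ while the expression stays bounded on $(0,0.99]$, a single sufficiently large choice of $C(n,d)$ makes it at most $\epsilon$. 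I do not expect a genuine obstacle; the only point needing care is that this one choice of $C(n,d)$ must simultaneously trigger the decay (requiring $C(n,d)$ large enough that $R\ge\sqrt n$ over the whole range of $\epsilon$) and absorb the polynomial-in-$\log\epsilon^{-1}$ prefactor into the factor $\epsilon^{C(n,d)^2/2}$ uniformly in $\epsilon$, which it does.
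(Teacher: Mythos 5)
Your proposal is correct and follows essentially the same route as the paper's own proof: change variables to $\wb=(\zb-\hat{\alpha}_t\xb)/\hat{\sigma}_t$, bound $f\le B$, take a union bound over coordinates, control the off-tail coordinates by absolute Gaussian moments and the tail coordinate by a one-dimensional Gaussian tail estimate, then choose $C(n,d)$ (depending also on the constant $B$) large enough that the polynomial factor in $\sqrt{\log\epsilon^{-1}}$ is absorbed uniformly over $\epsilon<0.99$. The only differences are cosmetic, e.g.\ your slightly different form of the one-dimensional tail bound, so no changes are needed.
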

The proofs of the lemmas above are provided in Appendix \ref{sec::proof other lemmas B}. With all the previous lemmas, we begin to prove Lemma \ref{lemma::diffused localpoly approx exp}.
\begin{proof}[Proof of Lemma \ref{lemma::diffused localpoly approx exp}]
    We prove the lemma following the proof of Lemma \ref{lemma::diffused localpoly approx}. Recall that the integral form of  $h(\xb,\yb,t)$ is
    \begin{align}\label{equ::restate h integral}
h(\xb,\yb,t)=\underbrace{\int_{\RR^{d}}}_{\textbf{Step (i)}} \underbrace{f(\zb,\yb)}_{\textbf{Step (ii)}} \frac{1}{(2\pi)^{d/2}\hat{\sigma}_t^d}\underbrace{\exp\left(-\frac{\norm{\zb-\hat{\alpha}_t\xb}^2}{2\hat{\sigma}_{t}^2}\right)}_{\textbf{Step (iii)}}\diff \zb.
    \end{align}
    We will prove the lemma in the following steps.
\begin{description}
\item[Step (i)] (\textbf{Clip the domain}) We first truncate the integral of $h(\xb,\yb,t)$ in a bounded region using Lemma \ref{lemma::clipz2}.
\item[Step (ii)] (\textbf{Approximate} $f(\cdot)$) We approximate  $f(\zb,\yb)$ in the integrand using local polynomials.
\item[Step (iii)] (\textbf{Approximate} $\exp(\cdot)$) We approximate the exponential function in the integrand by polynomials using Taylor expansion.
\end{description}
Now we begin our formal proof.

\noindent \textbf{Step (i)}
For a sufficiently large positive integer $N>0$ and a constant $C_x>0$, we suppose $\xb \in [-C_x\sqrt{\log N},C_x\sqrt{\log N}]^{d}$. Similarly to the proof of Lemma \ref{lemma::diffused localpoly approx}, we first invoke Lemma \ref{lemma::clipz2} to approximate $h(\xb,\yb,t)$ by an integral on a bounded region, which we denote by
    \begin{align}\label{equ::f2 exp}
        f_2(\xb,\yb,t)=\int_{\textbf{B}_{\xb,N}}f(\zb,\yb)\frac{1}{(2\pi)^{d/2}\hat{\sigma}_t^d}\exp\left(-\frac{\norm{\zb-\hat{\alpha}_t\xb}^2}{2\hat{\sigma}_{t}^2}\right)\diff \zb.
    \end{align}
    We take $\textbf{B}_{\xb,N}=\left[\hat{\alpha}_t\xb-C^{\prime}(0,d)\hat{\sigma}_t\sqrt{\beta\log N},\hat{\alpha}_t\xb+C^{\prime}(0,d)\hat{\sigma}_t\sqrt{\beta\log N} \right]$ so that 
    \begin{align}\label{equ:: f2h}
        \left|f_2(\xb,\yb,t)-h(\xb,\yb,t) \right|\le N^{-\beta},~~~\forall \xb\in[-C_x\sqrt{\log N},C_x\sqrt{\log N}]^{d}, \yb\in[0,1]^{d_y}, t \ge 0.
    \end{align}

By the definition of $\hat{\alpha}_t$ and $\hat{\sigma}_t$, we have $\hat{\alpha}_t \le \max(1/C_2,1)$ and $\hat{\sigma}_t \le \max\left(\sqrt{1/C_2},1\right)$. Thus, if we take $L=\max(1/C_2,1)C_x+\max\left(\sqrt{1/C_2},1\right)C^{\prime}(0,d)\sqrt{\beta}$, $\textbf{B}_{\xb,N}$ is always contained in the domain $[-L\sqrt{\log N},L\sqrt{\log N}]^{d}$ for any $\xb \in [-C_x\sqrt{\log N},C_x\sqrt{\log N}]^{d}$.

\noindent \textbf{Step (ii)}
We consider a local polynomial approximation of $f(\zb,\yb)$ on $[-L\sqrt{\log N},L\sqrt{\log N}]^{d} \times [0,1]^d$. Denote $R=2L\sqrt{\log N}$. We compress the domain of $f$ on $[0,1]^{d+d_y}$ and define 
\begin{align}\label{equ::expdensity on compressed domain}
    r(\xb,\yb)=f(R(\xb-1/2),\yb) ~~\text{for }  \xb\in [0,1]^d,\yb\in [0,1]^{d_y}.
\end{align}
Then the h\"older norm of $g$ is bounded by $BR^s$. We consider using local polynomials to approximate $r$ as:
    \begin{align}
       q(\xb,\yb)= \sum_{\vb\in[N]^d,\wb\in[N]^{d_y}}\psi_{\vb,\wb}(\xb,\yb)P_{\vb,\wb}(\xb,\yb)
    \end{align}
    where 
    \begin{align*}
        \psi_{\vb,\wb}(\xb,\yb)&=\one\set{\xb\in \left[\frac{\vb-1}{N},\frac{\vb}{N}\right]}\prod \limits_{j=1}^{d_y}\phi\left(3N\left(y_j-\frac{\wb}{N}\right)\right)~~~\text{ and } \\P_{\vb,\wb}(\xb,\yb)
        &= \sum_{\norm{\nb}_1+\norm{\npb}_1<s} \frac{1}{\nb!\npb!}\frac{\partial^{\nb+\npb} r}{\partial \xb^{\nb}\partial \yb^{\npb}} \big|_{\xb=\frac{\vb}{N},\yb=\frac{\wb}{N}} \left(\xb-\frac{\vb}{N}\right)^\nb\left(\yb-\frac{\wb}{N}\right)^\npb ,
    \end{align*}
and $\phi$ is the trapezoid function defined in \eqref{equ::trapezoid}.   
    Using Taylor expansion as we do in \eqref{equ:: local error}, we directly have
    \begin{align}
        \abs{r(\xb,\yb)-q(\xb,\yb)}\lesssim B\frac{R^s(d+d_y)^s}{s!N^{\beta}}, \forall \xb\in [0,1]^{d},~ \yb\in [0,1]^{d_y}.
    \end{align}
    Thus, by transforming $r$ back to $f$, we obtain that
    \begin{align}\label{equ:: qf error}
        \abs{q\left(\frac{\zb}{R}+1/2,\yb\right)-f(\zb,\yb)}\lesssim B\frac{(\log N)^{s/2}(d+d_y)^s}{s!N^{\beta}}, \forall \xb\in [-R/2,R/2]^{d},~ \yb\in [0,1]^{d_y}.
    \end{align}
Now we replace $f(\xb,\yb)$ by $q\left(\frac{\zb}{R}+1/2,\yb\right)$ in \eqref{equ::f2 exp} and define
    \begin{align*}
& \quad f_3(\xb,\yb,t) \\
&=\frac{1}{\hat{\sigma}_{t}^d(2\pi)^{d/2}}\int_{\textbf{B}_{\xb,N}}q\left(\frac{\zb}{R}+1/2,\yb\right)\exp\left(-\frac{\norm{\zb-\hat{\alpha}_t\xb}^2}{2\hat{\sigma}_{t}^2}\right)\diff \zb\\
        &=\frac{1}{\hat{\sigma}_{t}^d(2\pi)^{d/2}}\int_{\textbf{B}_{\xb,N}}\sum_{\vb\in[N]^d}\psi_{\vb,\wb}\left(\frac{\zb}{R}+1/2,\yb\right)P_{\vb,\wb}
        \left(\frac{\zb}{R}+1/2,\yb\right)\exp\left(-\frac{\norm{\zb-\hat{\alpha}_t\xb}^2}{2\hat{\sigma}_{t}^2}\right)\diff \zb\\
        &=\sum_{\vb\in[N]^d,\wb\in[N]^{d_y}}\sum_{\norm{\nb}_1+\norm{\npb}_1<s}\frac{1}{\nb!\npb!}\frac{\partial^{\nb+\npb} r}{\partial \xb^{\nb}\partial \yb^{\npb}} \bigg|_{\xb=\frac{\vb}{N},\yb=\frac{\wb}{N}} \left(\yb-\frac{\wb}{N}\right)^\npb \prod \limits_{j=1}^{d_y}\phi\left(3N\left(y_j-\frac{\wb}{N}\right)\right) \\
        &~~~~~~\cdot\prod \limits_{i=1}^d \frac{1}{\hat{\sigma}_{t}(2\pi)^{1/2}} \int \left(\frac{z_i}{R}+1/2-\frac{v_i}{N}\right)^{n_i}\exp\left(-\frac{\abs{z_i-\hat{\alpha}_tx_i}^2}{2\hat{\sigma}_{t}^2}\right)\diff z_i.
    \end{align*} 
    The domain of the integral
    \begin{equation*}
         \int \left(\frac{z_i}{R}+1/2-\frac{v_i}{N}\right)^{n_i}\exp\left(-\frac{\abs{z_i-\hat{\alpha}_tx_i}^2}{2\hat{\sigma}_{t}^2}\right)\diff z_i
    \end{equation*}
    is 
    $$\left[\left(\frac{v_i-1}{N}-1/2\right)R,\left(\frac{v_i}{N}-1/2\right)R\right] \cap \left[\hat{\alpha}_t\xb-C^{\prime}(n,d)\hat{\sigma}_t\sqrt{\beta\log N},\hat{\alpha}_t\xb+C^{\prime}(n,d)\hat{\sigma}_t\sqrt{\beta\log N} \right].$$
    Thus, by \eqref{equ:: qf error}, we have
    \begin{align}
        \abs{f_2(\xb,\yb,t)-f_3(\xb,\yb,t)} &\lesssim  \int_{\textbf{B}_{\xb,N}}\abs{q\left(\frac{\zb}{R}+1/2,\yb\right)-f(\zb,\yb)}\frac{1}{\hat{\sigma}_{t}^d(2\pi)^{d/2}}\exp\left(-\frac{\norm{\zb-\hat{\alpha}_t\xb}^2}{2\hat{\sigma}_{t}^2}\right)\diff \zb \nonumber \\
        &\lesssim BN^{-\beta}  \log^{\frac{s}{2}}N \label{equ::f2f3 exp}.
    \end{align}

\noindent \textbf{Step (iii)}  Next, we approximate $\exp\left(-\frac{\abs{z-\hat{\alpha}_{t}x}^2}{2\hat{\sigma}_{t}^2}\right)$ with polynomials. We use again the inequality
 \begin{align}\label{equ:: exp error2}
     \abs{\exp\left(-\frac{\abs{z-\hat{\alpha}_{t}x}^2}{2\hat{\sigma}_{t}^2}\right)-\sum_{k<p} \frac{1}{k!}\left(-\frac{\abs{z-\hat{\alpha}_{t}x}^2}{2\hat{\sigma}_{t}^2}\right)^k}\le \frac{C(0,d)^{2p}\beta^{p}\log^pN}{p!2^p}, \forall z\in[\underline{C}(x),\overline{C}(x)],
 \end{align}
 where 
 $\underline{C}(x)=\hat{\alpha}_tx-C(n,d)\hat{\sigma}_t\sqrt{\beta\log N} $ and 
 $\overline{C}(x)
=\hat{\alpha}_tx+C(n,d)\hat{\sigma}_t\sqrt{\beta\log N}.$
By setting $$p=\frac{2}{3}C^2(0,d)\beta^2u\log N~~  \text{and}~~ u=\max\left(e,\frac{1+\log d}{\frac{2}{3}C^2(0,d)\beta }+\frac{\log \log N}{\log N}\frac{d}{\frac{4}{3}C^2(0,d)\beta^2 }\right),$$
we ensure the error \eqref{equ:: exp error2} is bounded by $N^{-\beta}\log ^{-\frac{d}{2}} N/d$. We remark that $p$ is still bounded by $\cO(\log N)$. Now we replace the exponential function with its polynomial approximation in $f_3$ and define
\begin{align*}
    f_1(\xb,\yb,t)&=\sum_{\vb\in[N]^d,\wb\in[N]^{d_y}}\sum_{\norm{\nb}_1+\norm{\npb}_1<s}\frac{1}{\nb!\npb!}\frac{\partial^{\nb+\npb} r}{\partial \xb^{\nb}\partial \yb^{\npb}} \bigg|_{\xb=\frac{\vb}{N},\yb=\frac{\wb}{N}} \left(\yb-\frac{\wb}{N}\right)^\npb \prod \limits_{j=1}^{d_y}\phi\left(3N\left(y_j-\frac{\wb}{N}\right)\right) \\
        &~~~~~~\cdot\prod \limits_{i=1}^d \frac{1}{\hat{\sigma}_{t}(2\pi)^{1/2}} \int \left(\frac{z_i}{R}+1/2-\frac{v_i}{N}\right)^{n_i}\sum_{k<p} \frac{1}{k!}\left(-\frac{\abs{z_i-\hat{\alpha}_tx_i}^2}{2\hat{
    \sigma}_{t}^2}\right)^k \diff z_i.
\end{align*}
We obtain that 
\begin{align}
    \left| f_3(\xb,\yb,t)-f_1(\xb,\yb,t)\right| &\lesssim \frac{1}{\hat{\sigma}_{t}^d(2\pi)^{d/2}}\int_{\textbf{B}_{\xb,N}} q\left(\frac{\zb}{R}+1/2,\yb\right)N^{-\beta}\log ^{-\frac{d}{2}} N \diff \zb  \nonumber\\
    &\lesssim \frac{1}{\hat{\sigma}_{t}^d(2\pi)^{d/2}}\int_{\textbf{B}_{\xb,N}}\left(B+BN^{-\beta}\log^{\frac{s}{2}}N\right)N^{-\beta}\log ^{-\frac{d}{2}} N\diff \zb \nonumber\\
    &\lesssim \frac{m(\textbf{B}_{\xb,N})}{\hat{\sigma}^d_t} BN^{-\beta}\log ^{-\frac{d}{2}} N \nonumber\\&\lesssim BN^{-\beta},\label{equ::f3f1 exp}
\end{align}
where we invoke \eqref{equ::f2f3 exp} for the first inequality.
Thus, adding up all the errors \eqref{equ:: f2h}, \eqref{equ::f2f3 exp} and \eqref{equ::f3f1 exp} gives rise to 
\begin{equation}
    \left| h(\xb,\yb,t)-f_1(\xb,\yb,t)\right| \lesssim BN^{-\beta}\log^{\frac{s}{2}}N.
\end{equation}
Moreover, if we redefine $g$ in \eqref{equ::h} as
\begin{align}
    g(x,n,v,k)=\frac{1}{\hat{\sigma}_{t}(2\pi)^{1/2}} \int \left(\frac{z}{R}+1/2-\frac{v}{N}\right)^{n}\sum_{k<p} \frac{1}{k!}\left(-\frac{\abs{z-\hat{\alpha}_tx}^2}{2\hat{
    \sigma}_{t}^2}\right)^k \diff z. \label{equ:: new h defi}
\end{align}
and the corresponding diffused local monomial as
\begin{align}\label{equ:: new g}
    \dmono(\xb,\yb,t)= \left(\yb-\frac{\wb}{N}\right)^\npb \prod \limits_{j=1}^{d_y}\phi\left(3N\left(y_j-\frac{\wb}{N}\right)\right)  \prod\limits_{i=1}^{d} \sum_{k<p} g(x_i,n_i,v_i,k),
\end{align}
we can write $f_1$ as a diffused local polynomial with at most $N^{d+d_y}(d+d_y)^s$ diffused local monomials, which is presented as
\begin{align} \label{equ::f1 exp}
f_1(\xb,\yb)=\sum_{\vb\in[N]^d,\wb\in[N]^{d_y}}\sum_{\norm{\nb}_1+\norm{\npb}_1\le s}\frac{R^{\norm{\nb}_1}}{\nb!\npb!}\frac{\partial^{\nb+\npb} f}{\partial \xb^{\nb}\partial \yb^{\npb}} \bigg|_{\xb=R\left(\frac{\vb}{N}-\frac{1}{2}\right),\yb=\frac{\wb}{N}} \dmono(\xb,\yb,t)
\end{align}
We complete our proof.    
\end{proof}
\begin{proof}[Proof of Lemma \ref{lemma::diffused localpoly approx1 exp}]
Since we have
    \begin{equation*}
    \frac{\hat{\sigma_t}}{\hat{\alpha_t}}\nabla h(\xb,\yb,t)=\int f(\zb,\yb)\left( \frac{\zb-\hat{\alpha}_{t}\xb}{\hat{\sigma}_t}\right) \exp\left(-\frac{\norm{\zb-\hat{\alpha}_t\xb}^2}{2\hat{\sigma}_{t}^2}\right)\diff \zb,
\end{equation*}
when approximating the $i-$th element of the vector $ \frac{\hat{\sigma_t}}{\hat{\alpha_t}}\nabla h(\xb,\yb,t)$, we can apply Lemma \ref{lemma::clipz2} with $\vb=\mathbf{e}_i$ to confine the integral in a similar $\textbf{B}_{\xb,N}$ and completely follow the proof of Lemma \ref{lemma::diffused localpoly approx exp} to obtain the polynomial approximation. The only difference is that the degree of local polynomials increases by one, so the deviation between $f_2$ and $f_3$ in \ref{equ::f2f3 exp} becomes $BN^{-\beta}\log^{\frac{s+1}{2}}N$ instead of $BN^{-\beta}\log^{\frac{s}{2}}N$. Thus, the approximation error also scales with $BN^{-\beta}\log^{\frac{s+1}{2}}N$.
\end{proof}

\subsection{Proofs of Lemma \ref{lemma::relu approx diffused exp}  and \ref{lemma::relu approx diffused1 exp}} \label{sec::sub relu approx exp}
We only elaborate on the proof of Lemma \ref{lemma::relu approx diffused exp}, since the proof of Lemma \ref{lemma::relu approx diffused1 exp} is completely the same. We follow the proof of Lemma \ref{lemma::relu approx diffused}, in which we use a ReLU network to approximate the single diffused local monomial with a small error. We recall that the diffused local monomial is defined as 
\begin{align}\label{equ::diffused monomial restate exp}
   \dmono(\xb,\yb,t)= \left(\yb-\frac{\wb}{N}\right)^\npb \prod \limits_{j=1}^{d_y}\phi\left(3N\left(y_j-\frac{\wb}{N}\right)\right)  \prod\limits_{i=1}^{d} \sum_{k<p} g(x_i,n_i,v_i,k),
\end{align}
where we have redefined $g$ in \eqref{equ:: new g}. Since the first two parts remain the same as in the proof of Lemma \ref{lemma::relu approx diffused exp}, we only focus on the ReLU approximation of $g$.
\begin{lemma}[Approximate $g$ in \eqref{equ:: new g}]\label{lemma::approx g new}
Given $N$, there exists a ReLU network $\cF(W,\kappa,L,K)$ such that for any $n\le s$, $v\le N$, $k\le p$ and $\epsilon >0$, this network gives rise to a function $g^{\trelu}(x,n,v,k)$ such that
\begin{align*}
\abs{g^{\trelu}(x,n,v,k)-g(x,n,v,k)}\le \epsilon,~~~  \forall x\in[-C_x\sqrt{\log N},C_x\sqrt{\log N}].
\end{align*}
    The hyperparameter of the network satisfies
  \begin{align*}
    & W = {\cO}\left(\log^6 N +\log^3 \epsilon^{-1} \right),
      \kappa =\exp \left({\cO}(\log^4 N+\log^2 \epsilon^{-1})\right),\\
      ~&L = {\cO}(\log^4 N+\log^2 \epsilon^{-1}),~
     K= {\cO}\left(\log^8 N+\log^4 \epsilon^{-1}\right).
\end{align*}
\end{lemma}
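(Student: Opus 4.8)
The plan is to reproduce the proof of Lemma~\ref{lemma::approx h} almost line for line, the only structural change being that the pair $(\sigma_t,\alpha_t)$ is replaced throughout by $(\hat{\sigma}_t,\hat{\alpha}_t)$. First I would put $g(x,n,v,k)$ from \eqref{equ:: new h defi} into closed form: applying the change of variables $w=(z-\hat{\alpha}_t x)/\hat{\sigma}_t$ turns the Gaussian-kernel power into $w^{2k}$ and the Taylor-cell factor into $\bigl(\tfrac{\hat{\alpha}_t x}{R}+\tfrac12-\tfrac{v}{N}+\tfrac{\hat{\sigma}_t w}{R}\bigr)^n$; expanding the latter by the binomial theorem and integrating each resulting monomial $w^{j+2k}$ over a bounded interval yields a difference of antiderivatives $w^{j+2k+1}/(j+2k+1)$ evaluated at the clipped endpoints of the integration domain (Lemma~\ref{lemma::clipz2} intersected with the Taylor cell). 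Hence, up to combinatorial constants independent of $x$ and $t$, $g(x,n,v,k)$ equals a sum of at most $n+1=\cO(1)$ terms each of the form
\begin{align*}
f_{v,k,j}(x,t)=\bigl(\hat{\alpha}_t x+\tfrac{R}{2}-\tfrac{Rv}{N}\bigr)^{n-j}\,\hat{\sigma}_t^{\,j}\,\bigl(f_{\overline D}^{\,j+2k+1}(x)-f_{\underline D}^{\,j+2k+1}(x)\bigr),
\end{align*}
the exact analogue of \eqref{equ::single component}, where $f_{\underline D}(x),f_{\overline D}(x)$ are clips of affine functions of $x$ of slope $1/\hat{\sigma}_t$. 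So it suffices to build a ReLU approximator for each $f_{v,k,j}$ and then assemble $g^{\trelu}$ as the linear combination \eqref{equ::h relu}.

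Second, I would construct the network for $f_{v,k,j}$ exactly as depicted in Figure~\ref{fig:ReLU1}: implement the basic inputs $x$, $\hat{\alpha}_t$ (as a function of $t$), and $\hat{\sigma}_t$ (as a function of $t$), then compose them through the ReLU-expressible operators for product, reciprocal/inverse, clip, and integer powers, finally combining via the multiplication gadget of Lemma~\ref{lemma::product}. The one genuinely new point is that $\hat{\alpha}_t=\alpha_t/(\alpha_t^2+C_2\sigma_t^2)$ and $\hat{\sigma}_t=\sigma_t/(\alpha_t^2+C_2\sigma_t^2)^{1/2}$ must be super smooth and uniformly bounded on the window $t\in[N^{-C_\sigma},C_\alpha\log N]$; this is immediate, since $\alpha_t^2+C_2\sigma_t^2=e^{-t}+C_2(1-e^{-t})\in[\min(1,C_2),\max(1,C_2)]$ uniformly in $t$, whence $\hat{\alpha}_t\le\max(1,1/C_2)$, $\hat{\sigma}_t\le\max(1,1/\sqrt{C_2})$, and $1/\hat{\alpha}_t\le 1/\alpha_t\le e^{C_\alpha(\log N)/2}=\mathrm{poly}(N)$ on the window, while both $\hat{\alpha}_t$ and $\hat{\sigma}_t$ are analytic in $e^{-t}$ and thus admit the same efficient ReLU approximation already used for $\sigma_t$ and $\alpha_t$. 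The error propagation through this bounded-depth, bounded-width composition is then identical to the analysis carried out for Lemma~\ref{lemma::approx h} in Appendix~\ref{sec::fvkj relu}, giving $|f_{v,k,j}(x,t)-f_{v,k,j}^{\trelu}(x,t)|\le\epsilon$ with the stated hyperparameters.

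Third, I would finish by summing: set $g^{\trelu}(x,n,v,k)=\frac{(-1)^k}{(2\pi)^{1/2}R^n 2^k k!}\sum_{j=0}^n\frac{C_n^j}{j+2k+1}f_{v,k,j}^{\trelu}$, in parallel with \eqref{equ::h relu}; since $n\le s=\cO(1)$ only $\cO(1)$ terms are added and the prefactor $1/(R^n 2^k k!)\le 1$, so the accumulated error stays below $\epsilon$ after a harmless rescaling of the per-term tolerance. Because the number and type of building blocks and operators match Lemma~\ref{lemma::approx h} exactly, and the only parameters entering are $p=\cO(\log N)$ (the range of $k$), $n\le s$, $v\le N$, and $\epsilon$, the size bounds $W=\cO(\log^6 N+\log^3\epsilon^{-1})$, $\kappa=\exp(\cO(\log^4 N+\log^2\epsilon^{-1}))$, $L=\cO(\log^4 N+\log^2\epsilon^{-1})$, $K=\cO(\log^8 N+\log^4\epsilon^{-1})$ carry over verbatim. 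The main obstacle is purely bookkeeping: one must keep the clipped endpoints $f_{\underline D}(x),f_{\overline D}(x)$ bounded by $\cO(\sqrt{\log N})$ on the input range $[-C_x\sqrt{\log N},C_x\sqrt{\log N}]$ so that their integer powers of order $j+2k+1=\cO(\log N)$ stay at most $(\cO(\sqrt{\log N}))^{\cO(\log N)}=\exp(\cO(\log N\log\log N))$, which is absorbed into $\kappa$; beyond the super-smoothness check for $\hat{\sigma}_t$ and $\hat{\alpha}_t$, no new analytic difficulty arises.
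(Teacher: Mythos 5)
Your proposal matches the paper's own proof essentially step for step: the same change of variables and binomial expansion reducing $g$ to a sum of $\cO(1)$ terms $f_{v,k,j}$ built from $\hat{\alpha}_t$, $\hat{\sigma}_t$ and clipped affine endpoints, the same ReLU assembly via product/inverse/clip/power gadgets (the paper uses its Lemmas \ref{lemma::relu alpha hat} and \ref{lemma::relu sigma hat} for $\hat{\alpha}_t,\hat{\sigma}_t$, exactly the boundedness/smoothness check you sketch), and the same final linear combination with the same error bookkeeping. The argument and the resulting hyperparameter bounds are correct; no gap.
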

The proof of Lemma \ref{lemma::approx g new} is provided in Appendix \ref{sec::relu approx g}. With all the lemmas above, we completely follow the proof of Lemma \ref{lemma::relu approx diffused} to construct the ReLU network for Lemma \ref{lemma::relu approx diffused exp}. We do not elaborate on the proof for conciseness.

\subsection{Proofs of Other Lemmas}\label{sec::proof other lemmas B}
\subsubsection{Proof of Lemma \ref{lemma::score decompose exp}}\label{sec::Proof of Lemma score decompose exp}
\begin{proof}
    Under Assumption  \ref{assump::expdensity}, we have 
 \begin{align}
        p_t(\xb |\yb)&=\frac{1}{\sigma_{t}^d(2\pi)^{d/2}}\int{p(\zb|\yb)\exp\left(-\frac{\norm{\xb-\alpha_t\zb}^2}{2\sigma_{t}^2}\right)\diff \zb} \nonumber\\
        &= \frac{1}{\sigma_{t}^d(2\pi)^{d/2}}\int{f(\zb,\yb)\exp(-C_2\norm{\zb}_2^2/2)\exp\left(-\frac{\norm{\xb-\alpha_t\zb}^2}{2\sigma_{t}^2}\right)\diff \zb} \nonumber\\
        &=\frac{1}{\sigma_{t}^d(2\pi)^{d/2}}\exp\left(\frac{-C_2\norm{\xb}^2_2}{2(\alpha_t^2+C_2\sigma_t^2)}\right)\int{f(\zb,\yb)\exp\left(-\frac{\norm{\zb-\alpha_t\xb/(\alpha_t^2+C_2\sigma_t^2)}^2}{2\sigma_{t}^2/(\alpha_t^2+C_2\sigma_t^2)}\right)\diff \zb} \nonumber\\
        &=\frac{1}{(\alpha_t^2+C_2\sigma_t^2)^{d/2}}\exp\left(\frac{-C_2\norm{\xb}^2_2}{2(\alpha_t^2+C_2\sigma_t^2)}\right) \nonumber \\
        & \quad \cdot \int f(\zb,\yb) \frac{(\alpha_t^2+C_2\sigma_t^2)^{d/2}}{(2\pi)^{d/2}\sigma_t^d}\exp\left(-\frac{\norm{\zb-\alpha_t\xb/(\alpha_t^2+C_2\sigma_t^2)}^2}{2\sigma_{t}^2/(\alpha_t^2+C_2\sigma_t^2)}\right)\diff \zb \nonumber\\
        &=\frac{1}{(\alpha_t^2+C_2\sigma_t^2)^{d/2}} \exp\left(\frac{-C_2\norm{\xb}^2_2}{2(\alpha_t^2+C_2\sigma_t^2)}\right)\int f(\zb,\yb) \frac{1}{(2\pi)^{d/2}\hat{\sigma}_t^d}\exp\left(-\frac{\norm{\zb-\hat{\alpha}_t\xb}^2}{2\hat{\sigma}_{t}^2}\right)\diff \zb  \nonumber\\
        &=\frac{1}{(\alpha_t^2+C_2\sigma_t^2)^{d/2}} \exp\left(\frac{-C_2\norm{\xb}^2_2}{2(\alpha_t^2+C_2\sigma_t^2)}\right)h(\xb,\yb,t), \label{equ::decompose pt exp}
    \end{align}
    where $\hat{\sigma}_t=\frac{\sigma_t}{(\alpha_t^2+C_2\sigma_t^2)^{1/2}}$, $\hat{\alpha}_t=\frac{\alpha_t}{\alpha_t^2+C_2\sigma_t^2}$ and $h(\xb,\yb,t)=\int f(\zb,\yb) \frac{1}{(2\pi)^{d/2}\hat{\sigma}_t^d}\exp\left(-\frac{\norm{\zb-\hat{\alpha}_t\xb}^2}{2\hat{\sigma}_{t}^2}\right)\diff \zb$.
Thus, we can compute the score function as:
\begin{align*}
    \nabla \log p_t(\xb |\yb)&=\frac{\nabla p_t(\xb|\yb)}{ p_t(\xb|\yb)}
    =\frac{-C_2\xb}{\alpha_t^2+C_2\sigma_t^2} +\frac{\hat{\alpha}_t}{\hat{\sigma}_t}\cdot\frac{\int f(\zb,\yb)\left( \frac{\zb-\hat{\alpha}_{t}\xb}{\hat{\sigma}_t}\right) \exp\left(-\frac{\norm{\zb-\hat{\alpha}_t\xb}^2}{2\hat{\sigma}_{t}^2}\right)\diff \zb}{\int f(\zb,\yb) \exp\left(-\frac{\norm{\zb-\hat{\alpha}_t\xb}^2}{2\hat{\sigma}_{t}^2}\right)\diff \zb}\\
        &=\frac{-C_2\xb}{\alpha_t^2+C_2\sigma_t^2}+\frac{\nabla h(\xb,\yb,t)}{h(\xb,\yb,t)}.
\end{align*}
We complete our proof.
\end{proof}
\subsubsection{Proof of Lemma \ref{lemma::truncation x exp}}\label{sec::proof of lemma truncation x exp}
According to Lemma \ref{lemma::bound h and nabla h} and \eqref{equ::decompose pt exp}, we have
\begin{align}
    p_t(\xb|\yb)\le \frac{B}{(\alpha_t^2+C_2\sigma_t^2)^{d/2}} \exp\left(\frac{-C_2\norm{\xb}^2_2}{2(\alpha_t^2+C_2\sigma_t^2)}\right). \label{equ::pt upper bound exp}
\end{align}
Combining \eqref{equ::pt upper bound exp} with Lemma \ref{lemma::scorebound2} gives rise to
\begin{align*}
& \quad \int_{\norm{\xb}_{\infty}\ge R} \norm{\nabla\log p_{t}(\xb|\yb)}^2 p_{t}(\xb|\yb)\diff \xb \\
&\lesssim   \int_{\norm{\xb}_{\infty}\ge R}  \left(\norm{\xb}_{\infty}^2+\frac{1}{\sigma_t^2}\right)\frac{B}{(\alpha_t^2+C_2\sigma_t^2)^{d/2}} \exp\left(\frac{-C_2\norm{\xb}^2_2}{2(\alpha_t^2+C_2\sigma_t^2)}\right) \diff \xb\\
&\lesssim \left(\frac{R^3}{3(\alpha_t^2+C_2\sigma_t^2)^{3/2}}+\frac{R}{\sigma_t^2(\alpha_t^2+C_2\sigma_t^2)^{1/2}}\right)\exp\left(\frac{-C_2R^2}{2(\alpha_t^2+C_2\sigma_t^2)}\right)\\
&\lesssim \frac{1}{\sigma_t^2}R^3\exp(-C_2^{\prime}R^2),
\end{align*}
where $C^{\prime}_2=\min_{t>0}\frac{C_2}{2(\alpha_t^2+C_2\sigma_t^2)}=\frac{C_2}{2\max(C_2,1)}$. Similarly, we have
\begin{align*}
    \int_{\norm{\xb}\ge R}  p_{t}(\xb|\yb)\diff \xb 
    &\lesssim \frac{B}{(\alpha_t^2+C_2\sigma_t^2)^{d/2}}  \int_{\norm{\xb}_{\infty}\ge R}  \exp\left(\frac{-C_2\norm{\xb}^2_2}{2(\alpha_t^2+C_2\sigma_t^2)}\right) \diff \xb\\
    &\lesssim \frac{R}{(\alpha_t^2+C_2\sigma_t^2)^{1/2}}\exp\left(\frac{-C_2R^2}{2(\alpha_t^2+C_2\sigma_t^2)}\right)\\
    &\lesssim R\exp(-C_2^{\prime}R^2).
\end{align*}
The proof is complete.

\subsubsection{Proof of Lemma \ref{lemma::bound h and nabla h}}\label{sec::proof of lemma::bound h and nabla h}
\begin{proof}
Under assumption \ref{assump::expdensity}, we have $C_1\le f(\xb,\yb,t)\le B$. By plugging the bound into the integral form of $h$ and $\nabla h$ and invoking the fact that
\begin{align*}
    &\int \frac{1}{(2\pi)^{d/2}\hat{\sigma}_t^d} \exp\left(-\frac{\norm{\zb-\hat{\alpha}_t\xb}^2}{2\hat{\sigma}_{t}^2}\right)\diff \zb =1,\\
    &\int \frac{1}{(2\pi)^{d/2}\hat{\sigma}_t^d}\left| \frac{\zb-\hat{\alpha}_{t}\xb}{\hat{\sigma}_t}\right| \exp\left(-\frac{\norm{\zb-\hat{\alpha}_t\xb}^2}{2\hat{\sigma}_{t}^2}\right)\diff \zb =\sqrt{\frac{2}{\pi}},
\end{align*}
we directly obtain
\begin{align*}
 C_1\le h(\xb,\yb,t) \le B, ~~~ \norm{\frac{\hat{\sigma}_t}{\hat{\alpha}_t} \nabla h(\xb,\yb,t)}_\infty \le \sqrt{\frac{2}{\pi}}B.
\end{align*}
We complete our proof.
\end{proof}
\subsubsection{Proof of Lemma \ref{lemma::clipz2}}\label{sec::proof of lemma::clipz2}
\begin{proof}
    We denote $\wb=\frac{\zb-\hat{\alpha}_t\xb}{\hat{\sigma}_t}$. Suppose the truncated domain is $$\textbf{B}_x=\left[\hat{\alpha}_t\xb-C^{\prime}\hat{\sigma}_t\sqrt{\log \epsilon^{-1}},\hat{\alpha}_t\xb+C^{\prime}\hat{\sigma}_t\sqrt{\log \epsilon^{-1}} \right]. $$
    We note that $\zb \in \textbf{B}_x$ is equivalent to $\wb \in [-C^{\prime}\sqrt{\log \epsilon^{-1}},C^{\prime}\sqrt{\log \epsilon^{-1}}]^{d}$, the truncation error can be presented as
    \begin{align}
&\quad\abs{\int_{\norm{\wb}_{\infty}\ge C^{\prime}\sqrt{\log \epsilon^{-1}}}\wb^{\vb}f(\hat{\sigma_t}\wb+\hat{\alpha_t}\xb,\yb)\frac{1}{(2\pi)^{d/2}} \exp \left(-\frac{\norm{\wb}^2_2}{2}\right)\diff \wb } \nonumber\\
        &\le \frac{B}{(2\pi)^{1/2}}\sum_{i=1}^{n}\abs{\int_{w_i\ge C^{\prime}\sqrt{\log \epsilon^{-1}}}w_i^{v_i} \exp \left(-\frac{w_i^{2}}{2} \right)\diff w_i \cdot \prod_{j\neq i}^{d} \int_{\RR}\frac{1}{(2\pi)^{1/2}}w_j^{v_j} \exp \left(-\frac{w_j^{2}}{2} \right)\diff w_j } \nonumber\\
        &\le  \frac{B}{(2\pi)^{1/2} }\sum_{i=1}^{d}\frac{2}{v_i+1}(C^{\prime}\sqrt{\log \epsilon^{-1}})^{v_i+1}\epsilon^{\frac{C^{\prime}}{2}}\prod_{j \neq i} \EE_{Z_j \sim N(0,1)}\left[\abs{Z_j^{v_j}}\right] \nonumber\\
        &\le B\prod_{j=1}^{d} \EE_{Z_j \sim N(0,1)}\left[\abs{Z_j^{v_j}}\right] \sum_{i=1}^{d}\frac{1}{v_i+1}(C^{\prime}\sqrt{\log \epsilon^{-1}})^{v_i+1}\epsilon^{\frac{C^{\prime}}{2}} \nonumber\\
        &\le B \max \left(1,\left(\EE_{Z \sim N(0,1)}\left[\abs{Z^{n}}\right]\right)^{d} \right)\sum_{i=1}^{d}(C^{\prime}\sqrt{\log \epsilon^{-1}})^{v_i+1}\epsilon^{\frac{C^{\prime}}{2}}.\label{equ::trunc z2}
\end{align}
In the third inequality we invoke $\EE_{Z_i \sim N(0,1)}\left[\abs{Z_i^{v_j}}\right] \ge \sqrt{\frac{2}{\pi}} $ for any nonnegative integer $v_i$. In the last inequality, we invoke $\EE_{Z \sim N(0,1)}\left[\abs{Z^{m}}\right] \le \EE_{Z \sim N(0,1)}\left[\abs{Z^{n}}\right] $ for $1\le m\le n$. For $\epsilon <0.99$, by setting $C^{\prime}=C^{\prime}(n,d)$ sufficiently large (depending on $n$, $d$ and $B$), \eqref{equ::trunc z2} can be bounded by $\epsilon$. The proof is complete.
\end{proof}
\subsubsection{Proof of Lemma \ref{lemma::approx g new}}\label{sec::relu approx g}
\begin{proof}
   We denote $w=\frac{z-\hat{\alpha_t}x}{\hat{\sigma_t}}$. By the definition of $g(x,n,v,k)$, we have
\begin{align*}
    g(x,n,v,k) &=  \frac{1}{\hat{\sigma}_{t}(2\pi)^{1/2}} \int \left(\frac{z}{R}+1/2-\frac{v}{N}\right)^{n} \frac{1}{k!}\left(-\frac{\abs{z-\hat{\alpha}_tx}^2}{2\hat{\sigma}_{t}^2}\right)^k \diff z\\
    &=\frac{1}{(2\pi)^{1/2}k!(-2)^k} \int \left(\frac{\hat{\alpha}_t x+\hat{\sigma}_t w}{R}+1/2-\frac{v}{N}\right)^{n} w^{2k} \diff w\\
    &=\frac{1}{(2\pi)^{1/2}k!(-2)^k} \sum_{j=0}^{n} C_n^j \int \left(\frac{\hat{\alpha}_t x}{R}+1/2-\frac{v}{N}\right)^{n-j} \left(\frac{\hat{\sigma}_t w}{R}\right)^j w^{2k} \diff w\\
    &=\frac{1}{(2\pi)^{1/2}k!(-2)^kR^n} \sum_{j=0}^{n} C_n^j\hat{\sigma}_t^{j} \left(\hat{\alpha}_t x+\frac{R}{2}-\frac{vR}{N}\right)^{n-j}  \int w^{2k+j} \diff w.
\end{align*}
Remember that the domain of the integral is
\begin{align*}
    z\in\left[\left(\frac{v_i-1}{N}-1/2\right)R,\left(\frac{v_i}{N}-1/2\right)R\right] \cap \left[\hat{\alpha}_t\xb-C^{\prime}(n,d)\hat{\sigma}_t\sqrt{\beta\log N},\hat{\alpha}_t\xb+C^{\prime}(n,d)\hat{\sigma}_t\sqrt{\beta\log N} \right],
\end{align*}
which means that 
\begin{align*}
    w\in \left[\frac{(v-1-N/2)R-\hat{\alpha}_{t}Nx}{N\hat{\sigma}_t},\frac{(v-N/2)R-\hat{\alpha}_{t}Nx}{N\hat{\sigma}_t}\right]\cap \left[-C^{\prime}(0,d)\sqrt{\beta\log N},C^{\prime}(0,d)\sqrt{\beta\log N}\right].
\end{align*}
Thus, we have
\begin{align} \label{equ::new g}
    g(x,n,v,k)&=\frac{1}{(2\pi)^{1/2}k!(-2)^kR^n} \sum_{j=0}^{n} C_n^j \hat{\sigma}_t^{j}\left(\hat{\alpha}_t x+\frac{R}{2}-\frac{vR}{N}\right)^{n-j} \frac{f_{\overline{D}}^{j+2k+1}(x)-f_{\underline{D}}^{j+2k+1}(x)}{j+2k+1},
\end{align}
 where 
 \begin{equation*}
     f_{\underline{D}}(x)=\clip{\frac{(v-1-N/2)R-\hat{\alpha}_{t}Nx}{N\hat{\sigma}_t},C(0,d)\sqrt{\beta\log N}}
 \end{equation*}
and
 \begin{equation*}
     f_{\overline{D}}(x)=\clip{\frac{(v-N/2)R-\hat{\alpha}_{t}Nx}{N\hat{\sigma}_t},C(0,d)\sqrt{\beta\log N}}.
 \end{equation*}
 Therefore, we only need to approximate the following form of function
 \begin{align}\label{equ::single component new}
     f_{v,k,j}=\hat{\sigma}_t^{j}\left(\hat{\alpha}_t x+\frac{R}{2}-\frac{vR}{N}\right)^{n-j} \left(f_{\overline{D}}^{j+2k+1}(x)-f_{\underline{D}}^{j+2k+1}(x)\right).
 \end{align}
 We construct a ReLU network to approximate $f_{v,k,j}$ (see Figure \ref{fig::ReLU3}).
 \begin{figure}
 \centering
\includegraphics[width=0.8\linewidth]{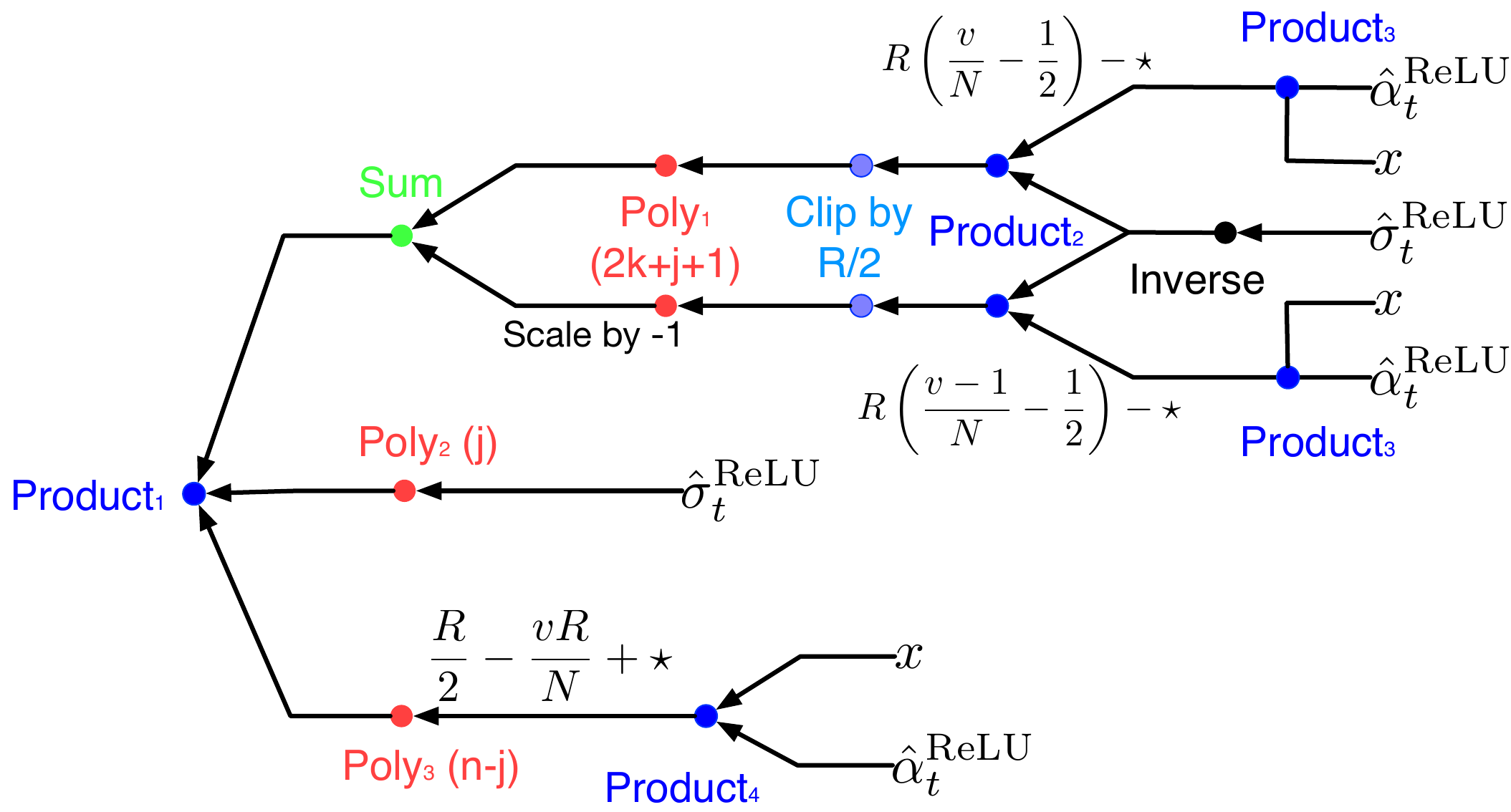}
\caption{Network architecture of $f^{\trelu}_{v,k,j}$. We implement all the basic functions (e.g., $x$, $\hat{\alpha}_t$ and $\hat{\sigma}_t$) through ReLU networks and combine them using the ReLU-expressed operators (\textit{product}, \textit{inverse}, \textit{clip} and \textit{poly}) to express $f_{v,k,j}$ according to its definition in \eqref{equ::single component new}.}
\label{fig::ReLU3}
\end{figure}
By appropriately setting the parameters for each ReLU approximation function (details about how to determine the network size and the error propagation are deferred to Appendix~\ref{sec::fvkj new} for construction details), we derive a ReLU network $f^{\trelu}_{v,k,j}\in \cF( W, \kappa, L, K) $ with 
\begin{align*}
    & W = {\cO}\left(\log^6 N +\log^3 \epsilon^{-1} \right),
      \kappa =\exp \left({\cO}(\log^4 N+\log^2 \epsilon^{-1})\right),\\
      ~&L = {\cO}(\log^4 N+\log^2 \epsilon^{-1}),~
     K= {\cO}\left(\log^8 N+\log^4 \epsilon^{-1}\right).
\end{align*}
such that for any $x \in  [-C_x\sqrt{\log N},C_x\sqrt{\log N}]$ and $t\in[N^{-C_{\sigma}},C_{\alpha}\log N]$,
\begin{align}\label{equ::fvkj relu2}
     \abs{f_{v,k,j}(x,t)-f_{v,k,j}^{\trelu}(x,t)}\le \epsilon.
\end{align}
 
At last, we add up $f^{\trelu}_{v,k,j}$ along $j$ to get an approximation for $g(x,n,v,k)$, adds one additional layer to the ReLU network. According to \eqref{equ::new g}, the ReLU approximation of $g(x,n,v,k)$ is presented as
\begin{align}\label{equ::g relu}
    g^{\trelu}(x,n,v,k)=\frac{1}{(2\pi)^{1/2}R^n(-2)^kk!}\sum_{j=0}^n \frac{C_n^j}{j+2k+1}f^{\trelu}_{v,k,j}.
\end{align}
By \eqref{equ::fvkj relu2}, we have 
\begin{equation*}
    \left|g^{\trelu}(x,n,v,k)-g(x,n,v,k)\right| \le \frac{2^n\epsilon}{(2\pi)^{1/2}R^n 2^kk!} \le \epsilon.
\end{equation*}
The proof is complete.
\end{proof}

\section{Variants of Score Approximation}\label{sec::extend score approx}

\subsection{Extension to Unconditional Score Approximation}\label{sec::unconditional score}
Building upon the foundation established in the proof of our main theorems, we now extend our analysis to unconditional score approximation. Denote the marginal initial distribution of $\xb$ by $p(\xb)$. Furthermore, we denote the marginal distribution of $X_t$ by $p_t(\xb)$. We point out that our results also apply to the marginal distribution of $\xb$. We present the counterpart of Theorem \ref{thm::score approx} and \ref{thm::score approx exp}.

\begin{proposition}[Counterpart of Theorem \ref{thm::score approx}]\label{prop:: score approx marginal}
    Suppose Assumption \ref{assump:sub} holds. For sufficiently large integer $N>0$ and constants $C_{\sigma}, C_{\alpha}>0$, by taking $t_0=N^{-C_\sigma}$ and $T=C_{\alpha}\log N$, there exists $\sb^{\star} \in \cF(M_t, W, \kappa, L, K) $ such that for any and $t \in [t_0,T]$,
\begin{align}
    \int_{\R^{d}} \norm{\sb^{\star}(\xb,t)-\nabla\log p_{t}(\xb)}^2 p_{t}(\xb)d\xb \lesssim \frac{1}{\sigma_t^4}B^2N^{-\frac{\beta}{d}}(\log N)^{d+s/2+1}.
\end{align}
The hyperparameters in the network class $\cF$ satisfy
\begin{align*}
    & \hspace{0.4in} M_t = \cO\left(\sqrt{\log N}/\sigma^2_t\right),~
     W = {\cO}\left(N\log^7 N\right),\\
     & \kappa =\exp \left({\cO}(\log^4 N)\right),~
    L = {\cO}(\log^4 N),~
     K= {\cO}\left(N\log^9 N\right).
\end{align*}
\end{proposition}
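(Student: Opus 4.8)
The plan is to run the entire proof pipeline of Theorem~\ref{thm::score approx} with $d_y=0$: replace the conditional density $p(\xb\mid\yb)$ throughout by the marginal density $p(\xb)$, drop the guidance argument $\yb$ from every function, and remove the corresponding input coordinate from the network class $\cF$. First I would check that the preparatory reductions survive this specialization. The domain-truncation estimate (Lemma~\ref{lemma::truncation x}), the density-truncation estimate (Lemma~\ref{lemma::truncation p}), and the auxiliary bounds in Lemmas~\ref{lemma::clipz}, \ref{lemma::density bound}, and \ref{lemma::score bound} are all stated for a fixed value of $\yb$ and invoke only Assumption~\ref{assump:sub}; with $d_y=0$ that assumption becomes precisely ``$p(\xb)\in\mH^{\beta}(\R^d,B)$ and $p(\xb)\le C_1\exp(-C_2\norm{\xb}_2^2/2)$,'' so all five results hold verbatim after deleting $\yb$.

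Next I would rebuild the diffused local polynomial approximator. Writing $\nabla\log p_t(\xb)=\nabla p_t(\xb)/p_t(\xb)$ and using the forward-process representation $p_t(\xb)=\int p(\zb)\,\sigma_t^{-d}(2\pi)^{-d/2}\exp(-\norm{\alpha_t\zb-\xb}^2/(2\sigma_t^2))\,\diff\zb$, I would apply the same three-step scheme as in Lemma~\ref{lemma::diffused localpoly approx}: clip the $\zb$-integral to a bounded box $\mathbf{B}_{\xb,N}$, approximate $p(\zb)$ on the relevant cube by a degree-$s$ local Taylor polynomial on an $N$-grid, and Taylor-expand the Gaussian kernel. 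The only structural change is that the diffused local monomials $\dmono$ lose the factors $(\yb-\wb/N)^{\npb}$ and $\prod_{j=1}^{d_y}\phi(3N(y_j-w_j/N))$, so $f_1$ becomes a combination of at most $N^{d}d^{s}$ monomials; the error bounds of Lemmas~\ref{lemma::diffused localpoly approx} and~\ref{lemma::diffused localpoly approx1} and their ReLU implementations in Lemmas~\ref{lemma::relu approx diffused} and~\ref{lemma::relu approx diffused1} then read off with ``$d+d_y$'' replaced by ``$d$'', yielding width $\cO(N^{d}\log^7 N)$ and sparsity $\cO(N^{d}\log^9 N)$ with the same $\kappa$, $L$, and $M_t$. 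Assembling the fraction with the clipping threshold $\epsilon_{\rm low}$ exactly as in Proposition~\ref{prop::score approx bounded} gives a network function $\sb^{\star}(\xb,t)$ with $p_t(\xb)\norm{\nabla\log p_t(\xb)-\sb^{\star}(\xb,t)}_\infty\lesssim \frac{B}{\sigma_t^2}N^{-\beta}(\log N)^{(d+s+1)/2}$ on $[-C_x\sqrt{\log N},C_x\sqrt{\log N}]^{d}$.

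Finally I would combine the three error contributions exactly as in the proof of Theorem~\ref{thm::score approx}: with $R=\sqrt{(2\beta/C_2')\log N}$ and $\epsilon_{\rm low}=C_3N^{-\beta}(\log N)^{(d+s)/2}$, the truncation term from the unbounded range of $\xb$, the truncation term from small $p_t$, and the on-cube approximation term combine to $\int_{\R^d}\norm{\sb^{\star}(\xb,t)-\nabla\log p_t(\xb)}^2 p_t(\xb)\,\diff\xb\lesssim \frac{B^2}{\sigma_t^4}N^{-\beta}(\log N)^{d+s/2+1}$, with hyperparameters of size $\cO(N^{d}\log^7 N)$ and $\cO(N^{d}\log^9 N)$. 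Relabeling the grid parameter (overloading $N$ by $N^{1/d}$, consistent with the appendix convention for Theorem~\ref{thm::score approx}) turns this into the claimed rate $\frac{B^2}{\sigma_t^4}N^{-\beta/d}(\log N)^{d+s/2+1}$ together with $W=\cO(N\log^7 N)$ and $K=\cO(N\log^9 N)$. Because the argument is a faithful specialization, there is essentially no new conceptual obstacle; the points to be careful about are (i) confirming that none of the sub-network-size lemmas used inside Lemmas~\ref{lemma::relu approx diffused} and~\ref{lemma::relu approx diffused1} carry a hidden dependence on $d_y$ (they depend only on $N$, $s$, and the target error, and dropping the $d_y$ trapezoid layers can only shrink the network), and (ii) the rescaling arithmetic that converts the internal exponent $-\beta$ into the stated $-\beta/d$.
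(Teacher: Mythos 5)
Your proposal is correct and follows essentially the same route as the paper: the paper's own proof of this proposition is precisely to rerun the Theorem~\ref{thm::score approx} pipeline with the $\yb$-related factors deleted from the diffused local monomials and their ReLU implementations (Lemmas~\ref{lemma::diffused localpoly approx}--\ref{lemma::relu approx diffused1}), keeping the truncation and assembly steps unchanged so that the exponent and network sizes lose their $d_y$-dependence. The only point worth stating explicitly (the paper does so in one sentence) is that the marginal $p(\xb)=\int p(\xb|\yb)p(\yb)\,\diff\yb$ inherits the H\"older bound and sub-Gaussian envelope from the conditional densities by averaging, which your specialization implicitly uses.
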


\begin{proposition}[Counterpart of Theorem \ref{thm::score approx exp}]\label{prop:: score approx marginal exp}
    Suppose Assumption \ref{assump::expdensity} holds. For sufficiently large integer $N>0$ and constants $C_{\sigma}, C_{\alpha}>0$, by taking $t_0=N^{-C_\sigma}$ and $T=C_{\alpha}\log N$, there exists $\sb^{\star} \in \cF(M_t, W, \kappa, L, K) $ such that for any and $t \in [t_0,T]$,
\begin{align}
    \int_{\R^{d}} \norm{\sb^{\star}(\xb,t)-\nabla\log p_{t}(\xb)}^2 p_{t}(\xb)d\xb \le \frac{1}{\sigma_t^2}B^2N^{-\frac{2\beta}{d}}(\log N)^{s+1}.
\end{align}
The hyperparameters in the network class $\cF$ satisfy
\begin{align*}
    & \hspace{0.4in} M_t = \cO\left(\sqrt{\log N}/\sigma_t\right),~
     W = {\cO}\left(N\log^7 N\right),\\
     & \kappa =\exp \left({\cO}(\log^4 N)\right),~
    L = {\cO}(\log^4 N),~
     K= {\cO}\left(N\log^9 N\right).
\end{align*}
\end{proposition}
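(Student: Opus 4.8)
The plan is to derive Proposition~\ref{prop:: score approx marginal exp} as a direct corollary of Theorem~\ref{thm::score approx exp} by simply removing the conditioning variable $\yb$, which amounts to setting $d_y=0$ throughout the entire construction in Appendices~\ref{sec::all proof score sub exp}. The key observation is that the marginal density $p(\xb)$ under Assumption~\ref{assump::expdensity} (with no $\yb$ dependence) still has the form $p(\xb)=\exp(-C_2\norm{\xb}^2/2)f(\xb)$ with $f\in\mH^\beta(\R^d,B)$ and $f\geq C>0$, and the marginal diffused density $p_t(\xb)=\int p(\zb)\phi_t(\xb\mid\zb)\diff\zb$ satisfies exactly the same integral identity as in \eqref{equ::decompose pt exp}. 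Therefore Lemma~\ref{lemma::score decompose exp}, Lemma~\ref{lemma::bound h and nabla h}, Lemma~\ref{lemma::scorebound2}, Lemma~\ref{lemma::clipz2} and Lemma~\ref{lemma::truncation x exp} all carry over verbatim with $\yb$ deleted, because none of those proofs ever used the H\"older regularity or boundedness of $f$ in the $\yb$-direction in an essential way beyond what holds for a function of $\xb$ alone.

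Concretely, first I would restate the score decomposition $\nabla\log p_t(\xb)=\frac{-C_2\xb}{\alpha_t^2+C_2\sigma_t^2}+\frac{\nabla h(\xb,t)}{h(\xb,t)}$ where $h(\xb,t)=\int f(\zb)\frac{1}{(2\pi)^{d/2}\hat\sigma_t^d}\exp(-\norm{\zb-\hat\alpha_t\xb}^2/(2\hat\sigma_t^2))\diff\zb$, and note $C\leq h\leq B$ as well as $\norm{\frac{\hat\sigma_t}{\hat\alpha_t}\nabla h}_\infty\leq\sqrt{2/\pi}B$. Second, I would invoke the truncation lemma: for $R=C_x\sqrt{\log N}$ with $C_x=\sqrt{2\beta/C_2'}$, the tail contributions $\int_{\norm{\xb}_\infty\geq R}p_t(\xb)\diff\xb$ and $\int_{\norm{\xb}_\infty\geq R}\norm{\nabla\log p_t(\xb)}^2p_t(\xb)\diff\xb$ are bounded by $R\exp(-C_2'R^2)\lesssim N^{-2\beta}\sqrt{\log N}$ and $\frac{1}{\sigma_t^2}R^3\exp(-C_2'R^2)\lesssim\frac{1}{\sigma_t^2}N^{-2\beta}(\log N)^{3/2}$ respectively. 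Third, the diffused local polynomial approximation of $h$ and $\nabla h$ (Lemmas~\ref{lemma::diffused localpoly approx exp} and~\ref{lemma::diffused localpoly approx1 exp}) now uses $N^{d}$ diffused local monomials (since there are only $d$ spatial dimensions and no $d_y$ extra), yielding errors $\lesssim BN^{-\beta}(\log N)^{s/2}$ and $\lesssim BN^{-\beta}(\log N)^{(s+1)/2}$; the ReLU implementation Lemmas~\ref{lemma::relu approx diffused exp} and~\ref{lemma::relu approx diffused1 exp} then give networks of width $\cO(N\log^7 N)$, depth $\cO(\log^4 N)$, sparsity $\cO(N\log^9 N)$, and weight magnitude $\exp(\cO(\log^4 N))$.

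Fourth, I would assemble the fraction exactly as in the proof of Proposition~\ref{prop::logh approx bounded exp}: define $\fb_3=\frac{\hat\alpha_t}{\hat\sigma_t}\frac{\fb_2}{f_1}-\frac{C_2\xb}{\alpha_t^2+C_2\sigma_t^2}$ where $f_1\approx h$ and $\fb_2\approx\frac{\hat\sigma_t}{\hat\alpha_t}\nabla h$, use the lower bound $h\geq C$ (and hence $f_1\geq C/2$ for $N$ large) to control the reciprocal without any further truncation, and bound the $L_\infty$ error of $\fb_3$ against $\nabla\log p_t$ by $\lesssim\frac{B}{\sigma_t}N^{-\beta}(\log N)^{(s+1)/2}$; a further ReLU implementation with the reciprocal, product, and entrywise-min/max operators adds only $N^{-\beta}$. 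Finally, splitting the $L_2$ integral into the truncated region $\norm{\xb}_\infty\leq R$ (where the $L_\infty$ bound applies and $M_t=\cO(\sqrt{\log N}/\sigma_t)$ means $\norm{\sb^\star}^2\lesssim\log N/\sigma_t^2$) and the tail (controlled by Lemma~\ref{lemma::truncation x exp}), both pieces are $\lesssim\frac{B^2}{\sigma_t^2}N^{-2\beta}(\log N)^{s+1}$; rescaling $N\mapsto N^{1/d}$ converts $N^{-2\beta}$ into $N^{-2\beta/d}$ and gives the claimed bound. Since this is an argument entirely parallel to the already-proved conditional case, there is no genuine obstacle — the only point requiring a small amount of care is checking that every lemma's proof in Appendix~\ref{sec::all proof score sub exp} degenerates cleanly when $d_y=0$ (in particular that the trapezoid/indicator factors $\prod_{j=1}^{d_y}\phi(\cdot)$ and the $\yb$-Taylor terms simply become empty products equal to $1$), which it does, so the exponents on $\log N$ drop from involving $d+d_y$ to involving only $d$, matching the stated $(\log N)^{s+1}$.
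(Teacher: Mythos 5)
Your proposal is correct and follows essentially the same route as the paper: the paper's own justification of Proposition~\ref{prop:: score approx marginal exp} is precisely that the marginal density inherits Assumption~\ref{assump::expdensity}, so one repeats the construction of Theorem~\ref{thm::score approx exp} (truncation of $\xb$, diffused local polynomials for $h$ and $\nabla h$, the fraction $\fb_3$ using the lower bound $h\ge C$, and the ReLU implementation) with the $\yb$-components simply deleted, which removes the $d_y$-dependence. Your step-by-step verification that Lemmas~\ref{lemma::score decompose exp}--\ref{lemma::truncation x exp} and the ReLU assembly carry over with $d_y=0$ matches the paper's argument.
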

Note that the marginal density function $p(\xb)$ fully inherits the regularity assumptions 
and the subGaussian assumption on the conditional distribution function. Thus, we can derive these results by simply removing the step of approximating the part related to $\yb$ in the proof of Lemmas \ref{lemma::diffused localpoly approx} to \ref{lemma::relu approx diffused1} and  \ref{lemma::diffused localpoly approx exp} to \ref{lemma::relu approx diffused1 exp} while keeping other parts of the proof completely the same. To be specific, we perform the same truncation to $\xb$ and construct diffused local polynomials without the components of $\yb$ to approximate $p_t(\xb)$ and $\nabla p_t(\xb)$, which is in the following form:
\begin{align}\label{equ::Diffused local monomial uncondtional}
       \Phi_{\nb,\vb}(\xb,t)=   \prod\limits_{i=1}^{d} \sum_{k<p} g(x_i,n_i,v_i,k),
\end{align}
Here we invoke the definition of $g(x,n,v,k)$ in \eqref{equ::h} under Assumption \ref{assump:sub} or \eqref{equ:: new h defi} under Assumption \ref{assump::expdensity}. Moreover, we redefine
\begin{align*}
f_1(\xb,\yb)=\sum_{\vb\in[N]^d}\sum_{\norm{\nb}_1\le s}\frac{R^{\norm{\nb}_1}}{\nb!}\frac{\partial^{\nb} f}{\partial \xb^{\nb} } \bigg|_{\xb=R\left(\frac{\vb}{N}-\frac{1}{2}\right)} \Phi_{\nb,\vb}(\xb,t)
\end{align*}
as an approximation of $p_t(\xb)$.
Thus, by following a similar process of ReLU network construction in Appendix \ref{sec::sub relu approx}, the approximation error removes the dependence on $d_y$.
\subsection{Conditional Score Approximation with Unbounded Label}
When considering the case that the label $\yb$ is unbounded, we need additional assumptions on the distribution of $\yb$.
\begin{assumption}\label{assump::expdensity unbound y }
    Let $C_y$ be a positive constant. We assume that the density function of $\yb$, i.e., $p(\yb)$ has subGaussian tails $p(\yb)\le \exp(-C_y\norm{\yb}^2/2)$.
\end{assumption}
Now we present a new version of Theorem \ref{thm::score approx exp} with unbounded $\yb$.
\begin{proposition}[Theorem \ref{thm::score approx exp} with unbounded $\yb$]\label{prop:: score approx exp unbounded y}
    Suppose Assumption \ref{assump::expdensity} and \ref{assump::expdensity unbound y } hold. For sufficiently large integer $N>0$ and constants $C_{\sigma}, C_{\alpha}>0$, by taking $t_0=N^{-C_\sigma}$ and $T=C_{\alpha}\log N$, there exists $\sb^{\star} \in \cF(M_t, W, \kappa, L, K) $ such that for any and $t \in [t_0,T]$,
\begin{align}
\EE_{\yb \sim P_{y}}\left[\EE_{\xb \sim P_t(\cdot|\yb)} \left[\norm{\sb^{\star}(\xb,\yb,t)-\nabla\log p_{t}(\xb|\yb)}^2 \right]\right]\lesssim \frac{1}{\sigma_t^2}B^2N^{-\frac{2\beta}{d+d_y}}(\log N)^{s+1}.
\end{align}
The hyperparameters in the network class $\cF$ satisfy
\begin{align*}
    & \hspace{0.4in} M_t = \cO\left(\sqrt{\log N}/\sigma_t\right),~
     W = {\cO}\left(N\log^7 N\right),\\
     & \kappa =\exp \left({\cO}(\log^4 N)\right),~
    L = {\cO}(\log^4 N),~
     K= {\cO}\left(N\log^9 N\right).
\end{align*}
\end{proposition}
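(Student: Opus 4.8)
\textbf{Proof proposal for Proposition~\ref{prop:: score approx exp unbounded y}.}

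The plan is to reduce the unbounded-$\yb$ case to the bounded-$\yb$ machinery of Theorem~\ref{thm::score approx exp} by a truncation-and-covering argument on $\yb$, exploiting the sub-Gaussian tail in Assumption~\ref{assump::expdensity unbound y }. First I would fix a truncation radius $R_y = C\sqrt{\log N}$ for a suitably large constant $C$, and split the expectation over $\yb$ into the region $\{\norm{\yb}_\infty \le R_y\}$ and its complement. On the complement, the sub-Gaussian tail $p(\yb) \le \exp(-C_y\norm{\yb}^2/2)$ makes $\PP(\norm{\yb}_\infty > R_y) \lesssim \exp(-C_y R_y^2/2) = N^{-CC_y/2}$, which can be driven below $N^{-2\beta/(d+d_y)}$ by taking $C$ large enough; here I also need the integrand $\EE_{\xb \sim P_t(\cdot|\yb)}\norm{\sb^\star - \nabla\log p_t(\xb|\yb)}^2$ to be at most polynomially large in $\norm{\yb}$ and $1/\sigma_t$ on this tail region (this follows from Lemma~\ref{lemma::scorebound2} together with the $L_\infty$ output bound $M_t$ of the network, exactly as in the bounding of term $(\bA_1)$ in the proof of Theorem~\ref{thm::score approx}). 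So the tail contribution is negligible provided we set the network's output range correctly — i.e. $M_t = \cO(\sqrt{\log N}/\sigma_t)$ matching the score bound on the truncated cube.

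The main work is the bulk region $\norm{\yb}_\infty \le R_y$. Here the idea is that all the constructions in the proof of Theorem~\ref{thm::score approx exp} — the decomposition $\nabla\log p_t(\xb|\yb) = -C_2\xb/(\alpha_t^2+C_2\sigma_t^2) + \nabla h/h$ from Lemma~\ref{lemma::score decompose exp}, the diffused local polynomial approximation of $h$ and $\nabla h$ (Lemmas~\ref{lemma::diffused localpoly approx exp}, \ref{lemma::diffused localpoly approx1 exp}), and the ReLU implementation (Lemmas~\ref{lemma::relu approx diffused exp}, \ref{lemma::relu approx diffused1 exp}) — carry over verbatim once we rescale the $\yb$-cube $[-R_y, R_y]^{d_y}$ to $[0,1]^{d_y}$. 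Concretely, I would replace $f(\xb,\yb)$ by $\tilde f(\xb, \yb') = f(\xb, R_y(\yb' - 1/2))$ for $\yb' \in [0,1]^{d_y}$; then $\norm{\tilde f}_{\cH^\beta([0,1]^{d_y}\,\text{in }\yb')} \le B R_y^s$, and $\tilde f$ is still bounded below by $C$ and above by $B$. Feeding this into the local polynomial construction, each of the $d+d_y$ axes of the domain $\textbf{B}_{\xb,N} \times [0,1]^{d_y}$ is subdivided into $N$ pieces, giving $N^{d+d_y}$ hypercubes, and the Taylor remainder on each cube is $\lesssim B R_y^s (d+d_y)^s / (s! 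N^\beta) \lesssim B N^{-\beta}(\log N)^{s/2}$ — which is why the final rate is $N^{-2\beta/(d+d_y)}$ after overloading $N \mapsto N^{1/(d+d_y)}$. The rescaling inflates the H\"older norm by only a $(\log N)^{s/2}$ factor, identical to what already appears in Theorem~\ref{thm::score approx exp}, so the error exponents and the network hyperparameters $(W, \kappa, L, K)$ are unchanged. Importantly, because the argument of $\phi(3N(\cdot))$ and of the ReLU-implemented polynomials now ranges over $[-R_y, R_y]$ rather than $[0,1]$, I must recheck that the ReLU sub-network sizes in Lemma~\ref{lemma::approx g new} (and the trapezoid implementation Lemma~\ref{lemma::trapezoid}) absorb this $\sqrt{\log N}$-scale input without changing their orders — they do, since those lemmas already handle inputs of magnitude $C_x\sqrt{\log N}$.

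Finally I would reassemble: the constructed network $\sb^\star$ achieves, for every $\yb$ with $\norm{\yb}_\infty \le R_y$ and every $t \in [t_0, T]$, the pointwise $L_\infty$ bound $\norm{\sb^\star(\xb,\yb,t) - \nabla\log p_t(\xb|\yb)}_\infty \lesssim (B/\sigma_t) N^{-\beta}(\log N)^{(s+1)/2}$ on the truncated $\xb$-cube (Proposition~\ref{prop::logh approx bounded exp} applied to the rescaled data), and then the $\xb$-truncation Lemma~\ref{lemma::truncation x exp} converts this into the $L_2$ bound $\int \norm{\sb^\star - \nabla\log p_t}^2 p_t(\xb|\yb)\diff\xb \lesssim (B^2/\sigma_t^2) N^{-2\beta}(\log N)^{s+1}$ uniformly over $\norm{\yb}_\infty \le R_y$, exactly as in the proof of Theorem~\ref{thm::score approx exp}. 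Integrating this over $\yb \sim P_y$ restricted to the bulk, and adding the negligible tail contribution from the first paragraph, yields the claimed bound $\EE_{\yb}[\EE_{\xb\sim P_t(\cdot|\yb)}\norm{\sb^\star - \nabla\log p_t}^2] \lesssim (1/\sigma_t^2) B^2 N^{-2\beta/(d+d_y)}(\log N)^{s+1}$ after the reparametrization $N \mapsto N^{1/(d+d_y)}$. I expect the main obstacle to be purely bookkeeping: verifying that the $\yb$-truncation radius $R_y = \Theta(\sqrt{\log N})$ is simultaneously (i) large enough that the sub-Gaussian tail mass is below the target rate even after multiplying by the polynomially-growing worst-case integrand, and (ii) small enough that the rescaled H\"older norm $B R_y^s$ only contributes the same $(\log N)^{s/2}$ polylog factor already present — i.e. that the two constraints on $C$ are compatible, which they are since the tail decays like a power of $N$ while the rate we must beat is also a fixed power of $N$.
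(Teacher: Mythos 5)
Your proposal is correct and follows essentially the same route as the paper's proof: truncate $\yb$ at radius $R_y=\Theta(\sqrt{\log N})$ using the sub-Gaussian tail (the paper's Lemma~\ref{lemma::clip y}, with the integrand controlled by Lemma~\ref{lemma::scorebound2} and the output bound $M_t$), rescale the enlarged $\yb$-cube so the H\"older norm only inflates by a $(\log N)^{s/2}$ factor, rerun the diffused-local-polynomial and ReLU construction of Theorem~\ref{thm::score approx exp}, and combine via the $\xb$-truncation Lemma~\ref{lemma::truncation x exp} before reparametrizing $N\mapsto N^{1/(d+d_y)}$. The only cosmetic difference is that the paper rescales $\xb$ and $\yb$ by a common radius $R_\star$ and tracks the resulting $R_\star^{\|\npb\|_1}$ blow-up of the polynomial weights (absorbed by a $\cO(\log N)$ increase in network parameters), which matches the bookkeeping you flag.
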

\subsubsection{Proof of Proposition \ref{prop:: score approx exp unbounded y}}
Besides confining $\xb$ on a bounded area $\cD$ as we've shown in Section \ref{sec:key_steps_thm3.2}, we also consider constraining $\yb$ on a bounded region with small truncation error, which is presented as the following lemma.
\begin{lemma}\label{lemma::clip y}
    Suppose Assumption \ref{assump::expdensity} and \ref{assump::expdensity unbound y } hold. Then for any truncation radius $R_y>0$ and time $t>0$, 
\begin{align}
\EE_{\yb \sim P_{y}}\left[\EE_{\xb \sim P_t(\cdot|\yb)} \left[\indic{\norm{\yb}_{\infty}\ge R_y}\norm{\nabla\log p_{t}(\xb|\yb)}^2 \right]\right]\lesssim \frac{\exp(-C_yR_y^2/2)}{\sigma^2_t}.
\end{align}
\end{lemma}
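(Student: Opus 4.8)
\textbf{Proof proposal for Lemma \ref{lemma::clip y}.} The plan is to decouple the nested expectation: for each fixed $\yb$, I would first bound the inner expectation $\EE_{\xb\sim P_t(\cdot|\yb)}[\norm{\nabla\log p_t(\xb|\yb)}^2]$ by a quantity that is uniform in $\yb$ and $t$, and then integrate the remaining indicator $\indic{\norm{\yb}_\infty\ge R_y}$ against the sub-Gaussian marginal $p(\yb)$ from Assumption \ref{assump::expdensity unbound y }. Concretely, by Lemma \ref{lemma::scorebound2} (which is valid under Assumption \ref{assump::expdensity}) we have $\norm{\nabla\log p_t(\xb|\yb)}_\infty \lesssim \norm{\xb}_\infty + 1/\sigma_t$ for all $\yb$ and $t$, hence
\begin{align*}
\norm{\nabla\log p_t(\xb|\yb)}^2 \le d\,\norm{\nabla\log p_t(\xb|\yb)}_\infty^2 \lesssim \norm{\xb}^2 + \frac{1}{\sigma_t^2}.
\end{align*}

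The next step is to show $\EE_{\xb\sim P_t(\cdot|\yb)}[\norm{\xb}^2] = \cO(1)$ uniformly. For this I would invoke the Gaussian upper bound on the diffused density established in Lemma \ref{lemma::density bound} (equivalently \eqref{equ::pt upper bound exp}), namely $p_t(\xb|\yb) \le \frac{B}{(\alpha_t^2+C_2\sigma_t^2)^{d/2}}\exp\!\big(-\tfrac{C_2\norm{\xb}^2}{2(\alpha_t^2+C_2\sigma_t^2)}\big)$. Integrating $\norm{\xb}^2$ against this bound yields the second moment of a centered Gaussian with covariance $\tfrac{\alpha_t^2+C_2\sigma_t^2}{C_2}I$ scaled by $B$, i.e. $\le Bd\,\tfrac{\alpha_t^2+C_2\sigma_t^2}{C_2}$; since $\alpha_t^2+C_2\sigma_t^2 = C_2 + (1-C_2)e^{-t} \le \max(1,C_2)$, this is $\cO(1)$, independent of $\yb$ and $t$. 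Combining with $\sigma_t^2 = 1-e^{-t} \le 1$ (so $1\le 1/\sigma_t^2$), I get $\EE_{\xb\sim P_t(\cdot|\yb)}[\norm{\nabla\log p_t(\xb|\yb)}^2] \lesssim 1/\sigma_t^2$ uniformly. Therefore
\begin{align*}
\EE_{\yb\sim P_y}\!\left[\EE_{\xb\sim P_t(\cdot|\yb)}\!\left[\indic{\norm{\yb}_\infty\ge R_y}\norm{\nabla\log p_t(\xb|\yb)}^2\right]\right] \lesssim \frac{1}{\sigma_t^2}\,\Pr_{\yb\sim P_y}\!\left[\norm{\yb}_\infty\ge R_y\right].
\end{align*}

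It remains to bound the tail probability. Using $p(\yb)\le\exp(-C_y\norm{\yb}^2/2)$, a union bound over the $d_y$ coordinates of the event $\{\norm{\yb}_\infty\ge R_y\}$, and the one-dimensional estimate $\int_{|y|\ge R_y}e^{-C_y y^2/2}\diff y \le \tfrac{2}{C_yR_y}e^{-C_yR_y^2/2}$ (valid because $y/R_y\ge 1$ on the domain), together with $\int_\R e^{-C_y y^2/2}\diff y = \sqrt{2\pi/C_y}$, one finds $\Pr[\norm{\yb}_\infty\ge R_y] \lesssim \tfrac{1}{R_y}e^{-C_yR_y^2/2} \lesssim e^{-C_yR_y^2/2}$; for $R_y\ge 1$ the factor $1/R_y\le 1$ is absorbed, and for $R_y<1$ the claimed bound is immediate from $\Pr[\cdot]\le 1 \le e^{C_y/2}e^{-C_yR_y^2/2}$. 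This yields the stated bound. The main point requiring care is Step 2 — confirming that the inner $\xb$-expectation is uniformly $\cO(1/\sigma_t^2)$ across all $\yb$ and $t$ — but this follows directly from the already-proven Gaussian domination of $p_t(\cdot|\yb)$ and the boundedness of $\alpha_t^2+C_2\sigma_t^2$; there is no deeper obstacle. I note also that choosing the $\ell_\infty$-ball for the truncation (rather than an $\ell_2$-ball) is what keeps the tail estimate free of a polynomial-in-$R_y$ prefactor.
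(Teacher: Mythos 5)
Your proposal is correct and follows essentially the same route as the paper's proof: invoke Lemma \ref{lemma::scorebound2} to get $\EE_{\xb \sim P_t(\cdot|\yb)}\left[\norm{\nabla \log p_t(\xb|\yb)}_2^2\right] \lesssim \EE_{\xb}\left[\norm{\xb}^2 + 1/\sigma_t^2\right] \lesssim 1/\sigma_t^2$ uniformly in $\yb$ and $t$, then integrate the indicator against the sub-Gaussian marginal $p(\yb)$ to obtain the $\exp(-C_yR_y^2/2)$ factor. The only difference is that you spell out the steps the paper leaves implicit (the Gaussian domination of $p_t(\cdot|\yb)$ giving $\EE\norm{\xb}^2 = \cO(1)$, and the coordinatewise tail estimate for $\norm{\yb}_\infty$), which is fine.
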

The proof of the lemma is provided in Appendix \ref{sec::proof of lemma::clip y}. To be specific, for a fixed integer $N>0$ we can take $R_y\ge\sqrt{\frac{4\beta\log N}{C_y}}$ to ensure that the truncation error is upper bounded by $\cO\left(\frac{N^{-2\beta}}{\sigma^2_t}\right)$.
With the lemma above, we present a brief proof of Proposition \ref{prop:: score approx exp unbounded y}.
\begin{proof}[Proof of Proposition \ref{prop:: score approx exp unbounded y}]
    Now we confine the domain in $\RR^{d}\times [-R_y,R_y]^{d_y}$. We can approximate $p_{t}(\xb|\yb)$ and $\nabla p_{t}(\xb|\yb)$ within this bounded area using diffused polynomials and ReLU networks by repeating the proof of Lemma \ref{lemma::diffused localpoly approx exp} to Lemma \ref{lemma::relu approx diffused1 exp}. For conciseness, we only present the content that is different from the original proof of Theorem \ref{thm::score approx exp}. In the proof of Lemma \ref{lemma::diffused localpoly approx exp}, we replace the definition of $r(\xb,\yb)$ in \eqref{equ::expdensity on compressed domain} and the radius $R$ defined above that equation by
\begin{align}\label{equ::unbounded y}
     r(\xb,\yb)=f(R_{\star}(\xb-1/2),R_{\star}(\yb-1/2)) ~~\text{for }  \xb\in [0,1]^d,\yb\in [0,1]^{d_y},
\end{align}
where $R_{\star}=\max(2L\sqrt{\log N},2R_y)$. Since $R_{\star}$
is also bounded by $\cO(\sqrt{\log N})$, and the h\"older norm of $g$ is bounded by $\cO \left(B(\log N)^{s/2}\right)$, the inequality \ref{equ:: qf error} still holds. Thus, we can similarly construct a series of diffused local polynomials like \eqref{equ:: new g} to approximate $p_t(\xb|\yb) $ or $\nabla p_t(\xb|\yb)$ on the bounded region. The diffused local polynomial is in the form of
\begin{align}\label{equ::Diffused local monomial unbound y}
       \dmono(\xb,\yb,t)= \left(\yb-\frac{\wb}{N}\right)^\npb \prod \limits_{j=1}^{d_y}\phi\left(3N\left(\frac{y_j}{R_{\star}}+\frac{1}{2}-\frac{\wb}{N}\right)\right)  \prod\limits_{i=1}^{d} \sum_{k<p} g(x_i,n_i,v_i,k),
\end{align}
and we can similarly define 
\begin{align}
& f_1(\xb,\yb,t) = \nonumber \\
& ~\sum_{\vb\in[N]^d,\wb\in[N]^{d_y}}\sum_{\norm{\nb}_1+\norm{\npb}_1\le s}\frac{R_{\star}^{\norm{\nb}_1+\norm{\npb}_1}}{\nb!\npb!}\frac{\partial^{\nb+\npb} f}{\partial \xb^{\nb}\partial \yb^{\npb}} \bigg|_{\xb=R_{\star}\left(\frac{\vb}{N}-\frac{1}{2}\right),\yb=R_{\star}\left(\frac{\wb}{N}-\frac{1}{2}\right)} \dmono(\xb,\yb,t). \label{equ::decomposition of f1 unbounded y}
\end{align}
as the approximation of $p_t(\xb|\yb)$.
The process of constructing the ReLU network is completely the same as we do in Appendix \ref{sec::sub relu approx exp}. Compared with the definition of $f_1$ in \eqref{equ::f1 exp}, the weights of these diffused local polynomials scale up by $R_{\star}^{\norm{\npb}_1}$. Thus, we only need to set the accuracy of the ReLU approximator to be $R_{\star}^{s}$ times more precise than before. This change increases in the network parameters $(W, \kappa, L, K)$ only by $\cO(\log N)$. Thus, the hyperparameters of the network that contains $f^{\trelu}_1$ still satisfy
\begin{align*}
& W = {\cO}\left(N^{d+d_y}(\log^7 N+\log N\log^3 \epsilon^{-1})\right), \quad \kappa =\exp \left({\cO}(\log^4 N+\log^2 \epsilon^{-1})\right), \\
& \qquad L = {\cO}(\log^4 N+\log^2 \epsilon^{-1}), \quad K = {\cO}\left(N^{d+d_y}(\log^9 N+\log N\log^3 \epsilon^{-1})\right).
\end{align*}
Therefore, by repeating the proof in Lemma \ref{lemma::relu approx diffused exp} and Proposition \ref{prop::logh approx bounded exp}, we can construct a ReLU network $\cF(M_t,W,\kappa,L,K)$ that gives rise to a function $\sb\in \cF$ such that for any $ t \in [t_0,T]$, $ \xb \in [-C_x\sqrt{\log N},C_x\sqrt{\log N}]^d$ and $\yb \in [-R_y,R_y]^{d_y}$, we have
    \begin{align}\label{equ::infty bound unbounded y}
\norm{\textbf{f}^{\trelu}_{3}(\xb,\yb,t)-\nabla \log p_t(\xb|
        \yb)}_{\infty}\lesssim \frac{B}{\sigma_t}N^{-\beta}\log ^{\frac{s+1}{2}} N,   \end{align}
The hyperparameters $(M_t,W,\kappa,L,K)$ of the network satisfy
\begin{align*}
    & \hspace{0.4in} M_t = \cO\left(\sqrt{\log N}/\sigma_t\right),~
     W = {\cO}\left(N^{d+d_y}\log^7 N\right),\\
     & \kappa =\exp \left({\cO}(\log^4 N)\right),~
    L = {\cO}(\log^4 N),~
     K= {\cO}\left(N^{d+d_y}\log^9 N\right).
\end{align*}
From the construction of the network (see Figure \ref{fig:ReLU4}), we know that $\norm{\textbf{f}^{\trelu}_{3}(\xb,\yb,t)}_{2} \lesssim \frac{\sqrt{\log N}}{\sigma_t}$. Thus, by taking $R_y=\sqrt{\frac{4\beta\log N}{C_y}}$ and $C_x=\sqrt{\frac{2\beta\log N}{C^{\prime}_2}}$, we have
\begin{align*}
    &~~\EE_{\yb \sim P_{y}}\left[\EE_{\xb \sim P_t(\cdot|\yb)} \left[\norm{\sb^{\star}(\xb,\yb,t)-\nabla\log p_{t}(\xb|\yb)}^2 \right]\right]\\
    &\le \EE_{\yb \sim P_{y}}\left[\EE_{\xb \sim P_t(\cdot|\yb)} \left[\indic{\norm{\yb}_{\infty}\ge R_y}\norm{\nabla\log p_{t}(\xb|\yb)}^2 \right]\right]\\
    &\quad +\EE_{\yb \sim P_{y}}\left[\EE_{\xb \sim P_t(\cdot|\yb)} \left[\indic{\norm{\yb}_{\infty}\ge
    R_y}\norm{\textbf{f}^{\trelu}_{3}(\xb,\yb,t)}^2 \right]\right]\\
    &\quad +\EE_{\yb \sim P_{y}}\left[\EE_{\xb \sim P_t(\cdot|\yb)} \left[\indic{\norm{\yb}_{\infty}< R_y,\norm{\xb}_{\infty} \ge C_x}\norm{\sb^{\star}(\xb,\yb,t)-\nabla\log p_{t}(\xb|\yb)}^2 \right]\right]\\
    &\quad +\EE_{\yb \sim P_{y}}\left[\EE_{\xb \sim P_t(\cdot|\yb)} \left[\indic{\norm{\yb}_{\infty}< R_y,\norm{\xb}_{\infty} < C_x}\norm{\sb^{\star}(\xb,\yb,t)-\nabla\log p_{t}(\xb|\yb)}^2 \right]\right]\\
    &\overset{(i)}{\lesssim }\frac{\exp(-C_yR_y^2)}{\sigma^2_t} + \frac{\log N \exp(-C_yR_y^2)}{\sigma^2_t} + \frac{\log^{3/2} N \exp(-C^{\prime}_{2}C_x^2) C_x^{3}}{\sigma^2_t} + \frac{1}{\sigma_t^2}B^2N^{-2\beta}(\log N)^{s+1}\\
    & \lesssim \frac{1}{\sigma_t^2}B^2N^{-2\beta}(\log N)^{\beta+1}.
\end{align*}
Here in (i), we invoke Lemmas \ref{lemma::clip y}, \ref{lemma::truncation x exp}  and the approximation error bound \eqref{equ::infty bound unbounded y}.
Replacing $N$ by $N^{\frac{1}{d+d_y}}$ completes our proof. 
\end{proof}
\subsubsection{Proof of Lemma \ref{lemma::clip y}}\label{sec::proof of lemma::clip y}
According to Lemma \ref{lemma::scorebound2}, we have for any $\yb \in \R^{d_y}$,
\begin{align*}
    \EE_{\xb \sim P_t(\cdot|\yb)}\left[\norm{\nabla \log p_t(\xb|\yb)}_2^2\right] \lesssim \EE_{\xb \sim P_t(\cdot|\yb)}\left[\norm{\xb}_2^2+\frac{1}{\sigma_t^2}\right] \lesssim \frac{1}{\sigma_t^2}.
\end{align*}
Thus, we obtain that
\begin{align*}
    \quad\EE_{\yb \sim P_{y}}\left[\EE_{\xb \sim P_t(\cdot|\yb)} \left[\indic{\norm{\yb}_{\infty}\ge R_y}\norm{\nabla\log p_{t}(\xb|\yb)}^2 \right]\right]
    & \lesssim \frac{1}{\sigma^2_t}\EE_{\yb \sim P_{y}}\left[\indic{\norm{\yb}_{\infty}\ge R_y} \right]\\
    &\lesssim \frac{\exp(-C_yR_y^2/2)}{\sigma^2_t}.
\end{align*}
The proof is complete.
\section{Proofs for Section \ref{sec::esti}}\label{sec::proof gener}
\subsection{Notation Recap}

Given a score approximator $\sb$, we aim to bound the following conditional score
\begin{align*}
\cR(\sb)&=\int_{t_0}^T\frac{1}{T-t_0} \EE_{\xb_t,\yb}\norm{\sb(\xb_t,\yb,t)-\nabla \log p_t(\xb_t|\yb)}_2^2 \diff t.
\end{align*}
Due to the structure of classifier-free guidance we define in \eqref{eq:classifierfree_guidance}, we first consider the following mixed score error
\begin{align}
\cR_{\star}(\sb)&=\int_{t_0}^T\frac{1}{T-t_0}  \EE_{\xb_t,\yb,\tau}\norm{\sb(\xb_t,\tau\yb,t)-\nabla \log p_t(\xb_t|\tau\yb)}_2^2 \diff t \label{equ::popu loss score}\\
&=\underbrace{\frac{1}{2}\int_{t_0}^T\frac{1}{T-t_0} \EE_{\xb_t,\yb}\norm{\sb(\xb_t,\yb,t)-\nabla \log p_t(\xb_t|\yb)}_2^2 \diff t}_{\cR: \text{ conditional score error}} \nonumber \\
&\quad+\underbrace{\frac{1}{2}\int_{t_0}^T\frac{1}{T-t_0} \EE_{\xb_t}\norm{\sb(\xb_t,\void,t)-\nabla \log p_t(\xb_t)}_2^2 \diff t}_{\cR_0: \text{unconditional score error}}, \nonumber
\end{align}
which naturally gives rise to the inequality $\cR(\sb) \le 2\cR_{\star}(\sb)$. Thus, we only need to analyze the bound of $\cR_{\star}(\sb)$. In practice, we consider minimizing an equivalent loss of $\cR_{\star}$, which is written as
\begin{equation}\label{equ::popu loss}
\cL(\sb):=\int_{t_0}^{T}\frac{1}{T-t_0}\EE_{\xb_0,\yb}\bigg[\EE_{\tau,\xb_t|\xb_0}\bigg[\norm{\sb(\xb_t,\tau\yb,t)-\nabla \log \phi_t(\xb_t|\xb_0)}_2^2\bigg]\bigg]\diff t
\end{equation}
According to Lemma C.3 in \cite{vincent2011connection}, \eqref{equ::popu loss score} differs \eqref{equ::popu loss} by a constant independent of $\sb$. Now we consider training the model with $n$ samples $\set{\xb_i,\yb_i}_{i=1}^d$ by minimizing the corresponding empirical loss
\begin{align}\label{equ::empirical loss}
    \hat{\cL}(\sb)=\frac{1}{n}\sum_{i=1}^{n}\ell(\xb_i,\yb_i,\sb),
\end{align}
where 
\begin{equation}\label{equ::single empirical loss recap}
    \ell(\xb,\yb,\sb):=\int_{t_0}^{T}\frac{1}{T-t_0}\EE_{\tau,\xb_t|\xb_0=\xb}\big[\norm{\sb(\xb_t,\tau\yb,t)-\nabla \log \phi_t(\xb_t|\xb_0)}_2^2\big]\diff t.
\end{equation}
Moreover, in order to derive a bounded covering number of our ReLU network function class, we use a truncated loss $\lt(\sb,\xb,\yb)$ defined as:
\begin{equation*} \lt(\xb,\yb,\sb):=\ell(\xb,\yb,\sb)\indic{\norm{\xb}_{\infty}\le R}.
\end{equation*}
Accordingly, we denote the truncated domain of the score function by $\cD=[-R, R]^{d}\times [0,1]^{d_y}\cup\set{\void}$. We consider the truncated loss function class defined as
\begin{equation}
    \cS(R)=\set{\ell(\cdot,\cdot,\sb):\cD \rightarrow \RR\bigg|\sb \in \cF}.
\end{equation}
\subsection{Proof of Theorem \ref{thm::Generalization}}\label{sec::proof generation}
Firstly we give a uniform $L_\infty$ bound on $\cS(R)$. 
\begin{lemma}\label{lemma::bound loss class}
Suppose that we configure the network parameters $M_t, W,\kappa, L, K$ according to Theorem \ref{thm::score approx} or Theorem \ref{thm::score approx exp} and we denote $m_t=M_t/\sqrt{\log N}$. Then for any $\sb\in \cF(M_t, W,\kappa,L,K)$ and $(\xb,\yb)\in \cD$, we have $\abs{\ell(\sb,\xb,\yb)}\lesssim \int_{t_0}^{T} m_{t}^2 \diff t\triangleq M$. In particular, if we take $t_0=n^{-\cO(1)}$ and $T=\cO(\log n)$, we have $M=\cO(\log t_0)$ for $m_t=\frac{1}{\sigma_t}$, and $M=\cO\left(\frac{1}{t_0}\right)$ for $m_t=\frac{1}{\sigma^2_t} $, respectively.
\end{lemma}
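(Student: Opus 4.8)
The plan is to bound the single-sample loss $\ell(\xb,\yb,\sb)$ pointwise, relying only on two facts: every $\sb\in\cF(M_t,W,\kappa,L,K)$ has its output bounded in $\ell_\infty$ by $M_t$ at time $t$ (on both the conditional branch $\sb_1\in\cF_1$ and the unconditional branch $\sb_2\in\cF_2$), and the regression target $\nabla\log\phi_t(\xb_t|\xb_0)$ is, after reparametrization, a rescaled standard Gaussian with an explicit second moment. The restriction $(\xb,\yb)\in\cD$ in the statement is immaterial for this pointwise estimate, since the output constraint $\|\sb\|_\infty\le M_t$ holds for all inputs.

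First I would reparametrize $\xb_t=\alpha_t\xb+\sigma_t\zb$ with $\zb\sim{\sf N}(0,I)$, so that $\nabla\log\phi_t(\xb_t|\xb_0)=-(\xb_t-\alpha_t\xb)/\sigma_t^2=-\zb/\sigma_t$ and
\[
\ell(\xb,\yb,\sb)=\frac{1}{T-t_0}\int_{t_0}^{T}\EE_{\tau,\zb}\Big\|\sb(\alpha_t\xb+\sigma_t\zb,\tau\yb,t)+\zb/\sigma_t\Big\|_2^2\,\diff t .
\]
Applying $\|a+b\|_2^2\le 2\|a\|_2^2+2\|b\|_2^2$, then $\|\sb(\cdot,\tau\yb,t)\|_2^2\le d\|\sb(\cdot,\tau\yb,t)\|_\infty^2\le d M_t^2$ uniformly over $\tau\in\{\void,{\rm id}\}$, and finally $\EE_\zb\|\zb\|_2^2=d$, gives for every $t$
\[
\EE_{\tau,\zb}\big\|\sb(\alpha_t\xb+\sigma_t\zb,\tau\yb,t)+\zb/\sigma_t\big\|_2^2\le 2dM_t^2+\frac{2d}{\sigma_t^2}.
\]
Integrating over $[t_0,T]$, substituting $M_t=m_t\sqrt{\log N}$, and using the elementary bound $\sigma_t^{-2}\le m_t^2$ --- which holds in both regimes, as $m_t=\sigma_t^{-2}$ yields $m_t^2=\sigma_t^{-4}\ge\sigma_t^{-2}$ (since $\sigma_t\le 1$) and $m_t=\sigma_t^{-1}$ yields $m_t^2=\sigma_t^{-2}$ --- produces $\ell(\xb,\yb,\sb)\lesssim \frac{\log N}{T-t_0}\int_{t_0}^{T}m_t^2\,\diff t$. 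Because $T=C_\alpha\log N$ (equivalently $T\asymp\log N\asymp\log n$ under the parameter choices of Theorem~\ref{thm::Generalization}) and $t_0<1$, the prefactor $\frac{\log N}{T-t_0}$ is $\cO(1)$, so $\ell(\xb,\yb,\sb)\lesssim\int_{t_0}^{T}m_t^2\,\diff t=:M$.

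It then remains to evaluate $M$ in the two cases of interest. Since $\sigma_t^2=1-e^{-t}$, one has $\sigma_t^{-2}\asymp 1/t$ on $(0,1]$ and $\sigma_t^{-2}=\cO(1)$ on $[1,T]$. Hence for $m_t=\sigma_t^{-2}$, $M=\int_{t_0}^{T}\sigma_t^{-4}\,\diff t\asymp\int_{t_0}^{1}t^{-2}\,\diff t+\cO(T)=\cO(1/t_0)$, and for $m_t=\sigma_t^{-1}$, $M=\int_{t_0}^{T}\sigma_t^{-2}\,\diff t\asymp\int_{t_0}^{1}t^{-1}\,\diff t+\cO(T)=\cO(\log(1/t_0))$, where in both cases $t_0=n^{-\cO(1)}$ and $T=\cO(\log n)$ ensure the contribution near $t_0$ dominates $\cO(T)$; this is exactly the stated $M=\cO(1/t_0)$ and $M=\cO(\log t_0)$.

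There is no genuine obstacle here; the lemma is a bookkeeping step that supplies a finite, explicitly $t_0$-dependent envelope for the truncated loss class $\cS(R)$, to be fed into a subsequent covering-number and bias-variance argument. The only points deserving care are (i) keeping the expectation over the mask $\tau$ so that the output bound $M_t$ is invoked on both the conditional and unconditional heads of the classifier-free parametrization, and (ii) observing that the stray $\log N$ factor is exactly absorbed by the normalization $1/(T-t_0)$ because $T$ is a constant multiple of $\log N$, after which the envelope reduces, via the near-origin integration, to the claimed rates.
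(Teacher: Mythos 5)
Your proposal is correct and follows essentially the same route as the paper: split the squared loss via $\|a+b\|_2^2\le 2\|a\|_2^2+2\|b\|_2^2$, bound the network output by $M_t=m_t\sqrt{\log N}$, use $\EE\|\nabla\log\phi_t(\xb_t|\xb_0)\|_2^2=d/\sigma_t^2\lesssim d\,m_t^2$, and absorb the $\log N$ factor through the $1/(T-t_0)$ normalization since $T\asymp\log N$. Your explicit evaluation of $\int_{t_0}^T\sigma_t^{-2}\,\diff t$ and $\int_{t_0}^T\sigma_t^{-4}\,\diff t$ via $\sigma_t^2\asymp t$ near the origin is a detail the paper leaves implicit but is exactly what the stated rates $\cO(\log(1/t_0))$ and $\cO(1/t_0)$ require.
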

The proof of the lemma is provided in Appendix \ref{sec::proof of lemma::bound loss class}. Moreover, to convert our approximation guarantee to statistical theory, we need to calculate the covering number of the loss function class $\cS(R)$, which is defined as follows.
\begin{definition}
We denote $\cN(\delta, \cF, \norm{\cdot})$ to be the $\delta-$covering number of any function class $\cF$ w.r.t. the norm $\norm{\cdot}$, i.e., $$\cN(\delta, \cF,\norm{\cdot})=\min \set{N: \exists \set{f_{i}}_{i=1}^N \subseteq \cF, \text{s.t.}~ \forall f \in \cF, \exists i \in [N],~ \norm{f_{i}-f}\le \delta}$$
\end{definition}
The following lemma presents the covering number of $\cS(R)$:
\begin{lemma}\label{lemma::covering number S}
Given $\delta>0$, when $\norm{\xb}_{\infty} \le R$, the $\delta-$covering number of the loss function class $\cS(R)$ w.r.t. $\norm{\cdot}_{L_{\infty}\cD}$ satisfies
    \begin{equation}\label{equ::covering number S}
     \cN\left(\delta, \cS(R), \norm{\cdot}_{L_{\infty}\cD}\right) \lesssim \left(\frac{2L^2(W\max(R,T)+2)\kappa^LW^{L+1}\log N}{\delta}\right)^{2K}.
\end{equation}
Here the norm $\norm{\cdot}_{L_{\infty}\cD}$ is defined as
\begin{align*}
    \norm{f(\cdot,\cdot)}_{L_{\infty}\cD}=\max_{\xb \in [-R,R]^{d},\yb \in [0,1]^{d_y}\cup\set{\void}}\abs{f(\xb,\yb)}.
\end{align*}
The proof is provided in Appendix \ref{sec::proof of lemma::covering number S}. Particularly, under the network configuration in Theorem \ref{thm::score approx} or Theorem \ref{thm::score approx exp}, we know that log covering number is bounded by
\begin{align} 
    \log \cN &\lesssim N\log^9 N \left( \text{Poly}(\log \log N)+\text{Poly}(\log \log N)\log N\log 
 R+\log^8 N+\log \frac{1}{\delta}  \right)  \nonumber\\&\lesssim N\log^{9} N \left( \log^8 N+\log^2 N\log R +\log \frac{1}{\delta}  \right). \label{equ::logcover}
\end{align}
\end{lemma}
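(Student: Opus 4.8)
The plan is to bound $\cN(\delta,\cS(R),\norm{\cdot}_{L_{\infty}\cD})$ in two stages. First I reduce it to the $L_\infty$ covering number of the underlying score network class $\cF=\cF_1\times\cF_2$ by showing the map $\sb\mapsto\ell(\cdot,\cdot,\sb)$ is Lipschitz on $\cD$; then I invoke the standard parameter-perturbation estimate for sparsely connected ReLU networks and multiply the two factors coming from $\cF_1$ and $\cF_2$, which produces the exponent $2K$.

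\textbf{Stage 1: Lipschitz reduction.} Fix $(\xb,\yb)\in\cD$ and $\sb,\sb'\in\cF$. Using $\norm{a}_2^2-\norm{b}_2^2=\langle a-b,a+b\rangle$ inside \eqref{equ::single empirical loss recap}, the time-$t$, mask-$\tau$ integrand of $\ell(\xb,\yb,\sb)-\ell(\xb,\yb,\sb')$ equals $\langle\sb(\xb_t,\tau\yb,t)-\sb'(\xb_t,\tau\yb,t),\,\sb(\xb_t,\tau\yb,t)+\sb'(\xb_t,\tau\yb,t)-2\nabla\log\phi_t(\xb_t|\xb)\rangle$, which by Cauchy--Schwarz is at most $\norm{\sb-\sb'}_2\big(\norm{\sb}_2+\norm{\sb'}_2+2\norm{\nabla\log\phi_t(\xb_t|\xb)}_2\big)$. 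Every network in $\cF$ obeys $\sup_{\zb,\yb}\norm{\sb(\zb,\yb,t)}_\infty\le M_t$, and since $\xb_t\sim{\sf N}(\alpha_t\xb,\sigma_t^2 I)$ gives $\nabla\log\phi_t(\xb_t|\xb)=-\epsilon/\sigma_t$ with $\epsilon\sim{\sf N}(0,I_d)$, we have $\EE\norm{\nabla\log\phi_t(\xb_t|\xb)}_2\le\sqrt d/\sigma_t$. Taking $\EE_{\tau,\xb_t}$, integrating $t$, and using the normalization $\tfrac{1}{T-t_0}$ together with $M_t=\cO(\sqrt{\log N}/\sigma_t^2)$ (resp.\ $\cO(\sqrt{\log N}/\sigma_t)$) and $t_0\ge N^{-C_\sigma}$ yields
\[
\abs{\ell(\xb,\yb,\sb)-\ell(\xb,\yb,\sb')}\le L_0\big(\norm{\sb_1-\sb_1'}_{L_\infty}+\norm{\sb_2-\sb_2'}_{L_\infty}\big),\qquad L_0=\cO\!\big(\mathrm{poly}(d)\sqrt{\log N}\big).
\]
The $L_\infty$ norms should a priori range over all of $\RR^d$ in the $\xb_t$-slot, but since $\sb,\sb'$ are uniformly bounded by $M_t$ (cf.\ Lemma~\ref{lemma::bound loss class}) and $\norm{\xb}_\infty\le R$, restricting to $\set{\norm{\xb_t}_\infty\le R+c\sqrt{\log(1/\delta)}}$ costs only $\cO(\delta)$ by a union bound on $\norm{\epsilon}_\infty$, so it suffices to cover $\cF$ in $L_\infty$ on the box of radius $\max(R,T)$ (the time argument lives in $[t_0,T]$). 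Hence a $(\delta/2L_0)$-cover of each $\cF_i$ induces a $\delta$-cover of $\cS(R)$, and $\cN(\delta,\cS(R),\norm{\cdot}_{L_\infty\cD})\le\prod_{i=1}^{2}\cN\big(\delta/2L_0,\cF_i,\norm{\cdot}_{L_\infty}\big)$.

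\textbf{Stage 2: ReLU covering.} For a single ReLU network of depth $L$, width $\le W$, with at most $K$ nonzero weights of magnitude $\le\kappa$, the usual Lipschitz-in-parameters bound (following the perturbation argument of \cite{oko2023diffusion,chen2023score}) states that shifting every active parameter by at most $\eta$ perturbs the output, uniformly on a box of radius $B$, by $\cO\big(LW^{L}\kappa^{L-1}(B+1)\eta\big)$. With $B=\max(R,T)$, choosing $\eta\asymp\delta/\big(L_0 LW^{L}\kappa^{L-1}(B+1)\big)$ and covering each active parameter by an $\eta$-net of $[-\kappa,\kappa]$ gives $\cN(\delta/2L_0,\cF_i,\norm{\cdot}_{L_\infty})\lesssim(2\kappa/\eta)^{K}\lesssim\big(2L^2(WB+2)\kappa^{L}W^{L+1}\log N/\delta\big)^{K}$ after absorbing $L_0$ and the remaining polylog-in-$N$ factors into the single $\log N$. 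Multiplying the $\cF_1$ and $\cF_2$ factors yields \eqref{equ::covering number S}. Plugging in $W=\cO(N\log^7 N)$, $L=\cO(\log^4 N)$, $\kappa=\exp(\cO(\log^4 N))$, $K=\cO(N\log^9 N)$ and taking logarithms produces \eqref{equ::logcover}, whose dominant contributions are $2K\cdot L\log\kappa=\cO(N\log^{9}N)\cdot\cO(\log^{8}N)$, the radius term $2K\log(WB)$ which is where the $\log R$ dependence enters, and $2K\log(1/\delta)$.

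\textbf{Main obstacle.} The delicate point is Stage~1: the loss $\ell(\xb,\yb,\sb)$ averages over the unbounded noised sample $\xb_t$ and over the unbounded target $\nabla\log\phi_t(\xb_t|\xb)=-\epsilon/\sigma_t$, whose second moment diverges like $\sigma_t^{-2}$ as $t\downarrow t_0$. Obtaining the Lipschitz constant $L_0$ in a form that keeps the final bound free of any $t_0$-blowup requires using the uniform output bound $M_t$, the $\tfrac{1}{T-t_0}$ normalization, the admissible ranges $t_0\ge N^{-C_\sigma}$ and $T=C_\alpha\log N$, and the Gaussian-tail truncation $\norm{\xb_t}_\infty\lesssim R+\sqrt{\log(1/\delta)}$ in concert; once $L_0$ is controlled, Stage~2 is the routine sparse-ReLU covering estimate.
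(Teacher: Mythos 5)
Your proposal is correct and takes essentially the same route as the paper's proof: a Lipschitz reduction from the loss to the sup-norm of the score network (via the $\norm{a}^2-\norm{b}^2$ factorization, the uniform output bound $M_t$, and $\EE\norm{\nabla\log\phi_t}\lesssim 1/\sigma_t$, giving a polylogarithmic-in-$N$ constant), followed by the standard parameter-counting covering bound for the two sparse ReLU sub-networks on the input box of radius $\max(R,T)$, which is where the exponent $2K$ and the final hyperparameter bookkeeping come from. The only differences are minor: you explicitly truncate the Gaussian tail of $\xb_t$ before restricting the cover to a bounded input box (a point the paper passes over silently, relying on the same boundedness of the networks), and you re-derive the ReLU covering estimate by parameter perturbation where the paper simply cites Lemma~\ref{lemma::covering number}.
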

With the lemmas above, we begin our proof of Theorem \ref{thm::Generalization}.
\begin{proof}[Proof of Theorem \ref{thm::Generalization}]

    We denote the truth score by $\sstar(\xb,\yb,t)=\nabla \log p_t(\xb|\yb)$ if $\yb \neq \void$ and $\sstar(\xb,\void,t)=\nabla \log p_t(\xb)$. We create $n$ i.i.d ghost samples 
$$(\xb'_1,\yb'_1),(\xb'_2,\yb'_2),...,(\xb'_n,\yb'_n) \sim \cP_{\xb,\yb}.$$
Since $\cR_{\star}(\sstar)=0$ and $\cR_{\star}(\sb)$ differs $\cL(\sb)$ by a constant for any $\sb$, it suffices to bound

\begin{align}
    \cR_{\star}(\shat)&=\cR_{\star}(\shat)-\cR_{\star}(\sstar)=\cL(\shat)-\cL(\sstar) =\EE_{\set{\xb'_i,\yb'_i}_{i=1}^n}\left[\frac{1}{n}\sum_{i=1}^n\left(\ell(\xb'_i,\yb'_i,\shat)-\ell(\xb'_i,\yb'_i,\sstar)  \right) \right].
\end{align}
Define $$\cL_1=\frac{1}{n}\sum_{i=1}^n\left(\ell(\xb_i,\yb_i,\shat)-\ell(\xb_i,\yb_i,\sstar)  \right),~~~\Lt_1=\frac{1}{n}\sum_{i=1}^n\left(\lt(\xb_i,\yb_i,\shat)-\lt(\xb_i,\yb_i,\sstar)  \right)$$ and $$\cL_2=\frac{1}{n}\sum_{i=1}^n\left(\ell(\xb'_i,\yb'_i,\shat)-\ell(\xb'_i,\yb'_i,\sstar)  \right),~~~\Lt_2=\frac{1}{n}\sum_{i=1}^n\left(\lt(\xb'_i,\yb'_i,\shat)-\lt(\xb'_i,\yb'_i,\sstar)  \right).$$
We consider decomposing  $\EE_{\set{\xb_i,\yb_i}_{i=1}^n}\left[  \cR_{\star}(\shat)\right]$ as
\begin{align}
    \EE_{\set{\xb_i,\yb_i}_{i=1}^n}\left[  \cR_{\star}(\shat)\right]&=\underbrace{\EE_{\set{\xb_i,\yb_i}_{i=1}^n}\left[\EE_{\set{\xb'_i,\yb'_i}_{i=1}^n}\left[ \cL_2-\Lt_2\right] \right]}_{A_2}
     \label{equ::score esti A}\\&\underbrace{\quad+\EE_{\set{\xb_i,\yb_i}_{i=1}^n}\left[\EE_{\set{\xb'_i,\yb'_i}_{i=1}^n}\left[ \Lt_2\right] -\Lt_1\right]}_{B}\label{equ::score esti B} \\
    &\quad+ \underbrace{\EE_{\set{\xb_i,\yb_i}_{i=1}^n}\left[\Lt_1 -\cL_1 \right]}_{A_1}+\underbrace{\EE_{\set{\xb_i,\yb_i}_{i=1}^n}\left[\cL_1\right]}_{C}. \label{equ::score esti C}
\end{align}
\paragraph{Bounding Term $A_1$ and $A_2$.}

Since we have for any $\sb \in \cF $, ($\sb$ can depend on $\xb, \yb$)
\begin{align}
    &\quad\EE_{\xb,\yb}\left[\left| \ell(\xb,\yb,\sb)-\lt(\xb,\yb,\sb) \right|\right]\nonumber \\&=\int_{t_0}^{T}\int_{\yb}\int_{\norm{\xb}>R}\EE_{\tau,\xb_t|\xb_0=\xb}\big[\norm{\sb(\xb_t,\tau\yb,t)-\nabla \log \phi_t(\xb_t|\xb_0)}_2^2\big]p(\xb|\yb)p(\yb) \diff \xb \diff \yb \diff t \nonumber\\
&\le 2\int_{t_0}^{T}\frac{1}{T-t_0}\int_{\yb}\int_{\norm{\xb}>R}\EE_{\tau,\xb_t|\xb_0=\xb}\big[\norm{\sb(\xb_t,\tau\yb,t)}_2^2+\norm{\nabla \log \phi_t(\xb_t|\xb_0)}_2^2\big] p(\xb|\yb)p(\yb) \diff \xb \diff \yb \diff t\nonumber\\
&\lesssim \int_{t_0}^{T}\frac{1}{\log N}\int_{\norm{\xb}>R}\EE_{\tau,\yb,\xb_t|\xb_0=\xb}\big[m_t^2\log N+\norm{\nabla \log \phi_t(\xb_t|\xb_0)}_2^2\big]\exp(-C_2 \norm{\xb}_2^2/2) \diff \xb \diff t\nonumber\\
&\lesssim  \exp\left(-C_2R^2\right)R\int_{t_0}^{T} m_{t}^2 \diff t +\exp\left(-C_2R^2\right)\int_{t_0}^{T}\frac{1}{\sigma^2_t}\diff t\nonumber\\
&\lesssim  \exp\left(-C_2R^2\right)RM, \label{equ::trunc popu loss}
\end{align}
where the second inequality follows from the subGaussian property of $p(\xb|\yb)$ under either assumption \ref{assump:sub} or \ref{assump::expdensity}, and the third inequality invokes the fact $\EE_{\xb_t|\xb_0=\xb}\left[\norm{\nabla \log \phi_t(\xb_t|\xb_0)}_2^2 \right]=1/\sigma_t^2$. Thus, both terms $A_1$ and $A_2$ are bounded by $\cO\left( \exp\left(-C_2R^2\right)RM\right)$.

\paragraph{Bounding Term $B$.}

For simplicity, we take $\zb=(\xb,\yb)$. We denote $\lt(\xb,\yb,\shat)$ by $\lhat(\zb)$ and $\lt(\xb,\yb,\sstar)$ by $\lstar(\zb)$.
For $\delta>0$ to be chosen later, let $\cJ=\set{\ell_1,\ell_2,...,\ell_{\cN}}$ be a $\delta$-covering of the loss function class $\cS(R)$ with the minimum cardinality in the $L^\infty$ metric in the bounded space $\cD$, and $J$ be a random variable such that $\norm{\lhat-\ell_J }_{\infty} \le\delta$. Moreover, we define $u_j=\max \left\{A,\sqrt{\EE_{\zb}\left[ \ell_j(\zb)-\lstar(\zb)\right]}\right\}$, where $\zb \sim P_{\xb,\yb}$ is independent of ${\set{\zb_i,\zb'_i}_{i=1}^n}$. Besides, we define
\begin{align*}
    D=\max_{1\le j\le \cN}\left| \sum_{i=1}^n \frac{\left(\ell_j(\zb_i)-\lstar(\zb_i)\right) -\left(\ell_j(\zb'_i)-\lstar(\zb'_i)\right)}{u_j}\right|.
\end{align*}
Then we can further bound term $B$ as follows:
\begin{align}
    \abs{B}&=\left|\EE_{\set{\zb_{i}}_{i=1}^n} \left[\frac{1}{n}\sum_{i=1}^n \left(\lhat(\zb_i)-\lstar(\zb_i)\right)  -\EE_{\set{\zb'_{i}}_{i=1}^n} \left[\sum_{i=1}^n \left(\lhat(\zb'_i)-\lstar(\zb'_i)\right) \right]\right]\right|\nonumber\\
    &=\left|\frac{1}{n} \EE_{\set{\zb_i,\zb'_i}_{i=1}^n} \left[ \sum_{i=1}^{n} \left(\left(\lhat(\zb_i)-\lstar(\zb_i)\right) -\left(\lhat(\zb'_i)-\lstar(\zb'_i)\right)\right)\right]\right|\nonumber\\ 
    &\le \left| \frac{1}{n} \EE_{\set{\zb_i,\zb'_i}_{i=1}^n} \left[ \sum_{i=1}^{n} \left(\left(\ell_J(\zb_i)-\lstar(\zb_i)\right) -\left(\ell_J(\zb'_i)-\lstar(\zb'_i)\right)\right)\right]\right| +2\delta \nonumber\\
    &\le 
   \frac{1}{n} \EE_{\set{\zb_i,\zb'_i}_{i=1}^n} \left[ u_JD \right] +2\delta \nonumber\\
    &\le \frac{1}{2} \EE_{\set{\zb_i,\zb'_i}_{i=1}^n} \left[ u_J^2 \right] +\frac{1}{2n^2}  \EE_{\set{\zb_i,\zb'_i}_{i=1}^n} \left[ D^2 \right] +2\delta. \label{equ:: excess bound}
\end{align}
Denote $h_j(\zb)=\ell_j(\zb)-\lstar(\zb)$ and $\hat{h}(\zb)=\lhat(\zb)-\lstar(\zb)$. Moreover, we define the truncated population loss as $\Rt(\shat)=\EE_{\zb}\left[ \hat{h}\right]$, and define the truncated empirical loss as $\hat{\cR_{\star}}^{\text{trunc}}(\shat)=\frac{1}{n}\sum_{i=1}^n \hat{h}(\zb_i) $. By
\eqref{equ::trunc popu loss} we know that $\abs{\Rt(\shat)-\cR_{\star}(\shat)}\lesssim \exp\left(-C_2R^2\right)RM $. Now we bound $ \EE_{\set{\zb_i,\zb'_i}_{i=1}^n} \left[ u_J^2 \right]$ and $\EE_{\set{\zb_i,\zb'_i}_{i=1}^n} \left[ D^2 \right]$ separately.

\hspace{-0.6cm}\textbf{Bounding term $ \EE_{\set{\zb_i,\zb'_i}_{i=1}^n} \left[ u_J^2 \right]$.}
By the definition of $u_J$, we have
\begin{align}
    \EE_{\set{\zb_i,\zb'_i}_{i=1}^n} \left[ u_J^2 \right]&\le A^2+ \EE_{\set{\zb_i,\zb'_i}_{i=1}^n}\left[\EE_{\zb}\left[h_J(\zb) \right] \right] \nonumber\\
    &\le A^2+ \EE_{\set{\zb_i,\zb'_i}_{i=1}^n}\left[\EE_{\zb}\left[ \hat{h}(\zb)\right]\right] +2\delta  \nonumber\\
    &=A^2+\EE_{\set{\zb_i,\zb'_i}_{i=1}^n}\left[\Rt(\shat)\right] +2\delta. \label{equ:: bound uj}
\end{align}

\hspace{-0.6cm}\textbf{Bounding term $ \EE_{\set{\zb_i,\zb'_i}_{i=1}^n} \left[ D^2 \right]$.} Denote $g_j = \sum_{i=1}^n \frac{h_j(\zb_i)-h_j(\zb'_i)}{u_j}$. It is easy to observe that $\EE_{\zb_i,\zb'_i} \left[ \frac{h_j(\zb_i)-h_j(\zb'_i)}{u_j} \right]=0$ for any $i,j$. By independence of $g_j$, we have
\begin{align*}
\EE_{\set{\zb_i,\zb'_i}_{i=1}^n} \left[\sum_{i=1}^n \left(\frac{h_j(\zb_i)-h_j(\zb'_i)}{u_j}\right)^2\right]
    &\le  \sum_{i=1}^n \EE_{\zb_i,\zb'_i} \left[\left(\frac{h_j(\zb_i)}{u_j}\right)^2+\left(\frac{h_j(\zb'_i)}{u_j}\right)^2\right]\\
    &\le M \sum_{i=1}^n \EE_{\zb_i,\zb'_i} \left[\frac{h_j(\zb_i)}{u^2_j}+\frac{h_j(\zb'_i)}{u^2_j}\right]\\
    &\le 2nM.
\end{align*}
Since $\left|\frac{h_j(\zb_i)-h_j(\zb'_i)}{u_j}\right| \le \frac{M}{A}$ and $g_j$ is centered, by Bernstein's Inequality, we have for any $j$,
\begin{align*}
    \Pr\left[g_j^2\ge h\right]=2\Pr\left[\sum_{i=1}^n\frac{h_j(\zb_i)-h_j(\zb'_i)}{u_j} \ge \sqrt{h}\right]\le 2\exp \left( -\frac{h/2}{M(2n+\frac{\sqrt{h}}{3A})} \right).
\end{align*}
Thus, we have
\begin{align*}
    \Pr\left[D^2\ge h\right]\le\sum_{j=1}^{\cN} \Pr\left[g_j^2\ge h\right]\le  2\cN\exp \left( -\frac{h/2}{M(2n+\frac{\sqrt{h}}{3A})} \right).
\end{align*}
Thus, for any $h_0>0$,
\begin{align*}
   \EE_{\set{\zb_i,\zb'_i}_{i=1}^n} \left[ D^2 \right]&=\int_{0}^{h_0} \Pr\left[D^2\ge h\right]\diff h 
  +\int_{h_0}^\infty \Pr\left[D^2\ge h\right]\diff h\\ &\le h_0+\int_{h_0}^\infty 2\cN\exp \left( -\frac{h/2}{M(2n+\frac{\sqrt{h}}{3A})} \right) \diff h \\
   &\le h_0 +2\cN \int_{h_0}^\infty \left[\exp\left(-\frac{h}{8Mn}\right)+\exp\left(-\frac{3A\sqrt{h}}{4M}\right)\right]\diff h\\
   &\le h_0 +2\cN\left[8Mn\exp\left(-\frac{h_0}{8Mn}\right)+\left(\frac{8M\sqrt{h_0}}{3A}+\frac{32M}{9A^2}\right)\exp\left(-\frac{3A\sqrt{h_0}}{4M}\right)\right]
\end{align*}
Taking $A=\sqrt{h_0}/6n$ and $h_0=8Mn\log\cN$, we have 
\begin{align}
    \EE_{\set{\zb_i,\zb'_i}_{i=1}^n} \left[ D^2 \right] &\le 8Mn\log \cN +2\left(8Mn+16Mn+\frac{16}{\log\cN}\right) \nonumber\\
    &\lesssim Mn\log \cN.\label{equ:: bound G}
\end{align}
By applying the bounds \eqref{equ:: bound uj}, \eqref{equ:: bound G} to \eqref{equ:: excess bound}, we obtain that
\begin{align*}
    &\left|\EE_{\set{\zb_i}_{i=1}^n}\left[ \hat{\cR_{\star}}^{\text{trunc}}(\shat) - \Rt(\shat)  \right] \right| \\
     &\lesssim \frac{1}{2}\left(A^2+\EE_{\set{\zb_i,\zb'_i}_{i=1}^n}\left[\Rt(\shat)\right]+2\delta
\right)+ \frac{M}{n}\log \cN +2\delta\\
&=\frac{1}{2}\EE_{\set{\zb_i}_{i=1}^n}\left[\Rt(\shat)\right]+\frac{M}{n}\log \cN+\frac{7}{2}\delta
\end{align*}
Thus, we have
\begin{equation}\label{equ:: bound trunc popu}
    \EE_{\set{\zb_i}_{i=1}^n}\left[ \Rt(\shat)  \right] \lesssim 2\EE_{\set{\zb_i}_{i=1}^n}\left[ \hat{\cR_{\star}}^{\text{trunc}}(\shat) \right] +\frac{M}{n}\log \cN+7\delta,
\end{equation}
which means that 
\begin{align*}
      B&\lesssim \EE_{\set{\xb_i,\yb_i}_{i=1}^n}\left[\Lt_1 \right]+\frac{M}{n}\log \cN+7\delta\\
      &\le \EE_{\set{\xb_i,\yb_i}_{i=1}^n}\left[\cL_1 \right]+\abs{A_1}+\frac{M}{n}\log \cN+7\delta\\
      &\lesssim C+ \exp\left(-C_2R^2\right)RM+\frac{M}{n}\log \cN+7\delta.
\end{align*}

\paragraph{Bounding Term $C$}
For any $\sb$, define $\hat{\cR_{\star}}(\sb)=\hat{\cL}(\sb)-\hat{\cL}(\sstar)$. Then we have $\cL_1=\hat{\cR_{\star}}(\shat)$. Since $\shat$ minimizes $\hat{\cL}$, we obtain that $$\hat{\cR_{\star}}(\shat)=\hat{\cL}(\shat)-\hat{\cL}(\sstar)\le \hat{\cL}(\sb)-\hat{\cL}(\sstar)=\hat{\cR_{\star}}(\sb). $$ Thus, we have
\begin{align*}
    C=\EE_{\set{\zb_i}_{i=1}^n}\left[ \hat{\cR_{\star}}(\shat)  \right]\le \EE_{\set{\zb_i}_{i=1}^n}\left[ \hat{\cR_{\star}}(\sb)\right]=\cR_{\star}(\sb). 
\end{align*}
By taking minimum w.r.t. $\sb \in \cF$, we have $C\le \min_{\sb \in \cF} \cR_{\star}(\sb)$. 
\paragraph{Balancing the error}
Now, combining the bounds for term $A_1$, $A_2$, $B$ and $C$ and plugging the log covering number \eqref{equ::logcover}, we have
\begin{align}
    \EE_{\set{\zb_i}_{i=1}^n}\left[ \cR_{\star}(\shat)  \right] & \le 2\min_{\sb \in \cF}\int_{t_0}^{T}\frac{1}{T-t_0}\EE_{\tau,\xb_t,\yb}\norm{\sb(\xb_t,\tau\yb,t)-\nabla \log p_t(\xb_t|\tau \yb)}_2^2\diff t\nonumber\\
    &\quad +\cO\left(\frac{M}{n}N^{d+d_y}\log^{9} N \left( \log^8 N+\log^2 N\log R +\log \frac{1}{\delta}  \right)\right)\nonumber \\&+\cO\left( \exp\left(-C_2R^2\right)RM\right) +7\delta. \label{equ::balance estimation error}
\end{align}
Thus, by taking $R=\sqrt{\frac{(C_\sigma+2\beta)\log N }{C_2(d+d_y)}}$ and $\delta=N^{-2\beta/(d+d_y)}$, we ensure that under either Assumption \ref{assump:sub} or \ref{assump::expdensity}, we have
\begin{align*}
    \EE_{\set{\zb_i}_{i=1}^n}\left[ \cR_{\star}(\shat)  \right] & \le 2\min_{\sb \in \cF}\int_{t_0}^{T}\frac{1}{T-t_0}\EE_{\tau,\xb_t,\yb}\norm{\sb(\xb_t,\tau\yb,t)-\nabla \log p_t(\xb_t|\tau\yb)}_2^2\diff t \\
    &\qquad +\cO\left(\frac{M}{n}N\log^{17} N \right)+\cO\left(MN^{-2\beta-C_{\sigma}}\right) \\
    & \le 2\min_{\sb \in \cF}\int_{t_0}^{T}\frac{1}{T-t_0}\EE_{\tau,\yb}\left[\EE_{\xb_t}\norm{\sb(\xb_t,\tau\yb,t)-\nabla \log p_t(\xb_t|\tau\yb)}_2^2\right]\diff t \\
    &\qquad +\cO\left(\frac{M}{n}N\log^{17} N \right)+\cO\left(N^{-\frac{2\beta}{d+d_y}}\right).
\end{align*}
We invoke the inequality $M\lesssim{\frac{1}{t_0}}=N^{C_\sigma}$ for the second inequality. Recall that for any time $t>0$ and score approximator  $\sb(\cdot,\cdot,t)$, we have
\begin{align*}
\EE_{\tau,\xb_t,\yb}\norm{\sb(\xb_t,\tau\yb,t)-\nabla \log p_t(\xb_t|\tau\yb)}_2^2&=\frac{1}{2}\int_{\R^{d}} \norm{\sb(\xb,\void,t)-\nabla\log p_{t}(\xb)}^2 p_{t}(\xb)d\xb\\
&\quad+\frac{1}{2}\EE_{\yb}\left[\int_{\R^{d}} \norm{\sb(\xb,\yb,t)-\nabla\log p_{t}(\xb|\yb)}^2 p_{t}(\xb|\yb)d\xb\right].
\end{align*}
Therefore, we can invoke the score approximation error guarantee in Section \ref{sec::approx} and Appendix \ref{sec::extend score approx} to bound the score estimation error.
Particularly, under Assumption \ref{assump:sub}, we have $M=\cO({1}/{t_0})$. By taking $N=n^{(d+d_y)/(d+d_y+\beta)}$ and invoking Theorem \ref{thm::score approx} and Proposition \ref{prop:: score approx marginal}, the error is bounded by
\begin{equation}
    \EE_{\set{\zb_i}_{i=1}^n}\left[ \cR(\shat)  \right]\le 2\EE_{\set{\zb_i}_{i=1}^n}\left[ \cR_{\star}(\shat)  \right]\lesssim \frac{1}{t_0}n^{-\frac{\beta}{d+d_y+\beta}}\log^{\max(17,d+\beta/2+1)} n.
\end{equation}
Similarly, under Assumption \ref{assump::expdensity}, we have $M=\cO(\log \frac{1}{t_0})$. By taking $N=n^{(d+d_y)/(d+d_y+2\beta)}$ and invoking Theorem \ref{thm::score approx exp} and Proposition \ref{prop:: score approx marginal exp}, the conditional score error is bounded by
\begin{equation}
    \EE_{\set{\zb_i}_{i=1}^n}\left[ \cR(\shat)  \right]\lesssim\log \frac{1}{t_0} n^{-\frac{2\beta}{d+d_y+2\beta}}\log^{\max(17,(\beta+1)/2)} n.
\end{equation}
We complete our proof.

\end{proof}

\subsection{Proof for Theorem \ref{thm::TVbound}}\label{sec::TV proof}
Although neither Assumption \ref{assump:sub} nor \ref{assump::expdensity} ensures the Novikov's condition to hold, according to \cite{chen2022sampling}, as long as we have bounded second moment for the score estimation error and finite KL divergence w.r.t. the standard Gaussian, we could still adopt Girsanov’s Theorem and bound the KL divergence between the two distribution. 
We restate the lemma as follows:
\begin{lemma}[Proposition D.1 in \cite{oko2023diffusion}, see also 
 Theorem 2 in \cite{chen2022sampling}]\label{lemma:Girsanov}
    Let $p_0$ be a probability distribution, and let $Y=\set{Y_t}_{t\in[0,T]}$ and $Y'=\set{Y'_t}_{t\in[0,T]}$ be two stochastic processes that satisfy the following SDEs:
    \begin{align*}
        \diff Y_t&=s(Y_t,t)\diff t+\diff W_t,~~~Y_0\sim p_0 \\
        \diff Y'_t&=s'(Y'_t,t)\diff t+\diff W_t,~~~Y'_0\sim p_0
    \end{align*}
    We further define the distributions of $Y_t$ and $Y'_t$ by $p_t$ and $p_t$. Suppose that
    \begin{align}\label{equ::weak condition}
        \int_{\xb}p_t(\xb)\norm{(s-s')(\xb,t)}^2\diff \xb \le C
    \end{align}
    for any $t\in[0,T]$. Then we have
    \begin{align*}
        \text{KL} \left(p_T|p'_T\right)\le \int_{0}^{T}\frac{1}{2}\int_{\xb}p_t(\xb)\norm{(s-s')(\xb,t)}^2\diff \xb \diff t.
    \end{align*}
\end{lemma}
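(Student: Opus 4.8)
The plan is to lift the comparison of $Y$ and $Y'$ to the level of their laws on path space, apply Girsanov's theorem there, and dispose of the possible failure of Novikov's condition by a stopping-time localization. First I would set $\Omega=C([0,T];\R^d)$ with its canonical filtration $\set{\mathcal{F}_t}_{t\in[0,T]}$, write $\mathbb{P}$ and $\mathbb{P}'$ for the laws on $\Omega$ of the solutions $Y$ and $Y'$ (which exist here because $s,s'$ are the drifts of the backward diffusion SDEs), and note that $p_T$ and $p_T'$ are the pushforwards of $\mathbb{P}$ and $\mathbb{P}'$ under the endpoint map $\omega\mapsto\omega_T$. By the data-processing inequality for relative entropy, $\tKL(p_T\,|\,p_T')\le\tKL(\mathbb{P}\,|\,\mathbb{P}')$, so it is enough to bound the path-space relative entropy by $\frac{1}{2}\int_0^T\!\int_{\xb}p_t(\xb)\norm{(s-s')(\xb,t)}^2\diff\xb\,\diff t$.

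Second, I would introduce the localization. The key observation, via Fubini and \eqref{equ::weak condition}, is that $\mathbb{E}_{\mathbb{P}}\big[\int_0^T\norm{(s-s')(Y_t,t)}^2\diff t\big]=\int_0^T\!\int_{\xb}p_t(\xb)\norm{(s-s')(\xb,t)}^2\diff\xb\,\diff t\le CT<\infty$, so under $\mathbb{P}$ the integral $\int_0^T\norm{(s-s')(Y_t,t)}^2\diff t$ is almost surely finite. Define $\tau_m=T\wedge\inf\set{t:\int_0^t\norm{(s-s')(Y_u,u)}^2\diff u\ge m}$; then $\tau_m\uparrow T$ as $m\to\infty$, $\mathbb{P}$-a.s. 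For the stopped drift $\mathbf{1}_{t\le\tau_m}(s-s')(Y_t,t)$ Novikov's condition holds trivially (the exponent is at most $m/2$), so Girsanov applies and the corresponding exponential $\mathcal{E}_m$ is a genuine martingale with $\mathbb{E}_{\mathbb{P}}[\mathcal{E}_m]=1$ that equals $\frac{\diff\mathbb{P}}{\diff\mathbb{P}'}$ restricted to $\mathcal{F}_{\tau_m}$.

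Third, I would compute and pass to the limit. Taking logarithms and expectations under $\mathbb{P}$, and using that under $\mathbb{P}$ the Itô integral appearing in $\log\mathcal{E}_m$ has quadratic variation bounded by $m$ and hence is a true mean-zero martingale, one obtains $\tKL(\mathbb{P}|_{\mathcal{F}_{\tau_m}}\,|\,\mathbb{P}'|_{\mathcal{F}_{\tau_m}})=\frac{1}{2}\mathbb{E}_{\mathbb{P}}\big[\int_0^{\tau_m}\norm{(s-s')(Y_u,u)}^2\diff u\big]\le\frac{1}{2}\int_0^T\!\int_{\xb}p_t(\xb)\norm{(s-s')(\xb,t)}^2\diff\xb\,\diff t$, uniformly in $m$. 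Letting $m\to\infty$: since $\tau_m\uparrow T$ and paths are continuous, $\mathcal{F}_{\tau_m}\uparrow\mathcal{F}_T$, and relative entropy restricted to an increasing sequence of sub-$\sigma$-algebras increases to the relative entropy on the limiting $\sigma$-algebra; hence $\tKL(\mathbb{P}\,|\,\mathbb{P}')$ is bounded by the same integral, and the data-processing bound of the first paragraph finishes the proof.

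The step I expect to be the main obstacle is exactly this localization-and-limit machinery: justifying that $\tau_m\to T$, that Girsanov and the martingale property of the stochastic integral are legitimate on the stopped interval, and that the relative entropies of the restricted measures converge up to the full path-space relative entropy. This is precisely where the weak hypothesis \eqref{equ::weak condition}, rather than Novikov's condition, is genuinely used, mirroring the treatment in \cite{chen2022sampling, oko2023diffusion}; the remaining ingredients are a routine application of the Girsanov change-of-measure formula together with Fubini.
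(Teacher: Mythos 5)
The paper does not actually prove this lemma: it is imported verbatim from the cited sources (Proposition D.1 of \cite{oko2023diffusion}, Theorem 2 of \cite{chen2022sampling}), and your localization-plus-Girsanov argument is essentially the proof given there, so in approach you match the source the paper relies on. Two points deserve tightening. First, the step ``$\mathcal{E}_m$ equals $\diff\mathbb{P}/\diff\mathbb{P}'$ restricted to $\mathcal{F}_{\tau_m}$'' conceals the only genuinely delicate part: Girsanov gives you a tilted measure $\mathbb{Q}_m$ with $\diff\mathbb{Q}_m=\mathcal{E}_m\,\diff\mathbb{P}$ under which the canonical process solves the $s'$-SDE up to $\tau_m$, and to identify $\mathbb{Q}_m|_{\mathcal{F}_{\tau_m}}$ with $\mathbb{P}'|_{\mathcal{F}_{\tau_m}}$ you need weak uniqueness (uniqueness in law) for the $s'$-SDE, or you must instead construct the Radon--Nikodym derivative directly as in the cited proofs; this is harmless in the paper's application, where $s'$ is either the Lipschitz network drift or the reverse-OU drift, but it is not a consequence of \eqref{equ::weak condition} alone and should be stated. (Also note $\mathcal{E}_m$ is then $\diff\mathbb{P}'/\diff\mathbb{P}$ on $\mathcal{F}_{\tau_m}$, not its reciprocal as written; the final identity $\tKL=\frac{1}{2}\EE_{\mathbb{P}}\int_0^{\tau_m}\norm{(s-s')(Y_u,u)}^2\diff u$ is unaffected.) Second, in the limit $m\to\infty$ you should record why $\bigvee_m\mathcal{F}_{\tau_m}$ contains $\sigma(Y_T)$ (continuity of paths plus $\tau_m\uparrow T$ a.s.) before invoking monotone convergence of relative entropy along the increasing $\sigma$-algebras; with those two justifications added, the argument is complete and consistent with the cited proofs.
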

To prove Theorem \ref{thm::TVbound}, we also need to bound the total variation between the initial distribution and the diffused distribution at the early stopping time $t_0$, which is presented in the following lemma.
\begin{lemma}\label{lemma::t0 TV bound}
    Under either Assumption \ref{assump:sub} or \ref{assump::expdensity}, we have for any $\yb \in [0,1]^{d_y}$,
\begin{align}\label{equ::t t0 TV bound}
    \tTV(P(\cdot|\yb),P_{t_0}(\cdot|\yb))=\cO\left(\sqrt{t_0}\log ^{(d+1)/2} \frac{1}{t_0}\right).
\end{align}
\end{lemma}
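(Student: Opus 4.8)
The plan is to compare the two conditional densities directly, exploiting that $P_{t_0}(\cdot\,|\,\yb)$ is obtained from $P(\cdot\,|\,\yb)$ by a vanishingly small rescaling followed by a vanishingly small Gaussian smoothing. Writing the forward dynamics as $X_{t_0}=\alpha_{t_0}X_0+\sigma_{t_0}\epsilon$ with $\epsilon\sim{\sf N}(0,I)$ independent of $X_0$, we have $p_{t_0}(\xb|\yb)=\EE_{\epsilon}\!\left[\alpha_{t_0}^{-d}\,p\!\left((\xb-\sigma_{t_0}\epsilon)/\alpha_{t_0}\,\big|\,\yb\right)\right]$, and since $\alpha_{t_0}=1-\Theta(t_0)$ and $\sigma_{t_0}^2=\Theta(t_0)$ for $t_0<1$, both operations are small perturbations. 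Equivalently, one may view the difference through the Fokker--Planck flow $\partial_t p_t(\xb|\yb)=\tfrac12\,\mathrm{div}_{\xb}\!\big(p_t(\xb|\yb)(\xb+\nabla\log p_t(\xb|\yb))\big)$, so that $\tTV\big(P(\cdot|\yb),P_{t_0}(\cdot|\yb)\big)=\tfrac12\big\|\int_0^{t_0}\partial_t p_t(\cdot|\yb)\,\diff t\big\|_{L^1}\le \tfrac12\int_0^{t_0}\|\partial_t p_t(\cdot|\yb)\|_{L^1}\,\diff t$; the $\sqrt{t_0}$ scaling is meant to come out of an $L^1$-speed estimate $\|\partial_t p_t(\cdot|\yb)\|_{L^1}\lesssim t^{-1/2}\cdot\mathrm{polylog}$, which is integrable near $t=0$ and yields $\int_0^{t_0}t^{-1/2}\diff t=2\sqrt{t_0}$.

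The key steps, in order, are: \textbf{(i) Tail truncation.} Fix a radius $R$ of order $\sqrt{\log(1/t_0)}$ and discard $\{\norm{\xb}_{\infty}>R\}$; by the sub-Gaussian tail in Assumption~\ref{assump:sub} (resp.\ Assumption~\ref{assump::expdensity}) and the density bounds of Lemma~\ref{lemma::density bound} (or directly Lemma~\ref{lemma::truncation x}/\ref{lemma::truncation x exp}), both $\int_{\norm{\xb}_{\infty}>R}p(\xb|\yb)\,\diff\xb$ and $\int_{\norm{\xb}_{\infty}>R}p_{t_0}(\xb|\yb)\,\diff\xb$ are $\lesssim R\exp(-C_2'R^2)$, which for the chosen $R$ is far below $\sqrt{t_0}$. \textbf{(ii) Pointwise estimate on the ball.} On $\norm{\xb}_{\infty}\le R$ one splits $|p_{t_0}(\xb|\yb)-p(\xb|\yb)|$ into the rescaling part $|\alpha_{t_0}^{-d}p(\xb/\alpha_{t_0}|\yb)-p(\xb|\yb)|$, which is $\lesssim t_0$ times a polynomial in $R$ using boundedness and H\"older regularity of $p$, and the Gaussian-smoothing part $\big|\EE_\epsilon[\alpha_{t_0}^{-d}p((\xb-\sigma_{t_0}\epsilon)/\alpha_{t_0}|\yb)]-\alpha_{t_0}^{-d}p(\xb/\alpha_{t_0}|\yb)\big|$, for which one conditions $\epsilon$ to the event $\norm{\epsilon}\lesssim\sqrt{\log(1/t_0)}$ (the complement being negligible), uses that $\epsilon$ is centered --- a symmetrized second difference of $p$ against the Gaussian --- and invokes the $\beta$-H\"older condition to extract the $\sqrt{t_0}$ factor; through the flow picture of the first paragraph this is the single integration by parts that moves a derivative off the heat kernel onto $p$, giving $\|\partial_t p_t(\cdot|\yb)\|_{L^1}\lesssim\sigma_t^{-1}\lesssim t^{-1/2}$ up to polylog factors. \textbf{(iii) Assembling.} Multiplying the pointwise bound by the ball volume $m(B_R)\lesssim R^d$ of order $(\log(1/t_0))^{d/2}$ and adding the truncation remainder (which carries one extra $\log^{1/2}(1/t_0)$) gives $\tTV\big(P(\cdot|\yb),P_{t_0}(\cdot|\yb)\big)=\cO\big(\sqrt{t_0}\,(\log(1/t_0))^{(d+1)/2}\big)$, with all constants depending only on $d,\beta,B,C_1,C_2$ (and, under Assumption~\ref{assump::expdensity}, on $C$), hence uniform over $\yb\in[0,1]^{d_y}$.

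The main obstacle is precisely step (ii): obtaining the $t^{-1/2}$ (integrable-in-$t$) control of the $L^1$-speed, equivalently gaining a full factor $\sqrt{t_0}$ pointwise from data that is only $\beta$-H\"older and need not be differentiable. One must trade exactly one order of differentiation of the Gaussian kernel against the regularity of $p$ while keeping every integral convergent over the unbounded $\xb$-domain; it is the centeredness of the injected noise (a symmetric second difference, rather than a crude first-difference bound) that is meant to recover the $\sqrt{t_0}$ rate in place of a weaker $t_0^{\beta/2}$-type estimate, and handling this on a truncated ball is what forces the $(\log(1/t_0))^{d/2}$ volume factor and the additional $(\log(1/t_0))^{1/2}$ from the sub-Gaussian tails into the final bound.
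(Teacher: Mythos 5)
Your overall skeleton is the same as the paper's: truncate $\xb$ to a cube of radius $R\asymp\sqrt{\log(1/t_0)}$ using the sub-Gaussian tails, compare $p_{t_0}(\cdot|\yb)$ with $p(\cdot|\yb)$ pointwise on that cube by localizing the Gaussian kernel to displacements of size $O(\sigma_{t_0}\sqrt{\log(1/t_0)})$, treat the Jacobian/rescaling term $(\alpha_{t_0}^{d}-1)p_{t_0}=O(t_0)$ separately, and multiply by the volume $R^{d}$; this is exactly the decomposition in the paper's proof (its sets $A_x$ and $B_x$ play the roles of your kernel localization and tail truncation, with $\epsilon_1=\epsilon_2=t_0$).

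The genuine gap is in your step (ii), which you yourself flag as the main obstacle: the mechanism you propose for extracting the factor $\sqrt{t_0}$ does not work and is not needed. The paper gets $\sqrt{t_0}$ from a plain first-order difference bound, $\abs{p(\xb|\yb)-p(\zb|\yb)}\lesssim B\norm{\xb-\zb}_\infty\lesssim B\,\frac{\sigma_{t_0}}{\alpha_{t_0}}\sqrt{\log(1/t_0)}$ on the localized region, i.e.\ it uses the bounded first-order part of the H\"older norm (effectively $\beta\ge 1$); no centering, symmetrization, or integration by parts enters anywhere. Conversely, when $\beta<1$ the ``symmetrized second difference'' cannot upgrade the rate from $\sigma_{t_0}^{\beta}$ to $\sigma_{t_0}$: centering buys an extra order only when at least one derivative of $p$ exists (it upgrades $C^2$ data from $\sigma$ to $\sigma^2$), whereas for a density behaving locally like $c+\abs{x_1}^{\beta}$ the smoothing error — pointwise and in $L^1$ on a unit cube — is genuinely of order $\sigma_{t_0}^{\beta}$, so your claim to recover $\sqrt{t_0}$ ``in place of a weaker $t_0^{\beta/2}$-type estimate'' from mere $\beta$-H\"older regularity is false. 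The same issue infects your Fokker--Planck variant: the $L^1$-speed bound $\norm{\partial_t p_t(\cdot|\yb)}_{L^1}\lesssim\sigma_t^{-1}$ is obtained by writing $\Delta p_t$ as $\nabla p$ convolved with $\nabla$ of the (rescaled) heat kernel, which again requires an integrable first derivative of $p$, i.e.\ the same $\beta\ge1$-type regularity, plus tail control that you do not supply. So either you invoke the first-order smoothness explicitly — in which case the crude first difference already finishes the argument exactly as in the paper and your ``key'' symmetrization step is superfluous — or you work under $\beta<1$, in which case the step as described fails and the claimed $\sqrt{t_0}\log^{(d+1)/2}(1/t_0)$ bound is not reached by your argument. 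Your bookkeeping in step (iii) (volume factor $\log^{d/2}$ plus one extra $\log^{1/2}$) is otherwise consistent with the paper, although there the extra half-log comes from the kernel-localization radius rather than from the tail remainder.
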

The proof of Lemma \ref{lemma::t0 TV bound} is provided in Appendix \ref{sec::proof of lemma::t0 TV bound}.
With the lemmas above, we begin our proof of Theorem \ref{thm::TVbound}.
\begin{proof}[Proof of Theorem \ref{thm::TVbound}]
    Note that under either of Assumption \ref{assump:sub} or \ref{assump::expdensity} and for any $\sb \in \cF, \yb \in [0,1]^{d_y}$, we have
\begin{align*}
     \int_{\xb}p_t(\xb|\yb)\norm{\sb(\xb,\yb,t)-\nabla \log p_t(\xb|\yb)}^2\diff \xb &\lesssim  \int_{\xb}p_t(\xb|\yb) \frac{\norm{\xb}^2+C}{\sigma^4_t} \diff \xb \lesssim \frac{1}{\sigma^4_t}.
\end{align*}
Here we invoke the bound on the score function (Lemma \ref{lemma::score bound}, \ref{lemma::scorebound2}) and the bound on ReLU network $\norm{\sb}_{\infty}\le M_t\lesssim \frac{\log N}{\sigma^2_t}$ for the first inequality, and we use the subGaussian property of $p_t(\xb|\yb)$ (Lemma \ref{lemma::density bound})\footnote{The subGaussian property also holds under the stronger Assumption \ref{assump::expdensity}. We refer to \eqref{equ::pt upper bound exp} for the proof.} for the second inequality. Thus, the condition \eqref{equ::weak condition} holds for any $t\in[t_0,T]$.

Now we use another backward process as a transition term between $X^{\leftarrow}_t$ and $\tilde{X}^{\leftarrow}_t$, which is defined as
\begin{equation}
    \diff X_t^{\prime\leftarrow} = \left[ \frac{1}{2} X_t^{\prime\leftarrow} + \nabla \log p_{T-t}(X_t^{\prime\leftarrow} | \yb)\right] \diff t + \diff \bar{W}_t \quad \text{with} \quad X_0^{\prime\leftarrow} \sim  {\sf N}(0, I).
\end{equation}
We denote the distribution of $X_t^{\prime\leftarrow}$ conditional on $\yb$ by $P
^{\prime}_{T-t}(\cdot|\yb)$.

Since $X^{\prime\leftarrow}$ and $X^{\leftarrow}$ are obtained through the same backward SDE but with different initial distributions, by Data Processing Inequality and Pinsker’s Inequality (see e.g., Lemma 2 in \cite{canonne2023short}), we have
\begin{align*}
    \tTV(P_{t_0}(\cdot|\yb),P^{\prime}_{t_0}(\cdot|\yb)) &\lesssim \sqrt{\tKL (P_{t_0}(\cdot|\yb)||P^{\prime}_{t_0}(\cdot|\yb) )}\\& \lesssim \sqrt{\tKL (P_{T}(\cdot|\yb)||{\sf N}(0, I) )}\\& \lesssim \sqrt{\tKL (P(\cdot|\yb)||{\sf N}(0, I) )}\exp(-T).
\end{align*}
Thus, we could decompose the TV bound into
\begin{align}\label{equ:: TV bound conditional y}
    \tTV(P(\cdot|\yb),\tilde{P}_{t_0}(\cdot|\yb) ) &\lesssim \tTV(P(\cdot|\yb),P_{t_0}(\cdot|\yb))+ \tTV(P_{t_0}(\cdot|\yb),P^{\prime}_{t_0}(\cdot|\yb)  ) +\tTV (P^{\prime}_{t_0}(\cdot|\yb),\tilde{P}_{t_0}(\cdot|\yb)  )\nonumber \\
    &\lesssim \sqrt{t_0}\log ^{(d+1)/2} \frac{1}{t_0}+ \exp(-T) \\ &\quad+ \sqrt{\int_{t_0}^{T}\frac{1}{2}\int_{\xb}p_t(\xb|\yb)\norm{\shat(\xb,\yb,t)-\nabla \log p_t(\xb|\yb)}^2\diff \xb \diff t}.
\end{align}
By taking expectation w.r.t. $\yb$, we have
\begin{align*}
    & \quad \EE_{\yb}\left[\tTV(P_{t_0}(\cdot|\yb),\tilde{P}_{t_0} )\right] \\
    &\lesssim  \sqrt{t_0}\log ^{(d+1)/2} \frac{1}{t_0}+ \exp(-T)
+\EE_{\yb}\left[\sqrt{\int_{t_0}^{T}\frac{1}{2}\int_{\xb}p_t(\xb|\yb)\norm{\shat-\nabla \log p_t}^2\diff \xb \diff t} \right]\\
    &\lesssim   \sqrt{t_0}\log ^{(d+1)/2} \frac{1}{t_0}+\exp(-T) +\sqrt{\frac{T}{2}\cR(\shat)},
\end{align*}
where we invoke Jensen's inequality for the second inequality. Now we set $T=C_{\alpha}\log n$ for the constant $C_{\alpha}=\frac{2\beta}{d
+d_y+2\beta}$ and take expectation w.r.t. $\set{\xb_i,\yb_i}_{i=1}^n$. Again by Jensen's Inequality, we have
\begin{align}
    \EE_{\set{\xb_i,\yb_i}_{i=1}^n}\left[ \EE_{\yb}\left[\tTV(P_{t_0},\tilde{P}_{t_0} )\right] \right] &\lesssim \sqrt{t_0}\log ^{(d+1)/2} \frac{1}{t_0}+n^{-\frac{2\beta}{d
+d_y+2\beta}}+ \sqrt{\log n}  \EE_{\set{\xb_i,\yb_i}_{i=1}^n}\left[ \sqrt{\cR(\shat)} \right]\nonumber\\
    &\le \sqrt{t_0}\log ^{(d+1)/2} \frac{1}{t_0}+n^{-\frac{2\beta}{d
+d_y+2\beta}}+ \sqrt{\log n}  \sqrt{\EE_{\set{\xb_i,\yb_i}_{i=1}^n}\left[\cR(\shat)\right]}. \nonumber
\end{align}
Now we plug the bound of $\EE_{\set{\xb_i,\yb_i}_{i=1}^n}\left[\cR(\shat)\right]$ in Theorem \ref{thm::Generalization} into the inequality above. Under Assumption \ref{assump:sub}, we have
\begin{align*}
    \EE_{\set{\xb_i,\yb_i}_{i=1}^n}\left[ \EE_{\yb}\left[\tTV(P_{t_0},\tilde{P}_{t_0} )\right] \right] &\lesssim \sqrt{t_0}\log ^{(d+1)/2} \frac{1}{t_0}+n^{-\frac{2\beta}{d
+d_y+2\beta}}+   \sqrt{\frac{1}{t_0} }n^{-\frac{\beta}{2(d+d_y+\beta)}}(\log n)^{c(\beta)},
\end{align*}
where $c(\beta)=\max\left(9,d/2+\beta/4+1\right)$. We take $t_0=n^{-\frac{\beta}{4(d+d_y+\beta)}}$ to bound the expected total variation by 
$$\EE_{\set{\xb_i,\yb_i}_{i=1}^n}\left[ \EE_{\yb}\left[\tTV(P_{t_0},\tilde{P}_{t_0} )\right] \right] =\cO\left(  n^{-\frac{\beta}{4(d+d_y+\beta)}}(\log 
  n)^{c(\beta)}\right).$$
  On the other hand, under Assumption \ref{assump::expdensity}, we have
  \begin{align*}    \EE_{\set{\xb_i,\yb_i}_{i=1}^n}\left[ \EE_{\yb}\left[\tTV(P_{t_0},\tilde{P}_{t_0} )\right] \right] &\lesssim \sqrt{t_0}\log ^{(d+1)/2} \frac{1}{t_0}+n^{-\frac{2\beta}{d
+d_y+2\beta}}+   \sqrt{\log \frac{1}{t_0} }n^{-\frac{\beta}{2(d+d_y+\beta)}}(\log n)^{c'(\beta)},
  \end{align*}
  where $c'(\beta)=\max(9,(\beta+1)/2)$. We can take $t_0=n^{-\frac{4\beta}{d+d_y+2\beta}-1}$ so that $$ \sqrt{t_0}\log ^{(d+1)/2} \frac{1}{t_0} \lesssim n^{-\frac{2\beta}{d+d_y+2\beta}},~~~\text{for sufficiently large}~n.   $$ Thus, we can bound the expected total variation by $$\EE_{\set{\xb_i,\yb_i}_{i=1}^n}\left[ \EE_{\yb}\left[\tTV(P_{t_0},\tilde{P}_{t_0} )\right] \right] =\cO\left(  n^{-\frac{2\beta}{d+d_y+2\beta}}(\log 
  n)^{c'(\beta)+1/2}\right).$$ The proof is complete.
\end{proof} 
\subsection{Proof of Proposition \ref{prop::lower bound TV}}\label{sec::proof of prop::lower bound TV}

First, we derive a lower bound for the entropy number of our proposed density function class.
\begin{lemma}\label{lemma:entropy number}
    For any fixed nonnegative constants $C$, $C_2$ and $B$ such that
    \begin{align*}
        \int_{\RR^{d}} C\exp(-C_2\norm{x}_2^2) \diff \xb < 1 < \int_{\RR^{d}} B\exp(-C_2\norm{x}_2^2) \diff \xb,
    \end{align*}
    the $\epsilon-$ entropy number density function space $$\cP=\set{p(\xb)=f(\xb)\exp(-C_2\norm{x}_2^2):f(\xb)\in \mH^{\beta}(\R^d, B), f(\xb) \ge C>0}$$ with respect to $L^1$ norm in the $d-$dimensional ball $\cB=\set{\xb:\norm{\xb}_2 \le 2}$ has a lower bound
    \begin{align*}
        \log \cN(\epsilon,\cP,\norm{\cdot}_1^{\cB})  \gtrsim \left( \frac{1}{\epsilon}\right)^{\frac{d}{\beta}}.
    \end{align*}
\end{lemma}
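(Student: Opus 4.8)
The plan is to derive the lower bound by an explicit packing (Varshamov--Gilbert) argument inside $\cP$: for every sufficiently small $\epsilon>0$ I will exhibit a finite subset of $\cP$ that is $2\epsilon$-separated in the $L^1(\cB)$ metric and has cardinality $\exp\bigl(\Omega(\epsilon^{-d/\beta})\bigr)$; since an $\epsilon$-cover must place a distinct center near each member of such a separated set, this gives $\cN(\epsilon,\cP,\norm{\cdot}_1^{\cB})\gtrsim\exp(\epsilon^{-d/\beta})$ and hence the claim. The separated family is built by perturbing a fixed base density. First fix the constant $a_0$ by $a_0\int_{\R^d}\exp(-C_2\norm{\xb}_2^2)\diff\xb=1$; the hypothesis $\int C\exp(-C_2\norm{\xb}^2)<1<\int B\exp(-C_2\norm{\xb}^2)$ forces $C<a_0<B$, so $p_0(\xb)=a_0\exp(-C_2\norm{\xb}_2^2)\in\cP$. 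Fix a cube $Q_0\subset\cB$ of a (dimension-dependent) side length $\ell_0$, on which $\exp(-C_2\norm{\xb}_2^2)\ge c_{Q_0}>0$; partition $Q_0$ into $m^d$ congruent subcubes with centers $c_1,\dots,c_{m^d}$; and fix a bump $\psi\in C_c^\infty$ supported in $(-\tfrac14,\tfrac14)^d$ with $\psi\ge 0$, $\bar\psi:=\int\psi>0$ and $\norm{\psi}_{\mH^\beta}\le 1$. Set $\phi_j(\xb)=(\ell_0/m)^\beta\,\psi\!\bigl(m(\xb-c_j)/\ell_0\bigr)$; these have pairwise disjoint supports and vanish to infinite order at the boundaries of their supports.

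For $\omega\in\{0,1\}^{m^d}$ put $f_\omega=a_0+\rho\sum_j\omega_j\phi_j$ and $p_\omega=f_\omega\exp(-C_2\norm{\xb}_2^2)$, with $\rho>0$ a small constant. I will verify $p_\omega\in\cP$: (i) the exponent $\beta$ in $\phi_j$ is calibrated exactly so that $\norm{\phi_j}_{\mH^\beta}\le C_\psi$ uniformly in $m$ — a derivative of order $k\le s$ scales like $(\ell_0/m)^{\beta-k}\le 1$, and the order-$s$ Hölder seminorm scales like $(\ell_0/m)^{\beta-s-\gamma}=1$ — and disjointness of supports together with infinite-order vanishing gives $\norm{\sum_j\omega_j\phi_j}_{\mH^\beta}\lesssim C_\psi$, so $\norm{f_\omega}_{\mH^\beta}\le a_0+\rho\,C'_\psi\le B$ once $\rho$ is small; (ii) $f_\omega\ge a_0>C$ since $\phi_j\ge0$; (iii) if one additionally insists on $\int p_\omega=1$, replace $\psi$ by a mean-zero bump and renormalize — the normalization constant differs from $1$ by $O(\rho\,m^{-\beta-1})$ (the Gaussian is Lipschitz and each bump has vanishing mean and mass $(\ell_0/m)^{\beta+d}$), which is of smaller order than the $L^1$ separation below. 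By the Varshamov--Gilbert bound there is $\Omega\subset\{0,1\}^{m^d}$ with $|\Omega|\ge 2^{m^d/8}$ and pairwise Hamming distance at least $m^d/8$.

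For distinct $\omega,\omega'\in\Omega$, using disjoint supports, $\phi_j\ge0$, and $\exp(-C_2\norm{\xb}^2)\ge c_{Q_0}$ on $Q_0$,
$$\norm{p_\omega-p_{\omega'}}_1^{\cB}\ \ge\ \rho\,c_{Q_0}\!\!\sum_{j:\,\omega_j\ne\omega'_j}\!\!\int\phi_j(\xb)\diff\xb\ =\ \rho\,c_{Q_0}\,\bar\psi\,(\ell_0/m)^{\beta+d}\,d_H(\omega,\omega')\ \ge\ c_1\,m^{-\beta},$$
where $d_H$ is the Hamming distance and $c_1=c_1(d,\beta,C_2,B,C)>0$. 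Choosing $m=\lfloor(c_1/3\epsilon)^{1/\beta}\rfloor\asymp\epsilon^{-1/\beta}$ (for $\epsilon$ small), one gets $\norm{p_\omega-p_{\omega'}}_1^{\cB}\ge 3\epsilon>2\epsilon$, so $\{p_\omega\}_{\omega\in\Omega}$ is a $2\epsilon$-separated subset of $\cP$; hence $\cN(\epsilon,\cP,\norm{\cdot}_1^{\cB})\ge|\Omega|\ge 2^{m^d/8}$ and $\log\cN(\epsilon,\cP,\norm{\cdot}_1^{\cB})\ge\tfrac{\log 2}{8}\,m^d\gtrsim\epsilon^{-d/\beta}$.

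The only genuinely delicate step is item (i): showing that a superposition of up to $m^d$ disjointly supported, $(\ell_0/m)^\beta$-scaled bumps has Hölder norm bounded independently of $m$. The sup-norm and same-cell seminorm bounds are immediate from the scaling computation; for two points in different cells one joins them by a segment, uses that each $\partial^{\sbb}\phi_j$ with $\norm{\sbb}_1=s$ vanishes on $\partial(\mathrm{supp}\,\phi_j)$, and controls the increment by the order-$s$ seminorms of the (at most two) bumps involved. This is the classical Hölder-class perturbation argument; the remaining ingredients — existence of $a_0$, the lower Gaussian bound on $Q_0$, the Varshamov--Gilbert bound, and the packing-to-covering comparison — are routine.
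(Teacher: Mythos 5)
Your proposal is correct and follows essentially the same route as the paper: a Varshamov--Gilbert packing of disjointly supported, $\Delta^{\beta}$-scaled bumps added to the constant level $a_0$ (the paper's $C'$), with the hypothesis on $C,B$ guaranteeing $C<a_0<B$, an $L^1(\cB)$ separation of order $m^{-\beta}$, and the choice $m\asymp\epsilon^{-1/\beta}$ giving $\log\cN\gtrsim\epsilon^{-d/\beta}$. The only difference is how normalization is restored: you use mean-zero bumps plus a renormalizing constant of lower order $O(\rho m^{-\beta-1})$, while the paper adds a compensating bump supported outside $\cB$ whose amplitude is fixed by the intermediate value theorem; both devices are valid and leave the separation bound intact.
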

 
    The proof of the lemma is provided in Appendix \ref{sec::proof of lemma:entropy number}. We remark that by replacing $\norm{\cdot}_1$ by $\norm{\cdot}_2$ in the proof, we can obtain the same lower bound for the entropy number of $\cP$ w.r.t. $L^2$ norm, which means that
    \begin{align*}
        \log \cN(\epsilon,\cP,\norm{\cdot}^{\cB}_1) \simeq  \log \cN(\epsilon,\cP,\norm{\cdot}^{\cB}_2) \ge \epsilon^{-d/\beta}.
    \end{align*}
    With the lemma above, we begin our proof of Proposition \ref{prop::lower bound TV}.
    \begin{proof}[Proof of Proposition \ref{prop::lower bound TV}]
        By Lemma \ref{lemma:entropy number} and the remark above, we have verified the conditions required in Theorem 4 of \cite{yang1999information} (Condition 3 of the theorem directly holds when we confine the domain of the density function on $\cB$). Applying their results gives rise to
    \begin{align*}
        \inf\limits_{\hat{\mu}} \sup\limits_{p\in \cP} \EE_{\set{\xb_i}_{i=1}^n} \left[\norm{\hat{\mu}-p}^{\cB}_1\right]\gtrsim n^{-\frac{\beta}{d+2\beta}},
    \end{align*}
    so we have
    \begin{align*}
        \inf\limits_{\hat{\mu}} \sup\limits_{p\in \cP} \EE_{\set{\xb_i}_{i=1}^n} \left[\tTV\left(\hat{\mu},p\right)\right]\ge  \inf\limits_{\hat{\mu}} \sup\limits_{p\in \cP} \EE_{\set{\xb_i}_{i=1}^n} \left[\norm{\hat{\mu}-p}^{\cB}_1\right]\gtrsim n^{-\frac{\beta}{d+2\beta}}.
    \end{align*}
    The proof is complete.
    \end{proof}

\subsection{Proof of Proposition \ref{prop:transition_estimation}}\label{sec::proof transition estimation}
When $\yb=(\sb,\ab)$ is unbounded, we can invoke the corresponding score approximation guarantee in Proposition \ref{prop:: score approx exp unbounded y} and establish the same score estimation theory by following the proof of Theorem \ref{thm::Generalization}. We present the theory as the following lemma.
\begin{lemma}[Counterpart of Theorem \ref{thm::Generalization}]\label{lemma::score estimation unbounded y}
    Suppose Assumption \ref{assump::transition kernel} holds. Given the ReLU neural network $\cF(M_t, W, \kappa, L, K)$ in Proposition \ref{prop:: score approx exp unbounded y}, by taking the network size parameter $N=n^{\frac{1}{d+d_y+2\beta}}$, the early-stopping time $t_0=n^{-\cO(1)}$ and terminal time $T=\cO(\log n)$, the empirical loss minimizer $\shat$ satisfies
    \begin{align}
    \label{equ::exp generalization unbounded y}
\EE_{\set{\sb'_i, \sb_i,\ab_i}_{i=1}^{n}}\left[ \cR(\shat)\right] =\cO\left(\log \frac{1}{t_0} n^{-\frac{2\beta}{2d_s+d_a+2\beta}}(\log n)^{\max(17,\beta) }\right).
    \end{align}
\end{lemma}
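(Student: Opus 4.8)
The plan is to replay the proof of Theorem~\ref{thm::Generalization} under Assumption~\ref{assump::expdensity} essentially verbatim, with only two modifications needed to accommodate the unbounded guidance $\yb = (\sbb, \ab)$. First, in addition to truncating the domain of $\xb$ to an $\ell_\infty$-ball of radius $R$, I would also truncate $\yb$ to an $\ell_\infty$-ball of radius $R_y$, controlling the induced error through the sub-Gaussian tail $p(\yb) \le \exp(-C_y\norm{\yb}^2/2)$ postulated in Assumption~\ref{assump::transition kernel}. Second, I would invoke Proposition~\ref{prop:: score approx exp unbounded y} for the conditional component and Proposition~\ref{prop:: score approx marginal exp} for the unconditional component, in place of Theorem~\ref{thm::score approx exp}, to control the approximation error. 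Since $\cR(\shat) \le 2\cR_{\star}(\shat)$, it suffices to bound $\EE[\cR_{\star}(\shat)]$, and as in Theorem~\ref{thm::Generalization} this decomposes via a ghost-sample symmetrization into the truncation terms $A_1, A_2$, the Bernstein-type term $B$, and the approximation term $C = \EE[\hat{\cR_{\star}}(\shat)] \le \min_{\sbb \in \cF}\cR_{\star}(\sbb)$.

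I would first set up the truncated loss class over the enlarged domain $\cD = [-R,R]^{d} \times [-R_y, R_y]^{d_y} \cup \set{\void}$ (here $d = d_s$, $d_y = d_s + d_a$). The uniform $L_\infty$ bound of Lemma~\ref{lemma::bound loss class} depends only on the $\sigma_t$-behaviour of the network output and carries over unchanged, so $\abs{\ell(\sbb,\xb,\yb)} \lesssim M = \cO(\log(1/t_0))$. The covering-number bound of Lemma~\ref{lemma::covering number S} also carries over with $R$ replaced by $\max(R, R_y)$; taking $R \asymp R_y \asymp \sqrt{\log N}$ and using that the network hyperparameters furnished by Proposition~\ref{prop:: score approx exp unbounded y} match those of Theorem~\ref{thm::score approx exp} up to the same polylogarithmic factors, the log-covering number stays $\cO(N\log^{17} N)$. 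For the truncation terms, the contribution from $\norm{\xb}_\infty > R$ is handled exactly as in Theorem~\ref{thm::Generalization}, using $p(\xb|\yb) \le B\exp(-C_2\norm{\xb}^2/2)$ uniformly in $\yb$ (valid since $f$ is bounded above, cf.\ \eqref{equ::pt upper bound exp}), and equals $\cO(\exp(-C_2 R^2)RM)$; the new contribution from $\norm{\yb}_\infty > R_y$, using $\norm{\sbb}_\infty \lesssim \sqrt{\log N}/\sigma_t$, $\EE[\norm{\nabla\log\phi_t(\xb_t|\xb_0)}^2] = 1/\sigma_t^2$, and $\int \indic{\norm{\yb}_\infty > R_y} p(\yb)\,\diff\yb$ decaying like $\exp(-C_y R_y^2/2)$ up to polynomial factors (the analogue of Lemma~\ref{lemma::clip y}), is exponentially small in $\log N$. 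Hence both $A_1$ and $A_2$, together with the covering radius $\delta = N^{-2\beta/(d+d_y)}$, are dominated by the target rate.

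Next I would bound $C$ by the average of the conditional and unconditional approximation errors. Proposition~\ref{prop:: score approx exp unbounded y} supplies an $\sbb^\star \in \cF(M_t, W, \kappa, L, K)$ whose $\yb$-averaged conditional $L_2$ error is $\cO(\sigma_t^{-2} B^2 N^{-2\beta/(d+d_y)}(\log N)^{s+1})$; for the unconditional part, the marginal $p(\xb) = \exp(-C_2\norm{\xb}^2/2)\int f(\xb,\yb)p(\yb)\,\diff\yb$ again satisfies Assumption~\ref{assump::expdensity} in $\xb$ alone (the $\yb$-integral of $f$ lies in $[C,B]$ and is $\beta$-H\"older in $\xb$), so Proposition~\ref{prop:: score approx marginal exp} applies and yields the same order. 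Integrating over $t \in [t_0, T]$ with the $1/(T-t_0)$ normalization, $C = \cO(N^{-2\beta/(d+d_y)}(\log N)^{s+1})$. Combining with $B = \cO(\frac{M}{n} N\log^{17}N)$ plus the negligible truncation and covering contributions and taking $N$ as in the lemma statement (equivalently $N^{d+d_y} = n^{(d+d_y)/(d+d_y+2\beta)}$ after the rescaling convention of Appendix~\ref{sec::proof score approx}) balances the two dominant terms at $\cO\big(\log(1/t_0)\, n^{-2\beta/(d+d_y+2\beta)}(\log n)^{\max(17,s+1)}\big)$; since $s \le \beta$ and $d + d_y = 2d_s + d_a$ here, this is precisely the claimed bound, after passing from $\cR_{\star}$ to $\cR$.

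I expect the main obstacle to be entirely bookkeeping rather than a new idea: one must verify that introducing the $\yb$-truncation, enlarging the covering domain to $[-R,R]^d \times [-R_y,R_y]^{d_y}$, and matching the hyperparameters supplied by Proposition~\ref{prop:: score approx exp unbounded y} to those used in the proof of Theorem~\ref{thm::Generalization} all proceed with only polylogarithmic-level changes, and that the sub-Gaussian visitation assumption is strong enough to make the $\yb$-truncation error exponentially small at radius $R_y \asymp \sqrt{\log N}$. Once this is checked, the symmetrization, Bernstein, and diffused-local-polynomial machinery already in place carry the argument through unchanged.
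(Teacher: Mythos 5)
Your proposal is correct and follows essentially the same route as the paper's own proof: truncate $\yb$ in addition to $\xb$ using the sub-Gaussian visitation measure, carry the loss bound and covering-number lemma over to the enlarged bounded domain, plug in the approximation guarantee of Proposition~\ref{prop:: score approx exp unbounded y}, and rerun the symmetrization/Bernstein decomposition of Theorem~\ref{thm::Generalization} unchanged. The only differences are cosmetic (the paper uses a single radius $R$ with $C_3=\min(C_2,C_y)$ rather than separate $R,R_y$, and does not bother to spell out that the marginal of $\xb$ still satisfies Assumption~\ref{assump::expdensity}, a check you correctly include).
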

The proof of Lemma \ref{lemma::score estimation unbounded y} is provided in Appendix \ref{sec::proof of lemma::score estimation unbounded y}. Now we begin to prove Proposition \ref{prop:transition_estimation}.
\begin{proof}[Proof of Proposition \ref{prop:transition_estimation}]
By Lemma \ref{lemma::score estimation unbounded y}, we obtain a score estimator $\shat$ satisfying 
\begin{align}
    \label{equ::exp generalization unbounded y restate}
\EE_{\set{\sb'_i, \sb_i,\ab_i}_{i=1}^{n}}\left[ \cR(\shat)\right] =\cO\left(\log \frac{1}{t_0} n^{-\frac{2\beta}{2d_s+d_a+2\beta}}(\log n)^{\max(17,\beta) }\right).
    \end{align}
Given the state and action $\yb^{\star}=(\sb^{\star}, \ab^{\star})$, we can generate an estimated conditional distribution $\tilde{\cP}_{t_0}(\cdot|\sb^{\star}, \ab^{\star})$ using backward diffusion process \eqref{eq:backward approx}.
We repeat the proof of Theorem \ref{thm::TVbound} until Equation \eqref{equ:: TV bound conditional y}, obtaining that 
\begin{align*}
    \tTV\left(P(\cdot|\sb^{\star}, \ab^{\star}),\tilde{P}_{t_0}(\cdot|\sb^{\star}, \ab^{\star}) \right) 
    &\lesssim \sqrt{t_0}\log ^{(d+1)/2} \frac{1}{t_0}+ \exp(-T)\\ &\quad+ \sqrt{\int_{t_0}^{T}\frac{1}{2}\int_{\xb}p_t(\xb|\sb^{\star}, \ab^{\star})\norm{\shat(\xb,\sb^{\star}, \ab^{\star},t)-\nabla \log p_t(\xb|\sb^{\star}, \ab^{\star})}^2\diff \xb \diff t}\\
    &= \sqrt{t_0}\log ^{(d+1)/2} \frac{1}{t_0}+ \exp(-T)\\
&\quad+\sqrt{\frac{\int_{t_0}^{T}\EE_{\xb_t}\left[\norm{\shat(\xb_t,\sbb^{\star}, \ab^{\star},t)-\nabla \log p_t(\xb_t|\sbb^{\star}, \ab^{\star})}^2\right] \diff t}{\int_{t_0}^{T}\EE_{\xb_t,\sb,\ab}\left[\norm{\shat(\xb_t,\sb,\ab,t)-\nabla \log p_t(\xb_t|\sb,\ab)}^2\right] \diff t}} \cdot \sqrt{\frac{T}{2}\cR(\shat)}\\
&\le\sqrt{t_0}\log ^{(d+1)/2} \frac{1}{t_0}+ \exp(-T)+\cT(\sb^{\star}, \ab^{\star})\sqrt{\frac{T}{2}\cR(\shat)},
\end{align*}
where we invoke the definition of $\cT(\sb^{\star}, \ab^{\star})$ in the last inequality.
Taking expectations w.r.t. the samples $\set{\sb_i',\sb_i,\ab_i}_{i=1}^{n}$ and applying \eqref{equ::exp generalization unbounded y restate}, we have 
\begin{align*}
\EE_{\set{\sb_i',\sb_i,\ab_i}_{i=1}^{n}}\left[\tTV\left(P(\cdot|\sb^{\star}, \ab^{\star}),\tilde{P}_{t_0}(\cdot|\sb^{\star}, \ab^{\star}) \right) \right]
    &\lesssim \sqrt{t_0}\log ^{(d+1)/2} \frac{1}{t_0}+ \exp(-T)\\ &\quad+\cT(\sb^{\star}, \ab^{\star})\sqrt{\frac{T}{2}\log \frac{1}{t_0}n^{-\frac{2\beta}{2d_s+d_a+2\beta}}(\log n)^{\max(17,\beta)}}.
\end{align*}
 We can take $t_0=n^{-\frac{4\beta}{2d_s+d_a+2\beta}-1}$ and $T=\frac{2\beta}{2d_s+d_a+2\beta}\log n$ to bound the expected total variation by $$\EE_{\set{\sb_i',\sb_i,\ab_i}_{i=1}^{n}}\left[\tTV\left(P(\cdot|\sb^{\star}, \ab^{\star}),\tilde{P}_{t_0}(\cdot|\sb^{\star}, \ab^{\star}) \right) \right]=\cT(a)\cO\left(n^{-\frac{2\beta}{2d_s+d_a+2\beta}}(\log n)^{\max(19/2,(\beta+2)/2)}\right).$$
We complete our proof.

\end{proof}

\subsection{Proof for Other Lemmas}
\subsubsection{Proof of Lemma \ref{lemma::bound loss class}}\label{sec::proof of lemma::bound loss class}
\begin{proof}
    By the definition of $\ell(\xb,\yb,\sb)$, we have for any $\xb,\yb$ and $\sb \in \cF$
\begin{align*}
\ell(\xb,\yb,\sb)&\le 2\int_{t_0}^{T}\frac{1}{T-t_0}\EE_{\tau,\xb_t|\xb_0=\xb}\big[\norm{\sb(\xb_t,\tau\yb,t)}_2^2+\norm{\nabla \log \phi_t(\xb_t|\xb_0)}_2^2\big]  \diff t\\
&\lesssim \int_{t_0}^{T}\frac{1}{T-t_0}\EE_{\tau,\xb_t|\xb_0=\xb}\big[m_t^2\log N+\norm{\nabla \log \phi_t(\xb_t|\xb_0)}_2^2\big]  \diff t\\
&\lesssim  \int_{t_0}^{T} M_{t}^2 \diff t +\int_{t_0}^{T}\frac{1}{T-t_0}\frac{1}{\sigma^2_t}\diff t\lesssim \int_{t_0}^{T} M_{t}^2 \diff t=M,
\end{align*}
where we invoke $\abs{\sb}\lesssim m_t\sqrt{\log N}$  for the second inequality and $1/\sigma_t \lesssim m_t$ for the last inequality.
\end{proof}
\subsubsection{Proof of Lemma \ref{lemma::covering number S}}\label{sec::proof of lemma::covering number S}
We first introduce a standard result of bounding the covering number of a ReLU neural network.
\begin{lemma}[\cite{chen2022nonparametric},  Lemma.7]\label{lemma::covering number}
Suppose $\delta>0$ and the input $\zb$ satisfies $\norm{\zb}_\infty \le R$,  the $\delta-$covering number of the neural network class $\cF(W,\kappa,L,K)$ w.r.t. $\norm{\cdot}_{L_{\infty}}$ satisfies
\begin{align}
    \cN\left(\delta, \cF(W,\kappa,L,K), \norm{\cdot}_{L_{\infty}}\right) \le \left(\frac{2L^2(WR+2)\kappa^LW^{L+1}}{\delta}\right)^K.
\end{align}
\end{lemma}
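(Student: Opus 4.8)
The plan is to construct an explicit $\delta$-net by quantizing the network parameters on a uniform grid. Fix a grid width $\eta>0$ (to be chosen at the end) and let $\widetilde{\cF}_\eta\subseteq\cF(W,\kappa,L,K)$ be the subclass of networks all of whose weight and bias entries lie in the lattice $\eta\Z\cap[-\kappa,\kappa]$, keeping the same depth $L$, width bound $W$, sparsity budget $K$, and magnitude bound $\kappa$. Two things then suffice: (i) a uniform Lipschitz-in-parameters estimate, namely that if two admissible parameter vectors $\theta,\theta'$ satisfy $\norm{\theta-\theta'}_\infty\le\eta$, then $\norm{f_\theta-f_{\theta'}}_{L^\infty}$ over the box $\{\norm{\zb}_\infty\le R\}$ is small; and (ii) a cardinality bound for $\widetilde{\cF}_\eta$. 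Given (i), rounding the parameters of any $f\in\cF(W,\kappa,L,K)$ entrywise to the nearest lattice point produces a member of $\widetilde{\cF}_\eta$ that is $L^\infty$-close, so $\widetilde{\cF}_\eta$ is a cover.

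For (i) I would first record an a priori bound on the hidden activations. If $h_i(\zb)$ is the output of the first $i$ layers of any $f_\theta\in\cF(W,\kappa,L,K)$ on input with $\norm{\zb}_\infty\le R$, then each affine map obeys $\norm{A_i z+\bbb_i}_\infty\le W\kappa\norm{z}_\infty+\kappa$ and $\ReLU$ is nonexpansive, so $\norm{h_i}_\infty\le B_i$ where $B_0=R$ and $B_i\le W\kappa B_{i-1}+\kappa$; iterating gives $B_i\le (W\kappa)^i(R+1)$ (using $\kappa,W\ge 1$). Next, telescope over layers: replacing layer $i$'s parameters by the perturbed ones changes the output of layer $i$ by at most $\eta\big(W B_{i-1}+1\big)$, bounding $\norm{(A_i-A_i')h}_\infty\le\eta W\norm{h}_\infty$ and $\norm{\bbb_i-\bbb_i'}_\infty\le\eta$; and this discrepancy is amplified by at most $(W\kappa)^{L-i}$ through the remaining layers, since each affine map is $(W\kappa)$-Lipschitz in $\norm{\cdot}_\infty$ and $\ReLU$ is $1$-Lipschitz. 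Summing the $L$ contributions and collecting the geometric factors yields
\[
\norm{f_\theta-f_{\theta'}}_{L^\infty}\;\le\;\eta\sum_{i=1}^{L}\big(WB_{i-1}+1\big)(W\kappa)^{L-i}\;\lesssim\;\eta\,L^2(WR+2)\,W^{L+1}\kappa^{L-1}.
\]

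For (ii), each of the $K$ nonzero entries takes at most $2\kappa/\eta$ lattice values, so $\abs{\widetilde{\cF}_\eta}\le(2\kappa/\eta)^K$; the combinatorial choice of which entries are nonzero is fixed by the architecture (or absorbed into the polynomial factors of the stated bound, per the convention of the cited reference). Finally, choose $\eta$ of order $\delta/\big(L^2(WR+2)W^{L+1}\kappa^{L-1}\big)$ so that the right-hand side of the displayed inequality is at most $\delta$; then $\widetilde{\cF}_\eta$ is a $\delta$-cover and
\[
\cN\big(\delta,\cF(W,\kappa,L,K),\norm{\cdot}_{L^\infty}\big)\;\le\;\Big(\tfrac{2\kappa}{\eta}\Big)^K\;\le\;\Big(\tfrac{2L^2(WR+2)\kappa^{L}W^{L+1}}{\delta}\Big)^K,
\]
which is the claim. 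The main obstacle is the telescoping estimate in (i): one must carefully propagate a uniform $\eta$-perturbation of every parameter through the composition of $L$ affine–$\ReLU$ layers, combining the activation-magnitude bounds $B_i$ with the per-layer Lipschitz constants so that the amplification constant comes out of exactly the stated polynomial/exponential order (in $L$, $W$, $\kappa$, $R$) rather than something larger; the rest is elementary counting and a choice of $\eta$.
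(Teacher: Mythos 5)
The paper does not prove this lemma at all: it is imported verbatim from \cite{chen2022nonparametric} (their Lemma 7), so there is no internal proof to compare against. Your parameter-quantization argument is precisely the standard route behind that cited result — bound the hidden-layer magnitudes $B_i$, show the output is Lipschitz in the parameters with constant of order $L^2(WR+2)W^{L+1}\kappa^{L-1}$ by telescoping a per-layer $\eta$-perturbation through the remaining $(W\kappa)$-Lipschitz affine--ReLU maps, then count lattice points and tune $\eta$ — and it is sound in outline.

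Two bookkeeping points would need to be closed to make this a complete proof rather than a sketch. First, the telescoping estimate is only derived up to $\lesssim$, but the final step requires the Lipschitz constant to be at most \emph{exactly} $L^{2}(WR+2)W^{L+1}\kappa^{L-1}$ so that the chosen $\eta$ reproduces the stated base $2L^{2}(WR+2)\kappa^{L}W^{L+1}/\delta$; you should carry the constants through (the slack between your crude bound $\sum_i(WB_{i-1}+1)(W\kappa)^{L-i}$ and the stated expression is where the extra factors of $L$ and $W$ live). Second, rounding to the lattice preserves the support, so the cover is indexed by a choice of at most $K$ nonzero positions among roughly $LW(W+1)$ parameters \emph{and} their grid values; the resulting $\binom{LW(W+1)}{K}\le (2LW^{2})^{K}$ factor is not automatically ``fixed by the architecture'' and must either be counted and absorbed into the slack just mentioned or handled as in the cited reference. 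Neither issue changes the method, but as written the claimed inequality is only established up to unquantified constants in the base.
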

We remark that our input $(\xb,\yb,t)$ is uniformly bounded by $\cO(\log N)$. 
Now we begin our proof of Lemma \ref{lemma::covering number S}.
    \begin{proof}[Proof of Lemma \ref{lemma::covering number S}]
        For any two ReLU network $\sb_1,\sb_2$ such that $\norm{\sb_1-\sb_2}_{L_{\infty}\cD}\le \epsilon$, we can bound the $L_{\infty}$ error between $\ell(\cdot,\cdot,\sb_1)$ and $\ell(\cdot,\cdot,\sb_2)$. For any $(\xb,\yb)\in \cD$, we have
\begin{align}
\abs{\ell(\xb,\yb,\sb_1)-\ell(\xb,\yb,\sb_2)}
&\le \int_{t_0}^{T}\frac{1}{T-t_0}\EE_{\tau,\xb_t|\xb_0=\xb}\big[\left(\sb_1(\xb_t,\tau\yb,t)-\sb_2(\xb_t,\tau\yb,t)\right)^{\top} \nonumber \\
& \hspace{1.2in} \cdot \left(\sb_1(\xb_t,\tau\yb,t)+\sb_2(\xb_t,\tau\yb,t)-2\phi_t(\xb_t|\xb_0)\right) \big]\diff t. \nonumber\\
&\lesssim \epsilon\int_{t_0}^{T}\frac{1}{T-t_0}\EE_{\tau,\xb_t|\xb_0=\xb}\big[\norm{\sb_1(\xb_t,\tau\yb,t)+\sb_2(\xb_t,\tau\yb,t)-2\phi_t(\xb_t|\xb_0)} \big]\diff t \nonumber\\
&\lesssim \epsilon\int_{t_0}^{T}\frac{1}{T-t_0}\EE_{\tau,\xb_t|\xb_0=\xb}\big[m_t\sqrt{\log N}+\norm{\phi_t(\xb_t|\xb_0)} \big]\diff t \nonumber\\
&\lesssim \frac{\epsilon}{T-t_0}\left(\sqrt{\log N} \int_{t_0}^{T} m_t \diff t +\int_{t_0}^{T} \frac{1}{\sigma_t} \diff t \right) \lesssim \epsilon  \log  N . \label{equ:: lip loss}
\end{align}
For the second inequality, we invoke $\abs{\sb(\xb_t,\tau\yb,t)}\le m_t \sqrt{\log N}$. 
In the last inequality, we invoke 
\begin{equation*}
    m_t\le\frac{1}{\sigma^2_t} \le \cO\left(\frac{1}{t}\right) \text{ when } t=o(1) \text{ and } m_t=\cO(1) \text{ when } t\gg 1.
\end{equation*}
and the inequality
\begin{equation*}
    \frac{1}{T-t_0} \lesssim \frac{1}{\log N}.
\end{equation*}
Since $\cF$ is a concatenation of two ReLU neural networks of the same size and the domain of the input $\zb=(\xb,\yb,t)$ (or $\zb=(\xb,t)$ for the unconditional score approximator) satisfies $\norm{(\xb,\yb,t)}_{\infty}\le \max (R,T)$, by Lemma \ref{lemma::covering number} we have the covering number of $\cF$ bounded as
\begin{equation}\label{equ::covering number ReLU}
     \cN\left(\delta, \cF, \norm{\cdot}_{L_{\infty}\cD}\right) \lesssim \left(\frac{2L^2(W\max(R,T)+2)\kappa^LW^{L+1}}{\delta}\right)^{2K}.
\end{equation}
Combining this result with \eqref{equ:: lip loss} , we can bound the covering number of $\cS(R)$ as
\begin{equation}\label{equ::covering number}
     \cN\left(\delta, \cS(R), \norm{\cdot}_{L_{\infty}\cD}\right) \lesssim \left(\frac{2L^2(W\max(R,T)+2)\kappa^LW^{L+1}\log N}{\delta}\right)^{2K}.
\end{equation}
The proof is complete.
    \end{proof}

\subsubsection{Proof of Lemma \ref{lemma::t0 TV bound}}\label{sec::proof of lemma::t0 TV bound}
\begin{proof}
    For any $\yb \in [0,1]^{d_y}$,
we have
\begin{align*}
& \quad p(\xb|\yb)-p_t(\xb|\yb) \\
&=p(\xb|\yb)-\int_{\RR^d}p(\zb|\yb)\frac{1}{\sigma_t^{d}(2\pi)^{d/2}}\exp\left(-\frac{\norm{\alpha_t\zb-\xb}^2}{2\sigma_t^2}\right)\diff \zb\\
&=\int_{\RR^d}\left(p(\xb|\yb)-p(\zb|\yb)\right)\frac{\alpha_t^d}{\sigma_t^{d}(2\pi)^{d/2}}\exp\left(-\frac{\norm{\alpha_t\zb-\xb}^2}{2\sigma_t^2}\right)\diff \zb +(\alpha_t^d-1)p_t(\xb|\yb)\\
&=\int_{A_x}\left(p(\xb|\yb)-p(\zb|\yb)\right)\frac{\alpha_t^d}{\sigma_t^{d}(2\pi)^{d/2}}\exp\left(-\frac{\norm{\alpha_t\zb-\xb}^2}{2\sigma_t^2}\right)\diff \zb \\
&\quad+\int_{\RR^d \backslash A_x}\left(p(\xb|\yb)-p(\zb|\yb)\right)\frac{\alpha_t^d}{\sigma_t^{d}(2\pi)^{d/2}}\exp\left(-\frac{\norm{\alpha_t\zb-\xb}^2}{2\sigma_t^2}\right)\diff \zb +(\alpha_t^d-1)p_t(\xb|\yb),
\end{align*}
where we take $A_x=\left[\frac{\xb-\sigma_tC\sqrt{\log\epsilon_1^{-1}}}{\alpha_t},\frac{\xb+\sigma_tC\sqrt{\log\epsilon_1^{-1}}}{\alpha_t}\right]$ for some constant $C$ such that
\begin{align*}
    \abs{\int_{\RR^d \backslash A_x}\left(p(\xb|\yb)-p(\zb|\yb)\right)\frac{\alpha_t^d}{\sigma_t^{d}(2\pi)^{d/2}}\exp\left(-\frac{\norm{\alpha_t\zb-\xb}^2}{2\sigma_t^2}\right)\diff \zb } \le \epsilon_1.
\end{align*}
By Lemma \ref{lemma::density bound} or \eqref{equ::pt upper bound exp}, we know $$p_t(\xb|\yb)\le\frac{C_1}{(\alpha_t^2+C_2\sigma_t^2)^{d/2}}\exp\left(\frac{-C_2\norm{\xb}^2_2}{2(\alpha_t^2+C_2\sigma_t^2)}\right)$$ under Assumption \ref{assump:sub} and $$p_t(\xb|\yb)\le\frac{B}{(\alpha_t^2+C_2\sigma_t^2)^{d/2}}\exp\left(\frac{-C_2\norm{\xb}^2_2}{2(\alpha_t^2+C_2\sigma_t^2)}\right)$$ under Assumption \ref{assump::expdensity}. Since $\alpha_t^2+C_2\sigma_t^2\le \max(1,C_2)$,  $p_t$ is bounded by a constant only dependent on $C_1$ (or $B$) and $C_2$. Moreover, since both $p(\xb|\yb)$ and $p_t(\xb|\yb)$ have subGaussian tails, we know that there exists another constant $C'$ such that for any $\epsilon_2 <1,$
\begin{align*}
    \int_{\RR^{d}\backslash B_x} \abs{p(\xb|\yb)-p_t(\xb|\yb)} \diff \xb \le \epsilon_2,~~\text{where}~~B_x=\left[-C'\sqrt{\log \epsilon_2^{-1}},C'\sqrt{\log \epsilon_2^{-1}}\right]^{d}.
\end{align*}
Thus, the total variation between $P(\cdot|\yb)$ and $P_{t}(\cdot|\yb)$ can be bounded as
\begin{align*}
    \tTV(P(\cdot|\yb),P_{t}(\cdot|\yb))&=\int_{B_x} \abs{p(\xb|\yb)-p_t(\xb|\yb)} \diff \xb +\int_{\RR^{d}\backslash B_x} \abs{p(\xb|\yb)-p_t(\xb|\yb)} \diff \xb \\
    &\le \int_{B_x} \int_{A_x}\left|p(\xb|\yb)-p(\zb|\yb)\right|\frac{\alpha_t^d}{\sigma_t^{d}(2\pi)^{d/2}}\exp\left(-\frac{\norm{\alpha_t\zb-\xb}^2}{2\sigma_t^2}\right)\diff \zb \diff \xb \\
    &\qquad+ \int_{B_x} \int_{\RR_d \backslash A_x}\left|p(\xb|\yb)-p(\zb|\yb)\right|\frac{\alpha_t^d}{\sigma_t^{d}(2\pi)^{d/2}}\exp\left(-\frac{\norm{\alpha_t\zb-\xb}^2}{2\sigma_t^2}\right)\diff \zb \diff \xb \\
    &\qquad+ \int_{B_x} \abs{\alpha_t^d-1}p_t(\xb|\yb) \diff \xb+\epsilon_2 \\
    &\le  \int_{B_x} \int_{A_x}\frac{2\sigma_t C\sqrt{d\log \epsilon_1^{-1}}B}{\alpha_t}\frac{\alpha_t^d}{\sigma_t^{d}(2\pi)^{d/2}}\exp\left(-\frac{\norm{\alpha_t\zb-\xb}^2}{2\sigma_t^2}\right)\diff \zb \diff \xb\\
    &\qquad+ \int_{B_x} \epsilon_1 \diff \xb +\abs{\alpha_t^d-1}+\epsilon_2\\
    &\le \left(\frac{2\sigma_t C\sqrt{d\log \epsilon_1^{-1}}B}{\alpha_t}+\epsilon_1\right)\left(2C'\sqrt{\log \epsilon_2^{-1}}\right)^{d}  +\abs{1-\exp(-dt/2)}+\epsilon_2.
\end{align*}
When $t=t_0=n^{-\cO(1)}=o(1)$, we take $\epsilon_1=\epsilon_2=t_0$. Since $\frac{\sigma_{t}}{\alpha_{t}}=\cO\left(\sqrt{t}\right)$ when $t=o(1)$, we have 
\begin{align}\label{equ::t t0 TV bound restate}
    \tTV(P(\cdot|\yb),P_{t_0}(\cdot|\yb))=\cO\left(\sqrt{t_0}\log ^{(d+1)/2} \frac{1}{t_0}\right).
\end{align}
The proof is complete.
\end{proof}

\subsubsection{Proof of Lemma \ref{lemma:entropy number}}\label{sec::proof of lemma:entropy number}
To prove Lemma \ref{lemma:entropy number}, we first introduce a standard result for the entropy number of binary variables.
\begin{lemma}[Varshamov-Gilbert bound, see, e.g., Lemma 1 in \cite{azizyan2013minimax}]\label{lemma::cover set}
    Suppose that $N\ge 8$. Let $\cI=\set{\gamma=(\gamma_1,\gamma_2,\dots,\gamma_N):\gamma_i\in\set{0,1},1\le i \le N}$. There exists $\gamma^{(1)},\gamma^{(2)}, \dots, \gamma^{(M)}\in \cI$ such that $M \ge 2^{N/8}$ and $\norm{\gamma^{(i)}-\gamma^{(j)}}_1 \ge N/8$ for $1\le i < j \le N$.
\end{lemma}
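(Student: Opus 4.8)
The plan is to prove this Gilbert--Varshamov-type bound by a deterministic greedy volume-covering argument on the Hamming cube. Identify each $\gamma \in \cI$ with a vertex of $\{0,1\}^N$, so that for binary vectors $\norm{\gamma-\gamma'}_1$ is exactly the Hamming distance and $|\cI| = 2^N$. The goal is then to extract a subset $\cC \subseteq \{0,1\}^N$ of cardinality at least $2^{N/8}$ whose elements are pairwise at Hamming distance at least $N/8$; relabeling such a $\cC$ as $\gamma^{(1)},\dots,\gamma^{(M)}$ with $M = |\cC|$ gives exactly the claim.

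First I would record the elementary volume estimate: for any $\gamma$ and any integer $r$, the Hamming ball $\set{\gamma' : \norm{\gamma-\gamma'}_1 \le r}$ has cardinality $\sum_{k=0}^{r}\binom{N}{k}$, and for $r \le N/2$ one has the standard bound $\sum_{k=0}^{r}\binom{N}{k} \le 2^{N H(r/N)}$, where $H(\alpha) = -\alpha\log_2\alpha - (1-\alpha)\log_2(1-\alpha)$ is the binary entropy function. Taking $r = \lceil N/8\rceil - 1 < N/8 \le N/2$ and using that $H$ is nondecreasing on $[0,1/2]$, every such ball has size at most $2^{N H(1/8)}$. A one-line computation gives $H(1/8) = \frac{3}{8} + \frac{7}{8}\log_2\frac{8}{7} < \frac{7}{8}$, hence $1 - H(1/8) > \frac18$; this is the only numerical input the argument needs.

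Then I would run the greedy procedure: start with $\cC = \emptyset$ and, as long as there exists a vertex of $\{0,1\}^N$ at Hamming distance at least $N/8$ from every element already in $\cC$, add one such vertex to $\cC$. When the procedure halts, every vertex lies within distance $\lceil N/8\rceil - 1$ of some element of $\cC$, i.e.\ the balls of radius $\lceil N/8\rceil - 1$ centered at the elements of $\cC$ cover $\{0,1\}^N$; therefore $2^N \le |\cC|\cdot 2^{N H(1/8)}$, so $|\cC| \ge 2^{N(1 - H(1/8))} \ge 2^{N/8}$. By construction any two distinct elements of $\cC$ differ in at least $N/8$ coordinates, which completes the proof. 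The argument is entirely routine; the only places to be slightly careful are the floor/ceiling rounding in the ball radius (harmless, and trivially consistent when $N \ge 8$) and the numerical check $H(1/8) \le 7/8$, which is what forces the separation $N/8$ and the count $2^{N/8}$ to be mutually compatible. An equivalent first-moment route — sampling $\lceil 2^{N/8}\rceil$ i.i.d.\ uniform vertices, bounding the expected number of pairs at distance $< N/8$ by $\binom{M}{2}2^{-N(1-H(1/8))} < M/2$, and deleting one vertex per bad pair — would also work, but I would present the deterministic version.
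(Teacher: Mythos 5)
Your proof is correct, but note that it cannot be compared with "the paper's proof" because the paper offers none: the lemma is invoked as a known result, cited as Lemma~1 of \cite{azizyan2013minimax}, and used only as an input to Lemma~\ref{lemma:entropy number}. What you supply is a self-contained Gilbert-type argument: a maximal packing at Hamming distance $N/8$ forces the balls of radius $\lceil N/8\rceil-1$ to cover $\{0,1\}^N$, and the entropy bound $\sum_{k\le r}\binom{N}{k}\le 2^{NH(r/N)}$ for $r\le N/2$ together with the check $H(1/8)<7/8$ gives $M\ge 2^{N(1-H(1/8))}\ge 2^{N/8}$; your handling of the radius rounding and the monotonicity of $H$ on $[0,1/2]$ is fine, and in fact your bound $2^{N(1-H(1/8))}\approx 2^{0.45N}$ is stronger than the $2^{N/8}$ stated. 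This differs from the usual textbook derivations of the Varshamov--Gilbert bound (e.g.\ the combinatorial pruning argument behind the cited lemma, or a Chernoff-based random-coding argument), but any of these routes is acceptable here since the lemma is only needed as a black box. One small caveat on your closing aside: in the first-moment variant, deleting one vertex per bad pair from $\lceil 2^{N/8}\rceil$ samples leaves only about half of them, so one should start from roughly $2^{N/8+1}$ samples to land at $M\ge 2^{N/8}$; the large slack $1-H(1/8)>1/8$ makes this adjustment harmless, and it does not affect the deterministic argument you actually present.
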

Now we begin our proof of Lemma \ref{lemma:entropy number}.
   \begin{proof}[Proof of Lemma \ref{lemma:entropy number}]
   Let $C'=\frac{1}{\int_{\RR^{d}} \exp(-C_2\norm{x}_2^2) \diff \xb}$, then we have $C < C' < B$. Denote $B'=\min(B-C',C'-C,1)$. We use the following basis function to construct a large set of functions in $\cP$ that is $\epsilon-$distinguishable. Let
        \begin{align*}
            \varphi(\xb)=\left\{ \begin{matrix}&a\prod_{i=1}^{d}(1+x_i)^{\beta}(1-x_i)^{\beta},~~~ \text{if} \norm{\xb}_{\infty} \le 1,\\
            &0,~~~ \text{otherwise},\end{matrix} \right.
        \end{align*}
        where we choose $a$ such that $\varphi(\xb) \in \cH_{\beta}(B')$. Let $c=\norm{\varphi}_1$. In the hyper ball $\cB=\set{\xb:\norm{\xb}_2 \le 1}$, we choose a $2\Delta-$ distinguishable set of points (in $L_{\infty}$ norm)
        \begin{align*}
            \xb_1,~\xb_2,...,~\xb_m,
        \end{align*}
        and we take
        \begin{align*}
            \Delta = \left(\frac{\epsilon}{a}\right)^{\frac{1}{\beta}}.
        \end{align*}
        Then we know that $m$ can be taken of order $\Delta^{-d}=\epsilon^{-d/\beta}$.
        Now we consider a set of functions in the form
        \begin{align*}
            f_{\gamma}(\xb)=\sum_{j=1}^{m} \gamma_j \Delta^{\beta} \varphi\left(\frac{\xb-\xb_j}{\Delta}\right), \gamma_j \in \set{0,1}.
        \end{align*}
        Since the support of the m basis functions $\set{\varphi\left(\frac{\xb-\xb_j}{\Delta}\right)}_{j=1}^{m}$ do not intersect, we have for any $\gamma,\gamma^{\prime} \in \set{0,1}^m$,
        \begin{align*}
            \norm{f_{\gamma}-f_{\gamma'}}^{\cB}_1=\sum_{j=1}^{m} \abs{\gamma_j-\gamma^{\prime}_j}\Delta^{\beta+d}\norm{\varphi}^{\cB}_1= c\Delta^{\beta+d}\norm{\gamma-\gamma'}_1,
        \end{align*}
        where the norm $\norm{\cdot}_1^{\cB}$ is defined as $\norm{f}_1^{\cB}=\int_{\cB}\abs{f(\xb)}\diff \xb$.
        By Lemma \ref{lemma::cover set}, there exists a subset $\cG \subseteq \set{0,1}^m$ with cardinality $\| \cG \| \ge 2^{\frac{m}{8}}$ such that for any $\gamma,\gamma^{\prime} \in \cG$ and $\gamma\neq \gamma^{\prime}$, we have 
        \begin{align*}
            \norm{\gamma-\gamma'}_1\ge \frac{m}{8}.
        \end{align*}
        Thus, we can construct a set of functions $\cU=\set{f_{\gamma}:\gamma \in \cG}$ that is $c\Delta^{\beta+d}m/8 =\Omega(\epsilon)$- distinguishable with respect to $L^1$ norm. 
        Now we consider constructing the density function as follows
        \begin{align*}
            g_{\gamma}(\xb)=\exp(-C_2\norm{\xb}_2^2)(C'+f_{\gamma}(\xb)+s_{\gamma}h(\xb,\epsilon_{\gamma}))
        \end{align*}
        where we take $s_{\gamma} \in \set{-1,1}$ and
        \begin{align*}
            h(\xb,\epsilon_{\gamma})=\epsilon_{\gamma}^{\beta} \varphi\left(\frac{\xb-2*\mathbf{1}}{\epsilon_{\gamma}}\right)
        \end{align*}
        for some parameter $\epsilon_{\gamma}\le B'$ so that
        \begin{align*}
            \int_{\RR^d}\exp(-C_2\norm{x}_2^2)(f_{\gamma}(\xb)+s_{\gamma}h(\xb,\epsilon_\gamma)) \diff \xb=0.
        \end{align*}
        Then, it is easy to check that $\int_{\RR^d} g_{\gamma}(\xb) \diff \xb =1$, so $g_{\gamma}$ is indeed a probability density function.
        Note that
        \begin{align*}
            \left|\int_{\RR^d}\exp(-C_2\norm{x}_2^2)f_{\gamma}(\xb)  \diff \xb \right|\le  \left|\int_{\RR^d}f_{\gamma}(\xb)  \diff \xb \right| \lesssim \epsilon.
        \end{align*}
        and $ h(\xb,\epsilon_{\gamma})$ is continuous w.r.t. $\epsilon_{\gamma}$ with 
        \begin{align*}
            \int_{\RR^d}\exp(-C_2\norm{x}_2^2)h(\xb,0) \diff \xb=0 ~~~\text{and}~~~ \int_{\RR^d}\exp(-C_2\norm{x}_2^2)h(\xb,B') \diff \xb =\Omega(1) \gg \epsilon.
        \end{align*}
        Thus, by the Intermediate Value Theorem, we can always find such $\epsilon_{\gamma}$ and $s_{\gamma}$.
       Moreover, since $h(\xb,\epsilon_{\gamma})=0$ if $\norm{\xb}_2 \le 1$, its support does not intersect the support of $f_\gamma$. Therefore, we ensure that $f_{\gamma}^{\star}(\xb):=C'+f_{\gamma}(\xb)+s_{\gamma}h(\xb,\epsilon_{\gamma})\ge C$ and $f_{\gamma}^{\star} \in \cH^{\beta}(B)$, which means $\exp(-C_2 \norm{\xb}^2_2)f_{\gamma}^{\star}(\xb) \in \cP$. Now for any $\gamma \neq \gamma^{\prime}$, we have
        \begin{align*}
            \norm{ g_{\gamma}- g_{\gamma^{\prime}}}^{\cB}_1&\ge\norm{ \exp(-C_2\norm{x}_2^2)(f_{\gamma}- f_{\gamma^{\prime}})}^{\cB}_1
            \ge \exp(-C_2) \norm{ f_{\gamma}- f_{\gamma^{\prime}}}^{\cB}_1
            \gtrsim \epsilon.
        \end{align*}
        Since $\log \|\cG\| \ge \log (2^{m/8}) \gtrsim m \gtrsim \epsilon^{-d/\beta}$, we have $ \log \cN(\epsilon,\cP,\norm{\cdot}^{\cB}_1)  \gtrsim \epsilon^{-d/\beta}$. We complete our proof of Lemma \ref{lemma:entropy number}.
    \end{proof}
\subsubsection{Proof of  Lemma \ref{lemma::score estimation unbounded y}} \label{sec::proof of lemma::score estimation unbounded y}
\begin{proof}
    We prove this lemma mainly by following the proof of Theorem \ref{thm::Generalization} in Appendix \ref{sec::proof generation}. For conciseness, we only present the part of proof that is different from before. The only difference is that besides truncating $\xb$, we also impose a truncation on $\yb$ so that the domain of $(\xb,\yb)$ is bounded, which is necessary for the covering number calculation. To be specific, we redefine the truncated loss function as
    \begin{equation*} \lt(\xb,\yb,\sb):=\ell(\xb,\yb,\sb)\indic{\norm{\xb}\le R,\norm{\yb}\le R}.
\end{equation*}
Moreover, denoting the truncated domain of score as $\cD=[-R,R]^{d+d_y}$ with $R=\cO(\sqrt{\log n})$, we consider the truncated loss function class defined as
\begin{equation}
\cS(R)=\set{\ell(\cdot,\cdot,\sb):\cD \rightarrow \RR\bigg|\sb \in \cF}.
\end{equation}
Then by Lemma \ref{lemma::covering number S}, we know the covering number of $\cS(R)$ can be also bounded by \eqref{equ::covering number S}. Following the proof of Theorem \ref{thm::Generalization}, we also decompose the score error $\cR(\shat)$ as  \eqref{equ::score esti A}, \eqref{equ::score esti B} and \eqref{equ::score esti C}. We use the same way to bound terms $B$ and $C$, and we add the error of truncating $\yb$ to terms $A_1$ and $A_2$. Note that we have for any $\sb \in \cF $, ($\sb$ can depend on $\xb, \yb$)
\begin{align}
&\quad\EE_{\xb,\yb}\left[\left| \ell(\xb,\yb,\sb)-\lt(\xb,\yb,\sb) \right|\right]\nonumber \\&=\int_{t_0}^{T}\int_{\yb}\int_{\norm{\xb}>R}\EE_{\tau,\xb_t|\xb_0=\xb}\big[\norm{\sb(\xb_t,\tau\yb,t)-\nabla \log \phi_t(\xb_t|\xb_0)}_2^2\big]p(\xb|\yb)p(\yb) \diff \xb \diff \yb \diff t \nonumber\\
&\quad+\int_{t_0}^{T}\int_{\norm{\yb}>R}\int_{\norm{\xb}\le R}\EE_{\tau,\xb_t|\xb_0=\xb}\big[\norm{\sb(\xb_t,\tau\yb,t)-\nabla \log \phi_t(\xb_t|\xb_0)}_2^2\big]p(\xb|\yb)p(\yb) \diff \xb \diff \yb \diff t \nonumber\\
&\lesssim \exp\left(-C_2R^2\right)RM+ \exp\left(-C_yR^2\right) RM.
\end{align}
where we repeat the derivation of \eqref{equ::trunc popu loss} using the subGaussian tails of both $p(\xb|\yb)$ and $p(\yb)$ to obtain the inequality. Thus, both terms $A_1$ and $A_2$ are bounded by $\cO\left( \exp\left(-C_3R^2\right)RM\right)$, where $C_3=\min(C_2,C_y)$. Therefore, when balancing the error terms $A_1$, $A_2$, $B$ and $C$, we can take $R=\sqrt{\frac{(C_\sigma+2\beta)\log N }{C_3}}$ instead of $R=\sqrt{\frac{(C_\sigma+2\beta)\log N }{C_2}}$ in \eqref{equ::balance estimation error} while keeping other parameter choices the same as in the proof of Theorem \ref{thm::Generalization}, so the error is still bounded by
\begin{equation}
    \EE_{\set{\sb'_i,\sb_i,\ab_i}_{i=1}^n}\left[ \cR(\shat)  \right]\lesssim\log \frac{1}{t_0} n^{-\frac{2\beta}{d+d_y+2\beta}}\log^{\max(17,(\beta+1)/2)} n.
\end{equation}
We complete our proof.
\end{proof}

\section{Proof of Section \ref{sec::applications}}
\subsection{Proof of Proposition \ref{thm::subopt}}\label{sec::subopt proof}
\begin{proof}

By the definition of $\text{SubOpt} (P,y^{\star})$, for any target reward $a$, we have
\begin{align}\label{equ:: subopt ineq}
   \text{SubOpt} (\tilde{P},y^{\star}=a) &=\EE_{\xb \sim \tilde{P}(\cdot|a)}\left[ r(\xb)\right]- \EE_{\xb \sim P(\cdot|a)}\left[ r(\xb)\right]\le \text{TV}(P(\cdot|a),\tilde{P}(\cdot|a))L.
\end{align}    
According to \eqref{equ:: TV bound conditional y} in the proof of Theorem \ref{thm::TVbound}, we can obtain a score estimator $\shat$ and the corresponding generated distribution $\tilde{\cP}_t$ such that 
\begin{align*}
\tTV(P(\cdot|a),\tilde{P}_{t_0}(\cdot|a) ) 
&\lesssim \sqrt{t_0}\log ^{(d+1)/2} \frac{1}{t_0}+ \exp(-T) \\ 
& \quad + \sqrt{\int_{t_0}^{T}\frac{1}{2}\int_{\xb}p_t(\xb|a)\norm{\shat(\xb,a,t)-\nabla \log p_t(\xb|a)}^2\diff \xb \diff t}\\
&= \sqrt{t_0}\log ^{(d+1)/2} \frac{1}{t_0}+ \exp(-T)\\
&\quad+\sqrt{\frac{\int_{t_0}^{T}\EE_{\xb\sim \xb_t|a}\norm{\shat(\xb,a,t)-\nabla \log p_t(\xb|a)}^2 \diff t}{\int_{t_0}^{T}\EE_{x\sim\xb_t,a'\sim \cP_{a}}\left[\norm{\shat(\xb,a',t)-\nabla \log p_t(\xb|a')}^2\right] \diff t}} \cdot \sqrt{\frac{T}{2}\cR(\shat)}\\
&\le\sqrt{t_0}\log ^{(d+1)/2} \frac{1}{t_0}+ \exp(-T)+\cT(a)\sqrt{\frac{T}{2}\cR(\shat)},
\end{align*}
where we invoke the definition of $\cT(a)$ in the last inequality.
Taking expectations w.r.t. the samples $\set{\xb_i,y_i}_{i=1}^{n}$ and applying Theorem \ref{thm::Generalization}, we have 
\begin{align*}
\EE_{\set{\xb_i,y_i}}\left[ \tTV(P(\cdot|a),\tilde{P}_{t_0}(\cdot|a) ) \right] & \lesssim \sqrt{t_0}\log ^{(d+1)/2} \frac{1}{t_0}+ \exp(-T) \\
& \quad +\cT(a)\sqrt{T\log \frac{1}{t_0}n^{-\frac{2\beta}{d+1+2\beta}}(\log n)^{\max(17,\beta)}}.
\end{align*}
 We can take $t_0=n^{-\frac{4\beta}{d+1+2\beta}-1}$ and $T=\frac{2\beta}{d+1+2\beta}\log n$ to bound the expected total variation by $\cT(a)\cO\left(n^{-\frac{2\beta}{d+1+2\beta}}(\log n)^{\max(19/2,(\beta+2)/2)}\right)$ for sufficiently large $n$. Plugging the bound into \eqref{equ:: subopt ineq} gives rise to
\begin{align}
    \EE_{\set{\xb_i,y_i}}\left[  \text{SubOpt} (\tilde{P}_{t_0},y^{\star}=a)\right] &\lesssim L\cT(a)n^{-\frac{2\beta}{d+1+2\beta}}(\log n)^{\max(19/2,(\beta+2)/2)}
\end{align}
We complete our proof.
\end{proof}

\subsection{Proof of Proposition \ref{prop::inverse problem}}\label{sec::proof of prop::inverse problem}
\begin{proof}
    First, we derive an explicit form of the conditional score function $\log p_t(\xb|\yb)$. By the definition of forward diffusion process \eqref{eq:forward}, we have $\xb_t=\alpha_t \xb+\sigma_t\xi$, where $\xi\sim {\sf N}(0, I_d)$. By writing the equation as $\xb={\xb_t}/{\alpha_t}-{\sigma_t}\xi/{\alpha_t}$ and plugging it into \eqref{equ::inverse problem}, we obtain that
    \begin{align*}
        \yb=\frac{1}{\alpha_t}\bH\xb_t-\frac{\sigma_t}{\alpha_t}\bH\xi +\epsilon,~~~\xi\sim {\sf N}(0,  I_d),~~~\epsilon \sim {\sf N}(0, \sigma^2 I_m).
    \end{align*}
    Since $\xi$ and $\epsilon$ are independent, we obtain that the posterior distribution $p_t(\yb|\xb_t)$ satisfies
    \begin{align*}
        p_t(\yb|\xb_t) \sim {\sf N}\left(\frac{1}{\alpha_t}\bH\xb_t, \sigma^2I_m+\frac{\sigma_t^2}{\alpha_t^2}\bH\bH^{\top}\right).
    \end{align*}
    Thus, by Bayes rule, the conditional score function $\nabla_\xb \log  p_t(\xb|\yb)$ can be written as
    \begin{align}\label{equ::score decompose inverse problem}
        \nabla_\xb \log  p_t(\xb|\yb)&=\nabla_\xb \log  p_t(\yb|\xb)+\nabla_\xb \log  p_t(\xb)\nonumber \\
        &=-\left(\sigma^2I_m+\frac{\sigma_t^2}{\alpha_t^2}\bH\bH^{\top}\right)^{-1}\left(\yb-\frac{1}{\alpha_t}\bH\xb\right)+\nabla \log  p_t(\xb).
    \end{align}
    We note that the first part of the score function can be seen as a linear mapping of $\xb$ and $\yb$, i.e.,
    \begin{align*}
        -\left(\sigma^2I_m+\frac{\sigma_t^2}{\alpha_t^2}\bH\bH^{\top}\right)^{-1}\left(\yb-\frac{1}{\alpha_t}\bH\xb\right)=\left[\bA(t),\bB(t)\right]\left[\yb^{\top},\xb^{\top}\right]^{\top}.
    \end{align*}
To be specific, suppose the singular value decomposition of $\bH$ is $\bH=\bP^{\top}\bH_0\bU$, where $\bH_0=\left[\text{diag}(\mu_1,\mu_2,\dots,\mu_m),\mathbf{0},\dots,\mathbf{0}\right]$ satisfies $\abs{\mu_1}\ge\abs{\mu_2}\ge\dots\ge\abs{\mu_m}$, and $\bP\in\R^{m\times m}$, $\bU\in \R^{d\times d}$ are two orthogonal matrices. We denote $\lambda_i=\mu_i^2$ and diagonalize $\sigma^2I_m+\frac{\sigma_t^2}{\alpha_t^2}\bH\bH^{\top}$ as $\bP^{\top}\left(\sigma^2I_m+\frac{\sigma_t^2}{\alpha_t^2}\bD \right)\bP$, where $\bD=\text{diag}(\lambda_1,\lambda_2,\dots,\lambda_m)$ is a diagnal matrix. Since $\frac{\sigma_t^2}{\alpha_t^2}=e^{t}-1$, we have
    \begin{align*}
        \left(\sigma^2I_m+\frac{\sigma_t^2}{\alpha_t^2}\bH\bH^{\top}\right)^{-1}=\bP^{\top}\text{diag}
        \left( \frac{1}{\sigma^2+(e^t-1)\lambda_1},\frac{1}{\sigma^2+(e^t-1)\lambda_2},\dots,\frac{1}{\sigma^2+(e^t-1)\lambda_m} \right)\bP.
    \end{align*}
    Thus, we can express the linear mappings $\bA(t)$ and $\bB(t)$ as
    \begin{align}
        \bA(t)&=-\bP^{\top}\text{diag}
        \left( \frac{1}{\sigma^2+(e^t-1)\lambda_1},\frac{1}{\sigma^2+(e^t-1)\lambda_2},\dots,\frac{1}{\sigma^2+(e^t-1)\lambda_m} \right)\bP \label{equ::At},~~~\text{and}\\
        \bB(t)&=-\bP^{\top}\text{diag}
        \left( \frac{e^{\frac{t}{2}}}{\sigma^2+(e^t-1)\lambda_1},\frac{e^{\frac{t}{2}}}{\sigma^2+(e^t-1)\lambda_2},\dots,\frac{e^{\frac{t}{2}}}{\sigma^2+(e^t-1)\lambda_m} \right)\bP\bH\nonumber\\
        &=-\bP^{\top}\left[\text{diag}
        \left( \frac{e^{\frac{t}{2}}\mu_1}{\sigma^2+(e^t-1)\lambda_1},\frac{e^{\frac{t}{2}}\mu_2}{\sigma^2+(e^t-1)\lambda_2},\dots,\frac{e^{\frac{t}{2}}\mu_m}{\sigma^2+(e^t-1)\lambda_m} \right),\mathbf{0},\dots,\mathbf{0}\right]\bU. \label{equ::Bt}
    \end{align}

    For any $N>0$, using Lemmas \ref{lemma::product}, \ref{lemma::inv} and \ref{lemma::relu alpha}, we can construct a ReLU neural network $\cF_1(M_{t,1}, W_1,\kappa_1,L_1,K_1) $ that gives rise to a mapping $\sb_1^{\trelu}(\xb,\yb,t) $ such that
    \begin{align*}
        \norm{\sb_1^{\trelu}(\xb,\yb,t)-\left[\bA(t),\bB(t)\right]\left[\yb^{\top},\xb^{\top}\right]^{\top}}_{\infty} \le N^{-\frac{2\beta}{d}}
    \end{align*}
when $t\in[t_0,T]$ and $\norm{[\xb,\yb]}_{\infty}\le R\sqrt{\log N}$ for some constant $R>0$ to be chosen later. Moreover, we can clip the function value of $\sb_1^{\trelu}$ so that
\begin{align*}
    \norm{\sb_1^{\trelu}(\xb,\yb,t)}_{\infty}\le \max_{t\in [t_0,T],\norm{[\xb,\yb]}_{\infty}\le R\sqrt{\log N}} \norm{\left[\bA(t),\bB(t)\right]\left[\yb^{\top},\xb^{\top}\right]}_{\infty}\le \frac{R\sqrt{(d+d_y)\log N}}{\lambda_{\star}},
\end{align*}
where $\lambda_{\star}=\min_{t\ge t_0,i \in [m]}\frac{\sigma^2+(e^{t}-1)\lambda_i}{\sqrt{e^{t}\lambda_i}}$ satisfies $\lambda_{\star} \ge {\sigma^2}/{\sqrt{\lambda_i}}= \Omega(1)$ according to our assumption on $\sigma$ and $\lambda_i$. Details about how to determine the network size and the error propagation are deferred to Appendix \ref{sec::construct sb1}, where we verify that the network parameters $(M_{t,1}, W_1,\kappa_1,L_1,K_1)$ satisfy 
    \begin{align*}
    & \hspace{0.4in} M_{t,1} = \cO\left(\sqrt{\log N}\right),~
     W_1 = {\cO}\left(\log^3 N\right),\\
     & \kappa_1 =\exp \left({\cO}(\log^2 N)\right),~
    L_1 = {\cO}(\log^2 N),~
     K_1= {\cO}\left(\log^4 N\right).
\end{align*}

Furthermore, since $p(\xb)$ has subGaussian tails, we know that the distribution of $\yb$ also has subGaussian tails. Therefore, we can choose an appropriate constant $R$ and follow the proof of the score approximation theory with unbounded $\yb$ (Proposition \ref{prop:: score approx exp unbounded y}) to establish approximation guarantees with the following $L_2$ error:
    \begin{align}
\EE_{\yb \sim P_{y}}\left[\EE_{\xb \sim P_t(\cdot|\yb)} \left[\norm{\sb_1^{\star}(\xb,\yb,t)-\left[\bA(t),\bB(t)\right]\left[\yb^{\top},\xb^{\top}\right]^{\top}}^2 \right]\right]\lesssim N^{-\frac{2\beta}{d}}\log^2 N.
\end{align}
    The dependence on $\log N$ results from the truncation of $\xb$ and $\yb$.

    For the second part of the score function in \eqref{equ::score decompose inverse problem}, i.e., $\nabla \log p_t(\xb)$, we can apply our approximation theory for the unconditional distribution $p_t(\xb)$ in Proposition \ref{prop:: score approx marginal exp}. So there exists $\sb_2^{\star} \in \cF_2(M_{t,2}, W_2, \kappa_2, L_2, K_2) $ such that for any and $t \in [t_0,T]$,
\begin{align}
    \int_{\R^{d}} \norm{\sb_2^{\star}(\xb,t)-\nabla\log p_{t}(\xb)}^2 p_{t}(\xb)d\xb \lesssim \frac{1}{\sigma_t^2}B^2N^{-\frac{2\beta}{d}}(\log N)^{s+1}.
\end{align}
The hyperparameters in the network class $\cF$ satisfy
\begin{align}
    & \hspace{0.4in} M_{t,2} = \cO\left(\sqrt{\log N}/\sigma_t\right),~
     W_2 = {\cO}\left(N\log^7 N\right) \label{equ::inv:hyper1}\\
     & \kappa_2 =\exp \left({\cO}(\log^4 N)\right),~
    L_2 = {\cO}(\log^4 N),~
     K_2= {\cO}\left(N\log^9 N\right). \label{equ::inv:hyper2}
\end{align}

    By aggregating these two networks $\cF_1$ and $\cF_2$ together, we derive a ReLU network that contains a score approximator with small $L_{2}$ error of $\cO\left(\frac{1}{\sigma_t^2}B^2N^{-\frac{2\beta}{d}}(\log N)^{s+1}\right)$.  That is to say, there exists $\sb^{\star} \in \cF(M_t, W, \kappa, L, K) $ such that for any and $t \in [t_0,T]$,
\begin{align}\label{equ::score approx error inverse problem}
   \EE_{\xb_t,\yb}  \norm{\sb^{\star}(\xb_t,\yb,t)-\nabla\log p_{t}(\xb_t|\yb)}^2 \lesssim \frac{1}{\sigma_t^2}B^2N^{-\frac{2\beta}{d}}(\log N)^{s+1}.
\end{align}
Here the hyperparameters in the network class $\cF$ also satisfy \eqref{equ::inv:hyper1} and \eqref{equ::inv:hyper2}.

Now, we can plug in the score approximation error bound \eqref{equ::score approx error inverse problem} in the proof of Theorem \ref{thm::Generalization} and take $N=n^{\frac{d}{d+2\beta}}$, obtaining that 
\begin{equation}\label{}
    \EE_{\set{\zb_i}_{i=1}^n}\left[ \cR(\shat)  \right]\lesssim\log \frac{1}{t_0} n^{-\frac{2\beta}{d+2\beta}}\log^{\max(17,(\beta+1)/2)} n.
\end{equation}
After that, to convert our score estimation theory to the distribution estimation theory, we repeat the proof of Proposition \ref{prop:transition_estimation} with a similarly defined distribution shift $\cT(\yb)$. By taking $t_0=n^{-\frac{4\beta}{d+2\beta}-1}$ and $T=\frac{2\beta}{d+2\beta}\log n$, we have
 \begin{align*}
         \EE_{\set{\xb_i,\yb_i}_{i=1}^n} \left[ \tTV\left(\tilde{P}_{t_0}(\cdot|\yb),P(\cdot|\yb)  \right)\right]=\cT(\yb)\cO\left(n^{-\frac{\beta}{d+2\beta}} (\log n)^{\max(19/2,(\beta+2)/2)}\right).
     \end{align*}

To derive the estimation error of the posterior mean, we first prove that the generated distribution $\tilde{P}_{t_0}(\cdot|\yb)$ has subGaussian tails. Recall that $\tilde{P}_{t_0}(\cdot|\yb)$ is generated by the backward diffusion process
\begin{align*}
    \diff \tilde{X}_t^{\leftarrow} = \left[ \frac{1}{2} \tilde{X}_t^{\leftarrow} + \shat(\tilde{X}_t,\yb,T-t)\right] \diff t + \diff \bar{W}_t, \quad \tilde{X}_0^{\leftarrow} \sim {\sf N}(0, I),~~0\le t \le T-t_0,
\end{align*}
and $\tilde{P}_{t_0}(\cdot|\yb)$ is the distribution of $X_{T-t_0}$. By the choice of score network $\cF$, there exists a constant $C$ such that $\norm{\shat(\xb,\yb,t)}_{\infty}\le \frac{C\sqrt{\log n}}{\sigma_{t}}$ for all $\xb$, $\yb$ and $t_0\le t \le T$. Therefore, we can construct two auxiliary random variables $\tilde{Y}_t^{\leftarrow}$ and $\tilde{Z}_t^{\leftarrow}$ as the lower bound and upper bound of $X_{t}$, which satisfy the following stochastic process:
\begin{align*}
    \diff \tilde{Y}_t^{\leftarrow} &= \left[ \frac{1}{2} \tilde{X}_t^{\leftarrow} + \frac{C\sqrt{\log n}}{\sigma_{T-t}} \right] \diff t + \diff \bar{W}_t, \quad \tilde{Y}_0^{\leftarrow} \sim {\sf N}(0, I),\\
    \diff \tilde{Z}_t^{\leftarrow} &= \left[ \frac{1}{2} \tilde{X}_t^{\leftarrow} - \frac{C\sqrt{\log n}}{\sigma_{T-t}} \right] \diff t + \diff \bar{W}_t, \quad \tilde{Z}_0^{\leftarrow} \sim {\sf N}(0, I).
\end{align*}
Suppose the three processes share the same random noise $\bar{W}_t$. Then we have $\tilde{Z}_t^{\leftarrow} \le \tilde{X}_t^{\leftarrow}  \le \tilde{Y}_t^{\leftarrow}$. Let $M=\int_{t_0}^{T}\frac{C\sqrt{\log n}}{\sigma_t}\diff t = \cO(\log^{3/2} n)$. Then we have $\tilde{Y}_{T-t_0}^{\leftarrow} \sim  {\sf N}(M\cdot\mathbf{1}, I)$ and $\tilde{z}_{T-t_0}^{\leftarrow} \sim {\sf N}(-M\cdot\mathbf{1}, I)$.
Thus, by the subGaussian tail of $ \tilde{Y}_{T-t_0}^{\leftarrow}$ and $\tilde{Z}_{T-t_0}^{\leftarrow}$ we know that
\begin{align*}
    \Pr\left[\norm{\tilde{X}_{T-t_0}^{\leftarrow}}_{\infty} \ge M + u\right]\le 2 \exp(-u^2/2).
\end{align*}
Let $u=\sqrt{\frac{2\beta}{d+2\beta}\max(1/C_2,1)\log n}$. We have both
\begin{align*}
    \norm{\EE_{\xb \sim \tilde{P}_{t_0}(\cdot|\yb)}\left[\indic{\norm{\xb}_{\infty} \ge M + u}\xb\right]} \lesssim n^{-\frac{\beta}{d+2\beta}} 
\end{align*}
and
\begin{align*}
    \norm{\EE_{\xb \sim P(\cdot|\yb)}\left[\indic{\norm{\xb}_{\infty} \ge M + u}\xb\right]} \lesssim n^{-\frac{\beta}{d+2\beta}} .
\end{align*}
Therefore, we have
\begin{align*}
     &\quad\EE_{\set{\xb_i,\yb_i}_{i=1}^n} \left[\norm{\EE_{P(\cdot|\yb)}\left[\xb\right]-\EE_{\tilde{P}_{t_0}(\cdot|\yb)}\left[\xb\right]}\right]\\
    &\lesssim  n^{-\frac{\beta}{d+2\beta}} +\EE_{\set{\xb_i,\yb_i}_{i=1}^n}\left[\norm{\EE_{x\sim P(\cdot|\yb)}\left[\indic{\norm{\xb}_{\infty} < M + u}\xb\right]-\EE_{\xb \sim\tilde{P}_{t_0}(\cdot|\yb)}\left[\indic{\norm{\xb}_{\infty} < M + u}\xb\right]}\right]\\
    &\le n^{-\frac{\beta}{d+2\beta}} + \EE_{\set{\xb_i,\yb_i}_{i=1}^n} \left[ \tTV\left(\tilde{P}_{t_0}(\cdot|\yb),P(\cdot|\yb)  \right)\right](M + u)\\
    &\lesssim \cT(\yb)\cO\left(n^{-\frac{\beta}{d+2\beta}} (\log n)^{\max(11,(\beta+5)/2)}\right).
\end{align*}
We complete our proof.
\end{proof}

\section{Basics on ReLU Approximation} \label{sec::relu construction}

\subsection{Construction of a Large ReLU Network}\label{sec:: relu basic}
In the construction of ReLU neural networks, we often need to concatenate sub-networks that approximate
some basic functions to express more complicated functions. We provide the following lemmas for the concatenation and further operations among sub-networks.
\begin{lemma}[Concatenation,  Remark 13 of \cite{nakada2020adaptive}]\label{lemma::Concat}
For a series of ReLU networks $f_1\colon \R^{d_1}\to \R^{d_2},f_2\colon \R^{d_2}\to \R^{d_3},\cdots,f_k\colon \R^{d_k}\to \R^{d_{k+1}}$ with $f_{i}\in \cF( W_i, \kappa_i, L_i, K_i)\ (i=1,2,\cdots,d)$, there exists a neural network $f \in \cF( W, \kappa, L, K)$ satisfying $f(x) = f_{k} \circ f_{k-1} \cdots \circ f_{1}(x)$ for all $x \in\R^{d_1}$, with
    \begin{align}
        L = \sum_{i=1}^{k} L_i,\quad
     W  \leq 2\sum_{i=1}^{k} W_i,\quad
         K  \leq 2\sum_{i=1}^{k} K_{i} ,\quad \text{and }
         \kappa  \leq \max_{1\leq i \leq k} \kappa_{i}.
    \end{align} 
\end{lemma}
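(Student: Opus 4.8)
The statement to prove is Lemma~\ref{lemma::Concat} (Concatenation of ReLU networks), which asserts that composing $k$ ReLU networks $f_1, \dots, f_k$ with parameters $(W_i, \kappa_i, L_i, K_i)$ yields a network with $L = \sum L_i$, $W \le 2\sum W_i$, $K \le 2\sum K_i$, and $\kappa \le \max_i \kappa_i$.

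The plan is to proceed by induction on $k$, so it suffices to handle the case $k=2$: given $f_1 \in \cF(W_1, \kappa_1, L_1, K_1)$ and $f_2 \in \cF(W_2, \kappa_2, L_2, K_2)$, construct $f_2 \circ f_1 \in \cF(W_1 + W_2 + O(1), \kappa_1 \vee \kappa_2, L_1 + L_2, K_1 + K_2 + O(1))$, and then track how the constants accumulate over $k$ compositions to land inside the factor-of-$2$ slack. First I would write $f_1$ in its layered form $f_1 = (A_{L_1}\ReLU(\cdot) + \bbb_{L_1}) \circ \cdots \circ (A_1 \cdot + \bbb_1)$ and similarly for $f_2$, and observe that the output layer of $f_1$ is affine (no ReLU) while the input layer of $f_2$ is affine. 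The naive splice $f_2 \circ f_1$ would merge these into a single affine map $A_1' (A_{L_1} \ReLU(\cdots) + \bbb_{L_1}) + \bbb_1'$, which is again affine and can be folded into the last layer of $f_1$; this gives depth exactly $L_1 + L_2 - 1$. To reconcile with the claimed $L = L_1 + L_2$ (and to keep things uniform), the cleaner route is to insert an identity layer using the standard trick $x = \ReLU(x) - \ReLU(-x)$, which costs one extra layer, at most doubling the width at that layer and adding a bounded number of nonzero weights; since $\kappa \ge 1$ typically (or one can arrange weights of magnitude $1$), this does not increase $\kappa$.

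The key steps in order: (1) reduce to $k = 2$ by induction; (2) write out the layered decompositions of $f_1$ and $f_2$; (3) splice them, using the identity-layer gadget to align the affine boundary layers so that depths add exactly; (4) bound the width of the spliced network — each layer of the composite is either a layer of $f_1$, a layer of $f_2$, or the bridging identity layer, so $W \le W_1 + W_2 + (\text{bridge width}) \le 2(W_1 + W_2)$ comfortably, in fact $W \le W_1 \vee W_2 + O(1)$; (5) bound $\kappa$: all weights are inherited from $f_1$ or $f_2$ or are the $\pm 1$ entries of the identity gadget, so $\kappa \le \max(\kappa_1, \kappa_2)$ assuming $\kappa_i \ge 1$; (6) bound $K$: the nonzero parameters are those of $f_1$, those of $f_2$, plus the $O(1)$ entries of the bridging layer, giving $K \le K_1 + K_2 + O(1)$; (7) iterate over $k$, noting the accumulated $O(1)$ terms per splice stay within the stated $2\sum_i$ bounds. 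I would cite \cite{nakada2020adaptive} for the precise gadget and state that I am reproducing their Remark~13.

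The main obstacle — really a bookkeeping nuisance rather than a deep difficulty — is reconciling the exact depth count $L = \sum_i L_i$ with the fact that naive splicing of affine boundary layers would shave off one layer per junction. One must be careful about the convention for what counts as a "layer" in the definition \eqref{equ::ReLU archi} (where the top map $A_L \ReLU(\cdot) + \bbb_L$ is affine-after-ReLU and the very first map $A_1[\xb^\top,\yb^\top,t]^\top + \bbb_1$ is affine-before-ReLU). The honest statement is that the bounds in the lemma are stated with generous slack (the factor $2$, and the inequality signs) precisely so that these off-by-one and $O(1)$ discrepancies are absorbed; so the proof's job is mainly to exhibit \emph{some} valid construction realizing $f_k \circ \cdots \circ f_1$ and verify each of $L, W, K, \kappa$ falls within the claimed envelope, without needing tightness. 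Since this is essentially a restatement of a known result, I would keep the proof brief and refer to the cited source for full details.
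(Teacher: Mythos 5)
The paper does not actually prove this lemma: it is imported verbatim from Remark~13 of \cite{nakada2020adaptive}, so there is no in-paper argument to compare against. Your reconstruction follows the standard route behind that result (induction to $k=2$, splice the layered forms, repair the affine--affine junction with the $x=\ReLU(x)-\ReLU(-x)$ gadget), and that route does work; however, several of your intermediate claims are off, even though the final envelope survives. First, the reason you cannot simply merge the two adjacent affine maps at a junction is not primarily depth bookkeeping: the merged matrix $A'_1A_{L_1}$ can have entries as large as $d_2\,\kappa_1\kappa_2$, which would destroy the bound $\kappa\le\max_i\kappa_i$; the identity gadget is what preserves the weight-magnitude constraint. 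Second, the gadget is not an $O(1)$ overhead. The clean implementation folds the sign-split into the two boundary layers: replace $A_{L_1},\bbb_{L_1}$ by $[A_{L_1};-A_{L_1}],[\bbb_{L_1};-\bbb_{L_1}]$, insert the ReLU, and replace $A'_1$ by $[A'_1,-A'_1]$. This doubles the width at the junction (to $2d_2$, not $W_1\vee W_2+O(1)$) and doubles the nonzero counts of those two layers (not $K_1+K_2+O(1)$), which is precisely why the lemma carries the factor $2$ in $W\le 2\sum_i W_i$ and $K\le 2\sum_i K_i$; your intermediate claims are false in general but the stated bounds still hold. Third, with this folded version all weights are $\pm$ entries of the original networks, so $\kappa\le\max_i\kappa_i$ holds without your assumption $\kappa_i\ge 1$ (a literal $\pm1$ identity layer would instead give $\kappa\le\max(\max_i\kappa_i,1)$ and an extra layer, spoiling the exact depth count). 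Finally, the folded construction keeps both boundary affine maps and inserts exactly one ReLU per junction, so the depth is exactly $\sum_i L_i$ with no off-by-one to absorb. With these corrections your argument is a complete and correct proof of the lemma as stated.
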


\begin{lemma}[Identity function]\label{lemma::identity}
    Given $d\in \N$ and $L\ge 2$, there exists  $f^{L}_{\text{id}}\in \cF(2d,1,L,2dL)$ that realizes an $L-$layer $d$-dimensional identity map $f^{L}_{\text{id}}(x)=x$, $x\in\R^d$.
\end{lemma}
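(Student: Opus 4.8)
\textbf{Proof proposal for Lemma~\ref{lemma::identity} (Identity function).}

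The plan is to explicitly construct the $L$-layer ReLU network realizing the identity map $x \mapsto x$ on $\R^d$ by exploiting the elementary decomposition $a = \sigma(a) - \sigma(-a)$ for any $a \in \R$. First I would handle a single layer of the construction: given an input $x \in \R^d$, pass it through the affine map $A_1 x$ where $A_1 \in \R^{2d \times d}$ is the block matrix $[I_d; -I_d]^\top$ (stacking $I_d$ on top of $-I_d$), followed by the ReLU activation, which outputs $[\sigma(x_1), \dots, \sigma(x_d), \sigma(-x_1), \dots, \sigma(-x_d)]^\top \in \R^{2d}$. The key observation is that the next affine map can both recover $x$ via $[I_d \mid -I_d]$ applied to this $2d$-dimensional vector, \emph{and} simultaneously re-expand it for the following layer. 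So the idea is to chain $L-1$ intermediate layers, each of width $2d$, where layer $i$ ($2 \le i \le L-1$) applies the affine transformation mapping $[\sigma(x)^\top, \sigma(-x)^\top]^\top \mapsto [x^\top, -x^\top]^\top$ (a matrix in $\R^{2d \times 2d}$ with entries in $\{-1, 0, 1\}$) and then applies ReLU again, reproducing $[\sigma(x)^\top, \sigma(-x)^\top]^\top$. The final layer applies $[I_d \mid -I_d]$ with zero bias to output $x$.

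The key steps, in order, are: (i) verify that the composition of these $L$ affine-plus-ReLU layers exactly computes $x \mapsto x$ for every $x \in \R^d$ — this is immediate from $\sigma(a) - \sigma(-a) = a$ applied coordinatewise, with no approximation error; (ii) count the hyperparameters. The width is $2d$ at every hidden layer, so $W = 2d$. The depth is $L$ by construction. All weight matrices have entries in $\{-1,0,1\}$ and all biases are zero, so $\kappa = 1$ (or at most $1$, which suffices). For the sparsity $K$: each of the $L$ weight matrices has exactly $2d$ nonzero entries (the stacking/recovery matrices are permutation-like with $2d$ ones, and the recovery matrix $[I_d \mid -I_d]$ at intermediate and final stages likewise has $2d$ nonzeros), and all biases are zero, giving $\sum_{i=1}^L (\|A_i\|_0 + \|b_i\|_0) = 2dL$. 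Hence $f^L_{\rm id} \in \cF(2d, 1, L, 2dL)$, matching the claimed $\cF(2d,1,L,2dL)$.

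There is essentially no hard part here — the construction is routine and the bookkeeping on $(W,\kappa,L,K)$ is elementary. The one point requiring a small amount of care is the edge case $L = 2$: with only two layers there are no ``intermediate'' layers, so the network is simply $x \mapsto A_2 \sigma(A_1 x)$ with $A_1 = [I_d; -I_d]^\top$ and $A_2 = [I_d \mid -I_d]$, and one checks directly that this has width $2d$, two weight matrices each with $2d$ nonzeros, giving $K = 4d = 2dL$, consistent with the bound. For $L \ge 3$ the intermediate layers each contribute width $2d$ and $2d$ nonzero parameters, and the total remains $2dL$. I would also remark that this lemma is what permits ``padding'' shallower sub-networks to a common depth when concatenating heterogeneous sub-networks via Lemma~\ref{lemma::Concat}, which is how it gets used downstream.
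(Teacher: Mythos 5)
Your construction uses the same core idea as the paper (the decomposition $a=\sigma(a)-\sigma(-a)$, width $2d$, weights in $\{-1,0,1\}$, zero biases), but your sparsity bookkeeping fails for $L\ge 3$. The intermediate affine map you describe, sending $[\sigma(x)^\top,\sigma(-x)^\top]^\top$ to $[x^\top,-x^\top]^\top$, is the $2d\times 2d$ block matrix with blocks $I_d,-I_d$ in the first block row and $-I_d,I_d$ in the second, which has $4d$ nonzero entries, not $2d$ as you claim. With $L-2$ such intermediate layers plus the first and last maps ($2d$ nonzeros each), your total is $\sum_i\|A_i\|_0 = 4d(L-1)$, which exceeds the budget $K=2dL$ whenever $L\ge 3$; so the network you build is a valid identity map but does not lie in $\cF(2d,1,L,2dL)$ as the lemma requires.

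The fix is to not re-expand to $[x^\top,-x^\top]^\top$ at all: after the first ReLU the hidden vector $[\sigma(x)^\top,\sigma(-x)^\top]^\top$ is entrywise nonnegative, so ReLU acts on it as the identity, and you can take every intermediate affine map to be $I_{2d}$ with zero bias ($2d$ nonzeros each), recombining only in the final layer via the $d\times 2d$ matrix $[I_d,-I_d]$. This gives exactly $2d$ nonzeros in each of the $L$ affine maps, hence $K=2dL$, $W=2d$, $\kappa=1$, and is in substance the construction the paper intends (its displayed matrices are written tersely, but the propagate-nonnegative-parts-and-recombine-at-the-end scheme is the one consistent with the stated hyperparameters). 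Your $L=2$ edge case and the remark about padding for Lemma~\ref{lemma::Concat} are fine.
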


\begin{proof}
The identity function can be exactly expressed by an $L-$layer ReLU network with $A_1=I_d$, $A_2=A_3=\dots=A_{L}=[I_d,-I_d]^{\top}$ and $\bb_1=\bb_2=\dots=\bb_L=\mathbf{0}_{d}$. The proof is complete.
\end{proof}
Thus, when we need to conduct operations among sub-networks with different numbers of layers $L$, we could fill in the identity networks with an appropriate number of layers before the shallow sub-networks so that all these sub-networks have the same number of layers, which brings convenience to their concatenation and further interaction. 

\begin{lemma}[Parallelization and Summation, Lemma F.3 of \cite{oko2023diffusion}]\label{lemma::Parallel}
    For any neural networks $f_{1},f_{2}, \cdots, f_{k}$ with  $f_{i}\colon\R^{d_i}\to \R^{d_i'}$ and $f_{i}\in \cF( W_i, \kappa_i, L_i, K_i)\ (i=1,2,\cdots,d)$, there exists a neural network $f\in \cF( W, \kappa, L, K)$ satisfying $f(x) = [f_{1}(x_{1})^\top\ f_{2}(x_{2})^\top\ \cdots\ f_{k}(x_{k})^\top]^\top\colon$ $ \R^{d_1+d_2+\cdots+d_k}\to\R^{d_1'+d_2'+\cdots+d_k'}$ for all $x = (x_1^\top\ x_2^\top\ \cdots \ x_k^\top)^\top \in\R^{d_1+d_2+\cdots+d_k}$ (here $x_i$ can be shared), with
    \begin{align}
    L = \max_{1\leq i \leq k}L_{i} , \quad
      W \leq 2\sum_{i=1}^{k} W_{i},\quad
         K  \leq 2\sum_{i=1}^{k} (K_{i} + Ld'_i),\quad \text{and }
         \kappa  \leq \max\{\max_{1\leq i \leq k} \kappa_{i},1\}         .
    \end{align}
    Moreover, for $x_1=x_2=\dots=x_k=x\in \R^{d}$ and $d'_1=d'_2=\dots=d'_k=d'$, there exists $f_{\rm sum}(x)\in \cF( W, \kappa, L, K)$ that expresses $f_{\text{sum}}(x) = \sum_{i=1}^k f_{i}(x)$, with
       \begin{align}
    L = \max_{1\leq i \leq k}L_{i} + 1 , \quad
      W \leq 4\sum_{i=1}^k W_{i},\quad
         K  \leq 4\sum_{i=1}^k (K_{i} +L d'_i) + 2W,\quad \text{and }
         \kappa  \leq \max\{\max_{1\leq i \leq k} \kappa_{i},1\}.         
    \end{align}
\end{lemma}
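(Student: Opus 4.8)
The final statement to be proven is Lemma~\ref{lemma::Parallel} (Parallelization and Summation), which is cited as Lemma F.3 of \cite{oko2023diffusion}.

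\textbf{Plan.} The plan is to give an explicit construction of the parallelized network by placing the sub-networks ``side by side'' in a single wide network, padding shallower networks with identity layers (Lemma~\ref{lemma::identity}) so that all of them have the same depth, and then reading off the hyperparameters. First I would handle the depth mismatch: set $L = \max_i L_i$, and for each $i$ with $L_i < L$, replace $f_i$ by $f_i^{\rm id} \circ f_i$ where $f_i^{\rm id}$ is the $(L - L_i + 1)$-layer $d_i'$-dimensional identity map from Lemma~\ref{lemma::identity} (or, symmetrically, prepend identity layers on the input side); by Lemma~\ref{lemma::Concat} this gives an $L$-layer network computing the same function $f_i$, with width increased by at most $2 d_i'$ and nonzero-parameter count increased by $\cO(L d_i')$. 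Second, I would form the block-diagonal network: at each layer $\ell$, stack the weight matrices $A_\ell^{(i)}$ of the (padded) $f_i$ into a block-diagonal matrix and concatenate the bias vectors. Since the $i$-th block acts only on the coordinates of $x_i$, the output is exactly $[f_1(x_1)^\top, \dots, f_k(x_k)^\top]^\top$. Width is the sum of the widths, $W \le 2\sum_i W_i$ (the factor $2$ absorbing the identity padding); the number of nonzero parameters is $K \le 2\sum_i (K_i + L d_i')$, again accounting for padding; $\kappa$ is the max of the $\kappa_i$ (and $1$, in case some network is trivial); the depth is $L = \max_i L_i$.

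\textbf{Summation.} For $f_{\rm sum}$, I would take the parallelized network just constructed (with all inputs tied to the same $x$, which is legitimate because the first-layer weight matrix can simply repeat the relevant rows) and append one additional linear layer that sums the $k$ output blocks: the map $[v_1^\top,\dots,v_k^\top]^\top \mapsto \sum_i v_i$ is linear and can be realized by a single affine layer $A = [I_{d'}\ I_{d'}\ \cdots\ I_{d'}]$, $b = 0$. One technical point: the class $\cF$ as defined applies ReLU before each affine map, so to realize a final linear map exactly one uses the standard trick $z = \ReLU(z) - \ReLU(-z)$, which at most doubles the relevant widths and parameter counts — this is where the factor $4$ in the summation bounds (versus $2$ in the parallelization bounds) and the ``$+2W$'' term come from, and why the depth is $\max_i L_i + 1$.

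\textbf{Main obstacle.} None of this is deep; the proof is essentially bookkeeping. The main thing to be careful about is the precise accounting of width and sparsity under the identity-padding step and under the final linear readout layer, so that the stated constants ($2\sum W_i$, $4\sum W_i$, the $+Ld_i'$ and $+2W$ corrections) are actually respected rather than merely $\cO(\cdot)$ of them; and making sure the block-diagonal construction genuinely keeps the networks from interacting when some of the $x_i$ are shared (handled by duplicating input coordinates in the first weight matrix). Since this lemma is quoted verbatim from \cite{oko2023diffusion}, it would also be acceptable simply to cite that reference; I would include the short constructive argument above for completeness.
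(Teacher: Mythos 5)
Your construction is correct: identity padding (Lemma~\ref{lemma::identity}) to equalize depths, block-diagonal stacking of the weight matrices for parallelization, and an extra readout layer using $z=\ReLU(z)-\ReLU(-z)$ for the summation is exactly the standard argument behind this result, and your bookkeeping (using $d_i'\le W_i$ so the factor $2$ absorbs the padding, and the doubling plus $+2W$ term for the final summation layer) is consistent with the stated constants. The paper itself offers no proof and simply cites Lemma F.3 of \cite{oko2023diffusion}, so your explicit argument matches the intended proof.
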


\begin{lemma}[Entry-wise Minimum and Maximum]\label{lemma::min}
    For any two neural networks $f_{1},f_{2}$ with  $f_{i}\colon\R^{d}\to \R^{d'}$, $f_{i}\in \cF( W_i, \kappa_i, L_i, K_i)\ (i=1,2)$ and $L_1\ge L_2$, there exists a neural network $f\in \cF( W, \kappa, L, K)$ satisfying $f(x) =  \min (f_1(x),f_2(x)) $ (or $ \max (f_1(x),f_2(x))$) for all $x \in\R^{d}$ , with
    \begin{align*}
    L = L_1+1 , \quad
      W \leq 2(W_1+W_2),\quad
         K  \leq 2(K_1+K_2) +2(L_1-L_2)d',\quad \text{and }
         \kappa  \leq \max\{\max_{1\leq i \leq 2} \kappa_{i},1\}         .
    \end{align*}
\end{lemma}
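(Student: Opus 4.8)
\textbf{Proof plan for Lemma~\ref{lemma::min} (entry-wise minimum and maximum).}
The plan is to express the entry-wise $\min$ and $\max$ of two vector-valued ReLU networks using the scalar identities
\[
\min(a,b) = \frac{a+b - |a-b|}{2}, \qquad \max(a,b) = \frac{a+b + |a-b|}{2},
\]
together with the observation that $|c| = \ReLU(c) + \ReLU(-c)$ is exactly realizable by a one-layer, width-$2$ ReLU block. Since $\min(f_1(x),f_2(x))$ and $\max(f_1(x),f_2(x))$ are taken coordinatewise, it suffices to treat each of the $d'$ output coordinates in parallel and then stack the results.

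First I would use Lemma~\ref{lemma::identity} to pad the shallower network $f_2$ (which has $L_2 \le L_1$ layers) with an identity block of $L_1 - L_2$ layers, producing a network $\tilde f_2 \in \cF(W_2 + 2d', \kappa_2 \vee 1, L_1, K_2 + 2(L_1-L_2)d')$ computing the same function $f_2$ but now with exactly $L_1$ layers. Next, by Lemma~\ref{lemma::Parallel} (parallelization with shared input $x$), I would form the network $x \mapsto [f_1(x)^\top, \tilde f_2(x)^\top]^\top \in \R^{2d'}$, still of depth $L_1$, with width bounded by $2(W_1 + W_2 + 2d')$ and sparsity bounded by a constant times $K_1 + K_2 + (L_1-L_2)d'$. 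Then I would append one additional layer that, for each coordinate pair $(a_i, b_i) = (f_{1,i}(x), f_{2,i}(x))$, computes the two quantities $\ReLU(a_i - b_i)$ and $\ReLU(b_i - a_i)$ (and passes $a_i + b_i$ through, e.g.\ as $\ReLU(a_i+b_i) - \ReLU(-(a_i+b_i))$), followed by a final linear combination $\frac{1}{2}\big((a_i+b_i) \mp (\ReLU(a_i-b_i) + \ReLU(b_i-a_i))\big)$, giving $\min$ with the minus sign and $\max$ with the plus sign. All the coefficients are $\pm 1$ or $\frac12$, so $\kappa$ is unchanged up to the $\max\{\cdot,1\}$ convention; this extra layer adds $1$ to the depth, contributes $\cO(d')$ to both width and sparsity, and can be folded into the constants. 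Tallying the depth gives $L = L_1 + 1$, and a careful (but routine) bookkeeping of the width and sparsity bounds from Lemmas~\ref{lemma::identity} and \ref{lemma::Parallel} yields $W \le 2(W_1 + W_2)$ and $K \le 2(K_1 + K_2) + 2(L_1 - L_2)d'$ after absorbing the $\cO(d')$ terms, exactly as claimed.

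The main obstacle is purely bookkeeping rather than conceptual: one must make sure the final-layer affine map that implements $\frac12((a+b) \mp |a-b|)$ is genuinely realizable within the stated parameter budget, i.e.\ that the linear combination of the $|a-b|$-block output and a pass-through of $a+b$ can be merged with the output layer of the parallelized network without inflating $W$ or $K$ beyond the bounds, and that the padding of $f_2$ does not push the width above $2(W_1+W_2)$. Both points follow by the same merging-of-adjacent-affine-layers argument used implicitly in Lemmas~\ref{lemma::Concat} and \ref{lemma::Parallel}, so no new ideas are needed; I would simply verify the constants carefully. (The $\max$ case is identical with a single sign flip, so it requires no separate argument.)
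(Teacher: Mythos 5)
Your proposal is correct and follows essentially the same route as the paper: pad $f_2$ with the identity network of Lemma~\ref{lemma::identity} to depth $L_1$, parallelize via Lemma~\ref{lemma::Parallel}, and append a single ReLU layer, yielding exactly the stated $L = L_1+1$, $W$, $K$, $\kappa$ bounds. The only (cosmetic) difference is that the paper realizes the final layer through $\min(a,b) = a - \sigma(a-b)$ and $\max(a,b) = \sigma(a-b) + b$, which uses slightly fewer units per output coordinate than your identity $\min(a,b) = \tfrac{1}{2}\left(a+b - \sigma(a-b) - \sigma(b-a)\right)$ with the pass-through of $a+b$, but both versions fit within the claimed parameter budget.
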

\begin{proof}
    First we use Lemma \ref{lemma::identity} to add $(L_1-L_2)$ layers to $f_2$ without changing its output, i.e., $f'_2=f_{\text{id}}^{L_1-L_2}\circ f_2$. Then we concatenate $f_1$ and $f'_2$ and add a new layer to realize $\max(f_1,f_2)=\sigma(f_1-f_2)+f_2$ or $\min(f_1,f_2)=f_1-\sigma(f_1-f_2)$. According to the lemmas above, the network hyperparameters $(W,\kappa,L,K) $ satisfy 
    \begin{align*}
    L = L_1+1 , \quad
      W \leq 2(W_1+W_2),\quad
         K  \leq 2(K_1+K_2) +2(L_1-L_2)d',\quad \text{and }
         \kappa  \leq \max\{\max_{1\leq i \leq 2} \kappa_{i},1\}         .
    \end{align*}
    The proof is complete.
\end{proof}
This lemma helps us to resolve problems caused by unboundedness in the sample complexity analysis of the conditional diffusion model. To be specific, we can easily apply Lemma \ref{lemma::min} to implement the clipping operation: $$f_{\text{clip},R}(\cdot):=\min\left(\max\left(\cdot,-R\right),R\right)$$ to bound the value our network within any radius $R>0$.

\subsection{Use ReLU Network to Approximate Basic Operators and Functions}\label{sec::relu operator}
In this section, we introduce how to construct ReLU networks to realize basic operations such as product, inverse (reciprocal), and square root. The lemmas below are adapted from \cite{oko2023diffusion}.

\begin{lemma}[Approximating the product, Lemma F.6 of~\cite{oko2023diffusion}]
\label{lemma::product}
    Let $d\geq 2$, $C \geq 1$.
    For any $\epsilon_{\text{product}}>0$, there exists $f_{\text{mult}}(x_1,x_2,\cdots,x_d)\in \cF( W, \kappa, L, K)$ with
   $L = \cO(\log d(\log \epsilon_{\text{product}}^{-1}+ d \log C)), W = 48d, K = \cO(d \log \epsilon_{\text{product}}^{-1} + d\log C)), \kappa=C^d$
    such that
    \begin{align}
     &   \left|f_{\text{mult}}(x_1',x_2',\cdots,x_d') - \prod_{i=1}^d x_{i}\right| \leq \epsilon_{\text{product}} + d C^{d-1} \epsilon_1.
    \end{align}
    for all $x\in [-C,C]^d\text{ and } x'\in \R$ with $\|x-x'\|_\infty \leq \epsilon_1$. 
    $|f_{\text{mult}}(x)|\leq C^d$ for all $x\in \R^d$, and $f_{\text{mult}}(x_1',x_2',\cdots,x_d')=0$ if at least one of $x_i'$ is $0$.
    
    We note that if $d=2$ and $x_1=x_2=x$, it approximates the square of $x$. We denote the network by $f_{\text{square}}(x)$ and the corresponding $\epsilon_{\text{product}}$ by $\epsilon_{\text{square}}$. Moreover, for any $\xb \in \R^d$ and $\nb \in \N^{d}$, we denote the approximation of $\xb^{\nb}=\prod_{i=1}^{d}x_i^{n_i}$ by $f_{\text{poly},\nb}(\xb)$ and the corresponding error by $\epsilon_{\text{poly}}$.
\end{lemma}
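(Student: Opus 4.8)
\textbf{Proof proposal for Lemma~\ref{lemma::product} (approximating the product).}

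The plan is to build the product network by iterating a two-input multiplication gadget in a balanced binary-tree fashion, so that the depth grows only logarithmically in $d$. First I would recall the classical construction of a ReLU network approximating the single bivariate product $(a,b) \mapsto ab$ on a bounded box: this follows from the standard Yarotsky-type approximation of the square function $x \mapsto x^2$ by a depth-$\mathcal{O}(\log \epsilon^{-1})$ sawtooth-based network, combined with the polarization identity $ab = \frac{1}{2}\big((a+b)^2 - a^2 - b^2\big)$ (or equivalently $ab = \frac14((a+b)^2 - (a-b)^2)$). On the box $[-C,C]^2$ this yields a network $f_{\mathrm{mult},2}$ with width $\mathcal{O}(1)$, depth $\mathcal{O}(\log \epsilon^{-1} + \log C)$, $\mathcal{O}(\log\epsilon^{-1}+\log C)$ nonzero parameters, weight magnitude bounded by a constant multiple of $C$, and uniform error $\epsilon$ on $[-C,C]^2$; moreover one arranges the construction so that $f_{\mathrm{mult},2}(a,b) = 0$ whenever $a = 0$ or $b = 0$ (e.g.\ by writing $ab = \sigma(f) - \sigma(-f)$ where $f$ is the approximation, and noting the square-approximator can be taken to vanish at $0$, or by a final clipping/gating step).

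Second, I would compose these gadgets along a balanced binary tree of depth $\lceil \log_2 d\rceil$: at level $0$ the inputs $x_1,\dots,x_d$; at each internal node, apply $f_{\mathrm{mult},2}$ to the two children's outputs. Since all intermediate products of coordinates in $[-C,C]$ with $C \ge 1$ again lie in $[-C^{d},C^{d}]$, each gadget must be accurate on a box of radius up to $C^{d}$; this is what forces $\kappa = C^d$ and inflates the per-gadget depth and sparsity by the additive $d\log C$ term. Using Lemma~\ref{lemma::Concat} and Lemma~\ref{lemma::Parallel} to stitch the tree together: the $\mathcal{O}(\log d)$ levels each contribute depth $\mathcal{O}(\log \epsilon_{\mathrm{product}}^{-1} + d\log C)$ (after rescaling the per-level target error by a factor polynomial in $d$ and $\log d$ so the accumulated error is $\epsilon_{\mathrm{product}}$), giving total depth $L = \mathcal{O}(\log d\,(\log\epsilon_{\mathrm{product}}^{-1} + d\log C))$, width $W = \mathcal{O}(d)$ (the claimed $48d$ comes from the explicit gadget width times a constant for the parallelization overhead), and $K = \mathcal{O}(d\log\epsilon_{\mathrm{product}}^{-1} + d\log C)$.

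Third, I would track error propagation. Two sources: (i) the intrinsic approximation error of each bivariate gadget, which by a telescoping/Lipschitz argument up the tree accumulates to at most $\epsilon_{\mathrm{product}}$ once the per-gadget tolerances are chosen as above — here one uses that $f_{\mathrm{mult},2}$ is Lipschitz in each argument with constant $\mathcal{O}(C^d)$ on the relevant box, so an error $\delta$ in an input of a gadget produces error $\mathcal{O}(C^d\delta)$ in its output, and summing over the $\mathcal{O}(d)$ gadgets on any root-to-leaf path gives the stated $d C^{d-1}\epsilon_1$ contribution from the input perturbation $\|x - x'\|_\infty \le \epsilon_1$; and (ii) the input-perturbation term $dC^{d-1}\epsilon_1$ stated directly. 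The uniform bound $|f_{\mathrm{mult}}| \le C^d$ follows by a final clip to $[-C^d, C^d]$ via Lemma~\ref{lemma::min}, which also preserves the vanishing-at-zero property and costs only one extra layer. Finally the remarks about $f_{\mathrm{square}}$, $f_{\mathrm{poly},\nb}$ are immediate specializations: set $d=2$, $x_1=x_2=x$ for the square; and for a monomial $\xb^{\nb}$ with $\|\nb\|_1 = n$, feed $n$ copies of the appropriate coordinates into $f_{\mathrm{mult}}$ with $d$ replaced by $n$.

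\emph{Main obstacle.} The routine part is the bivariate gadget and the tree composition; the delicate bookkeeping is the error amplification through the tree when $C^d$ is large. The hard part will be choosing the per-level accuracy schedule so that the accumulated multiplicative blow-up by Lipschitz constants $\mathcal{O}(C^d)$ does not force the depth/sparsity to scale worse than the claimed $\mathcal{O}(\log d\,(\log\epsilon^{-1}+d\log C))$ — i.e.\ verifying that it suffices to demand each gadget have error $\epsilon_{\mathrm{product}} \cdot C^{-\mathcal{O}(d)} / \mathrm{poly}(d)$, whose logarithm is still $\mathcal{O}(\log\epsilon^{-1} + d\log C)$, so only the constants and the additive $d\log C$ term change. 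One must also be careful that the weight-magnitude constraint $\kappa = C^d$ is genuinely achievable in the square-approximator (the sawtooth construction uses only weights in $\{-2,-1,1,2\}$ for the interior, with the final affine scaling absorbing the $C^d$), which is where the explicit Yarotsky construction is needed rather than a black-box universal approximation statement.
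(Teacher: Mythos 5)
This lemma is not proved in the paper at all: it is imported verbatim as Lemma F.6 of \cite{oko2023diffusion}, so the paper's "proof" is a citation, and your proposal reconstructs essentially the same argument that underlies the cited result — the Yarotsky square/polarization gadget for bivariate multiplication, composed along a balanced binary tree with a per-gadget accuracy of order $\epsilon_{\text{product}}C^{-\cO(d)}/\mathrm{poly}(d)$, plus output clipping for the uniform bound and the vanishing-at-zero property. Your construction and error bookkeeping are correct (the only details worth spelling out are clipping the inputs to $[-C,C]$ so that accuracy is only ever needed on a bounded box, and noting that the $dC^{d-1}\epsilon_1$ term comes from the Lipschitz constant of the exact product on that box), so there is nothing to flag.
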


\begin{lemma}[Approximating the reciprocal function, Lemma F.7 of \cite{oko2023diffusion}]\label{lemma::inv}
    For any $0<\epsilon_{\text{inv}} <1$, there exists $f_{-1} \in \cF( W, \kappa, L, K)$ with $L= \cO(\log^2 \epsilon_{\text{inv}}^{-1}), W = \cO(\log^3 \epsilon_{\text{inv}}^{-1}), K = \cO(\log^4 \epsilon_{\text{inv}}^{-1})$, and $\kappa= \cO(\epsilon_{\text{inv}}^{-2})$ such that
    \begin{align}
        \left|f_{-1}(x') - \frac{1}{x}\right| \leq \epsilon_{\text{inv}} + \frac{|x'-x|}{\epsilon_{\text{inv}}^2}, \quad \text{for all }x\in [\epsilon_{\text{inv}},\epsilon_{\text{inv}}^{-1}] \text{ and }x'\in \R.
    \end{align}
\end{lemma}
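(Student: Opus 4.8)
The plan is to assemble $f_{-1}$ from the product network $f_{\mathrm{mult}}$ of Lemma~\ref{lemma::product} and the min/max (clipping) gadgets of Lemma~\ref{lemma::min}. The only real difficulty is that the interval $[\epsilon_{\mathrm{inv}},\epsilon_{\mathrm{inv}}^{-1}]$ has length exponential in $\log\epsilon_{\mathrm{inv}}^{-1}$, so a single polynomial approximation of $1/x$ on it would need degree polynomial in $\epsilon_{\mathrm{inv}}^{-1}$; to avoid this I would first carry out a \emph{dyadic scale reduction}. Concretely, build a one-hidden-layer ReLU sub-network with $O(\log\epsilon_{\mathrm{inv}}^{-1})$ neurons, with breakpoints at the dyadic numbers $2^{j}$ for $|j|\lesssim\log\epsilon_{\mathrm{inv}}^{-1}$ and steep transition ramps of relative width $\delta_{0}\asymp\epsilon_{\mathrm{inv}}^{2}$ (hence weights of magnitude $O(\epsilon_{\mathrm{inv}}^{-2})$), that outputs a continuous pair $(c(x),a(x))$ with $a(x)\in[1,2]$, $c(x)$ a near-dyadic scale factor of size at most $O(\epsilon_{\mathrm{inv}}^{-1})$, and such that the ``reconstruction'' $c(x)a(x)$ agrees with $x$ up to $O(2^{j}\delta_{0})$ on the bin containing $x$. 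Since $1/\cdot$ is $x^{-2}$-Lipschitz, the induced error $|c(x)/a(x)-1/x|$ is then $O(\delta_{0}2^{-j})\le O(\delta_{0}\epsilon_{\mathrm{inv}}^{-1})=O(\epsilon_{\mathrm{inv}})$ uniformly. I would also prepend a clip of the raw input to $[\epsilon_{\mathrm{inv}},\epsilon_{\mathrm{inv}}^{-1}]$ via Lemma~\ref{lemma::min} and clip $a(x)$ to $[1,2]$.

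Second, approximate $1/a$ on the fixed interval $[1,2]$ by the truncated geometric series about the midpoint, $\tfrac1a=\tfrac23\sum_{j\ge0}\bigl(-\tfrac23(a-\tfrac32)\bigr)^{j}$, which converges with ratio at most $\tfrac13$; truncating at $m=O(\log\epsilon_{\mathrm{inv}}^{-1})$ terms gives accuracy $O(\epsilon_{\mathrm{inv}}^{2})$ on $[1,2]$ (I take accuracy $\epsilon_{\mathrm{inv}}^{2}$ rather than $\epsilon_{\mathrm{inv}}$ in order to absorb the $O(\epsilon_{\mathrm{inv}}^{-1})$ amplification incurred when multiplying by the scale factor $c$). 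Evaluate the resulting degree-$m$ polynomial by Horner's scheme, i.e.\ $m$ nested applications of $f_{\mathrm{mult}}$; all intermediate quantities stay in $[-2,2]$, so each product uses radius $C=O(1)$, contributes $\kappa=O(1)$ and depth $O(\log(m\epsilon_{\mathrm{inv}}^{-1}))$, and, because the Horner recursion multiplies accumulated errors by a factor $\le1$ at each stage, the $m$ per-step errors merely add. This yields a sub-network of depth $O(\log^{2}\epsilon_{\mathrm{inv}}^{-1})$ computing a polynomial $P_{m}$ with $|P_{m}(a)-1/a|=O(\epsilon_{\mathrm{inv}}^{2})$ on $[1,2]$. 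Finally set $f_{-1}(x')=f_{\mathrm{mult}}\bigl(c(x'),P_{m}(a(x'))\bigr)$; here $c$ may be as large as $O(\epsilon_{\mathrm{inv}}^{-1})$, so this last product uses radius $C=O(\epsilon_{\mathrm{inv}}^{-1})$, which is exactly the source of $\kappa=O(\epsilon_{\mathrm{inv}}^{-2})$ in the statement. Collecting the reduction error, the $c$-error (amplified by $O(1)$), the polynomial error (amplified by $O(\epsilon_{\mathrm{inv}}^{-1})$, hence $O(\epsilon_{\mathrm{inv}})$), and the final $f_{\mathrm{mult}}$ error gives $|f_{-1}(x)-1/x|\le\epsilon_{\mathrm{inv}}$ for $x\in[\epsilon_{\mathrm{inv}},\epsilon_{\mathrm{inv}}^{-1}]$, and the parameter budgets $W=O(\log^{3}\epsilon_{\mathrm{inv}}^{-1})$, $L=O(\log^{2}\epsilon_{\mathrm{inv}}^{-1})$, $K=O(\log^{4}\epsilon_{\mathrm{inv}}^{-1})$ follow from the concatenation/parallelization accounting of Lemmas~\ref{lemma::Concat}, \ref{lemma::Parallel} and \ref{lemma::min}.

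For the robustness term: since $f_{-1}$ begins by clipping its input to $[\epsilon_{\mathrm{inv}},\epsilon_{\mathrm{inv}}^{-1}]$ and $x\in[\epsilon_{\mathrm{inv}},\epsilon_{\mathrm{inv}}^{-1}]$, clipping moves $x'$ toward $x$, so $|f_{-1}(x')-1/x|\le|f_{-1}(\mathrm{clip}\,x')-1/\mathrm{clip}(x')|+|1/\mathrm{clip}(x')-1/x|\le\epsilon_{\mathrm{inv}}+|x'-x|/\epsilon_{\mathrm{inv}}^{2}$, using again that $1/\cdot$ is $\epsilon_{\mathrm{inv}}^{-2}$-Lipschitz on $[\epsilon_{\mathrm{inv}},\epsilon_{\mathrm{inv}}^{-1}]$. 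The step I expect to be the main obstacle is the dyadic reduction: one must produce a \emph{continuous}, $[1,2]$-valued mantissa together with a scale factor whose product reconstructs $x$ accurately enough that the reciprocal error stays below $\epsilon_{\mathrm{inv}}$ \emph{uniformly} rather than merely on average — the errors concentrated in the narrow transition zones around each breakpoint $2^{j}$ are the worst case and are what pin down both the ramp width $\asymp\epsilon_{\mathrm{inv}}^{2}$ and the weight magnitude $\asymp\epsilon_{\mathrm{inv}}^{-2}$. Everything downstream (the geometric series, the Horner evaluation, and the error bookkeeping) is routine once this reduction is in hand.
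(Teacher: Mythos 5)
You should first note that the paper does not actually prove Lemma~\ref{lemma::inv}: it is imported verbatim from \cite{oko2023diffusion} (their Lemma F.7), whose construction rescales once globally (setting $z=1-\epsilon x$ with $\epsilon=\epsilon_{\text{inv}}$, so $z\in[0,1-\epsilon^2]$ on all of $[\epsilon,\epsilon^{-1}]$) and then uses the telescoping identity $(1-z)\prod_{i=0}^{k}(1+z^{2^i})=1-z^{2^{k+1}}$ evaluated by $\cO(\log\epsilon^{-1})$ repeated squarings; the partial products reach $1/(\epsilon x)\approx\epsilon^{-2}$, which is exactly where $\kappa=\cO(\epsilon^{-2})$ comes from, and no dyadic range reduction is needed. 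Your route is genuinely different, and its downstream parts (geometric series on $[1,2]$, Horner evaluation with $f_{\text{mult}}$, the clip-then-Lipschitz robustness term) are fine up to minor bookkeeping, but the range-reduction step — which you yourself identify as the crux — does not work as you specify it.

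Two concrete problems. First, the specification is internally inconsistent: if $c(x)a(x)\approx x$ with $a\in[1,2]$, then your final output $f_{\text{mult}}\bigl(c(x'),P_m(a(x'))\bigr)\approx c/a\approx x/a^2$, which differs from $1/x$ by a factor of order $x^2$; so $c$ must in fact be (an approximation of) the \emph{reciprocal} scale $2^{-j}$, in which case the property ``$c(x)a(x)$ agrees with $x$'' is not the right statement and the reconstruction-plus-Lipschitz argument is about the wrong quantity. Second, and more substantively, the claimed reconstruction accuracy is unattainable within your size budget: at each breakpoint the mantissa must descend continuously from $\approx 2$ to $\approx 1$ while the scale doubles, and if both are linear ramps over a window then their product is $2^{j}(1+t)(2-t)=2^{j+1}+2^{j}t(1-t)$, which bulges by $\Theta(2^{j})$ in the middle of every window \emph{independently of} $\delta_0$; forcing the bulge down to $\cO(2^j\delta_0)$ with piecewise-linear pieces would need roughly $\delta_0^{-1/2}=\epsilon^{-1}$ extra breakpoints per dyadic level, destroying $W=\cO(\log^3\epsilon^{-1})$. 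The reduction can be rescued — e.g.\ define the inverse-scale output inside each window as $2^{-(j+1)}a(x)$, so that the ratio (inverse scale)$/a$ is exactly $2^{-(j+1)}$ across the window and the only window error is the $\cO(\delta_0 2^{-j})$ drift of $1/x$ itself — but that is a different argument (an exact cancellation in the ratio), and it is precisely the missing step. Relatedly, the ramp slopes near the lower endpoint are of order $\delta_0^{-1}2^{-j}\approx\epsilon^{-3}$ (and larger for the inverse-scale ramp), so they cannot be realized in one layer with weights $\cO(\epsilon^{-2})$ as claimed; you would need to factor the steep affine maps over a couple of layers, and likewise the Horner error propagation should either use the exact-product contraction $|u|\le 1/3$ or, if Lemma~\ref{lemma::product} is quoted verbatim, absorb a $2C$ amplification per step by taking $\epsilon_{\text{product}}$ polynomially smaller.
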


\begin{lemma}[Approximating the square root, Lemma F.9 of \cite{oko2023diffusion}]\label{lemma::square root}
    For any $0<\epsilon_{\text{root}}<1$, there exists $f_{\text{root}} \in \cF( W, \kappa, L, K)$ with $L= \cO(\log^2 \epsilon_{\text{root}}^{-1}), W= \cO(\log^3 \epsilon_{\text{root}}^{-1}), K = \cO(\log^4 \epsilon_{\text{root}}^{-1})$, and $\kappa= \cO(\epsilon_{\text{root}}^{-1})$ such that
    \begin{align}
        \left|f_{\text{root}}(x') - \sqrt{x}\right| \leq \epsilon_{\text{root}} + \frac{|x'-x|}{\sqrt{\epsilon_{\text{root}}}}, \quad \text{for all }x\in [\epsilon_{\text{root}},\epsilon_{\text{root}}^{-1}] \text{ and }x'\in \R.
    \end{align}
\end{lemma}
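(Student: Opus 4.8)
The plan is to follow the standard template used for Lemmas \ref{lemma::product} and \ref{lemma::inv}: reduce the square root to a quantity that can be computed by a short, accurate polynomial, implement that polynomial by a ReLU network, and then control the propagation of an input perturbation. First I would normalize: on the interval $x\in[\epsilon_{\text{root}},\epsilon_{\text{root}}^{-1}]$ I would rescale to $\tilde{x} = \epsilon_{\text{root}} x \in [\epsilon_{\text{root}}^2, 1]$, so that $\sqrt{x} = \epsilon_{\text{root}}^{-1/2}\sqrt{\tilde{x}}$ and it suffices to approximate $\sqrt{\cdot}$ on $[\epsilon_{\text{root}}^2,1]$ with relative precision. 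On a dyadic subinterval $[2^{-2j-2}, 2^{-2j}]$ the function $\sqrt{\cdot}$ is analytic and bounded away from its singularity, so a degree-$p$ Chebyshev (or Taylor) polynomial achieves error $2^{-j}\cdot 2^{-cp}$ for an absolute constant $c$; choosing $p = \cO(\log\epsilon_{\text{root}}^{-1})$ and $j$ up to $\cO(\log\epsilon_{\text{root}}^{-1})$ covers the whole range with uniform error $\epsilon_{\text{root}}$.

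The ReLU implementation then has three ingredients. (i) A "bit-extraction"/partition-of-unity gadget that, given $\tilde{x}$, produces for each dyadic scale $j$ a weight supported near $[2^{-2j-2},2^{-2j}]$ summing to $1$ — this is a composition of $\cO(\log\epsilon_{\text{root}}^{-1})$ trapezoid-type functions (cf. Lemma \ref{lemma::trapezoid}) and contributes $\cO(\log^2\epsilon_{\text{root}}^{-1})$ depth, $\cO(\log^3\epsilon_{\text{root}}^{-1})$ width. (ii) On each scale, evaluate the fixed degree-$p$ polynomial of the rescaled argument using $f_{\text{poly},\nb}$ / $f_{\text{mult}}$ from Lemma \ref{lemma::product}; this costs $\cO(\log p \cdot(\log\epsilon_{\text{root}}^{-1}+p))=\cO(\log^2\epsilon_{\text{root}}^{-1})$ depth per scale. (iii) Multiply by the scale weights, sum over scales via Lemma \ref{lemma::Parallel}, and finally multiply by the constant $\epsilon_{\text{root}}^{-1/2}$ together with the scale-dependent factor $2^{-j}\cdot\epsilon_{\text{root}}^{-1/2}$ to undo the normalization; these constants are at most $\cO(\epsilon_{\text{root}}^{-1})$, which is where the weight bound $\kappa=\cO(\epsilon_{\text{root}}^{-1})$ comes from. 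Aggregating through Lemmas \ref{lemma::Concat}, \ref{lemma::Parallel}, \ref{lemma::min} gives the stated $L=\cO(\log^2\epsilon_{\text{root}}^{-1})$, $W=\cO(\log^3\epsilon_{\text{root}}^{-1})$, $K=\cO(\log^4\epsilon_{\text{root}}^{-1})$.

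For the perturbation term, note $|\sqrt{x}-\sqrt{x'}|\le |x-x'|/(\sqrt{x}+\sqrt{x'})\le |x-x'|/\sqrt{\epsilon_{\text{root}}}$ whenever $x\ge\epsilon_{\text{root}}$ (and for general $x'\in\R$ one first clips $x'$ to $[\epsilon_{\text{root}},\epsilon_{\text{root}}^{-1}]$ with a bounded-Lipschitz ReLU clip, Lemma \ref{lemma::min}, which only decreases the distance to $x$), and one checks that the internal ReLU gadgets are globally Lipschitz so no extra blow-up beyond $\epsilon_{\text{root}}^{-1/2}$ occurs. Combining the approximation error $\epsilon_{\text{root}}$ with the Lipschitz bound yields $|f_{\text{root}}(x')-\sqrt{x}|\le \epsilon_{\text{root}} + |x'-x|/\sqrt{\epsilon_{\text{root}}}$. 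The main obstacle I anticipate is bookkeeping: making the dyadic partition-of-unity exact enough that the errors from neighboring scales do not accumulate, and verifying that multiplying $\cO(\log\epsilon_{\text{root}}^{-1})$ polynomial evaluations by weights (each an application of Lemma \ref{lemma::product} with its own $\epsilon_{\text{product}}$) keeps the total error at $\epsilon_{\text{root}}$ while the parameter magnitudes stay at $\cO(\epsilon_{\text{root}}^{-1})$ rather than growing polynomially in $\epsilon_{\text{root}}^{-1}$ — this forces a careful choice of the per-gadget accuracies, exactly as in the proof of Lemma \ref{lemma::inv}.
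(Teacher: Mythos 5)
The paper does not actually prove this statement: it is imported verbatim as Lemma F.9 of \cite{oko2023diffusion} (see the preamble of Appendix \ref{sec::relu operator}, ``The lemmas below are adapted from \cite{oko2023diffusion}''), so there is no in-paper argument to compare against, and any complete construction you give is necessarily a different route. Your route is a sound and standard one: dyadic decomposition of $[\epsilon_{\text{root}},\epsilon_{\text{root}}^{-1}]$, a degree-$\cO(\log \epsilon_{\text{root}}^{-1})$ Chebyshev approximation of $\sqrt{\cdot}$ on each scale (geometric convergence holds since each rescaled piece stays in a fixed Bernstein ellipse), an exact ReLU partition of unity built from trapezoids as in Lemma \ref{lemma::trapezoid}, products and sums via Lemmas \ref{lemma::product} and \ref{lemma::Parallel}, and a final exact clip of the input via Lemma \ref{lemma::min}. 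Your treatment of the perturbation term is essentially right, and can be stated even more cleanly: once the input is clipped to $[\epsilon_{\text{root}},\epsilon_{\text{root}}^{-1}]$, you only need the uniform bound $|f_{\text{root}}(\mathrm{clip}(x'))-\sqrt{\mathrm{clip}(x')}|\le \epsilon_{\text{root}}$ together with $|\sqrt{\mathrm{clip}(x')}-\sqrt{x}|\le |x'-x|/\sqrt{\epsilon_{\text{root}}}$ (valid because $x\ge \epsilon_{\text{root}}$ and clipping toward the interval containing $x$ does not increase the distance); no global Lipschitz property of the internal gadgets is needed, so that sentence of your sketch is redundant rather than load-bearing.

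The one quantitative claim your sketch does not yet deliver is $\kappa=\cO(\epsilon_{\text{root}}^{-1})$. If you evaluate the degree-$p$ local polynomials through Lemma \ref{lemma::product} with range parameter $C>1$, the weight bound of that gadget is $C^{p}=\epsilon_{\text{root}}^{-\Theta(1)}$ with an exponent governed by $C$ and the constant in $p=\cO(\log\epsilon_{\text{root}}^{-1})$, and likewise expressing the Chebyshev polynomials in the unnormalized variable produces coefficients that are polynomially large in $\epsilon_{\text{root}}^{-1}$ with an uncontrolled exponent; neither is automatically $\cO(\epsilon_{\text{root}}^{-1})$. You flag this as bookkeeping, but it is exactly the step that must be made explicit to match the stated lemma: work on each scale in a normalized variable $u=2^{-j}x\in[1,2]$ (the affine map has weights at most $\epsilon_{\text{root}}^{-1}$), evaluate the fixed polynomial of $u$ so that all intermediate values and coefficients are $\cO(1)$, and defer the single scale factor $2^{j/2}\le \epsilon_{\text{root}}^{-1/2}$ (together with the weight multiplication) to the last layers, choosing the per-gadget accuracies as in the error-balancing arguments of Appendix \ref{sec::relu construction}. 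With that adjustment the claimed $(W,\kappa,L,K)$ bounds follow; without it your construction proves the lemma only with $\kappa=\mathrm{poly}(\epsilon_{\text{root}}^{-1})$, which happens to suffice for every use of Lemma \ref{lemma::square root} in this paper (e.g., Lemmas \ref{lemma::relu sigma} and \ref{lemma::relu sigma hat} only need $\kappa=\exp(\cO(\log^2\epsilon^{-1}))$) but is weaker than the statement as written.
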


\subsection{Use ReLU Network to Approximate Functions Related to $t$}\label{sec::t function}
\begin{lemma}[Approximating $\alpha_t=e^{-t/2}$]\label{lemma::relu alpha}
    For any $0<\epsilon_{\alpha}<1$, there exists $f_{\alpha} \in \cF( W, \kappa, L, K)$ with $L= \cO(\log^2 \epsilon_{\alpha}^{-1}), W= \cO(\log \epsilon_{\alpha}^{-1}), K = \cO(\log^2 \epsilon_{\alpha}^{-1})$, and $\kappa= \exp\left(\cO(\log ^2\epsilon_{\alpha}^{-1})\right)$ such that
    \begin{align}
        \left|f_{\alpha}(t) - \alpha_t\right| \leq \epsilon_{\alpha}, \quad \text{for all } t\ge 0
    \end{align}
    holds.
\end{lemma}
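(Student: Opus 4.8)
The plan is to localise to a bounded interval and then approximate $e^{-t/2}$ there by a polynomial implemented in ReLU. First I would fix a threshold $T_\epsilon = \Theta(\log\epsilon_\alpha^{-1})$ large enough that $e^{-T_\epsilon/2}\le \epsilon_\alpha/4$, and precompose the whole construction with the exact ReLU map $\tilde t=\min(t,T_\epsilon)=t-\sigma(t-T_\epsilon)$ (a constant-size network, cf.\ Lemma~\ref{lemma::min}), so the rest of the network only ever sees inputs in $[0,T_\epsilon]$. On $t\ge T_\epsilon$ the target value $\alpha_t=e^{-t/2}$ already lies within $\epsilon_\alpha/4$ of $e^{-T_\epsilon/2}$, so it is enough to build a sub-network $q^{\mathrm{ReLU}}$ with $|q^{\mathrm{ReLU}}(\tilde t)-e^{-\tilde t/2}|\le \epsilon_\alpha/2$ for all $\tilde t\in[0,T_\epsilon]$; a triangle inequality at $\tilde t=T_\epsilon$ then yields $|f_\alpha(t)-\alpha_t|\le \epsilon_\alpha$ for every $t\ge0$, with $f_\alpha = q^{\mathrm{ReLU}}\circ(\,\cdot\wedge T_\epsilon)$.

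For the bounded interval I would approximate $e^{-t/2}$ by its degree-$p$ Taylor polynomial $q(t)=\sum_{k=0}^{p}\frac{(-1/2)^k}{k!}t^k$. As in the estimate \eqref{equ::exp error} used for the Gaussian kernel, the remainder on $[0,T_\epsilon]$ is at most $\frac{(T_\epsilon/2)^{p+1}}{(p+1)!}\le\big(\tfrac{eT_\epsilon}{2(p+1)}\big)^{p+1}$, so taking $p=\Theta(T_\epsilon+\log\epsilon_\alpha^{-1})=\Theta(\log\epsilon_\alpha^{-1})$ makes it $\le\epsilon_\alpha/4$; note $|c_k|\le1$ for every coefficient $c_k=\frac{(-1/2)^k}{k!}$.

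Implementing $q$ in ReLU is where the width budget forces a specific construction: rather than approximating the $p+1$ monomials in parallel (which would cost width $\Theta(p^2)$), I would use a Horner/running-accumulator scheme. Working with the rescaled input $u=\tilde t/T_\epsilon\in[0,1]$ and coefficients $a_k=c_{p-k}T_\epsilon^{\,p-k}$, I chain $p$ stages, where stage $k$ maps $(u,s_{k-1})\mapsto(u,\,a_k+u\,s_{k-1})$: the product $u\,s_{k-1}$ is computed by one instance of the two-input network $f_{\mathrm{mult}}$ of Lemma~\ref{lemma::product}, the copy of $u$ is carried forward by an identity sub-network (Lemma~\ref{lemma::identity}), and $a_k$ is added affinely, so that after $p$ stages $s_p=q(\tilde t)$. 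All intermediate quantities are bounded ($u\le1$, and each partial Horner sum by $\sum_j\frac{(T_\epsilon/2)^j}{j!}=e^{T_\epsilon/2}=O(\epsilon_\alpha^{-1})$), hence every $f_{\mathrm{mult}}$ has constant width and parameter magnitude $\mathrm{poly}(\epsilon_\alpha^{-1})$. Concatenating the clip, the affine rescaling and the $p$ Horner stages via Lemmas~\ref{lemma::Concat} and~\ref{lemma::Parallel} gives $W=O(p)=O(\log\epsilon_\alpha^{-1})$, $L=\sum_kL_k=O(p\log\epsilon_\alpha^{-1})=O(\log^2\epsilon_\alpha^{-1})$, $K=O(p\log\epsilon_\alpha^{-1})=O(\log^2\epsilon_\alpha^{-1})$ and $\kappa=\exp(O(\log^2\epsilon_\alpha^{-1}))$, matching the claim.

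The point to be careful with is the error bookkeeping across the $p$-fold composition: each stage contributes an $f_{\mathrm{mult}}$ error and propagates the accumulated error through one multiplication by $u$. Because of the rescaling $u\in[0,1]$ there is no per-stage amplification, so choosing the per-stage accuracy as $\epsilon_{\mathrm{mult}}=\Theta(\epsilon_\alpha/p)$ suffices; this only costs $\log\epsilon_{\mathrm{mult}}^{-1}=O(\log\epsilon_\alpha^{-1})$ in each stage's depth and sparsity, hence does not affect the stated orders. Adding the Taylor remainder $\le\epsilon_\alpha/4$ and the truncation error $\le\epsilon_\alpha/4$ gives the total bound $\le\epsilon_\alpha$. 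One also checks the degenerate input $u=0$, where $f_{\mathrm{mult}}$ outputs exactly $0$ (Lemma~\ref{lemma::product}), so the recursion behaves correctly at the left endpoint. (Alternatively, one could reach a comparable bound by writing $e^{-t/2}=\big(e^{-t/2^{m+1}}\big)^{2^{m}}$, approximating $e^{-s}$ on a tiny interval by a low-degree polynomial and squaring $m=\Theta(\log\log\epsilon_\alpha^{-1})$ times with $f_{\mathrm{square}}$; the Horner route is chosen here because it matches the advertised hyperparameters most transparently.)
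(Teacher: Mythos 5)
Your proposal is correct, and its skeleton coincides with the paper's proof: truncate the time variable at $T=\Theta(\log\epsilon_\alpha^{-1})$ so that the tail of $e^{-t/2}$ is already within $O(\epsilon_\alpha)$ of a constant, approximate $e^{-t/2}$ on $[0,T]$ by its Taylor polynomial of degree $\Theta(\log\epsilon_\alpha^{-1})$, implement that polynomial with the product networks of Lemma~\ref{lemma::product}, and clip the input (via $\min/\max$ layers) so that for $t>T$ the network outputs its value at $T$ and a triangle inequality finishes the bound. Where you differ is the implementation of the polynomial: the paper approximates the $k$ monomials $t^i$ in parallel with $f_{\text{poly},i}$ and sums them via Lemma~\ref{lemma::Parallel}, whereas you rescale to $u=\tilde t/T\in[0,1]$ and evaluate by a Horner accumulator, chaining $p$ two-input $f_{\mathrm{mult}}$ blocks while carrying $u$ with an identity sub-network. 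This is a genuine (if local) improvement in the bookkeeping: the parallel-monomial route naively costs width $\Theta(p^2)$ (each $f_{\text{poly},i}$ has width $\Theta(i)$), so your Horner chain is the more transparent way to land on the advertised $W=\cO(\log\epsilon_\alpha^{-1})$, at the price of depth $\cO(p\log\epsilon_\alpha^{-1})=\cO(\log^2\epsilon_\alpha^{-1})$, which is exactly the stated budget. Your error propagation is also handled correctly: by applying Lemma~\ref{lemma::product} to the actual accumulator value (so the $\epsilon_1$ term is not invoked) and using $|u|\le 1$, the per-stage errors add rather than amplify, and the intermediate Horner sums stay bounded by $e^{T/2}=\cO(\epsilon_\alpha^{-1})$, so the per-stage accuracy $\Theta(\epsilon_\alpha/p)$ and the resulting $\kappa=\mathrm{poly}(\epsilon_\alpha^{-1})\le\exp(\cO(\log^2\epsilon_\alpha^{-1}))$ are all consistent with the claimed hyperparameters. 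No gaps.
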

\begin{proof}
For a fixed \(T > 0\), to be chosen later, we utilize the Taylor expansion to establish the following inequality for \(0 \leq t \leq T\) and $k \in \N_{+}$:

\[
\left| e^{-\frac{t}{2}} - \sum_{i=0}^{k-1} \frac{(-1)^i}{i!} \left(\frac{t}{2}\right)^i \right| \leq \frac{(T/2)^k}{k!}.
\]

Since \(\frac{T^k}{k!} \leq \left(\frac{eT}{k}\right)^k\), we set \(T = 2\log 3\epsilon^{-1}_{\alpha}\) and \(k = \max(eT, \log_{2} 3\epsilon^{-1}_{\alpha})\) to bound the right-hand side by \(\frac{\epsilon_{\alpha}}{3}\). By approximating \(x^i\) using \(f_{\text{poly},i}\) in Lemma \ref{lemma::product} with \(\epsilon_{\text{poly}} = \frac{\epsilon_{\alpha}}{3k}\) and summing them up using Lemma \ref{lemma::Parallel}, we construct a ReLU neural network \(g_{\alpha} \in \mathcal{F}(W, \kappa, L, K)\) with \(L \leq \mathcal{O}(T^2 + \log^2 \epsilon_{\alpha}^{-1})\), \(W = \mathcal{O}(T + \log \epsilon_{\alpha}^{-1})\), \(K = \mathcal{O}(T^2 + \log^2 \epsilon_{\alpha}^{-1})\), and \(\kappa = \exp\left(\log T \cdot \mathcal{O}(T + \epsilon_{\alpha}^{-1})\right)\) such that

\[
\left| g_{\alpha}(t) - \sum_{i=0}^{k-1} \frac{(-1)^i}{i!} \left(\frac{t}{2}\right)^i \right| \leq \frac{\epsilon_{\alpha}}{3}.
\]

This implies \(\left| g_{\alpha}(t) - e^{-t/2} \right| \leq \frac{2\epsilon_{\alpha}}{3} < \epsilon_{\alpha}\). Finally, by adding a layer of the minimum operator and a layer of the maximum operator before this network to constrain the input \(t\) within \([0, T]\), we denote the entire network by \(f_{\alpha}\). Thus, we have \(f_{\alpha}(t) = g_{\alpha}(t)\) for \(0 \leq t \leq T\) and \(f_{\alpha}(t) = g_{\alpha}(T)\) for \(t > T\). Thus, we ensure that for any \(t > T\),

\[
\left| f_{\alpha}(t) - e^{-t/2} \right| \leq \left| e^{-t/2} - e^{-T/2} \right| + \left| f_{\alpha}(t) - e^{-T/2} \right| \leq \frac{\epsilon_{\alpha}}{3} + \frac{2\epsilon_{\alpha}}{3} = \epsilon_{\alpha}.
\]
Moreover, by the choice of $T$, we verify that the network parameters $ \cF( W, \kappa, L, K)$ satisfy 
    \begin{align*}
L &= \mathcal{O}(\log^2 \epsilon_{\alpha}^{-1}),~ W = \mathcal{O}(\log  \epsilon_{\alpha}^{-1}),~
K = \mathcal{O}(\log^2 \epsilon_{\alpha}^{-1})~ \text{ and } \kappa = \exp\left(\mathcal{O}(\log^2\epsilon_{\alpha}^{-1})\right).
\end{align*} The proof is complete.
\end{proof}

Similarly, we can readily extend the approximation of \(\alpha_t\) to \(\alpha_t^{2} = e^{t}\) by doubling the coefficients in the first linear layer. We denote the corresponding network and error as \(f_{\alpha^2}\) and \(\epsilon_{\alpha^2}\), respectively. Furthermore, the Taylor expansion technique applies to the approximation of \(1/\alpha_t = e^{t/2}\), yielding the following direct corollary.

\begin{lemma}[Approximating $1/\alpha_t=e^{t/2}$]\label{lemma::relu alpha-1}
    For any $\epsilon_{\alpha^{-1}}\in (0, 1)$ and terminal time $T>0$, there exists $f_{\alpha^{-1}} \in \cF( W, \kappa, L, K)$ with $L= \cO(T^2+\log^2 \epsilon_{\alpha^{-1}}^{-1}), W= \cO(T+\log \epsilon_{\alpha^{-1}}^{-1}), K = \cO(T^2+\log^2 \epsilon_{\alpha^{-1}}^{-1})$, and $\kappa= \exp\left(\log T\cdot \cO(T+\epsilon_{\alpha^{-1}}^{-1})\right)$ such that
    \begin{align}
        \left|f_{\alpha^{-1}}(t) -1/ \alpha_t\right| \leq \epsilon_{\alpha^{-1}} , \quad \text{for all } 0\le t \le T
    \end{align}
    holds, and $\abs{f_{\alpha^{-1}}(t)}\le \exp(T/2)$ for $t>T$.
\end{lemma}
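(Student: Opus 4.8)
The plan is to mimic the proof of Lemma~\ref{lemma::relu alpha} essentially verbatim, replacing the alternating Taylor series of $e^{-t/2}$ by the all-positive Taylor series of $e^{t/2}$, and to append one extra output clip to enforce the boundedness claim for $t>T$. First I would fix a positive integer $k$ and use the exponential tail estimate: for $0\le t\le T$ and $k\ge T$,
\[
\left|e^{t/2}-\sum_{i=0}^{k-1}\frac{1}{i!}\Big(\frac{t}{2}\Big)^{i}\right|\le\sum_{i\ge k}\frac{(T/2)^{i}}{i!}\le 2\,\frac{(T/2)^{k}}{k!}\le 2\Big(\frac{eT}{2k}\Big)^{k},
\]
where the factor $2$ just bounds the geometric tail once $k>T/2$. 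Choosing $k=\max\!\big(eT,\ \log_{2}(3\epsilon_{\alpha^{-1}}^{-1})\big)=\cO\big(T+\log\epsilon_{\alpha^{-1}}^{-1}\big)$ makes the right-hand side at most $\epsilon_{\alpha^{-1}}/3$. The only departure from the $\alpha_t$ case is that all terms are positive, which is harmless for this bound.

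Second, I would realize the truncated polynomial $\sum_{i=0}^{k-1}\frac{1}{i!\,2^{i}}\,t^{i}$ by a ReLU network: approximate each monomial $t^{i}$ on $[-C,C]$ with $C=\max(T,1)$ using $f_{\text{poly},i}$ from Lemma~\ref{lemma::product} with $\epsilon_{\text{poly}}=\epsilon_{\alpha^{-1}}/(3k)$, then combine the outputs via the summation network $f_{\text{sum}}$ of Lemma~\ref{lemma::Parallel}, folding the bounded coefficients $1/(i!\,2^{i})\le 1$ into the final linear layer. This produces a network $g_{\alpha^{-1}}$ with $\big|g_{\alpha^{-1}}(t)-e^{t/2}\big|\le 2\epsilon_{\alpha^{-1}}/3<\epsilon_{\alpha^{-1}}$ for all $t\in[0,T]$.

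Third, as in the proof of Lemma~\ref{lemma::relu alpha}, I would prepend a maximum-then-minimum layer (Lemma~\ref{lemma::min}) that replaces the input by $\min(\max(t,0),T)$, and additionally append the output clip $f_{\text{clip},\exp(T/2)}(\cdot)=\min(\max(\cdot,-e^{T/2}),e^{T/2})$; call the result $f_{\alpha^{-1}}$. For $t\in[0,T]$ the argument is unchanged and $1/\alpha_t=e^{t/2}\le e^{T/2}$, so clipping a value within $\epsilon_{\alpha^{-1}}$ of $e^{t/2}$ to $[-e^{T/2},e^{T/2}]$ keeps it within $\epsilon_{\alpha^{-1}}$, giving $|f_{\alpha^{-1}}(t)-1/\alpha_t|\le\epsilon_{\alpha^{-1}}$; for $t>T$ the argument is frozen at $T$ and the value is clipped to $[-e^{T/2},e^{T/2}]$, giving $|f_{\alpha^{-1}}(t)|\le e^{T/2}$. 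Finally, tallying layers, width, sparsity and weight magnitude through Lemmas~\ref{lemma::product}, \ref{lemma::Parallel}, \ref{lemma::min} exactly as in the $\alpha_t$ argument with $k=\cO(T+\log\epsilon_{\alpha^{-1}}^{-1})$ yields $L=\cO(T^{2}+\log^{2}\epsilon_{\alpha^{-1}}^{-1})$, $W=\cO(T+\log\epsilon_{\alpha^{-1}}^{-1})$, $K=\cO(T^{2}+\log^{2}\epsilon_{\alpha^{-1}}^{-1})$, and $\kappa\le C^{k}=\exp(k\log C)=\exp\!\big(\log T\cdot\cO(T+\epsilon_{\alpha^{-1}}^{-1})\big)$.

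There is no serious obstacle: the statement is a direct corollary of the construction already carried out for $\alpha_t$. The only points requiring a moment of care are (i) checking that the exponential tail bound still goes through with all-positive terms (it does, at the cost of a harmless factor $2$), and (ii) enforcing the exact bound $|f_{\alpha^{-1}}(t)|\le\exp(T/2)$ for $t>T$, which is why I include the extra output clip rather than relying on the looser $g_{\alpha^{-1}}(T)\le e^{T/2}+\epsilon_{\alpha^{-1}}$.
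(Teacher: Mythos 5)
Your proposal is correct and follows essentially the same route as the paper, which obtains this lemma as a direct corollary of the Taylor-expansion construction used for $\alpha_t$ in Lemma~\ref{lemma::relu alpha} (truncated series, monomials via Lemma~\ref{lemma::product}, summation via Lemma~\ref{lemma::Parallel}, input frozen to $[0,T]$ by min/max layers). Your extra output clip to $[-e^{T/2},e^{T/2}]$ is a sensible minor refinement that pins down the exact bound $|f_{\alpha^{-1}}(t)|\le \exp(T/2)$ for $t>T$, which the freeze-at-$T$ argument alone only gives up to an additive $\epsilon_{\alpha^{-1}}$.
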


\begin{lemma}[Approximating $\sigma_t=\sqrt{1-e^{-t}}$]\label{lemma::relu sigma}
For $\epsilon_{\sigma}\in (0, 1)$, there exists $f_{\sigma} \in \cF( W, \kappa, L, K)$ with $L= \cO(\log^2 \epsilon_{\sigma}^{-1}), W= \cO(\log^3 \epsilon_{\sigma}^{-1}), K = \cO(\log^4 \epsilon_{\sigma}^{-1})$, and $\kappa= \exp\left(\cO(\log ^2\epsilon_{\sigma}^{-1})\right)$ such that
    \begin{align}
        \left|f_{\sigma}(t) - \sigma_t\right| \leq \epsilon_{\sigma}, \quad \text{for all } t\ge \epsilon_{\sigma}
    \end{align}
    holds.
\end{lemma}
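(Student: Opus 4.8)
The plan is to realize $\sigma_t=\sqrt{1-e^{-t}}$ as a composition of three elementary maps — $t\mapsto e^{-t}$, then $u\mapsto 1-u$, then $v\mapsto\sqrt{v}$ — and to approximate the first and last by the ReLU building blocks already available, gluing them with Lemma~\ref{lemma::Concat}. Concretely, I would take the network $f_{\alpha^2}$ described in the remark following Lemma~\ref{lemma::relu alpha} (obtained from $f_\alpha$ by doubling the coefficients of the first linear layer), which satisfies $|f_{\alpha^2}(t)-e^{-t}|\le\delta_1$ for all $t\ge 0$ with $L=\mathcal{O}(\log^2\delta_1^{-1})$, $W=\mathcal{O}(\log\delta_1^{-1})$, $K=\mathcal{O}(\log^2\delta_1^{-1})$, $\kappa=\exp(\mathcal{O}(\log^2\delta_1^{-1}))$; the built-in input clipping of that construction handles large $t$, since there both $e^{-t}$ and the clipped output are negligible. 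Appending one affine layer gives a sub-network computing $1-f_{\alpha^2}(t)$, still within $\delta_1$ of $1-e^{-t}$.

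The key observation for the square-root stage is a lower bound on its argument over the region of interest: for $t\ge\epsilon_\sigma\in(0,1)$ one has $1-e^{-t}\ge 1-e^{-\epsilon_\sigma}\ge\epsilon_\sigma/2$ (using $1-e^{-x}\ge x/2$ on $[0,1]$), while trivially $1-e^{-t}<1$. Hence, choosing $\epsilon_{\text{root}}=\epsilon_\sigma/2$, the true argument $x=1-e^{-t}$ lies in $[\epsilon_{\text{root}},\epsilon_{\text{root}}^{-1}]$ for every $t\ge\epsilon_\sigma$, so Lemma~\ref{lemma::square root} applies with $x'=1-f_{\alpha^2}(t)$ and yields
\begin{align*}
\bigl| f_{\text{root}}( 1 - f_{\alpha^2}(t) ) - \sigma_t \bigr|
&\le \epsilon_{\text{root}} + \frac{\delta_1}{\sqrt{\epsilon_{\text{root}}}} \\
&= \frac{\epsilon_\sigma}{2} + \frac{\delta_1}{\sqrt{\epsilon_\sigma/2}}.
\end{align*}
Taking $\delta_1=c\,\epsilon_\sigma^{3/2}$ for a small absolute constant $c$ bounds the second term by $\epsilon_\sigma/2$, giving the claimed accuracy. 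I would then set $f_\sigma:=f_{\text{root}}\circ(1-f_{\alpha^2})$ (optionally prefixing a $\max(\cdot,\epsilon_\sigma)$ layer via Lemma~\ref{lemma::min} so that the argument bound holds by construction rather than only on the claimed region), and read off the total size from Lemma~\ref{lemma::Concat}: with $\delta_1\asymp\epsilon_\sigma^{3/2}$ and $\epsilon_{\text{root}}\asymp\epsilon_\sigma$ the depths add to $\mathcal{O}(\log^2\epsilon_\sigma^{-1})$, the width is dominated by $f_{\text{root}}$ at $\mathcal{O}(\log^3\epsilon_\sigma^{-1})$, the sparsity by $f_{\text{root}}$ at $\mathcal{O}(\log^4\epsilon_\sigma^{-1})$, and the weight magnitude by $f_{\alpha^2}$ at $\exp(\mathcal{O}(\log^2\epsilon_\sigma^{-1}))$, matching the statement.

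The one delicate point — and the main obstacle — is the interplay between the non-smoothness of $\sqrt{\cdot}$ near $0$ and the error $\delta_1$ inherited from the exponential approximation: the factor $1/\sqrt{\epsilon_{\text{root}}}$ in Lemma~\ref{lemma::square root} forces $\delta_1$ to be polynomially (not merely logarithmically) smaller than $\epsilon_\sigma$, and one must simultaneously keep $\epsilon_{\text{root}}$ no larger than the genuine lower bound $1-e^{-\epsilon_\sigma}$ on the argument, so that the hypothesis $x\in[\epsilon_{\text{root}},\epsilon_{\text{root}}^{-1}]$ of that lemma really holds for all $t\ge\epsilon_\sigma$. Everything else is routine bookkeeping through the concatenation and min/max lemmas of Appendix~\ref{sec::relu construction}.
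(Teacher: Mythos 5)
Your proposal is correct and matches the paper's proof essentially verbatim: the paper also defines $f_{\sigma}=f_{\text{root}}(1-f_{\alpha^2})$, bounds the error by $\epsilon_{\text{root}}+\epsilon_{\alpha^2}/\sqrt{\epsilon_{\text{root}}}$, and takes $\epsilon_{\text{root}}=\cO(\epsilon_\sigma)$ with $\epsilon_{\alpha^2}\asymp\epsilon_\sigma^{3/2}$ before reading off the network size from the concatenation lemma. Your justification that the argument $1-e^{-t}$ stays above $\epsilon_\sigma/2$ for $t\ge\epsilon_\sigma$ plays the same role as the paper's choice $\epsilon_{\text{root}}=\min(\epsilon_\sigma/2,\sqrt{1-e^{-\epsilon_\sigma}})$, so there is no substantive difference.
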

\begin{proof}
We define the network as $f_{\sigma}=f_{\text{root}}\left(1-f_{\alpha^2} \right)$. According to Lemmas \ref{lemma::square root} and \ref{lemma::relu alpha}, the approximation error gives rise to $\epsilon_{\text{root}}+\frac{\epsilon_{\alpha^2}}{\sqrt{\epsilon_{\text{root}}}}$. Thus, by setting $ \epsilon_{\text{root}}=\min(\epsilon_{\sigma}/2, \sqrt{1-e^{-\epsilon_{\sigma}}}  )=\cO(\epsilon_{\sigma}) $ and $\epsilon_{\alpha^2}=\sqrt{\epsilon_{\text{root}}}\epsilon_{\sigma}/2   $, we ensure that the total error is bounded by $\epsilon_{\sigma}$. Moreover, according to Lemmas \ref{lemma::Concat}, \ref{lemma::square root} and \ref{lemma::relu alpha}, we can verify that the network parameters $ \cF( W, \kappa, L, K)$ satisfy 
    \begin{align*}
L &= \mathcal{O}(\log^2 \epsilon_{{\sigma}}^{-1}),~ W = \mathcal{O}(\log^3 \epsilon_{{\sigma}}^{-1}),~
K = \mathcal{O}(\log^4 \epsilon_{{\sigma}}^{-1})~ \text{ and } \kappa = \exp\left(\mathcal{O}(\log^2\epsilon_{{\sigma}}^{-1})\right).
\end{align*} The proof is complete.
\end{proof}

\begin{lemma}[Approximating $\hat{\alpha}_t={e^{-t/2}}/{(C_2+(1-C_2)e^{-t}})$]\label{lemma::relu alpha hat}
    For any $0<\epsilon_{\hat{\alpha}}<1$, there exists $f_{\hat{\alpha}} \in \cF( W, \kappa, L, K)$ with $L= \cO(\log^2 \epsilon_{\hat{\alpha}}^{-1}), W= \cO(\log^3 \epsilon_{\hat{\alpha}}^{-1}), K = \cO(\log^4 \epsilon_{\hat{\alpha}}^{-1})$, and $\kappa= \exp\left(\cO(\log ^2\epsilon_{\hat{\alpha}}^{-1})\right)$ such that
    \begin{align}
        \left|f_{\hat{\alpha}}(t) - \hat{\alpha}_t\right| \leq \epsilon_{\hat{\alpha}}, \quad \text{for all } t\ge 0
    \end{align}
    holds.
\end{lemma}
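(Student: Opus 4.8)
\textbf{Proof proposal for Lemma~\ref{lemma::relu alpha hat}.}
The plan is to express $\hat{\alpha}_t$ as a composition of functions for which we already have ReLU approximators, namely the exponential-type functions $\alpha_t = e^{-t/2}$ and $\alpha_t^2 = e^{-t}$ from Lemma~\ref{lemma::relu alpha}, together with the reciprocal function from Lemma~\ref{lemma::inv}. Concretely, write
\begin{align*}
\hat{\alpha}_t = \frac{\alpha_t}{C_2 + (1-C_2)\alpha_t^2} = \alpha_t \cdot \frac{1}{C_2 + (1-C_2)\alpha_t^2},
\end{align*}
so the network will be built as $f_{\hat{\alpha}}(t) = f_{\text{mult}}\big(f_{\alpha}(t),\, f_{-1}\big(C_2 + (1-C_2) f_{\alpha^2}(t)\big)\big)$, where the affine map $u \mapsto C_2 + (1-C_2) u$ is folded into the first linear layer of $f_{-1}$ at no cost.

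First I would record the analytic facts needed to control error propagation. Since $\alpha_t \in (0,1]$ and $\alpha_t^2 \in (0,1]$ for all $t \ge 0$, the denominator $g(t) := C_2 + (1-C_2)\alpha_t^2$ is a convex combination endpoint-wise, hence bounded: $g(t) \in [\min(1,C_2),\max(1,C_2)] \subseteq [c_{\min}, c_{\max}]$ for constants $c_{\min}, c_{\max} > 0$ depending only on $C_2$. This uniform lower and upper bound is the crucial point — it means $1/g(t)$ stays in a fixed compact interval $[\epsilon_{\text{inv}}, \epsilon_{\text{inv}}^{-1}]$ once $\epsilon_{\text{inv}}$ is chosen smaller than $c_{\min}$, so Lemma~\ref{lemma::inv} applies with a Lipschitz-type sensitivity factor $1/\epsilon_{\text{inv}}^2 = O(1)$. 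Likewise $|\hat{\alpha}_t| \le 1/c_{\min}$ is bounded, and the inputs to $f_{\text{mult}}$ lie in a fixed box $[-C,C]$ with $C = O(1)$, so its error is $\epsilon_{\text{product}} + 2 C \epsilon_1$ with $\epsilon_1$ the input perturbation.

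Next I would chain the error estimates: set $\epsilon_{\alpha} = \epsilon_{\alpha^2} = \Theta(\epsilon_{\hat{\alpha}})$, then $f_{-1}$ sees an input perturbed by $|1-C_2|\epsilon_{\alpha^2}$ and outputs within $\epsilon_{\text{inv}} + |1-C_2|\epsilon_{\alpha^2}/\epsilon_{\text{inv}}^2$ of $1/g(t)$; choosing $\epsilon_{\text{inv}} = \Theta(\epsilon_{\hat{\alpha}})$ (but below $c_{\min}$) keeps this $O(\epsilon_{\hat{\alpha}})$. Feeding the two approximators into $f_{\text{mult}}$ with $\epsilon_{\text{product}} = \Theta(\epsilon_{\hat{\alpha}})$ yields a total error $O(\epsilon_{\hat{\alpha}})$, which after rescaling constants is $\le \epsilon_{\hat{\alpha}}$. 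Finally the hyperparameter bookkeeping: Lemma~\ref{lemma::relu alpha} gives $L, W, K = O(\log^2 \epsilon_{\hat{\alpha}}^{-1})$ and $\kappa = \exp(O(\log^2 \epsilon_{\hat{\alpha}}^{-1}))$ for $f_{\alpha}, f_{\alpha^2}$; Lemma~\ref{lemma::inv} gives $L = O(\log^2 \epsilon_{\hat{\alpha}}^{-1})$, $W = O(\log^3 \epsilon_{\hat{\alpha}}^{-1})$, $K = O(\log^4 \epsilon_{\hat{\alpha}}^{-1})$, $\kappa = O(\epsilon_{\hat{\alpha}}^{-2})$; Lemma~\ref{lemma::product} contributes only logarithmic overhead; and Lemmas~\ref{lemma::Concat}, \ref{lemma::Parallel}, \ref{lemma::identity} (padding the shallower $f_{\alpha}$ branch to match depths) combine these additively, giving the claimed $L = O(\log^2 \epsilon_{\hat{\alpha}}^{-1})$, $W = O(\log^3 \epsilon_{\hat{\alpha}}^{-1})$, $K = O(\log^4 \epsilon_{\hat{\alpha}}^{-1})$, $\kappa = \exp(O(\log^2 \epsilon_{\hat{\alpha}}^{-1}))$. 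This lemma is essentially routine given the toolkit already assembled; the only place requiring genuine care — and hence the "main obstacle," though a mild one — is verifying the uniform positivity of the denominator $g(t)$ so that the reciprocal approximation is stable uniformly in $t \ge 0$, and tracking that the constant $C_2$ does not degrade the compact-interval assumptions of Lemmas~\ref{lemma::inv} and~\ref{lemma::product}.
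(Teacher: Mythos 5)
Your construction is exactly the paper's: $f_{\hat{\alpha}} = f_{\text{mult}}\bigl(f_{-1}(C_2+(1-C_2)f_{\alpha^2}),\, f_{\alpha}\bigr)$, with the uniform bounds $\min(1,C_2) \le C_2+(1-C_2)e^{-t} \le \max(1,C_2)$ justifying the use of Lemmas~\ref{lemma::inv} and~\ref{lemma::product} on fixed compact ranges, followed by concatenation bookkeeping. The gap is in your error calibration. Lemma~\ref{lemma::inv} propagates an input perturbation $|x'-x|$ with the factor $1/\epsilon_{\text{inv}}^2$, not with a factor of order $1/c_{\min}^2$; your remark that this sensitivity factor is $O(1)$ would require $\epsilon_{\text{inv}}$ to be a fixed constant, which you cannot afford because the intrinsic error of $f_{-1}$ is itself $\epsilon_{\text{inv}}$ and must be driven down to $O(\epsilon_{\hat{\alpha}})$. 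With your stated choices $\epsilon_{\alpha^2}=\Theta(\epsilon_{\hat{\alpha}})$ and $\epsilon_{\text{inv}}=\Theta(\epsilon_{\hat{\alpha}})$, the propagated term $|1-C_2|\,\epsilon_{\alpha^2}/\epsilon_{\text{inv}}^2=\Theta(\epsilon_{\hat{\alpha}}^{-1})$ diverges rather than being $O(\epsilon_{\hat{\alpha}})$, so the claimed total error bound does not follow as written. The repair is what the paper does: take $\epsilon_{\alpha^2}\lesssim \epsilon_{\hat{\alpha}}\epsilon_{\text{inv}}^2=\Theta(\epsilon_{\hat{\alpha}}^3)$ (the paper sets $\epsilon_{\text{product}}=\epsilon_{\hat{\alpha}}/2$, $\epsilon_{\text{inv}}=\epsilon_{\alpha}=\epsilon_{\hat{\alpha}}/(8\max(1,1/C_2))$, and $\epsilon_{\alpha^2}=\epsilon_{\hat{\alpha}}\epsilon_{\text{inv}}^2/(8\max(1,1/C_2))$). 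Since only $\log(1/\epsilon_{\alpha^2})=\Theta(\log(1/\epsilon_{\hat{\alpha}}))$ enters the size bounds of Lemma~\ref{lemma::relu alpha}, this correction leaves your hyperparameter accounting, and hence the stated $(W,\kappa,L,K)$, unchanged; your observation that $\epsilon_{\text{inv}}<c_{\min}$ guarantees the domain condition of Lemma~\ref{lemma::inv} is correct and is the only other point needing care.
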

\begin{proof}
We express the function with the network
\[f_{\hat{\alpha}} = f_{\text{mult}}\left(f_{-1}(C_1 + (1 - C_1) \cdot f_{\alpha^2}), f_{\alpha}\right).\]
According to Lemmas \ref{lemma::min}, \ref{lemma::product}, \ref{lemma::inv} and \ref{lemma::relu alpha}, the approximation error is bounded by
\[ \epsilon_{\text{product}} + 2 \max\left(1, \frac{1}{C_2}\right) \max\left(\epsilon_{\text{inv}} + \frac{\epsilon_{\alpha^2}}{\epsilon_{\text{inv}}^2}, \epsilon_{\alpha}\right).\]
By taking 
\begin{align*}
    \epsilon_{\text{product}} = \frac{\epsilon_{\hat{\alpha}}}{2}, \epsilon_{\text{inv}} = \frac{\epsilon_{\hat{\alpha}}}{8\max(1,1/C_2)}, \epsilon_{\alpha^2} = \frac{\epsilon_{\hat{\alpha}}\epsilon_{\text{inv}}^2}{8\max(1,1/C_2)}, \text{ and } \epsilon_{\alpha} = \frac{\epsilon_{\hat{\alpha}}}{8\max(1,1/C_2)},
\end{align*}
we ensure that the total error is bounded by \(\epsilon_{\hat{\alpha}}\). Since the reciprocals of all the error terms ($\epsilon_{\alpha}$, $\epsilon_{\text{inv}}$, e.t.c.) are polynomials of \(\epsilon_{\hat{\alpha}}\), according to Lemmas \ref{lemma::Concat}, \ref{lemma::min}, \ref{lemma::product}, \ref{lemma::inv} and \ref{lemma::relu alpha},   the parameters $(W, \kappa, L, K)$ of the entire network satisfy
\begin{align*}
L &= \mathcal{O}(\log^2 \epsilon_{\hat{\sigma}}^{-1}),~ W = \mathcal{O}(\log^3 \epsilon_{\hat{\sigma}}^{-1}),~
K = \mathcal{O}(\log^4 \epsilon_{\hat{\sigma}}^{-1})~ \text{ and } \kappa = \exp\left(\mathcal{O}(\log^2\epsilon_{\hat{\sigma}}^{-1})\right).
\end{align*}
The proof is complete.
\end{proof}

\begin{lemma}[Approximating $\hat{\sigma}_t$]\label{lemma::relu sigma hat}
   For any $0<\epsilon_{\hat{\sigma}}<1$, there exists $f_{\hat{\sigma}} \in \cF( W, \kappa, L, K)$ with $L= \cO(\log^2 \epsilon_{\hat{\sigma}}^{-1}), W= \cO(\log^3 \epsilon_{\hat{\sigma}}^{-1}), K = \cO(\log^4 \epsilon_{\hat{\sigma}}^{-1})$, and $\kappa= \exp\left(\cO(\log ^2\epsilon_{\hat{\sigma}}^{-1})\right)$ such that
    \begin{align}
        \left|f_{\hat{\sigma}}(t) - \hat{\sigma}_t\right| \leq \epsilon_{\hat{\sigma}}, \quad \text{for all } t\ge \epsilon_{\hat{\sigma}}
    \end{align}
    holds.
\end{lemma}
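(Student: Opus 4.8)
The statement to prove is the ReLU approximation of $\hat{\sigma}_t = \sigma_t/(\alpha_t^2 + C_2\sigma_t^2)^{1/2}$. Using $\alpha_t^2 = e^{-t}$ and $\sigma_t^2 = 1 - e^{-t}$, we can rewrite $\alpha_t^2 + C_2\sigma_t^2 = C_2 + (1 - C_2)e^{-t}$, so that $\hat{\sigma}_t = \sqrt{(1-e^{-t})/(C_2 + (1-C_2)e^{-t})}$. The plan is to build $f_{\hat\sigma}$ by composing the already-constructed building blocks: approximate $e^{-t}$ (equivalently $1 - e^{-t}$, via $f_\sigma$ or directly via $f_{\alpha^2}$), approximate the reciprocal of $C_2 + (1-C_2)e^{-t}$ using $f_{-1}$ from Lemma~\ref{lemma::inv}, multiply the two using $f_{\text{mult}}$ from Lemma~\ref{lemma::product} to get an approximation of $(1-e^{-t})/(C_2+(1-C_2)e^{-t})$, and finally apply the square-root network $f_{\text{root}}$ from Lemma~\ref{lemma::square root}. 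A clean way to organize it: first form $f_{\hat\alpha}$-style numerator and use $f_{\hat\sigma} = f_{\text{root}}\big(f_{\text{mult}}(1 - f_{\alpha^2},\, f_{-1}(C_2 + (1-C_2)f_{\alpha^2}))\big)$.

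First I would verify the domain conditions needed for each sub-network to have the stated accuracy. For $t \ge \epsilon_{\hat\sigma}$ we have $1 - e^{-t} \ge 1 - e^{-\epsilon_{\hat\sigma}} \gtrsim \epsilon_{\hat\sigma}$, so the argument fed to $f_{\text{root}}$ is bounded below by a positive quantity of order $\epsilon_{\hat\sigma}$; also $C_2 + (1-C_2)e^{-t}$ lies in a fixed compact interval bounded away from $0$ (its value is between $\min(1,C_2)$ and $\max(1,C_2)$), so $f_{-1}$ applies with its inverse-stability estimate. Both $1 - f_{\alpha^2}$ and $C_2 + (1-C_2)f_{\alpha^2}$ stay in bounded ranges, so $f_{\text{mult}}$ applies with constant $C = \cO(1)$. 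Then I would track error propagation: choosing $\epsilon_{\alpha^2}$ polynomially smaller than $\epsilon_{\hat\sigma}$ (to survive division by $\epsilon_{\text{inv}}^2$ and by $\sqrt{\epsilon_{\text{root}}}$) and choosing $\epsilon_{\text{inv}}, \epsilon_{\text{product}}, \epsilon_{\text{root}}$ each of order $\epsilon_{\hat\sigma}$, the composed error is $\cO(\epsilon_{\hat\sigma})$, which after a constant rescaling gives the claimed bound. This mirrors exactly the error-balancing already done in Lemma~\ref{lemma::relu alpha hat} for $f_{\hat\alpha}$ and in Lemma~\ref{lemma::relu sigma} for $f_\sigma$.

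For the network size, I would invoke the concatenation and parallelization lemmas (Lemmas~\ref{lemma::Concat} and~\ref{lemma::Parallel}) together with the parameter counts of $f_{\alpha^2}$ (hence $f_\alpha$ of Lemma~\ref{lemma::relu alpha}), $f_{-1}$ (Lemma~\ref{lemma::inv}), $f_{\text{mult}}$ (Lemma~\ref{lemma::product}), and $f_{\text{root}}$ (Lemma~\ref{lemma::square root}). Since all of $\epsilon_{\alpha^2}^{-1}, \epsilon_{\text{inv}}^{-1}, \epsilon_{\text{product}}^{-1}, \epsilon_{\text{root}}^{-1}$ are polynomial in $\epsilon_{\hat\sigma}^{-1}$, each sub-network has depth $\cO(\log^2 \epsilon_{\hat\sigma}^{-1})$, width $\cO(\log^3 \epsilon_{\hat\sigma}^{-1})$, sparsity $\cO(\log^4 \epsilon_{\hat\sigma}^{-1})$, and weight magnitude $\exp(\cO(\log^2 \epsilon_{\hat\sigma}^{-1}))$; the compositions only change constants, so $f_{\hat\sigma}$ inherits the same rates, matching the stated hyperparameters $L = \cO(\log^2\epsilon_{\hat\sigma}^{-1})$, $W = \cO(\log^3\epsilon_{\hat\sigma}^{-1})$, $K = \cO(\log^4\epsilon_{\hat\sigma}^{-1})$, $\kappa = \exp(\cO(\log^2\epsilon_{\hat\sigma}^{-1}))$. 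I do not expect a serious obstacle here, since the proof is essentially a routine assembly identical in structure to the preceding lemmas; the only mildly delicate point is the lower bound $1 - e^{-t} \gtrsim \epsilon_{\hat\sigma}$ on the domain $t \ge \epsilon_{\hat\sigma}$, which is exactly why the hypothesis restricts to that range and which must be checked carefully so that $f_{\text{root}}$ receives an argument inside $[\epsilon_{\text{root}}, \epsilon_{\text{root}}^{-1}]$.
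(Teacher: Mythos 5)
Your proposal is correct and follows essentially the same route as the paper: the identical composition $f_{\hat\sigma}=f_{\text{root}}\circ f_{\text{mult}}\bigl(1-f_{\alpha^2},\,f_{-1}(C_2+(1-C_2)f_{\alpha^2})\bigr)$, the same error-propagation bookkeeping via Lemmas~\ref{lemma::product}, \ref{lemma::inv}, \ref{lemma::square root}, \ref{lemma::relu alpha}, and the same parameter accounting through concatenation. Your attention to the lower bound $1-e^{-t}\gtrsim\epsilon_{\hat\sigma}$ so that the argument of $f_{\text{root}}$ stays in its valid range is exactly the point the paper's choice of $\epsilon_{\text{root}}$ is designed to handle.
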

\begin{proof}

    Recall that 
\[
\hat{\sigma}_t =\frac{\sigma_t}{\sqrt{C_2\sigma_t^2+\alpha_t^2}}= \sqrt{\frac{1 - e^{-t}}{C_2 + (1 - C_2)e^{-t}}}.
\]

Therefore, we can express the function using the network 
\[
f_{\hat{\sigma}} = f_{\text{root}}\left(f_{\text{mult}}\left(1 - f_{\alpha}, f_{-1}(C_1 + (1 - C_1) \cdot f_{\alpha^2})\right) \right).
\]

Then, according to Lemmas \ref{lemma::min}, \ref{lemma::product}, \ref{lemma::inv}, \ref{lemma::square root} and \ref{lemma::relu alpha}, the network approximation error is bounded by
\begin{align*}
    &\epsilon_{\text{root}} + \frac{1}{\sqrt{\epsilon_{\text{root}}}}\left(\epsilon_{\text{product}} + 2\max\left(1, \frac{1}{C_2}\right)\max\left(\epsilon_{\text{inv}} + \frac{\epsilon_{\alpha^2}}{\epsilon_{\text{inv}}^2}, \epsilon_{\alpha^2}\right)\right).
\end{align*}

By setting 
\begin{align*}
\epsilon_{\text{root}} &= \min\left(\frac{\epsilon_{\hat{\sigma}}}{2}, \sqrt{\frac{1 - e^{-\epsilon_{\hat{\sigma}}}}{C_2 + (1 - C_2)e^{-\epsilon_{\hat{\sigma}}}}}\right),
\epsilon_{\text{product}} = \frac{\sqrt{\epsilon_{\text{root}}}\epsilon_{\hat{\sigma}}}{4},
\epsilon_{\text{inv}} = \frac{\epsilon_{\text{product}}}{4\max(1, 1/C_2)}
\end{align*}
and $\epsilon_{\alpha^2} = \epsilon_{\text{inv}}^3$, the total error is bounded by $\epsilon_{\hat{\sigma}}$. Moreover, according to Lemmas \ref{lemma::Concat}, \ref{lemma::min}, \ref{lemma::product}, \ref{lemma::inv}, \ref{lemma::square root} and \ref{lemma::relu alpha}, the parameters $(W, \kappa, L, K)$ of the entire network satisfy 
\begin{align*}
L &= \mathcal{O}(\log^2 \epsilon_{\hat{\sigma}}^{-1}),~ W = \mathcal{O}(\log^3 \epsilon_{\hat{\sigma}}^{-1}),~
K = \mathcal{O}(\log^4 \epsilon_{\hat{\sigma}}^{-1})~ \text{ and } \kappa = \exp\left(\mathcal{O}(\log^2\epsilon_{\hat{\sigma}}^{-1})\right).
\end{align*}
The proof is complete.
\end{proof}

We remark that we can extend the input domain from $t$ to $(x,t)$ by adding additional $d$ columns of zeros in the first linear layer of the corresponding ReLU neural network, where $x\in \R^{d}$, so we can obtain $f_{\alpha}(x,t)$ (or $f_{\alpha^{-1}}(x,t)$, $f_{\sigma}(x,t)$) to approximate $\alpha_t$ (or $1/\alpha_{t}$, $\sigma_t$) with the same error $\epsilon$. The network width parameters $W$ and the measure of sparsity $K$ only increase by a constant linearly dependent on $d$.
\subsection{Omitted Construction Details in the Proof}
\subsubsection{Construction of $\fb_3^{\rm ReLU}$ in Figure \ref{fig:ReLU2} for the Proof of Proposition~\ref{prop::score approx bounded}} \label{sec::f3 relu}
We elaborate on the choice of accuracy $\epsilon$ in the implementation of the basic operations (product, inverse, etc.) and the components of the network ($f^{\trelu}_1$, $\fb^{\trelu}_2$, $\sigma_t^{\trelu}$). According to the lemmas in the last three sections, we know that the approximation error of the entire network can be bounded as
\begin{align*}
    \epsilon_{\text{score}}&\le \max \left(\epsilon_{\text{square}}+2C_7\left(\epsilon_{\text{inv},3}+\frac{\epsilon_{\sigma,2}}{\epsilon_{\text{inv},3}^2}\right), \epsilon_{\text{product}}+3C_6^2 \epsilon_1\right),~~~\text{where} \\
    \epsilon_1&=\max \left(\epsilon_{\text{inv},1}+\frac{\epsilon_{f_1}}{\epsilon^2_{\text{inv},1}},\epsilon_{\fb_2}, \epsilon_{\text{inv},2}+\frac{\epsilon_{\sigma,1}}{\epsilon^2_{\text{inv},2}} \right).
\end{align*}
For the two upper bounds $C_6$ and $C_7$ which behave as the parameter $C$ in Lemma \ref{lemma::product}, $$C_6=\max\left(\frac{1}{\epsilon_{\text{low}}},\frac{ C_5}{\sigma_{{t_0}}} (C_x\sqrt{d\log N}+1),\frac{1}{\sigma_{{t_0}}} \right)$$ is the maximum of ${1}/{f_1^{\text{clip}}}$, $\fb_2$ and $\sigma^{-1}_t$ with $t\in [t_0,T]$ and $\xb \in [-C_x\sqrt{\log N},C_x\sqrt{\log N}]$, and 
$$C_7=\max_{t_0\le t\le T}\frac{C_5(C_x\sqrt{d\log N}+1)}{\sigma^2_t}=\frac{C_5(C_x\sqrt{d\log N}+1)}{\sigma^2_{{t_0}}}.$$ 
Now we choose a set of parameters to ensure that $\epsilon_{\text{score}}\le N^{-\beta}$. To be specific, for the three inverse operators, we set
$$\epsilon_{\text{inv},1}=\epsilon_{\text{inv},2}=\frac{N^{-\beta}C_6^{-2}}{8}~~~\text{and}~~~\epsilon_{\text{inv},3}=\frac{N^{-\beta}C_7^{-2}}{6}$$ in Lemma \ref{lemma::inv}.
Moreover, to approximate $f_1$ and $\fb_2$, we choose 
$$\epsilon_{f_1}=\frac{N^{-\beta}\epsilon_{\text{inv},1}^{2}C_6^{-2}}{8}~~~~\text{and}~~~~ \epsilon_{\fb_2}=\frac{N^{-\beta}C_6^{-2}}{4}$$ in Lemmas \ref{lemma::relu approx diffused} and \ref{lemma::relu approx diffused1}, respectively. 
To approximate $\sigma_t$,  we choose $$\epsilon_{\sigma,1}=\frac{N^{-\beta}\epsilon_{\text{inv},2}^2C_6^{-2}}{8}~~\text{and}~~\epsilon_{\sigma,2}=\frac{N^{-\beta}\epsilon_{\text{inv},2}^2C_7^{-1}}{6}~~ $$ in Lemma \ref{lemma::relu sigma}.  Now by the definition of $\epsilon_1$, we have
$$\epsilon_1=\max \left(\epsilon_{\text{inv},1}+\frac{\epsilon_{f_1}}{\epsilon^2_{\text{inv},1}},\epsilon_{\fb_2}, \epsilon_{\text{inv},2}+\frac{\epsilon_{\sigma,1}}{\epsilon^2_{\text{inv},2}} \right)=\frac{N^{-\beta}}{4C_6^2}. $$
Thus, by taking $\epsilon_{\text{product}}=\frac{1}{4}N^{-\beta}$ we construct a network that approximates ${\fb_2}/{(\sigma_t f_1^{\text{clip}})}$ with error bounded by $\epsilon_{\text{product}}+3C^2_{6}\epsilon_1= N^{-\beta}$. 
Then for the square operator, we take $\epsilon_{\text{square}}=\frac{1}{3}N^{-\beta}$ so that $$\epsilon_{\text{square}}+2C_7\left(\epsilon_{\text{inv},3}+{\epsilon_{\sigma,2}}/{\epsilon_{\text{inv},3}^2}\right) \le N^{-\beta}.$$ Last, since the entry-wise minimum operator does not induce additional approximation error, the approximation error of the entire network is exactly bounded by $N^{-\beta}$. We remark that the reciprocals of all the error terms ($\epsilon_{\Phi},\epsilon_{g}$, e.t.c.) and the upper bound parameters ($C_6$, $C_7$) are in the order of $\exp (\cO(\log N))$. Also, the entry-wise minimum operator indicates that the output value of the network is bounded by $\cO\left(\sqrt{\log N}/\sigma^2_t\right)$. Therefore, by Lemma \ref{lemma::Concat} and the lemmas we mention above, the hyperparameters $(M_t, W, \kappa, L, K)$ of the entire network satisfy
\begin{align*}
    & \hspace{0.4in} M_t = \cO\left(\sqrt{\log N}/\sigma^2_t\right),~
     W = {\cO}\left(N^{d+d_y}\log^7 N\right),\\
     & \kappa =\exp \left({\cO}(\log^4 N)\right),~
    L = {\cO}(\log^4 N),~
     K= {\cO}\left(N^{d+d_y}\log^9 N\right).
\end{align*}
We complete our proof.
\subsubsection{Construction of $f_{v,k,j}$ in Figure \ref{fig:ReLU1} for the proof of Lemma \ref{lemma::approx h}}\label{sec::fvkj relu}
Similarly, the total error can be written as
\begin{align*}
    \epsilon_{f}&=\epsilon_{\text{product},1}+4C_8^{6(2k+j+1)}\max\left(\epsilon_1,\epsilon_2,\epsilon_3,\epsilon_4 \right) ,~~\text{where}\\
C_8&=\max\left(\frac{1}{\sigma_{t_0}},C_x\sqrt{\log N} \frac{R}{\alpha_T} \right)=\text{Poly}(N).
\end{align*}
For $\epsilon_1$, $\epsilon_2$, $\epsilon_3$ and $\epsilon_4$, we define
$$\epsilon_1=\epsilon_{\text{poly},1}+(2k+j+1)C_8^{2(2k+j)}\left(\epsilon_{\text{product},2}+2\max \left(\epsilon_{\text{inv}},2C_8R\epsilon_{\alpha,1} \right) \right)$$ as the error of approximating $f_{\overline{D}}^{j+2k+1}(x)-f_{\underline{D}}^{j+2k+1}(x)$, and $$\epsilon_2=\epsilon_{\text{poly},2}+j\sigma^{-(j-1)}_{t_0}\epsilon_{\sigma},~~\epsilon_3=\epsilon_{\text{poly},3}+(n+1)\alpha_t^{-n}\epsilon_{\alpha,2},~~\epsilon_4=\epsilon_{\text{poly},4}+jC_8^{j-1}R\epsilon_{\alpha,3}$$ are the errors of approximating $\sigma_t^{j}$, $\alpha_t^{-(n+1)}$ and $(x+\alpha_tR/2-\alpha_tRv/N)^{n-j}$, respectively. To ensure that $\epsilon_{f}\le \epsilon$, we take $\epsilon_{\text{product},1}=\frac{\epsilon}{2}$ and choose a set of error terms so that $$\max\left(\epsilon_1,\epsilon_2,\epsilon_3,\epsilon_4 \right)\le \frac{\epsilon}{8C_8^{6(2k+j+1)}}:=\epsilon_{\star}.$$ To be specific, to approximate the power operators ($f_{\text{poly}}$ in the remark of Lemma \ref{lemma::product}), we take $$\epsilon_{\text{poly},1}=\epsilon_{\text{poly},2}=\epsilon_{\text{poly},3}=\epsilon_{\text{poly},4}=\frac{\epsilon_{\star}}{2}.$$ 
Moreover, in the approximation of the inverse operator (Lemma \ref{lemma::inv}) and the second product operator (Lemma \ref{lemma::product}), we set $$\epsilon_{\text{product},2}=\frac{\epsilon_{\star}}{4(2k+j+1)C_8^{2(2k+j)}} ~~~\text{and}~~~\epsilon_{\text{inv}}=\frac{\epsilon_{\text{product},2}}{2},$$ respectively. Last, to approximate $\sigma_t$ and $\alpha_t$, we take $$\epsilon_{\sigma}=\frac{\epsilon_{\star}}{2j\sigma_{t_0}^{j-1}}~~~\text{and}~~~\epsilon_{\alpha,1}=\epsilon_{\alpha,2}=\epsilon_{\alpha,3}=\min\left(\frac{\epsilon_{\text{product},2}}{4C_8R}, \frac{\epsilon_{\star}}{2jC_8^{j-1}R}\right).$$ Then by the definition of $\epsilon_1$, $\epsilon_2$, $\epsilon_3$ and $\epsilon_4$, it is easy to verify that $\max\left(\epsilon_1,\epsilon_2,\epsilon_3,\epsilon_4 \right)\le \epsilon_{\star}$. Note that $j\le n \le s$ and $k \le p=\cO(\log N)$, so the reciprocals all the error terms ($\epsilon_{\text{inv}},\epsilon_{\sigma}$, e.t.c.) are in the order of $\exp (\cO(\log^2 N +\log \epsilon^{-1}))$. Thus, the network parameters $(W, \kappa, L, K)$ of the entire network satisfy
\begin{align*}
    & W = {\cO}\left(\log^6 N +\log^3 \epsilon^{-1} \right),
      \kappa =\exp \left({\cO}(\log^4 N+\log^2 \epsilon^{-1})\right),\\
      ~&L = {\cO}(\log^4 N+\log^2 \epsilon^{-1}),~
     K= {\cO}\left(\log^8 N+\log^4 \epsilon^{-1}\right).
\end{align*}
The proof is complete.
 \subsubsection{Construction of $\fb_3^{\trelu}$ in Figure \ref{fig:ReLU4} for the Proof of Proposition \ref{prop::logh approx bounded exp}} \label{sec::f3 new relu}
 According to the figure, the total error can be written as
 \begin{align*}
     \epsilon_{\text{score}}&=\epsilon_{\text{product},1}+\epsilon_{\text{product},2}+2C_9\left(\epsilon_{\text{inv},1}+\frac{\epsilon_{\alpha^2}}{\epsilon^2_{\text{inv},1}}\right)+4C_{10}^3\max\left(\epsilon_{\hat{\sigma}^{-1}},\epsilon_{f_1^{-1}},\epsilon_{\fb_2},\epsilon_{\hat{\alpha}} \right), \text{where}\\
     C_9&=\max\left(C_2C_x\sqrt{\log N},\frac{1}{C_2+(1-C_2)\alpha_{T}}\right)~\text{ and }~C_{10}=\max_{t_0\le t \le T}\left(\hat{\sigma}_t^{-1},\frac{2}{C_1},B,\hat{\alpha}_t\right).
 \end{align*}
Here $$\epsilon_{\hat{\sigma}^{-1}}=\epsilon_{\text{inv},2}+\frac{\epsilon_{\hat{\sigma}}}{\epsilon_{\text{inv},2}^2}~~~\text{and}~~~\epsilon_{f_1^{-1}}=\epsilon_{\text{inv},3}+\frac{\epsilon_{f_1}}{\epsilon_{\text{inv},3}^3}$$ are the errors of approximating $\hat{\sigma}^{-1}_t$ and $f_1^{-1}$, respectively. Now we choose a set of error terms to ensure that $\epsilon_{\text{score}}\le N^{-\beta}$. Specifically, to approximate $f_1$ and $\fb_2$, we take
$$\epsilon_{f_1}=\frac{N^{-\beta}\epsilon_{\text{inv},2}^2}{32C_{10}^3} ~~~\text{and}~~~\epsilon_{\fb_2}= \frac{N^{-\beta}}{16C_{10}^3} $$
in Lemmas \ref{lemma::relu approx diffused exp} and \ref{lemma::relu approx diffused1 exp}, respectively. Moreover, in the approximation of $\alpha^2_t$, $\hat{\sigma}_t$ and $\hat{\alpha}_t$, we set
$$ \epsilon_{\alpha^2}=\frac{N^{-\beta}\epsilon_{\text{inv},1}^2}{16C_9},~~~ \epsilon_{\hat{\sigma}}=\frac{N^{-\beta}\epsilon_{\text{inv},2}^2}{32C_{10}^3}~~~\text{and}~~~\epsilon_{\hat{\alpha}}= \frac{N^{-\beta}}{16C_{10}^3}.$$
Last, to approximate the two product operators (Lemma \ref{lemma::product}) and the three inverse operators (Lemma \ref{lemma::inv}), we take
$$\epsilon_{\text{product},1}=\epsilon_{\text{product},2}=\frac{N^{-\beta}}{4},~~~\epsilon_{\text{inv},1}=\frac{N^{-\beta}}{16C_9}~~~\text{and}~~~\epsilon_{\text{inv},2}=\epsilon_{\text{inv},3}=\frac{N^{-\beta}}{32C_{10}^3}.$$ 
Since the reciprocals of all the error terms ($\epsilon_{\hat{\sigma}},\epsilon_{\hat{\alpha}}$, e.t.c.) and the upper bound parameters ($C_9$ and  $C_{10}$) are in the order of $\exp (\cO(\log N))$, the network hyperparameters $(M_t, W, \kappa, L, K)$ of the entire network satisfy
\begin{align*}
    & \hspace{0.4in} M_t = \cO\left(\sqrt{\log N}/\sigma_t\right),~
     W = {\cO}\left(N^{d+d_y}\log^7 N\right),\\
     & \kappa =\exp \left({\cO}(\log^4 N)\right),~
    L = {\cO}(\log^4 N),~
     K= {\cO}\left(N^{d+d_y}\log^9 N\right).
\end{align*}
The proof is complete.
\subsubsection{Construction of $f_{v,k,j}$ in Figure \ref{fig::ReLU3} for the Proof of Lemma \ref{lemma::approx g new}}\label{sec::fvkj new}
Similarly, the total approximation error of the network is bounded by
\begin{align*}
\epsilon_{f}&=\epsilon_{\text{product},1}+3C_{11}^{4(2k+j+1)}\max\left(\epsilon_1,\epsilon_2,\epsilon_3\right), \text{where}\\
C_{11}&=2\max_{t_0\le t \le T}\left(\frac{1}{\hat{\sigma}_{t}},\hat{\sigma}_{t},C_x\sqrt{\log N}\hat{\alpha}_t+R \right)=\text{Poly}(N).
\end{align*}
Here $$\epsilon_1=2\epsilon_{\text{poly},1}+4(2k+j+1)R^{2k+j}\left(\epsilon_{\text{product,2}}+2C_{11}\max\left(\epsilon_{\text{product},3}+2C_{11}\epsilon_{\hat{\alpha},1}, \epsilon_{\text{inv}}+\frac{\epsilon_{\hat{\sigma}},1}{\epsilon_{\text{inv}}^2}\right)\right)$$ is the error of approximating $f_{\overline{D}}^{j+2k+1}(x)-f_{\underline{D}}^{j+2k+1}(x)$, and $$
\epsilon_2=\epsilon_{\text{poly},2}+jC_{11}^{j-1}\epsilon_{\hat{\sigma},2
},~~~\epsilon_{3}=\epsilon_{\text{poly},3}+(n-j)C_{11}^{n-j-1}\left(\epsilon_{\text{product},4}+2C_{11}\epsilon_{\hat{\alpha},2}\right) $$ are the errors of approximating $\hat{\sigma}_t^j$ and $\left(\hat{\alpha}_t x+R/2-vR/N\right)^{n-j}$, respectively. To ensure $\epsilon_{f}\le \epsilon,$ we choose $\epsilon_{\text{product},1}={\epsilon}/{2}$ and set other error terms so that
$$\max\left(\epsilon_1,\epsilon_2,\epsilon_3\right)\le \frac{\epsilon}{6C_{11}^{4(2k+j+1)}}=:\epsilon_{\star}.$$ To be specific, we set 
$$\epsilon_{\text{product},2}=\frac{\epsilon_{\star}}{4(2k+j+1)R^{2k+j}},~~\epsilon_{\text{product},3}=\epsilon_{\text{inv}}=\frac{\epsilon_{\text{product},2}}{4C_{11}},~~\text{and}~~\epsilon_{\text{product},4}=\frac{\epsilon_{\star}}{4(n-j)C_{11}^{n-j-1}}  $$
for the remaining three product operators and the inverse operator. Moreover, in the approximation of the power operators (remark of Lemma \ref{lemma::product}), we take
$$\epsilon_{\text{poly},1}=\frac{\epsilon_{\star}}{4},~~~\text{and}~~~\epsilon_{\text{poly},2}=\epsilon_{\text{poly},3}=\frac{\epsilon}{2} .$$ Last, to approximate $\hat{\sigma}_t$ and $\hat{\alpha}_t$, we take 
$$\epsilon_{\hat{\sigma},1}=\epsilon_{\text{inv}}^3,~~\epsilon_{\hat{\sigma},2}=\frac{\epsilon_{\star}}{2jC_{11}^{j-1}},~~\epsilon_{\hat{\alpha},1}=\frac{\epsilon
_{\text{product},2}}{8C_{11}^2}~~\text{and}~~\epsilon_{\hat{\alpha},2}=\frac{\epsilon_{\text{product},4}}{2C_{11}}.$$ Thus, we have $\epsilon_{f}\le \epsilon$. Since $j\le n \le s$ and $k \le p=\cO(\log N)$, the reciprocals of all the error terms ($\epsilon_{\hat{\alpha}},\epsilon_{\hat{\sigma}}$, e.t.c.) are in the order of $\exp (\cO(\log^2 N +\log \epsilon^{-1}))$. Thus, the network parameters $(W, \kappa, L, K)$  satisfy
\begin{align*}
    & W = {\cO}\left(\log^6 N +\log^3 \epsilon^{-1} \right),
      \kappa =\exp \left({\cO}(\log^4 N+\log^2 \epsilon^{-1})\right),\\
      ~&L = {\cO}(\log^4 N+\log^2 \epsilon^{-1}),~
     K= {\cO}\left(\log^8 N+\log^4 \epsilon^{-1}\right).
\end{align*}
The proof is complete.
\subsubsection{Construction of $\dmono$ for the Proof of Lemma \ref{lemma::relu approx diffused}}\label{sec::f1 relu}
To construct $\dmono$, we use the following ReLU network:
\begin{align*}
    \dmono^{\trelu}=f_{\text{mult}}\left(f_{\text{poly},\nb^{\prime}}\left(\yb-\frac{\wb}{N} \right), \set{\phi\left(3N\left(y_j-\frac{\wb}{N}\right)\right)}_{j\in [d_y]}, \set{\sum_{k<p} g^{\trelu}(x_i,n_i,v_i,k)}_{i \in [d]}\right)
\end{align*}
According to Lemmas \ref{lemma::trapezoid}, \ref{lemma::approx h} and \ref{lemma::product}, the approximation error can be written as 
\begin{align*}
    \epsilon_{\Phi}&=\epsilon_{\text{product}}+(d+d_y+1)C_{12}^{d+d_y}\max \left(\epsilon_{\text{poly}},p\epsilon_{g} \right),~~ \text{where }\\
    C_{12}&=\max_{\norm{\xb}_{\infty}\le C_x\sqrt{\log N}, i \in [d]} \sum_{k<p} g^{\trelu}(x_i,n_i,v_i,k)
\end{align*}
satifies $\log C_{12} =\cO(\log^2 N)$ and $p=\cO(\log N)$. Here $\epsilon_{g}$ represents the uniform approximation error of $g(x,n,v,k)$. Denote $\epsilon_{\star}={s!(d+d_y)^{-s}R^{-s}N^{-(d+d_y)}\epsilon}$. By taking $$\epsilon_{\text{product}}=\frac{\epsilon_{\star}}{2},~~\epsilon_{\text{poly}}=\frac{\epsilon_{\star}}{2(d+d_y+1)C_{12}^{d+d_y}},~~\text{and}~~\epsilon_{g}=\frac{\epsilon_{\star}}{2p(d+d_y+1)C_{12}^{d+d_y}},$$ we ensure that $\epsilon_{\Phi}\le \epsilon_{\star}$. Moreover, we note that the reciprocals of all the error terms ($\epsilon_{\Phi},\epsilon_{g}$, e.t.c.) are in the order of $\exp (\cO(\log^2 N +\log \epsilon^{-1}))$. 
Thus, according to Lemma \ref{lemma::Concat}, we can verify that the network parameters $(W, \kappa, L, K)$ satisfy
\begin{align*}
    & W = {\cO}\left(\log^7 N +\log N \log^3 \epsilon^{-1} \right),
      \kappa =\exp \left({\cO}(\log^4 N+\log^2 \epsilon^{-1})\right),\\
      ~&L = {\cO}(\log^4 N+\log^2 \epsilon^{-1}),~
     K= {\cO}\left(\log^9 N+\log N \log^4 \epsilon^{-1}\right).
\end{align*}

\subsubsection{Construction of $\sb_1^{\trelu}$ in the proof of Proposition \ref{prop::inverse problem}}\label{sec::construct sb1}
According to \eqref{equ::At} and \eqref{equ::Bt},  $\bA(t)\yb+\bB(t)\xb$ can be written as
\begin{align*}
\bA(t)\yb+\bB(t)\xb=-\sum_{i=1}^{m}\frac{\pb_i\pb_i^{\top}\yb+e^{\frac{t}{2}}\mu_i\pb_i\ub_i^{\top}\xb}{\sigma^2+(e^t-1)\lambda_i},
\end{align*}
where $\set{\pb_i}_{i=1}^{m}$ and $\set{\ub_i}_{i=1}^{m}$ are the (first) $m$ row vectors of $\Pb$ and $\Ub$, respectively. To construct a ReLU approximation, we first consider the following functions:
\begin{align*}
    g^{\trelu}_{i}(\xb,\yb,t)=f_{\text{mult}}\left(-\pb_i^{\top}\yb-\mu_i f_{\text{mult}}\left(f_{\alpha}(t),\ub_i^{\top}\xb\right),f_{\text{inv}}\left(\sigma^2+\lambda_i(f_{\alpha^2}(t)-1)\right)\right)\pb_i.
\end{align*}
 Afterward, we sum them up and clip the function value  to construct our target ReLU approximation $\sb_{1}^{\trelu}$, which is given as
 \begin{align*}
     \sb_{1}^{\trelu}=f_{\text{clip},R'}\left(\sum_{i=1}^{m} g^{\trelu}_{i}\right),~~~\text{where}~~~R'=\frac{R\sqrt{(d+d_y)\log N}}{\lambda_{\star}}=\cO\left(\sqrt{\log N}\right).
 \end{align*}
 According to Lemmas \ref{lemma::product}, \ref{lemma::inv} and \ref{lemma::relu alpha}, the approximation error of the entire network can be bounded by
 \begin{align*}
     \norm{\sb_1^{\trelu}(\xb,\yb,t)-\bA(t)\yb-\bB(t)\xb}_{\infty}\le mC_{13}\max\left(C_{14}\epsilon_{\alpha}+\epsilon_{\text{product},2},\epsilon_{\text{inv}}+\frac{\lambda_i\epsilon_{\alpha^2}}{\epsilon_{\text{inv}}^2}\right)+\epsilon_{\text{product,1}},
 \end{align*}
where the constants
 \begin{align*}
     C_{13}=\max\left(\left(\sqrt{d_y}+\mu_i e^{T/2}\sqrt{d}\right)R\sqrt{\log N},\frac{1}{\sigma^2+(e^{t_0}-1)\min_i\lambda_i}\right), C_{14}=\max\left(e^{T},R\sqrt{d\log N}\right)
 \end{align*}
both satisfy $\log C_j=\cO(\log N)$. 
Now, we take
 \begin{align*}
     \epsilon_{\text{product},1}= \frac{N^{-2\beta/d}}{2},~\epsilon_{\text{product},2}=\epsilon_{\text{inv}}=\frac{N^{-2\beta/d}}{4mC_{13}},~\epsilon_{\alpha}=\frac{N^{-2\beta/d}}{4mC_{13}C_{14}}, ~\text{and}~~\epsilon_{\alpha^2}=\frac{\epsilon_{\text{inv}}^3}{\lambda_i}
 \end{align*}
to ensure the error is bounded by $ N^{-2\beta/d}$. Moreover, since the reciprocals of all the error terms ($\epsilon_{\alpha},\epsilon_{\text{inv}}$, e.t.c.) and the upper bound parameters ($C_{13}$ and $C_{14}$) are in the order of $\exp (\cO(\log N))$, the parameters $(M_{t,1}, W_1,\kappa_1,L_1,K_1)$ of the entire network satisfy 
    \begin{align*}
    & \hspace{0.4in} M_{t,1} = \cO\left(\sqrt{\log N}\right),~
     W_1 = {\cO}\left(\log^3 N\right),\\
     & \kappa_1 =\exp \left({\cO}(\log^2 N)\right),~
    L_1 = {\cO}(\log^2 N),~
     K_1= {\cO}\left(\log^4 N\right).
\end{align*}
The proof is complete.

\end{document}